\documentclass[11pt]{article}

\usepackage[margin=1in]{geometry}
\usepackage[numbers,sort&compress]{natbib}
\usepackage{amsmath,amssymb,amsthm,mathtools}
\usepackage{bm}
\usepackage{booktabs}
\usepackage{array}
\usepackage{longtable}
\usepackage{graphicx}
\usepackage{placeins}
\usepackage{needspace}
\usepackage{xcolor}
\usepackage{tikz}
\usepackage{pgfplots}
\usepgfplotslibrary{groupplots}
\usetikzlibrary{arrows.meta,calc,positioning,decorations.pathreplacing}
\usepackage{microtype}
\usepackage[colorlinks=true,linkcolor=blue!50!black,citecolor=blue!50!black,urlcolor=blue!50!black]{hyperref}

\newcommand{\notationgroup}[2]{%
  \multicolumn{3}{@{}l}{%
    \rule{0pt}{1.6\normalbaselineskip}%
    \bfseries\hyperref[#1]{#2}}\\*[-1pt]
  \cmidrule(lr){1-3}
}
\newcommand{\notationgroupplain}[1]{%
  \multicolumn{3}{@{}l}{%
    \rule{0pt}{1.6\normalbaselineskip}%
    \bfseries #1}\\*[-1pt]
  \cmidrule(lr){1-3}
}

\pgfplotsset{compat=1.18}

\definecolor{sigblue}{RGB}{36,82,139}
\definecolor{nuisance}{RGB}{182,84,46}
\definecolor{calibfill}{RGB}{242,233,205}
\definecolor{softgray}{RGB}{90,90,90}
\tikzset{
    figurebox/.style={draw,rounded corners=4pt,inner sep=6pt,align=left,fill=white},
    calloutbox/.style={figurebox,fill=calibfill},
    flow/.style={->,thick,rounded corners=3pt},
}

\newcommand{\FigStylizedDesign}{%
\begin{tikzpicture}[x=1cm,y=1cm,every node/.style={font=\small}]
    \def\cw{0.72}
    \def\rh{0.44}
    \coordinate (M0) at (1.0,1.45);

    \node at (2.45,6.1) {$\Xt=[\vec{x}_1,\vec{x}_2,\vec{x}_3,\vec{x}_4]$};
    \foreach \c/\lab in {0/$\vec{x}_1$,1/$\vec{x}_2$,2/$\vec{x}_3$,3/$\vec{x}_4$}{
        \node[font=\footnotesize] at ($(M0)+(\c*\cw+0.5*\cw,4.25)$) {\lab};
    }

    \foreach \c in {0,...,3}{
        \foreach \r in {0,...,8}{
            \draw ($(M0)+(\c*\cw,\r*\rh)$) rectangle ++(\cw,\rh);
        }
    }

    \foreach \c in {0,...,3}{
        \fill[sigblue!22] ($(M0)+(\c*\cw,8*\rh)$) rectangle ++(\cw,\rh);
    }
    \fill[nuisance!25] ($(M0)+(0*\cw,6*\rh)$) rectangle ++(\cw,2*\rh);
    \fill[nuisance!25] ($(M0)+(1*\cw,4*\rh)$) rectangle ++(\cw,2*\rh);
    \fill[nuisance!25] ($(M0)+(2*\cw,2*\rh)$) rectangle ++(\cw,2*\rh);
    \fill[nuisance!25] ($(M0)+(3*\cw,0*\rh)$) rectangle ++(\cw,2*\rh);

    \foreach \c in {0,...,3}{
        \draw ($(M0)+(\c*\cw,8*\rh)$) rectangle ++(\cw,\rh);
    }
    \draw ($(M0)+(0*\cw,6*\rh)$) rectangle ++(\cw,2*\rh);
    \foreach \r in {0,...,5}{ \draw ($(M0)+(0*\cw,\r*\rh)$) rectangle ++(\cw,\rh); }

    \draw ($(M0)+(1*\cw,4*\rh)$) rectangle ++(\cw,2*\rh);
    \foreach \r in {0,...,3,6,7}{ \draw ($(M0)+(1*\cw,\r*\rh)$) rectangle ++(\cw,\rh); }

    \draw ($(M0)+(2*\cw,2*\rh)$) rectangle ++(\cw,2*\rh);
    \foreach \r in {0,1,4,5,6,7}{ \draw ($(M0)+(2*\cw,\r*\rh)$) rectangle ++(\cw,\rh); }

    \draw ($(M0)+(3*\cw,0*\rh)$) rectangle ++(\cw,2*\rh);
    \foreach \r in {2,...,7}{ \draw ($(M0)+(3*\cw,\r*\rh)$) rectangle ++(\cw,\rh); }

    \foreach \c in {0,...,3}{
        \node at ($(M0)+(\c*\cw+0.5*\cw,8*\rh+0.5*\rh)$) {$\sqrt{\gamma}$};
    }

    \node at ($(M0)+(0.5*\cw,7*\rh)$) {$\vec{v}_1$};
    \node at ($(M0)+(1.5*\cw,5*\rh)$) {$\vec{v}_2$};
    \node at ($(M0)+(2.5*\cw,3*\rh)$) {$\vec{v}_3$};
    \node at ($(M0)+(3.5*\cw,1*\rh)$) {$\vec{v}_4$};

    \draw[decorate,decoration={brace,amplitude=4pt,mirror},softgray] (0.48,4.97) -- (0.48,5.41)
        node[midway,left=6pt,font=\footnotesize,align=right] {shared\\spike};

    \node[font=\footnotesize,align=right,softgray] at (0.0,3.21) {orthogonal\\nuisance};

    \draw[decorate,decoration={brace,amplitude=4pt,mirror},softgray] (0.78,4.09) -- (0.78,4.97);
    \draw[decorate,decoration={brace,amplitude=4pt,mirror},softgray] (0.78,3.21) -- (0.78,4.09);
    \draw[decorate,decoration={brace,amplitude=4pt,mirror},softgray] (0.78,2.33) -- (0.78,3.21);
    \draw[decorate,decoration={brace,amplitude=4pt,mirror},softgray] (0.78,1.45) -- (0.78,2.33);

    \node at (6.65,5.7) {$\vec{\alpha}^\top=(\alpha_1,\alpha_2,\alpha_3,\alpha_4)$};
    \coordinate (A0) at (6.15,2.05);
    \foreach \r/\lab in {3/$\alpha_1$,2/$\alpha_2$,1/$\alpha_3$,0/$\alpha_4$}{
        \draw ($(A0)+(0,\r*0.62)$) rectangle ++(1.1,0.62);
        \node at ($(A0)+(0.55,\r*0.62+0.31)$) {\lab};
    }
    \node at (5.2,3.27) {$\times$};

    \draw[flow] (7.45,3.27) -- (8.55,3.27) node[midway,above] {$\vec{u}=\Xt\vec{\alpha}$};
    \coordinate (U0) at (9.1,1.45);
    \fill[sigblue!18] ($(U0)+(0,8*\rh)$) rectangle ++(1.95,\rh);
    \foreach \r in {0,...,7}{
        \fill[nuisance!18] ($(U0)+(0,\r*\rh)$) rectangle ++(1.95,\rh);
    }
    \foreach \r in {0,...,8}{
        \draw ($(U0)+(0,\r*\rh)$) rectangle ++(1.95,\rh);
    }
    \node at ($(U0)+(0.975,8*\rh+0.5*\rh)$) {$\sqrt{\gamma}\,s$};
    \foreach \r/\lab in {6/$\sqrt n\,\alpha_1$,7/$\sqrt n\,\alpha_1$,4/$\sqrt n\,\alpha_2$,5/$\sqrt n\,\alpha_2$,2/$\sqrt n\,\alpha_3$,3/$\sqrt n\,\alpha_3$,0/$\sqrt n\,\alpha_4$,1/$\sqrt n\,\alpha_4$}{
        \node[font=\scriptsize] at ($(U0)+(0.975,\r*\rh+0.5*\rh)$) {\lab};
    }

    \node at ($(U0)+(0.975,4.25)$) {$\vec{u}$};

    \draw[decorate,decoration={brace,amplitude=4pt},softgray] (11.15,5.41) -- (11.15,4.97)
        node[midway,right=6pt,softgray] {$u[1]$};
    \draw[decorate,decoration={brace,amplitude=4pt},softgray] (11.15,4.97) -- (11.15,1.45)
        node[midway,right=6pt,softgray] {$\vec{u}_{-1}$};

    \node[calloutbox,anchor=north west,text width=11.9cm] at (0.9,0.82) {
        Since $u[1]=\sqrt{\gamma}\,s$ and $\|\vec{u}_{-1}\|_2^2=d\,q$,
        both errors depend on $\vec{\alpha}$ only through
        $s=\textstyle\sum_i\alpha_i$ and $q=\textstyle\sum_i\alpha_i^2$:
        \[
            \Ttest(\vec{u})=(\gamma s-1)^2+d\,q,
            \qquad
            \Ttrain(\vec{u})=(\gamma s-1)^2
            +\tfrac{2d s}{n}(\gamma s-1)
            +\tfrac{d^2}{n}\,q.
        \]
    };
\end{tikzpicture}%
}

\newtheorem{theorem}{Theorem}[section]
\newtheorem{proposition}[theorem]{Proposition}
\newtheorem{lemma}[theorem]{Lemma}
\newtheorem{corollary}[theorem]{Corollary}
\theoremstyle{definition}
\newtheorem{remark}[theorem]{Remark}

\theoremstyle{plain}

\DeclareMathOperator{\diag}{diag}
\DeclareMathOperator{\median}{median}

\newcommand{\R}{\mathbb{R}}
\newcommand{\E}{\mathbb{E}}
\newcommand{\one}{\vec{1}}
\newcommand{\Xt}{\bm{X}}
\newcommand{\Ttrain}{E_{\rm train}}
\newcommand{\Ttest}{E_{\rm test}}

\newcommand{\calN}{\mathcal{N}}

\title{The Fourth Quadrant: A Stylized View of Benign Misfitting}
\author{Gireeja Ranade \quad Anant Sahai\\
University of California, Berkeley\\
\texttt{\{ranade,sahai\}@eecs.berkeley.edu}}
\date{}

\begin{document}
\maketitle

\begin{abstract}
Training error is what we can observe on a training set; test error is the
quantity we actually care about.  This paper gives a simple setting in which 
minimizing the training error would lead us astray.
We study supervised linear regression with traditional squared-error loss in 
a deterministic $(d+1)$-dimensional single-spike model.  
Each stylized training vector has the same informative spike coordinate, of amplitude
$\sqrt{\gamma}$ with $\gamma>1$. The remaining directions are nuisance, and 
the nuisance components of distinct training vectors all have equal norm and 
are mutually orthogonal. The training labels are all $1$. 
Fresh test points are drawn from
 \(\vec{x}_{\rm test} \sim \calN(\vec{0},\diag(\gamma,1,\ldots,1))\), 
with the noise-free test labels being the normalized spike coordinate 
\(x_{\rm test}[1]/\sqrt{\gamma}\).
We focus on linear predictors in the span of the training vectors, the class 
naturally reached by zero-initialized linear gradient methods.
For all such span predictors, we compute the exact train--test tradeoff
curve.  We exhibit a range of training-set
sizes in which every span predictor that generalizes well must
fit the training data \emph{worse} than the predictor that always outputs zero.
We call this regime \emph{benign misfitting}, or the fourth quadrant.  Here
test performance \(\Ttest\) is good but training performance \(\Ttrain\) is
bad, so good generalization \emph{requires} a negative generalization gap.
At its sharpest, \(\Ttest=o(1)\) while
\(\Ttrain\to\infty\), with the zero predictor normalized to
\(\Ttrain=\Ttest=1\).
The best span predictor begins to
generalize when \(n\gg d/\gamma^2\), while interpolation does not generalize
until the later threshold \(n\gg d/\gamma\).  
In the intervening window \(d/\gamma^2 \ll n \ll d/\gamma\), useful
prediction within the linear span lies beyond interpolation: predictions
on the training points overshoot the labels.  We show that one-pass
stochastic gradient descent (SGD), with a large constant learning rate,
reaches small test error throughout this window---matching the best span
predictor up to a logarithmic factor.  We also verify directly that it indeed
has \emph{large} empirical training error (despite the descent premise in its
name).
Finally, we show that
the unavoidable nuisance component responsible for the training misfit also
controls the predictor's adversarial sensitivity.
\end{abstract}

\section{Introduction}
\label{sec:introduction}

The traditional train--test picture has three familiar regions.  A predictor
can underfit, fit well, or overfit---a story usually told through model
capacity.  But the train--test plane itself has two coordinates: empirical
training error and test error.  Modern benign-overfitting
results show that interpolation---exactly fitting all the training labels---need
not mean poor test error
\cite{belkin2019reconciling,belkin2020twomodels,belkin2021fit,
bartlett2020benign,hastie2022surprises,muthukumar2020harmless,
mallinar2022taxonomy}.
This paper studies a different possibility: good test prediction may
first appear in a region where empirical training performance is bad.  We
normalize both errors so that the predictor that always outputs zero has
\(\Ttrain=\Ttest=1\), and we call the fourth quadrant \(\Ttest<1<\Ttrain\)
\emph{benign misfitting}: good test error
paired with bad empirical training error.
Under the sign convention
\(\Ttest-\Ttrain\), the generalization gap is negative.

Figure~\ref{fig:quadrants} places this in a wider taxonomy.  Benign
overfitting and tempered overfitting%
\footnote{The term \emph{tempered overfitting} and the
benign/tempered/catastrophic taxonomy were introduced by
\citet{mallinar2022taxonomy}.  The paradigmatic example of tempered
overfitting is one-nearest-neighbor classification or regression with equally noisy training
and test labels.  With enough training data, the 1-NN rule approaches roughly
twice the Bayes error rate: it generalizes reasonably (small test error), interpolates
(zero training error), and has training error below the irreducible
noisy-label test level.}
both lie in the good-fit quadrant: they
generalize well, and they are traditionally called overfitting only because
their generalization gap is positive --- training error sits significantly below test
error.
The fourth quadrant breaks the pattern: good test error is instead paired with bad empirical fit.
In ordinary practice, training error above a natural baseline like the zero-predictor's performance would often look
like a bug.  In this stylized example, it is the structurally correct price
of using the shared signal early.%
\footnote{In colloquial use, ``underfitting'' has bad test error and bad training error
while ``overfitting'' has bad test error and good training error. But what is bad? What is good?  
The quadrant labels in Figure~\ref{fig:quadrants} must therefore be relative to some 
baseline. For us in this figure, ``underfitting'' in the upper-right quadrant means both
errors exceed those of the zero predictor, which is admittedly a more extreme sense than any
ordinary colloquial use of \emph{underfitting}.  
Following the terminological approach of \cite{mallinar2022taxonomy}, one could argue that 
losses this extreme should be called ``catastrophic,'' but for simplicity we are omitting 
this finer distinction.
We use the zero-predictor
baseline because our fourth-quadrant theorem proves the strong statement
that useful span predictors can have training error above it. }


\begin{figure}[t]
    \centering
    \resizebox{0.62\textwidth}{!}{\begin{tikzpicture}[x=2.4cm,y=1.75cm]
    \definecolor{goodgreen}{RGB}{48,135,84}
    \definecolor{alertred}{RGB}{178,62,55}
    \definecolor{warnorange}{RGB}{201,124,42}
    \definecolor{softblue}{RGB}{76,112,176}
    \definecolor{harmlesspurple}{RGB}{128,0,128}
    \definecolor{temperedcyan}{RGB}{0,170,200}

    \fill[goodgreen!12] (0,0) rectangle (1,1);
    \fill[alertred!12] (0,1) rectangle (1,2);
    \fill[warnorange!14] (1,1) rectangle (2,2);
    \fill[softblue!16] (1,0) rectangle (2,1);

    \draw[black!55,thick,dashed] (1,0) -- (1,2);
    \draw[black!55,thick,dashed] (0,1) -- (2,1);

    \node[font=\scriptsize,align=center,text width=2.0cm] at (0.5,0.5)
        {good fit};
    \node[font=\scriptsize,align=center,text width=2.0cm] at (0.5,1.5)
        {overfitting};
    \node[font=\scriptsize,align=center,text width=2.0cm] at (1.5,1.5)
        {underfitting};
    \node[font=\scriptsize\bfseries,align=center,text width=2.05cm,softblue!55!black] at (1.5,0.5)
        {misfitting\\(Fourth Quadrant)};

    \draw[-{Latex[length=2.5mm]},thick] (0,0) -- (2.16,0);
    \draw[-{Latex[length=2.5mm]},thick] (0,0) -- (0,2.16);
    \node[font=\scriptsize,anchor=north] at (0,-0.06) {0};

    \draw[thick] (1,-0.05) -- (1,0.05);
    \node[font=\scriptsize,anchor=north] at (1,-0.06) {1};
    \draw[thick] (-0.05,1) -- (0.05,1);
    \node[font=\scriptsize,anchor=east] at (-0.06,1) {1};

    \node[font=\scriptsize,anchor=north] at (1.06,-0.22) {training error $E_{\rm train}$};
    \node[font=\scriptsize,rotate=90,anchor=south] at (-0.22,1.05) {test error $E_{\rm test}$};

    \fill[alertred] (1,1) circle (2.5pt);
    \node[font=\scriptsize,alertred,anchor=north,align=center] at (1,0.99) {zero predictor};

    \fill[harmlesspurple] (0,0.18) circle (2.5pt);
    \node[font=\scriptsize,harmlesspurple,anchor=east] at (-0.04,0.18) {harmless};

    \fill[temperedcyan] (0,0.43) circle (2.5pt);
    \node[font=\scriptsize,temperedcyan,anchor=east] at (-0.04,0.43) {tempered};

    \node[font=\scriptsize\bfseries,black!50,anchor=south] at (0.5,2.06) {Good};
    \node[font=\scriptsize\bfseries,black!50,anchor=south] at (1.5,2.06) {Bad};

    \node[font=\scriptsize\bfseries,black!50,anchor=west] at (2.06,1.5) {Bad};
    \node[font=\scriptsize\bfseries,black!50,anchor=west] at (2.06,0.5) {Good};
\end{tikzpicture}}
    \caption[A four-quadrant train--test taxonomy]{A baseline-relative four-quadrant train--test taxonomy.  The dashed lines mark the
    zero-predictor baselines \(\Ttrain=1\) and \(\Ttest=1\), whose intersection
    is the zero predictor: moving toward the origin from a dashed line
    beats this baseline on that axis (left of \(\Ttrain=1\) for training,
    below \(\Ttest=1\) for testing).  The purple dot represents benign overfitting
    (harmless interpolation) and the cyan dot represents tempered
    overfitting: both have small training error and small test
    error, hence both lie in the good-fit quadrant, even though their
    generalization gaps are positive.  Benign misfitting occupies the
    good-test/worse-than-zero-train quadrant, where the generalization gap is negative.}
    \label{fig:quadrants}
\end{figure}

The setup in this paper is intentionally elementary.  There are \(n\le d\)
training points in \(\R^{d+1}\).  Each has the same signal coordinate
\(\sqrt{\gamma}\), with \(\gamma>1\), so the spike direction carries more variance than any
single nuisance direction, while the nuisance
components are mutually orthogonal and have squared norm \(d\).  Each training
label is the normalized spike coordinate and is therefore equal to \(1\).  A
fresh test point $\vec{x}_{\rm test}$ is drawn from the zero-mean single-spike Gaussian
\(\calN(\vec{0},\diag(\gamma,1,\ldots,1))\), 
with its true test label being the normalized spike
coordinate \(x_{\rm test}[1]/\sqrt{\gamma}\).  The best possible predictor here
simply reads off the appropriately normalized spike direction, the
\emph{spike oracle}%
\footnote{The first coordinate here is an aligned latent basis and need
not be an observed sparse coordinate.  A common orthogonal rotation can preserve
the span geometry and train--test calculation while making the oracle dense.
Coordinate sparsity therefore does not explain the phenomenon and is
generically not an escape from the training-span restriction.
Appendix~\ref{app:rotation-aware} returns to this point through a
rotated-coordinate adversarial calculation.}
\(\vec{w}^{\star}=(\tfrac{1}{\sqrt{\gamma}},0,\ldots,0)^\top\).  But that oracle is not in the
training span.

We therefore ask what can be done by \emph{span predictors}:
linear predictors built from the training vectors themselves.  We study this
class because, starting from zero, linear gradient methods build their
predictors out of a linear combination of the training examples.%
\footnote{Kernel methods also return span predictors by the representer
theorem \cite{kimeldorf1971some,schoelkopf2001generalized}.  For
underdetermined least squares, zero-initialized gradient descent with a
suitably small learning rate selects the minimum-Euclidean-norm interpolator
\cite{gunasekar2018characterizing}, and related implicit-bias results for
classification select max-margin directions within the training span
\cite{soudry2018implicit}.}

The exact frontier yields two distinct sample thresholds\footnote{\label{fn:threshold-numbers}For
\(\gamma=100\) and \(d=50{,}000\), the two asymptotic crossover thresholds are
\(d/\gamma^2=5\) and \(d/\gamma=500\).  The exact curves are smooth.  At
\(n=500\), the interpolator's test error is \(0.2525\), and
Corollary~\ref{cor:sample-complexity} gives \(n_{\rm mis}\approx15\) for the
best span predictor to attain that same target.}.  The best span predictor, meaning the
predictor in the training span with minimum test error, starts to generalize
once \(n\gg d/\gamma^2\).  The minimum-norm interpolator, which fits every
training label exactly, does not generalize until the later threshold
\(n\gg d/\gamma\).  We call the intervening interval the \emph{benign-misfitting
window}\footnote{One concrete way to read the window asymptotically is
polynomial scaling: if \(d=D\), \(\gamma=D^b\), and \(n=D^a\) with \(0<b<1\)
and \(a>0\), then
\(d/\gamma^2\ll n\ll d/\gamma\) becomes \(1-2b<a<1-b\).  The lower endpoint is
the best-span threshold. The upper endpoint is the interpolation threshold.}:
\begin{equation}\label{eq:benign-misfitting-window}
    d/\gamma^2\ll n\ll d/\gamma,
    \qquad\text{equivalently}\qquad
    \gamma n\ll d\ll\gamma^2 n.
\end{equation}
In this window, there are span predictors with vanishing test error, but every
such predictor must have diverging empirical training error.  Thus the span
becomes useful before the interpolator does---and the gap is large: for a
fixed test error, the interpolation threshold sits a factor of order
\(\gamma\) in samples beyond the first-useful-span threshold.

There is no label noise in this setup: the spike oracle fits every
training point and every test point.  The misfit comes from the span
representation of predictors.  Interpolation under-calibrates the shared
spike.  Getting good calibration inside the span requires the predictor to
deliberately overshoot the training labels on average --- and the efficient
predictors overshoot every single one.  Put bluntly: if test error is
what we care about, we must
be prepared to sacrifice average performance on the training data.

The core phenomenon is geometric.
Using the \emph{combined signal} --- the spike information shared across
the training points --- forces the predictor to carry
a leftover nuisance component, which we call the \emph{nuisance residue}.
Fresh random test points see only its average squared size, but the training
examples see the nuisance residue in the same nuisance directions that created it.
In this paper's stylized calculations, this becomes an extra factor \(d/n\).  We call this asymmetry \emph{residue amplification}.
The same examples that reveal the shared signal are the examples on which a
good span predictor must be willing to accept large residuals.

This counterintuitive train-test tradeoff is algorithmically attainable.  One-pass
(i.e.\ single epoch) stochastic gradient descent (SGD), run from zero with a suitably chosen large
constant learning rate, reaches the good-test part of the span throughout the
benign-misfitting window.  Over most of the window (unless \(\gamma\) is too
big, as explained in sense~(e) of
Subsection~\ref{subsec:spike-strength-senses}), the tuned near-optimal
learning rate is
small on the spike coordinate but large relative to what repeated-example
stability would allow --- stability when the same example is revisited
(i.e.\ multi-epoch training).  It therefore overshoots individual training
points by the per-example magnitude needed to build up good spike
calibration.  At the tuned one-pass learning rate, its empirical training error
becomes large and can even rise throughout the pass.

The same nuisance residue also separates random-test performance from
adversarial sensitivity.  A random test input sees only the residue's average
squared size, but a label-preserving adversary can align a test perturbation
with the residue and tap the same kind of amplification that the training points feel,
exploiting the residue's full norm.  Small random test error can therefore
arrive before low adversarial sensitivity.  Driving that sensitivity down
requires more training samples.

All necessity claims in this paper concern linear predictors in the training span.%
\footnote{Appendix~\ref{app:improper} shows that nonlinear aggregation can
circumvent the necessity of large training error, and
Appendix~\ref{app:sco-comparison} gives the corresponding norm-budget
statement: in the window \eqref{eq:benign-misfitting-window}, every linear span
predictor with small test error must be outside the spike oracle's Euclidean norm
ball.}
This paper's stylized training design is a deliberate caricature:
it replaces the nearly orthogonal nuisance vectors of a high-dimensional
Gaussian sample with exactly orthogonal vectors of exactly equal length,
keeping the test law's essential geometry of one strong direction among many equal,
featureless nuisance directions.  Holding this geometry fixed reduces computing the 
train--test tradeoff to orthogonality, Cauchy--Schwarz, and one quadratic
formula. The existence of the fourth quadrant is thus a direct algebraic fact about
high-dimensional spans --- not a byproduct of concentration bounds, pathologies
in function classes, or delicate random-matrix edge effects.  The
same-distribution companion \citep{ranade2026samedistribution} draws training
and test data from the same Gaussian law.  Exact orthogonality disappears, but
the threshold separation, forced training misfit, and one-pass SGD mechanism
persist.

The rest of the paper is organized as follows.
Section~\ref{sec:one-sample} gives a one-sample warm-up.
Sections~\ref{sec:setup-frontier}--\ref{sec:robustness} develop the exact
span frontier, the one-pass SGD result, and the adversarial-sensitivity
calculation.
Section~\ref{sec:discussion} explains why the regime is ``attractive'' but hard
to diagnose.
Section~\ref{sec:related} gives related scholarly perspectives. The appendices give
extensions and further commentary, including: the fresh-versus-reused stability boundaries
(Appendix~\ref{app:stability-boundaries}), further one-pass SGD material including learning rate robustness and how the training error can ascend during SGD
(Appendix~\ref{app:sgd-refinements}), a rotation-aware adversarial calculation
(Appendix~\ref{app:rotation-aware}), a nonlinear improper scheme that escapes
the span constraint (Appendix~\ref{app:improper}), and a comparison with
stochastic-convex-optimization perspectives (Appendix~\ref{app:sco-comparison}).

For a first reading, Sections~\ref{sec:one-sample}--\ref{sec:discussion}
carry the main narrative.  The related-work section and appendices are
modular.

\section{A one-sample warm-up}
\label{sec:one-sample}

The entire mechanism is already visible with one stylized labeled training point $(\vec{x}, y)$.
Let
\begin{equation}\label{eq:one-sample-data}
    \vec{x}=
    \begin{bmatrix}
        \sqrt{\gamma}\\
        \vec{v}
    \end{bmatrix}
    \in\R^{d+1},
    \qquad
    \|\vec{v}\|_2^2=d,
    \qquad
    y=1.
\end{equation}
Here the label \(y=x[1]/\sqrt{\gamma}\).  Since \(\|\vec{x}\|_2^2=\gamma+d\), a single SGD
step\footnote{When a learning rate is used in an SGD update, we use
the half-squared one-sample loss so that the gradient has no extra factor of
two.  This convention only fixes the gradient normalization. The empirical and
test errors in \eqref{eq:one-sample-train-loss} and
\eqref{eq:one-sample-test-error} are ordinary squared residuals.} from zero
with learning rate \(\eta\) returns the span predictor \(\vec{u}=\eta\vec{x}\).
Its prediction on the training point is
\(\vec{u}^{\top}\vec{x}=\eta\|\vec{x}\|_2^2=\eta(\gamma+d)\).  Its squared
empirical error on the training point is
\begin{equation}\label{eq:one-sample-train-loss}
    \Ttrain(\eta)=((\gamma+d)\eta-1)^2.
\end{equation}
For a fresh test point
\[
    \vec{x}'=
    \begin{bmatrix}
        \sqrt{\gamma}\,y'\\
        \vec{z}'
    \end{bmatrix},
\]
where the test label \(y'\sim \calN(0,1)\) and the
test nuisance vector \(\vec{z}'\sim \calN(\vec{0},\bm{I}_d)\), the test error is
\begin{equation}\label{eq:one-sample-test-error}
    \Ttest(\eta)
    =
    \E[(\eta(\gamma y'+\vec{v}^{\top}\vec{z}')-y')^2]
    =
    (\gamma\eta-1)^2+d\eta^2.
\end{equation}
Here \(\gamma\eta\) is the spike calibration, while \(d\eta^2\) is the
nuisance variance.
The test-optimal learning rate is thus
\begin{equation}\label{eq:one-sample-eta-opt}
    \eta_{\rm opt}=\frac{\gamma}{\gamma^2+d}.
\end{equation}
At this learning rate,
\begin{equation}\label{eq:one-sample-opt-values}
    \Ttest(\eta_{\rm opt})=\frac{d}{\gamma^2+d},
    \qquad
    \Ttrain(\eta_{\rm opt})
    =
    \left(\frac{d(\gamma-1)}{\gamma^2+d}\right)^2.
\end{equation}

Figure~\ref{fig:one-sample-rate} plots this one-sample tradeoff for a
representative parameter choice. For comparison,
\begin{equation}\label{eq:one-sample-rates}
    \eta_{\rm interp}=\frac{1}{\gamma+d},
    \qquad
    \eta_{\rm cal}=\frac{1}{\gamma},
    \qquad
    \eta_{\rm edge}=\frac{2}{\gamma+d}.
\end{equation}
The interpolating rate \(\eta_{\rm interp}\) fits the training point exactly.
The calibrated rate \(\eta_{\rm cal}=1/\gamma\) sets the spike coefficient to
the oracle's. The edge rate \(\eta_{\rm edge}\) is the repeated-example stability
boundary --- the one-example version of the classical \(2/L\)
gradient-descent threshold for a quadratic \cite{polyak1987introduction}.
Here \(L=\|\vec{x}\|_2^2=\gamma+d\) for the one-sample half-squared loss.
Along sequences with \(\gamma,d\to\infty\) in the few-shot window \(\gamma\ll d\ll\gamma^{2}\)
of Footnote~\ref{fn:few-shot-window},
\[
    \eta_{\rm opt}\approx\eta_{\rm cal}\approx\frac1\gamma,
    \qquad
    \frac{\eta_{\rm opt}}{\eta_{\rm edge}}\approx\frac{d}{2\gamma}\to\infty.
\]
If the same example were reused, the residual would be multiplied at each step by
\[
    1-\eta\|\vec{x}\|_2^2=1-\eta(\gamma+d).
\]
Repeated-example stability therefore requires \(|1-\eta(\gamma+d)|<1\),
equivalently \(0<\eta<2/(\gamma+d)\), matching \(\eta_{\rm edge}\).
Thus the rate that calibrates the spike is far beyond the repeated-example
stability boundary in the few-shot window.

The window \(\gamma\ll d\ll\gamma^{2}\) is worth a name of its own.  We call it the
\emph{few-shot window}, and a spike satisfying it \emph{few-shot-strong}.%
\footnote{\label{fn:few-shot-window}The ``few-shot-strong'' name reflects that a handful of
examples can already generalize, while interpolation still does not.  The window is
\eqref{eq:benign-misfitting-window} read at \(n\) of constant order, which is why the one-example
calculation of this section is a caricature of the many-sample story rather than a toy separate
from it.  It is also written \(d\ll\gamma^{2}\ll d^{2}\) in
Subsection~\ref{subsec:phase-ladder}.  The two orderings say the same two things,
\(d\ll\gamma^{2}\) and \(\gamma\ll d\).}
In the few-shot window, \eqref{eq:one-sample-opt-values} gives
\[
    \Ttest(\eta_{\rm opt})\approx\frac{d}{\gamma^2}\ll1,
    \qquad
    \Ttrain(\eta_{\rm opt})\approx\left(\frac{d}{\gamma}\right)^2\gg1.
\]
The training vector is dominated by its nuisance component and is nearly
orthogonal to the spike direction, so interpolation shrinks the spike too
much.  Getting good spike calibration, meaning the correct
coefficient on the spike direction, necessarily amplifies the nuisance
component relative to interpolation.  The prediction on the training point is
then pushed past the label \(1\).

The one-sample risk level \(d/\gamma^2\) and the few-shot window
\(\gamma\ll d\ll\gamma^{2}\) are the \(n=1\) counterparts of the many-sample
best-span risk \(d/(\gamma^2n)\) and sample-count window
\(d/\gamma^2\ll n\ll d/\gamma\) studied next.

\begin{figure}[t]
    \centering
    \includegraphics[width=0.72\textwidth]{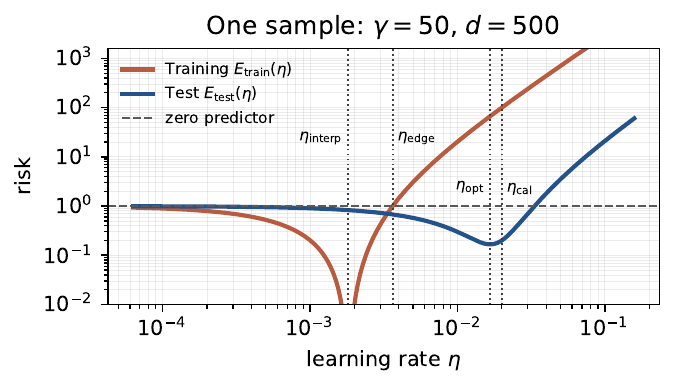}
    \caption{One-sample one-step learning-rate tradeoff.  The red and blue curves show
    empirical training error and test error as functions of the learning rate,
    and the horizontal dashed line is the zero-predictor baseline.
    The interpolating step fits the sample but leaves test error near baseline.
    The calibrated/test-optimal step has small test error and squared empirical
    error far above baseline.  The repeated-example stability boundary is
    \(\eta_{\rm edge}=2/(\gamma+d)\).}
    \label{fig:one-sample-rate}
\end{figure}

\section{The deterministic orthogonal frontier with many training samples}
\label{sec:setup-frontier}

\subsection{Setup}
\label{subsec:setup}

Throughout the deterministic orthogonal model, assume \(n\le d\).  Unless
otherwise stated, fourth-quadrant claims also assume \(\gamma>1\).  For any vector
\(\vec{z}\in\R^{d+1}\), write \(z[1]\) for its spike coordinate and \(\vec{z}_{-1}\) for
its remaining nuisance coordinates:
\begin{equation}\label{eq:coordinate-convention}
    \vec{z}=
    \begin{bmatrix}
        z[1]\\
        \vec{z}_{-1}
    \end{bmatrix}.
\end{equation}

Choose deterministic vectors \(\vec{v}_1,\ldots,\vec{v}_n\in\R^d\) satisfying
\begin{equation}\label{eq:orthogonal-vectors}
    \vec{v}_i^{\top}\vec{v}_j=0 \quad (i\ne j),
    \qquad
    \|\vec{v}_i\|_2^2=d.
\end{equation}
Such a family exists when \(n\le d\).  Define
\begin{equation}\label{eq:training-data}
    \vec{x}_i=
    \begin{bmatrix}
        \sqrt{\gamma}\\ \vec{v}_i
    \end{bmatrix}
    \in\R^{d+1},
    \qquad
    \Xt=[\vec{x}_1,\ldots,\vec{x}_n].
\end{equation}
Every training label is
\begin{equation}\label{eq:training-labels}
    y_i=\frac{x_i[1]}{\sqrt{\gamma}}=1.
\end{equation}
A fresh test point has law%
\footnote{With a non-flat nuisance test covariance --- trace normalized to
\(d\), covariance dominated by the spike strength, and the training nuisance
directions mutually orthogonal with equal norms --- the interpolation
threshold remains a consequence of the Euclidean training geometry alone,
while the first-useful-span threshold becomes spectrum-dependent under a
weighted-isotropy hypothesis on the nuisance Gram; see the discussion at
Section~\ref{par:nonflat-tails}.}
\begin{equation}\label{eq:test-law}
    \bm{\Sigma}:=\diag(\gamma,1,\ldots,1),
    \qquad
    \vec{x}_{\rm test}\sim\calN(\vec{0},\bm{\Sigma}),
    \qquad
    y_{\rm test}=x_{\rm test}[1]/\sqrt{\gamma}.
\end{equation}
All feature coordinates are measured in units of the nuisance standard
deviation.  Thus \(\gamma\) is the spike-to-nuisance variance ratio.%
\footnote{Starting from a raw spike variance \(\sigma_{\rm sp}^2\) and a raw
nuisance variance \(\sigma_{\rm nu}^2\), rescale the inputs by
\(\frac{1}{\sigma_{\rm nu}}\).  The covariance becomes
\(\diag(\sigma_{\rm sp}^2/\sigma_{\rm nu}^2,1,\ldots,1)\), so
\[
    \gamma=\frac{\sigma_{\rm sp}^2}{\sigma_{\rm nu}^2},
    \qquad
    \frac{x[1]}{\sigma_{\rm sp}}
    =
    \frac{\widetilde x[1]}{\sqrt\gamma}.
\]
Thus the rescaling leaves the normalized target unchanged.}
The deterministic design is an exact high-dimensional caricature of the test
law: independent Gaussian nuisance vectors are nearly orthogonal and have
norms near \(\sqrt d\), whereas here both properties hold exactly.  We also
freeze the signal coordinate at one representative nonzero value.%
\footnote{An intercept or bias term could fit the constant training signal,
but it would fail on the test distribution, where the signal coordinate is
random and the target has mean zero.}
Thus this paper's model is a fixed-design geometric example: the training
inputs are deterministic, while the test point is random.  It is not an
i.i.d.\ statistical model.

For \(\vec{\alpha}\in\R^n\), write \(\vec{u}=\Xt\vec{\alpha}\), and define for any such span predictor $\vec{u}$:
\begin{equation}\label{eq:risk-defs}
    \Ttrain(\vec{u})=\frac1n\sum_{i=1}^n(\vec{u}^{\top}\vec{x}_i-1)^2,
    \qquad
    \Ttest(\vec{u})=\E[(\vec{u}^{\top}\vec{x}_{\rm test}-y_{\rm test})^2].
\end{equation}
We call \(\Ttrain\) and \(\Ttest\) the training and test errors.  Throughout
the paper, ``error'' and ``risk'' are otherwise used interchangeably for
these squared-loss quantities; fixed phrases such as ``empirical-risk
minimization'' retain their standard names.
The zero predictor has \((\Ttrain,\Ttest)=(1,1)\), since each training label
is \(1\) and \(\E[y_{\rm test}^2]=1\).

Define the coefficient sum \(s\) and coefficient energy \(q\) by
\begin{equation}\label{eq:sq-defs}
    s:=\sum_{i=1}^n\alpha_i,
    \qquad
    q:=\sum_{i=1}^n\alpha_i^2.
\end{equation}
The coefficient sum determines the predictor's spike coordinate \(u[1] = \sqrt\gamma\,s\), while
\(dq\) turns out to be the nuisance variance.

Orthogonality also fixes what a span predictor does on a training point.  By
\eqref{eq:orthogonal-vectors} and \eqref{eq:training-data} the training vectors satisfy
\(\vec{x}_j^{\top}\vec{x}_i=\gamma\) for \(j\ne i\) and \(\vec{x}_i^{\top}\vec{x}_i=\gamma+d\),
so
\begin{equation}\label{eq:train-prediction}
    \vec{u}^{\top}\vec{x}_i=\gamma s+d\,\alpha_i
    \qquad(1\le i\le n).
\end{equation}
Every training point sees the same shared term \(\gamma s\) of \eqref{eq:train-prediction}.
What distinguishes one training point from another is the second term of
\eqref{eq:train-prediction}, that point's own coefficient $\alpha_i$ amplified by \(d\).  This
single identity produces the training error in Lemma~\ref{lem:two-scalar}.  It also produces both
the per-example residuals and the reuse-stability boundary discussed in Section~\ref{sec:sgd}.

The picture to keep in mind is Figure~\ref{fig:block-construction}: all training
samples share the same spike row, while their nuisance blocks are disjoint.
When \(d\) is divisible by \(n\), a concrete realization partitions the
nuisance coordinates into \(n\) equal blocks and puts \(\vec{v}_i\) in the
\(i\)th block.  The four-sample construction in Figure~\ref{fig:block-construction} also
shows why these two scalars $s$ and $q$ are the only quantities that enter the errors.

\begin{figure}[t]
    \centering
    \resizebox{0.95\textwidth}{!}{\FigStylizedDesign}
    \caption{Block construction of the orthogonal design for four samples.
    The first row is the shared spike, while the nuisance coordinates are
    partitioned into orthogonal blocks.  Orthogonality reduces the analysis of
    a span predictor \(\vec{u}=\Xt\vec{\alpha}\) to the coefficient sum \(s\) and
    coefficient energy \(q\), equivalently the spike coordinate
    \(\sqrt\gamma\,s\) and nuisance variance \(dq\).}
    \label{fig:block-construction}
\end{figure}

Define the spike oracle by
\begin{equation}\label{eq:oracle-def}
    \vec{w}^{\star}=(1/\sqrt{\gamma},0,\ldots,0)^\top.
\end{equation}
This oracle achieves zero train and test error, but it is not in the training span.\footnote{If
\(\Xt\vec{\alpha}=\vec{w}^{\star}\), the zeros in the nuisance coordinates and the 
linear independence of the \(\{\vec{v}_i\}_{i=1}^n\) would force
\(\alpha_i=0\) for every \(i\), while the spike coordinate requires
\(\sum_i\alpha_i=1/\gamma\), a contradiction.}


We compare span predictors through their train--test pairs
\((\Ttrain(\vec{u}),\Ttest(\vec{u}))\).  One pair \emph{dominates} another if it is no
larger in either coordinate and is strictly smaller in at least one.  A span
predictor is \emph{Pareto-optimal} if its pair is undominated.  The following
reduction identifies the Pareto-optimal predictors.

\begin{lemma}[Scalar reduction]
\label{lem:two-scalar}
For a generic span predictor \(\vec{u}=\Xt\vec{\alpha}\), using the notation $s = \sum_i \alpha_i$ and $q = \sum_i \alpha_i^2$ 
from \eqref{eq:sq-defs}, 
\begin{equation}\label{eq:scalar-test-error}
    \Ttest(\vec{u})=\underbrace{(\gamma s-1)^2}_{\text{spike bias}^2}+\underbrace{dq}_{\text{nuisance variance}}
\end{equation}
and
\begin{equation}\label{eq:scalar-train-error}
    \Ttrain(\vec{u})=(\gamma s-1)^2+
    \frac{2ds}{n}(\gamma s-1)+
    \frac{d^2}{n}q.
\end{equation}
For fixed \(s\), both errors are minimized by \(q=s^2/n\), attained exactly
when \(\alpha_1=\cdots=\alpha_n=s/n\).  Hence every Pareto-optimal span
predictor is symmetric.
\end{lemma}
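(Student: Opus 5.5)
The plan is to compute both errors directly from the structure of \(\vec{u}=\Xt\vec{\alpha}\), and then handle the symmetry claim with Cauchy--Schwarz.

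\emph{Test error.} First write the two coordinates of the predictor: \(u[1]=\sqrt{\gamma}\,s\) and \(\vec{u}_{-1}=\sum_i\alpha_i\vec{v}_i\). By the orthogonality and equal norms in \eqref{eq:orthogonal-vectors}, \(\|\vec{u}_{-1}\|_2^2=d\,q\). Write the test point as \(x_{\rm test}[1]=\sqrt{\gamma}\,y_{\rm test}\) with \(y_{\rm test}\sim\calN(0,1)\), independent of \(\vec{z}:=(\vec{x}_{\rm test})_{-1}\sim\calN(\vec{0},\bm{I}_d)\). The test residual is then
\[
(\gamma s-1)\,y_{\rm test}+\vec{u}_{-1}^{\top}\vec{z}.
\]
Independence and zero means kill the cross term, and the two squared terms give \((\gamma s-1)^2+dq\). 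This is \eqref{eq:scalar-test-error}, exactly as in the one-sample computation \eqref{eq:one-sample-test-error}.

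\emph{Training error.} Use \eqref{eq:train-prediction}, which says \(\vec{u}^{\top}\vec{x}_i=\gamma s+d\alpha_i\). Expand
\[
(\gamma s-1+d\alpha_i)^2=(\gamma s-1)^2+2d\alpha_i(\gamma s-1)+d^2\alpha_i^2,
\]
and average over \(i\). Since \(\frac1n\sum_i\alpha_i=s/n\) and \(\frac1n\sum_i\alpha_i^2=q/n\), this gives \eqref{eq:scalar-train-error} immediately.

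\emph{Symmetry.} For fixed \(s\), both expressions are affine in \(q\) with positive coefficients \(d\) and \(d^2/n\), so each is strictly increasing in \(q\). Cauchy--Schwarz gives \(s^2=(\sum_i\alpha_i)^2\le n\sum_i\alpha_i^2=nq\), so \(q\ge s^2/n\), with equality iff all \(\alpha_i\) are equal, hence equal to \(s/n\). Thus for fixed \(s\) the symmetric choice attains the minimum of both errors simultaneously, and that minimum is attained exactly there.

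\emph{Pareto consequence.} Take any non-symmetric \(\vec{\alpha}\). Replace it by the symmetric vector with the same \(s\). This strictly decreases \(q\), and since the coefficients \(d\) and \(d^2/n\) are positive, it strictly decreases both errors. The non-symmetric pair is therefore strictly dominated and cannot be Pareto-optimal.

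The only point needing care is the strictness and positivity bookkeeping, i.e.\ that \(d,n>0\) make the \(q\)-coefficients strictly positive. Everything else is routine algebra, so I expect no real obstacle.
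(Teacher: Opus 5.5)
Your proposal is correct and follows the paper's proof essentially step for step. It uses the same coordinate split $u[1]=\sqrt{\gamma}\,s$, $\|\vec{u}_{-1}\|_2^2=dq$ for the test error, the same expansion of the residual $(\gamma s-1)+d\alpha_i$ from \eqref{eq:train-prediction} for the training error, and the same Cauchy--Schwarz replacement argument for the Pareto claim. Your explicit independence calculation for the test residual and your positivity bookkeeping on the coefficients of $q$ simply spell out details that the paper leaves implicit.
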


\begin{proof}
The signal coordinate of \(\vec{u}=\sum_i\alpha_i\vec{x}_i\) is
\(\sum_i\alpha_i\sqrt{\gamma}=\sqrt{\gamma}s\), while orthogonality of the
\(\vec{v}_i\) gives \(\|\vec{u}_{-1}\|_2^2=\sum_i\alpha_i^2\|\vec{v}_i\|_2^2=dq\).  The test error decomposes into spike bias and
nuisance variance:
\begin{equation}\label{eq:test-error-decomposition}
    \Ttest(\vec{u})=(\sqrt{\gamma}s-1/\sqrt{\gamma})^2\gamma+\|\vec{u}_{-1}\|_2^2
    =(\gamma s-1)^2+dq.
\end{equation}
By \eqref{eq:train-prediction}, the residual on training point \(i\) is
\((\gamma s-1)+d\alpha_i\), and expanding the average of its square gives
\eqref{eq:scalar-train-error}.  Finally, Cauchy--Schwarz gives
\(q\ge s^2/n\), with equality for
equal coefficients.  If \(q>s^2/n\), replacing \(\vec{\alpha}\) by the
equal-coefficient vector with the same \(s\) strictly decreases both
\(\Ttrain\) and \(\Ttest\).  Hence no such point can be Pareto-optimal.
\end{proof}

\phantomsection\label{par:residue-amplification}%
\paragraph{Residue amplification: one extra power.}
In the raw coefficients, the nuisance quadratic enters the test error as
\(dq\) in \eqref{eq:test-error-decomposition}, while the pure nuisance quadratic in the training error enters as
\(d^2q/n\) in \eqref{eq:scalar-train-error}.  Thus training sees one extra factor\footnote{In 
the random same-distribution companion model \cite{ranade2026samedistribution},
\(d\bm{I}_n\) is replaced by the nuisance Gram matrix
\(\bm{W}=\Xt_{-1}^{\top}\Xt_{-1}\), where \(\Xt_{-1}\) denotes the
nuisance-coordinate rows of the data matrix.  The nuisance contribution to test
risk contains \(\vec{\alpha}^{\top}\bm{W}\vec{\alpha}\), while the pure nuisance quadratic in
empirical loss contains \(\vec{\alpha}^{\top}\bm{W}^2\vec{\alpha}/n\).} \(d/n\).%

\subsection{Exact Pareto frontier}
\label{subsec:frontier}

We now compute, for each level of training error, the smallest test error a
span predictor can achieve: the \emph{train--test tradeoff curve}, i.e.\ the set of undominated
train--test pairs \((t,g)\) in the sense above (here \(t=\Ttrain\),
\(g=\Ttest\)).

By Lemma~\ref{lem:two-scalar}, every Pareto-optimal predictor is symmetric,
so \(q=s^2/n\) and both errors are functions of the coefficient sum \(s\)
alone:
\begin{equation}\label{eq:symmetric-scalar-risks}
    \Ttest(s)=(\gamma s-1)^2+\frac{d}{n}s^2,
    \qquad
    \Ttrain(s)=\left(\left(\gamma+\tfrac{d}{n}\right)s-1\right)^2.
\end{equation}
On each training point the symmetric predictor has the common residual
\[
    \vec{u}^{\top}\vec{x}_i-1
    =
    \left(\gamma+\frac dn\right)s-1.
\]
Write this \emph{signed training residual} as
\begin{equation}\label{eq:frontier-h-def}
    h:=\left(\gamma+\tfrac{d}{n}\right)s-1,
\end{equation}
so that \(\Ttrain=h^2\).  Solving \eqref{eq:frontier-h-def} for \(s\) and
substituting into \(\Ttest\) in \eqref{eq:symmetric-scalar-risks} yields 
the \emph{signed-residual test-risk curve}
\begin{equation}\label{eq:frontier-signed-risk}
    \widetilde g(h)
    :=
    \frac{(d-\gamma n h)^2+n d\,(1+h)^2}{(d+\gamma n)^2}.
\end{equation}
For every training-error level \(t>0\), the equation \(h^2=t\) has the two
signed residuals \(h=\pm\sqrt t\).  Define the \emph{overshooting} and
\emph{undershooting} branches
\begin{equation}\label{eq:frontier-branch-defs}
    g_+(t):=\widetilde g(+\sqrt t),
    \qquad
    g_-(t):=\widetilde g(-\sqrt t).
\end{equation}
At \(t=0\), the two branches coincide at the interpolator.  Thus \(\widetilde g\)
is parameterized by the signed residual \(h\), whereas \(g_\pm\) are
parameterized by the training error \(t=h^2\).

\begin{theorem}[Exact Pareto frontier]
\label{thm:pareto}
Assume \(\gamma>1\).  The Pareto frontier over all span predictors is
\begin{equation}\label{eq:pareto-set}
    \{(t,g_+(t)):0\le t\le t^\star\},
\end{equation}
where
\begin{equation}\label{eq:frontier-endpoints}
    t^\star=\frac{(\gamma-1)^2 d^2}{(d+\gamma^2 n)^2},
    \qquad
    g_{\min}=\frac{d}{d+\gamma^2 n},
    \qquad
    g_{\rm int}=\frac{d\,(d+n)}{(d+\gamma n)^2},
\end{equation}
and
\begin{equation}\label{eq:frontier-endpoint-form}
    g_+(t)=g_{\min}+(g_{\rm int}-g_{\min})
    \left(1-\sqrt{t/t^\star}\right)^2.
\end{equation}
Thus the frontier runs from the minimum-norm interpolator
$(0,g_{\rm int})$ to the test-optimal span predictor
$(t^\star,g_{\min})$.  As $t$ increases from $0$ to $t^\star$,
$g_+(t)$ decreases from $g_{\rm int}$ to $g_{\min}$: along the efficient
branch, better test error is obtained by accepting more training error.%
\footnote{The multispike companion \citep{ranade2026multispike} asks how a fixed amount of
training misfit should be distributed among several informative directions.
The stronger directions improve first as more misfit is allowed.}
Equivalently, expanding the same curve gives
\begin{equation}\label{eq:frontier}
    g_+(t)=
    \frac{(d-\gamma n\sqrt t)^2+n d\,(1+\sqrt t)^2}{(d+\gamma n)^2},
    \qquad 0\le t\le t^\star.
\end{equation}
\end{theorem}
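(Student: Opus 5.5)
By Lemma~\ref{lem:two-scalar}, every Pareto-optimal span predictor is symmetric. So it suffices to study the one-parameter family of symmetric predictors, indexed by \(s\in\R\), or equivalently by the signed residual \(h=(\gamma+d/n)s-1\), which is an affine bijection. Along this family, \eqref{eq:symmetric-scalar-risks} gives \(\Ttrain=h^2\) and \(\Ttest=\widetilde g(h)\) as in \eqref{eq:frontier-signed-risk}.

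A non-symmetric predictor is strictly dominated by the symmetric one with the same \(s\). Hence the Pareto frontier over all span predictors is exactly the Pareto frontier of the planar curve \(h\mapsto(h^2,\widetilde g(h))\). I will first verify the substitution: \(s=n(1+h)/(d+\gamma n)\) gives \(\gamma s-1=(\gamma n h - d)/(d+\gamma n)\) and \(ds^2/n=nd(1+h)^2/(d+\gamma n)^2\), which yields \eqref{eq:frontier-signed-risk}.

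Next, \(\widetilde g\) is a convex quadratic in \(h\), with leading coefficient \((\gamma^2n^2+nd)/(d+\gamma n)^2>0\). I will compute its minimizer by setting the derivative to zero: \(-\gamma n(d-\gamma n h)+nd(1+h)=0\), so \(h^\star=d(\gamma-1)/(d+\gamma^2 n)\). This is positive exactly because \(\gamma>1\), and \((h^\star)^2=t^\star\). Evaluating gives \(\widetilde g(h^\star)=g_{\min}=d/(d+\gamma^2n)\). I will check this most easily in \(s\)-coordinates, where the minimizer of \((\gamma s-1)^2+ds^2/n\) is \(s=\gamma n/(\gamma^2n+d)\). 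Likewise \(\widetilde g(0)=g_{\rm int}=(d^2+nd)/(d+\gamma n)^2\).

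Because \(\widetilde g\) is quadratic with vertex \(h^\star\), it has the form \(\widetilde g(h)=g_{\min}+(g_{\rm int}-g_{\min})(1-h/h^\star)^2\), which is forced by its values and vertex. Substituting \(h=\sqrt t\) gives \eqref{eq:frontier-endpoint-form}, and \eqref{eq:frontier} is just \eqref{eq:frontier-signed-risk} at \(h=\sqrt t\).

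For the Pareto characterization, fix a training level \(t>0\). The two candidate points are \(h=\pm\sqrt t\). Since the vertex satisfies \(h^\star>0\), we have \(|\sqrt t-h^\star|<|-\sqrt t-h^\star|\), so \(g_+(t)<g_-(t)\) and the undershooting branch is strictly dominated.

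On the overshooting branch, consider \(t\) in \([0,t^\star]\). Here \(g_+\) is strictly decreasing, since \(\sqrt t\) moves toward the vertex. Therefore no point in this range is dominated by a point with smaller \(t\), and points with larger \(t\le t^\star\) have larger training error, so none of these points is dominated. For \(t>t^\star\), \(g_+(t)>g_{\min}\) while \(t>t^\star\), so the point is dominated by \((t^\star,g_{\min})\).

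The only delicate bookkeeping is the comparison across branches. I must check that a point \((t,g_+(t))\) with \(t\le t^\star\) is not dominated by some undershooting point \((t',g_-(t'))\) with \(t'<t\). This holds because \(g_-(t')=\widetilde g(-\sqrt{t'})\ge\widetilde g(0)=g_{\rm int}\ge g_+(t)\), since \(-\sqrt{t'}\le 0<h^\star\) and \(\widetilde g\) is decreasing on \((-\infty,h^\star]\). The inequality is strict when \(t>0\); at \(t=0\) both branches are the same interpolator point.

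I expect this cross-branch domination check, together with the careful use of \(\gamma>1\) to place the vertex on the positive side, to be the main point needing care. Everything else is routine quadratic algebra.
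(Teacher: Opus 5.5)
Your proposal is correct and follows essentially the same route as the paper. You reduce to symmetric predictors via Lemma~\ref{lem:two-scalar}, parametrize by the signed residual $h$, show the undershooting branch is dominated, and use the vertex form of $\widetilde g$ at $h^\star=(\gamma-1)d/(d+\gamma^2 n)$ to get strict monotonicity on $[0,t^\star]$ and domination beyond it. The differences are minor: you find the vertex by differentiation and compare branches by distance to the vertex, where the paper completes the square and computes $g_+(t)-g_-(t)$ explicitly; your explicit cross-branch check spells out a step the paper leaves implicit in the transitivity of domination.
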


\begin{proof}
By Lemma~\ref{lem:two-scalar}, Pareto-optimal predictors are symmetric, so
the symmetric scalar risks \eqref{eq:symmetric-scalar-risks} apply.  Fix a
training error \(t>0\).  By \eqref{eq:frontier-h-def}, exactly two signed
residuals give this training error, \(h=\pm\sqrt t\), with test errors
\(g_\pm(t)\) from \eqref{eq:frontier-branch-defs}.  Their difference is
\begin{equation}\label{eq:frontier-branch-comparison}
    g_+(t)-g_-(t) = \widetilde{g}(+\sqrt t)-\widetilde{g}(-\sqrt t)
    =
    -\frac{4nd\,\sqrt t\,(\gamma-1)}{(d+\gamma n)^2}<0
\end{equation}
because \(\gamma>1\). At every positive training error, the overshooting
branch has strictly smaller test error, so the undershooting branch is
dominated.  Expanding the numerator of \eqref{eq:frontier-signed-risk} as a
quadratic in the signed residual \(h\) gives
\[
    n(d+\gamma^2n)h^2
    -
    2nd(\gamma-1)h
    +
    d(d+n).
\]
Completing the square gives
\begin{equation}\label{eq:frontier-completed-square}
    \widetilde g(h)
    =
    \frac{d}{d+\gamma^2 n}
    +
    \frac{n\,(d+\gamma^2 n)}{(d+\gamma n)^2}
    (h-h^\star)^2,
    \qquad
    h^\star:=\frac{(\gamma-1)d}{d+\gamma^2 n}.
\end{equation}
Set \(t^\star:=(h^\star)^2\), which matches \eqref{eq:frontier-endpoints}.  On
\(0\le h\le h^\star\), \eqref{eq:frontier-completed-square} shows that
test error \(\widetilde g(h)\) decreases while training error \(t=h^2\) increases, so no two points
of this branch segment dominate one another.  For \(h>h^\star\), both
\(t=h^2\) and \(\widetilde g(h)\) increase, so every such point is
dominated by \((t^\star,g_{\min})\).  This identifies \(t^\star\),
\(g_{\min}=\widetilde g(h^\star)=d/(d+\gamma^2 n)\), and
\(g_{\rm int}=g_+(0)\).  Finally, the identity
\begin{equation}\label{eq:frontier-endpoint-gap}
    g_{\rm int}-g_{\min}
    =
    \frac{n\,(d+\gamma^2 n)}{(d+\gamma n)^2}\,t^\star,
\end{equation}
together with \eqref{eq:frontier-completed-square} at \(h=+\sqrt t\), shows
that the completed-square form \eqref{eq:frontier-completed-square} is exactly the endpoint form
\eqref{eq:frontier-endpoint-form}.
\end{proof}

\paragraph{Reading the frontier.}
The formulas become more transparent if we
define the sample ratio \(\rho\) and the signal-to-nuisance ratio \(R\) as
\begin{equation}\label{eq:scale-identities}
    \rho:=\frac{\gamma n}{d},
    \qquad
    R:=\frac{\gamma^2n}{d}=\gamma\rho.
\end{equation}
The endpoints \eqref{eq:frontier-endpoints} depend on \((d,n,\gamma)\) only
through \(\rho\) and \(R\):
\begin{equation}\label{eq:frontier-endpoints-rho-R}
    g_{\min}=\frac{1}{1+R},
    \qquad
    t^\star=\left(\frac{R-\rho}{\rho(1+R)}\right)^2,
    \qquad
    g_{\rm int}=\frac{R+\rho^2}{R(1+\rho)^2}.
\end{equation}
In particular, the minimum attainable test error $g_{\min}$ is governed by the single signal-to-nuisance ratio \(R\).

Now further define the \emph{spike calibration} \(r\) for a predictor \(\vec u\) as
\begin{equation}\label{eq:spike-calibration}
    r:=\gamma s = \sqrt\gamma u[1].
\end{equation}
The \(\vec{u}\) predictor's spike coordinate \(u[1]\) is \(r/\sqrt\gamma\), so \(r=1\) matches the
oracle's spike coordinate \(1/\sqrt\gamma\).  The bias term \((\gamma s-1)^2\)
in \eqref{eq:symmetric-scalar-risks} is therefore the squared calibration
error, and the two errors in \eqref{eq:symmetric-scalar-risks} take the form
\begin{equation}\label{eq:calibrated-frontier-form}
    \Ttest=(r-1)^2+\frac{r^2}{R},
    \qquad
    \Ttrain=\left(r\left(1+\frac1\rho\right)-1\right)^2.
\end{equation}

The frontier immediately turns the fourth-quadrant picture into a necessary
condition: in the benign-misfitting window, any span predictor with small test error
must pay large empirical error.

\begin{corollary}[Fourth-quadrant necessity]
\label{cor:fourth-quadrant-necessity}
Assume the benign-misfitting window \eqref{eq:benign-misfitting-window},
equivalently --- using the notation from \eqref{eq:scale-identities} --- the 
signal-to-nuisance ratio \(R=\gamma^{2}n/d\to\infty\) and the sample ratio
\(\rho=\gamma n/d\to0\).  If a 
family of span predictors
\(\vec{u}\) satisfies
\(\Ttest(\vec{u})\to0\), then \(\Ttrain(\vec{u})\to\infty\).  More generally, for any
fixed \(\varepsilon\in(0,1)\), any span predictor with
\(\Ttest(\vec{u})\le\varepsilon+o(1)\) must have
\begin{equation}\label{eq:fourth-quadrant-training-lower}
    \Ttrain(\vec{u})\ge
    \frac{(1-\sqrt{\varepsilon})^2+o(1)}{\rho^2}.
\end{equation}

In fact, for any \(\rho>0\) and any \(\varepsilon>0\), every span predictor
\(\vec{u}\) with \(\Ttest(\vec{u})\le\varepsilon\)
satisfies\footnote{Under the hypothesis \(\gamma>1\) of
Theorem~\ref{thm:pareto}, knowing the signal-to-nuisance ratio \(R\) of
\eqref{eq:scale-identities} as well allows us to sharpen \eqref{eq:fourth-quadrant-finite}.  
Minimizing \(\Ttrain\) subject to \(\Ttest\le\varepsilon\)
inverts the frontier \eqref{eq:frontier-endpoint-form}, and that optimization
analysis gives the attained value
\[
    \min_{\Ttest(\vec{u})\le\varepsilon}\Ttrain(\vec{u})
    =
    \frac{\max\Bigl\{0,R-\rho-(1+\rho)
          \sqrt{R\bigl((1+R)\varepsilon-1\bigr)}\Bigr\}^{2}}
         {\rho^{2}(1+R)^{2}}
\]
for every \(\varepsilon\ge g_{\min}=1/(1+R)\), below which no span predictor
meets the target at all.  The bound \eqref{eq:fourth-quadrant-finite} is this
value's \(R\to\infty\) limit at fixed \(\rho\), and is no larger at any finite
\(R\), the slack being the nuisance variance \(r^{2}/R\) of
\eqref{eq:calibrated-frontier-form} that the proof of Corollary~\ref{cor:fourth-quadrant-necessity} discards.  Both sides are zero once \(\varepsilon\ge g_{\rm int}\).}
\begin{equation}\label{eq:fourth-quadrant-finite}
    \Ttrain(\vec{u})
    \ge
    \frac{\max\bigl\{1-(1+\rho)\sqrt{\varepsilon},\,0\bigr\}^{2}}{\rho^{2}}.
\end{equation}
\end{corollary}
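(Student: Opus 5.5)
The plan is to prove the finite bound \eqref{eq:fourth-quadrant-finite} first and then read off the asymptotic claims from it. The key is to reduce to symmetric predictors, then discard the nuisance variance so that only the spike calibration is left.

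\emph{Reduction to symmetric predictors.} Take any span predictor $\vec{u}=\Xt\vec{\alpha}$ with $\Ttest(\vec{u})\le\varepsilon$. By Lemma~\ref{lem:two-scalar}, replacing $\vec{\alpha}$ by the equal-coefficient vector with the same $s$ weakly decreases both errors. Hence it suffices to lower-bound $\Ttrain$ for symmetric predictors that satisfy the test constraint, using \eqref{eq:symmetric-scalar-risks}.

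To make this precise, let $\bar{\vec{u}}$ be the symmetrized predictor. Then $\Ttest(\bar{\vec{u}})\le\Ttest(\vec{u})\le\varepsilon$ and $\Ttrain(\vec{u})\ge\Ttrain(\bar{\vec{u}})$. So a lower bound for symmetric predictors meeting the constraint transfers to $\vec{u}$.

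\emph{Bound for symmetric predictors.} Work in the calibrated form \eqref{eq:calibrated-frontier-form} with $r=\gamma s$:
\[
\Ttest=(r-1)^2+\frac{r^2}{R}\ge(r-1)^2.
\]
Dropping the nuisance term, $\Ttest\le\varepsilon$ forces $|r-1|\le\sqrt\varepsilon$, so $r\ge1-\sqrt\varepsilon$.

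The training error is $\Ttrain=\bigl(r(1+1/\rho)-1\bigr)^2$, and the inner expression is increasing in $r$. At $r=1-\sqrt\varepsilon$ the residual equals
\[
\frac{(1-\sqrt\varepsilon)(1+\rho)-\rho}{\rho}=\frac{1-(1+\rho)\sqrt\varepsilon}{\rho}.
\]
If this quantity is positive, then every admissible $r$ gives a residual at least this large and positive, and squaring yields \eqref{eq:fourth-quadrant-finite}. Otherwise the right-hand side of \eqref{eq:fourth-quadrant-finite} is $0$ and the bound is trivial. This step is where care is needed: the squaring is monotone only once the residual is known to be nonnegative. The $\max\{\cdot,0\}$ in \eqref{eq:fourth-quadrant-finite} handles exactly this. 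Note also that this step uses no assumption $\gamma>1$.

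\emph{Asymptotic corollaries.} Apply \eqref{eq:fourth-quadrant-finite} with $\varepsilon$ replaced by $\varepsilon_n:=\varepsilon+o(1)$. As $\rho\to0$, $(1+\rho)\sqrt{\varepsilon_n}\to\sqrt\varepsilon$. The numerator therefore tends to $(1-\sqrt\varepsilon)^2>0$ for $\varepsilon<1$, which gives \eqref{eq:fourth-quadrant-training-lower}.

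For $\Ttest\to0$, apply the same argument with $\varepsilon_n\to0$. The numerator tends to $1$, so $\Ttrain\ge(1+o(1))/\rho^2\to\infty$.

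The hypothesis $R\to\infty$ is not needed for the lower bound itself. It only ensures, through $g_{\min}=1/(1+R)\to0$, that such predictors exist, so the statement is not vacuous.
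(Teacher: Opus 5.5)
Your proof is correct and follows essentially the same route as the paper: drop the nuisance term to get $r\ge 1-\sqrt{\varepsilon}$, lower-bound the training error by $\bigl(r(1+1/\rho)-1\bigr)^2$, and use monotonicity in $r$, with $\max\{\cdot,0\}$ handling the sign. The paper substitutes the Cauchy--Schwarz bound $q\ge s^2/n$ directly into the training formula instead of explicitly symmetrizing, which is the same step. Your $\varepsilon_n\to 0$ argument for the vanishing-test-error claim gives the sharp $(1+o(1))/\rho^2$ immediately, whereas the paper first takes $\varepsilon=1/4$ and only afterwards notes that the constant sharpens to one.
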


In words: once the span is useful but interpolation is still not useful, the
training points become the wrong diagnostic---a span predictor with small
fresh-test error must look bad on the very points used to construct it.
We call this the \emph{misfitting obstruction}.

\begin{proof}
The proof uses no optimization.  Small test error forces the spike calibration
\(r\) near one, and then the training residual is of size \(1/\rho\).

Write \(r=\gamma s=\sqrt{\gamma}\,u[1]\) for the spike calibration
\eqref{eq:spike-calibration}, with \(s=\sum_i\alpha_i\) and \(q=\sum_i\alpha_i^2\)
from \eqref{eq:sq-defs}.  From Lemma~\ref{lem:two-scalar},
\[
    \Ttest(\vec{u})=(r-1)^2+dq\ge (r-1)^2,
\]
because \(q\ge0\).  So \(\Ttest(\vec{u})\le\varepsilon\) gives
\begin{equation}\label{eq:calibration-from-test}
    r\ge 1-\sqrt{\varepsilon}.
\end{equation}

Lemma~\ref{lem:two-scalar} also gives \(q\ge s^2/n\).  Substituting that lower
bound into the training error
\(\Ttrain=(\gamma s-1)^{2}+\frac{2ds}{n}(\gamma s-1)+\frac{d^{2}}{n}q\)
of \eqref{eq:scalar-train-error}, and using
\(d/n=\gamma/\rho\) from \eqref{eq:scale-identities} together with
\(s=r/\gamma\) from \eqref{eq:spike-calibration},
\begin{equation}\label{eq:train-lower-in-r}
    \Ttrain(\vec{u})
    \ge
    (r-1)^2+\frac{2r(r-1)}{\rho}+\frac{r^2}{\rho^2}
    =
    \left(r\left(1+\frac1\rho\right)-1\right)^2.
\end{equation}

Fix \(\rho>0\) and \(\varepsilon>0\).  If \((1+\rho)\sqrt{\varepsilon}\ge1\), the
right side of \eqref{eq:fourth-quadrant-finite} is zero and there is nothing to
prove.  Otherwise \eqref{eq:calibration-from-test} gives
\[
    r\left(1+\frac1\rho\right)-1
    \ \ge\
    (1-\sqrt{\varepsilon})\left(1+\frac1\rho\right)-1
    =
    \frac{1-(1+\rho)\sqrt{\varepsilon}}{\rho}
    \ >\ 0.
\]
Both sides are positive, so squaring preserves the inequality, and
\eqref{eq:train-lower-in-r} gives \eqref{eq:fourth-quadrant-finite}.

Now consider what happens when \(\rho\to0\).  For the fixed-\(\varepsilon\)
claim, a family with \(\Ttest(\vec{u})\le\varepsilon+o(1)\) satisfies
\eqref{eq:fourth-quadrant-finite} at target \(\varepsilon+o(1)\).  Because
\(\varepsilon\) is fixed in \((0,1)\), the numerator of
\eqref{eq:fourth-quadrant-finite} at that target is
\(\max\bigl\{1-(1+\rho)\sqrt{\varepsilon}+o(1),\,0\bigr\}^{2}\).  As
\(\rho\to0\), the max operation in \eqref{eq:fourth-quadrant-finite} becomes
inactive since \(1-\sqrt{\varepsilon}>0\).  This immediately gives
\eqref{eq:fourth-quadrant-training-lower}.

For the vanishing-test-error claim, a family with \(\Ttest(\vec{u})\to0\)
eventually has \(\Ttest(\vec{u})\le1/4\).  Applying
\eqref{eq:fourth-quadrant-finite} at \(\varepsilon=1/4\) gives
\[
    \Ttrain(\vec{u})
    \ge
    \frac{\max\bigl\{1-(1+\rho)/2,\,0\bigr\}^{2}}{\rho^2}
    =
    \frac{\max\{1-\rho,\,0\}^{2}}{4\rho^2}
    \to\infty,
\]
using \(\rho\to0\).  Letting \(\varepsilon\downarrow0\) in
\eqref{eq:fourth-quadrant-finite} sharpens the leading constant to one.
\end{proof}

\medskip
\noindent\fbox{%
\begin{minipage}{0.94\textwidth}
By the calibrated risks \eqref{eq:calibrated-frontier-form}, calibration
\(r\approx1\) is what test prediction wants.  But inside the span, this
calibration makes the nuisance residue contribute a term of size \(1/\rho\) on
each training point, hence squared training error of order \(1/\rho^2\).  The
same residue contributes only \(1/R\) to fresh-test risk---smaller by
the residue-amplification factor \(d/n=\gamma/\rho\).
\end{minipage}}
\medskip

In the window
\begin{equation}\label{eq:misfitting-window}
    \frac{1}{\gamma}\ll\rho\ll 1
    \qquad\Longleftrightarrow\qquad
    \gamma n\ll d\ll\gamma^2n,
\end{equation}
the endpoints in \eqref{eq:frontier-endpoint-form} pull apart:
 \(g_{\rm int}\to1\) (the interpolator becomes useless), \(g_{\min}\to0\) 
 (the best span predictor becomes accurate), and
\(t^\star\to\infty\) (at the cost of a diverging training error).  Here
\(\rho=n/(d/\gamma)\) measures the sample size against the interpolation
threshold \(d/\gamma\), while \(R=n/(d/\gamma^2)\) measures it against the
first-useful-span threshold \(d/\gamma^2\).  The span contains good test
predictors, but they live above the zero-predictor training-error baseline.

\paragraph{Survival and contamination.}
The calibration--residue decomposition above is the deterministic one-spike
specialization of the survival--contamination framework of
\citet{muthukumar2020harmless,muthukumar2021classification}.  For a span
predictor \(\vec{u}\), define
\begin{equation}\label{eq:survival-contamination}
    \mathrm{SU}(\vec{u}):=r = \sqrt{\gamma} u[1],
    \qquad
    \mathrm{CN}(\vec{u}):=\sqrt{dq} = \|\vec{u}_{-1}\|_2.
\end{equation}
The spike calibration \(r\) is the \emph{signal-survival factor}, whose
squared-error target is \(r=1\).  The \emph{fresh-test contamination}
\(\sqrt{dq}\) is the standard deviation of the nuisance prediction on a
fresh test point.  The test risk decomposes as
\begin{equation}\label{eq:survival-risk-decomposition}
    \Ttest(\vec{u})
    =
    \bigl(1-\mathrm{SU}(\vec{u})\bigr)^2
    +
    \mathrm{CN}(\vec{u})^2.
\end{equation}
For a symmetric predictor at positive calibration \(r\),
\begin{equation}\label{eq:symmetric-survival-contamination}
    \mathrm{CN}(r)
    =
    \frac{r}{\sqrt{R}},
    \qquad
    \frac{\mathrm{CN}(r)}{\mathrm{SU}(r)}
    =
    \frac{1}{\sqrt{R}}.
\end{equation}
Thus the frontier calculation can be read either as calibration plus
nuisance residue or as survival plus contamination.

At the test-optimal span predictor, minimizing the calibrated test risk
\eqref{eq:calibrated-frontier-form} over \(r\) gives the optimal calibration
\[
    r^\star
    =
    \frac{R}{1+R}.
\]
Therefore, by \eqref{eq:survival-contamination} and
\eqref{eq:symmetric-survival-contamination},
\begin{equation}
\label{eq:best-span-survival-contamination}
    \mathrm{SU}_{\rm best}
    =
    \frac{R}{1+R},
    \qquad
    \mathrm{CN}_{\rm best}
    =
    \frac{\sqrt R}{1+R}.
\end{equation}
Once the sample count enters the benign-misfitting window \(R\to\infty\),
\[
    \bigl(
        \mathrm{SU}_{\rm best},
        \mathrm{CN}_{\rm best}
    \bigr)
    \longrightarrow
    (1,0).
\]
Thus the best span predictor succeeds in exactly the regression sense
identified by the survival--contamination framework: the shared signal
survives with the correct calibration while fresh-test contamination
vanishes.

\subsection{Scaled interpolation and negative ridge}
\label{subsec:scaled-interpolation}

By Lemma \ref{lem:two-scalar}, every Pareto-optimal predictor is a scalar multiple of the minimum-norm
interpolator.  Let \(\one\in\R^n\) denote the all-ones vector.  Because
\(\one\one^\top\one=n\one\), the all-ones vector is an eigenvector of
\(\Xt^\top\Xt=d\bm{I}_n+\gamma\one\one^\top\) with eigenvalue \(d+\gamma n\).
Hence
\begin{equation}\label{eq:gram-inverse}
    \Xt^\top\Xt=d\bm{I}_n+\gamma\one\one^\top,
    \qquad
    (\Xt^\top\Xt)^{-1}\one=\frac{1}{d+\gamma n}\one.
\end{equation}
Therefore the minimum-norm interpolator is
\begin{equation}\label{eq:interpolator-def}
    \vec{u}_{\rm int}=\Xt\vec{\alpha}_{\rm int},
    \qquad
    \vec{\alpha}_{\rm int}=\frac{1}{d+\gamma n}\one.
\end{equation}

Notice that the interpolator's spike calibration is
\begin{equation}\label{eq:rint-def}
    r_{\rm int}= \sqrt{\gamma} u_{\rm int}[1]=\frac{\gamma n}{d+\gamma n}=\frac{\rho}{1+\rho} < 1.
\end{equation}

For \(c\in\R\), define the scaled interpolator
\[
    \vec{u}_c:=c\,\vec{u}_{\rm int}.
\]
The interpolator predicts \(1\) on every training point, so \(\vec{u}_c\)
predicts \(c\).  Hence
\begin{equation}\label{eq:scaled-train-error}
    \Ttrain(\vec{u}_c)=(c-1)^2.
\end{equation}
Equation~\eqref{eq:scaled-train-error} exposes the asymmetry behind the fourth
quadrant.  Ordinary shrinkage, $0\le c\le1$, can only move the training error
up to the zero-predictor baseline.  Restoring spike calibration instead
requires \emph{anti-shrinkage}, \(c>1\), and anti-shrinkage can make
\(\Ttrain(\vec{u}_c)\) arbitrarily large.  Crossing the zero-predictor
baseline \(\Ttrain=1\) takes \(c>2\), and the test-optimal scaling
\(c^\star\) of \eqref{eq:cstar-def} in the benign-misfitting window lands
well past \(2\), pushing predictors into the fourth quadrant.%
\footnote{\citet{freeman2026shrinkage} calls a related random-design
phenomenon the \emph{Inflation Property}: with \(d/n\to\infty\), a scalar
\(c>1\) multiplying the minimum-norm interpolator can improve
generalization.  Subsection~\ref{subsec:related-span-rescaling} places this
next to James--Stein shrinkage and negative ridge.}
  In the ridge-regression perspective of
\eqref{eq:ridge-scale}--\eqref{eq:lambda-star}
below, the same anti-shrinkage move appears as a negative ridge penalty.%
\footnote{Appendix~\ref{app:sco-comparison} gives a norm-budget version of
the same point.  In the benign-misfitting window, the minimum-norm
interpolator remains well inside the oracle norm ball, while any span predictor
with small test error must use a larger Euclidean norm budget.}

Every Pareto-optimal predictor lies on the scaled-interpolator ray
\[
    \{\,c\,\vec{u}_{\rm int}:c\ge0\,\}.
\]
The scalar \(c\) is both the ray multiplier and the common training
prediction.  Its relations to the signed residual \(h\) and calibration \(r\)
are
\begin{equation}\label{eq:c-r-relation}
    h=c-1,
    \qquad
    r=c\,r_{\rm int}.
\end{equation}
The test-optimal scalar is
\begin{equation}\label{eq:cstar-def}
    c^\star=1+h^\star=\frac{\gamma(1+\rho)}{1+R},
\end{equation}
and \(c^\star-1=(\gamma-1)/(1+R)\).

In these coordinates, the frontier of Theorem~\ref{thm:pareto} is the segment
\(1\le c\le c^\star\) of the ray.  Ordinary shrinkage moves below the
interpolator, \(c<1\), onto the dominated branch \(g_{-}\).

\begin{remark}[Why the strong-signal condition matters]
\label{rem:strong-spike}
The branch comparison \eqref{eq:frontier-branch-comparison} above is the
strong-signal case \(\gamma>1\), where \(c^\star>1\) and the tuned ridge
penalty \eqref{eq:lambda-star} below is negative.  When \(\gamma<1\), the
branch ordering reverses.  Then \(c^\star<1\) and \(\lambda^\star>0\), so
ordinary positive ridge shrinkage, not overscaling past interpolation, is
the useful direction.  At \(\gamma=1\) the branches tie, \(c^\star=1\), and
\(\lambda^\star=0\).  The interpolator itself is then already test-optimal,
and neither shrinkage nor overscaling helps.  Thus the necessary
fourth-quadrant phenomenon studied here---where the useful Pareto branch lies
beyond interpolation---is a strong-signal phenomenon.
\end{remark}

Kernel ridge regression%
\footnote{Strictly speaking, we might more accurately call this a
span-restricted algebraic continuation of kernel ridge regression: for a
negative penalty, the representer theorem no longer justifies restricting to
the span, and the unrestricted ambient objective would be unbounded along
directions orthogonal to the span.  But the meaning here should be amply
clear.}
is the kernel, or dual, form of ridge regression: the
representer theorem writes its predictor in terms of coefficients on the
training examples \cite{schoelkopf2001generalized}.  With penalty
\(\lambda\), it gives the following coordinate on the same scalar path.  With
the dual convention of using \(\vec{\alpha}\) for the weights on the training
examples, the problem to be solved is
\begin{equation}\label{eq:kernel-ridge-objective}
    \min_{\vec{\alpha}} \frac1n\|\bm{K}\vec{\alpha}-\one\|_2^2
    +\lambda\,\vec{\alpha}^{\top}\bm{K}\vec{\alpha},
    \qquad
    \bm{K}=\Xt^\top\Xt.
\end{equation}
The solution%
\footnote{This inverse gives the unique minimizer exactly when
\(\bm K/n+\lambda\bm I_n\succ0\).  Once \(\bm K/n+\lambda\bm I_n\) has a
negative eigenvalue, \eqref{eq:kernel-ridge-objective} is unbounded below.
Every \(\lambda>-d/n\) suffices, and for \(n\ge2\) this threshold is exact.
The tuned penalty \eqref{eq:lambda-star} sits safely inside, with margin
\(d/(n\gamma)\).}
of \eqref{eq:kernel-ridge-objective} is
\[
    \vec{\alpha}
    =
    (\bm{K}+n\lambda\bm{I}_n)^{-1}\one.
\]
The vector \(\one\) is an eigenvector of \(\bm{K}\) with eigenvalue
\(d+\gamma n\).  Therefore all coefficients are equal:
\[
    \alpha_i
    =
    \frac{1}{d+\gamma n+n\lambda}.
\]
The common training prediction is
\begin{equation}\label{eq:ridge-scale}
    c(\lambda)=\frac{d+\gamma n}{d+n\lambda+\gamma n}.
\end{equation}
Solving \(c(\lambda)=c^\star\) gives
\begin{equation}\label{eq:lambda-star}
    \lambda^\star=-\frac dn\left(1-\frac1\gamma\right)<0,
    \qquad\text{using the standing }\gamma>1\text{ (Remark~\ref{rem:strong-spike}).}
\end{equation}
Notice that at \(\lambda^\star\), the kernel-space quadratic remains strictly convex.%
\footnote{For \(n=1\) the eigenvalue \(d/n\) has multiplicity zero.  The
kernel is then the single number \(\gamma+d\), and the shift leaves
\(\gamma+d/\gamma>0\), so strict convexity holds for every \(n\ge1\).}
For \(n\ge2\), the scaled kernel \((1/n)\Xt^\top\Xt\) has eigenvalues \(d/n\), with
multiplicity \(n-1\), and \(\gamma+d/n\), so adding \(\lambda^\star\bm I_n\)
leaves smallest eigenvalue
\[
    \frac dn-\frac dn\left(1-\frac1\gamma\right)
    =
    \frac{d}{n\gamma}>0.
\]
Thus this anti-shrinkage (and, in the benign-misfitting window
\eqref{eq:benign-misfitting-window}, the resulting large training error) is not an artifact
of an ill-conditioned feature matrix the way that catastrophically large test error for traditional
under-regularized least-squares is. Here, the Gram
matrix of the nuisance components is perfectly isotropic, with all nuisance
eigenvalues equal.  The obstruction is not collinearity but the
span's limited capacity to synthesize calibrated spike signal without carrying
nuisance residue.
In ridge terms, this requires a negative penalty and the test-optimal point lies
past interpolation \cite{nr_kobak2020,nr_wu2020,nr_tsigler2023}.
Subsection~\ref{subsec:related-span-rescaling} returns to this
anti-shrinkage connection.

\begin{remark}[The frontier is one ray]\label{rem:frontier-ray}
Lemma~\ref{lem:two-scalar} says more than that optimal predictors are
symmetric.  Every Pareto-optimal span predictor, the minimum-norm
interpolator, the kernel-ridge path \eqref{eq:ridge-scale}, and the
test-optimal predictor are scalar multiples of one another: points of a
single ray through the flat average \(\tfrac1n\sum_i\vec x_i\),
distinguished only by their radius --- the \emph{interpolation ray}, in the
terminology of the same-distribution companion
\citep{ranade2026samedistribution}.
\end{remark}

\begin{remark}[Ridge and rescaling coincide only in this stylized
geometry]\label{rem:ridge-scalar-degeneracy}
The identity of the ridge path with pure rescaling is special to this
paper's model.  Because the nuisance geometry is exactly isotropic, \(\bm K\) has
at most two distinct eigenvalues, and the ridge resolvent
\((\bm K+n\lambda\bm I_n)^{-1}\) acts as a scalar on the ray.  With several
informative directions of unequal strength the two operations separate, as
Subsection~\ref{subsec:related-span-rescaling} discusses.
``Negative ridge'' and ``scaling past interpolation'' should therefore be
identified only within this paper.
\end{remark}
\subsection{Sample complexity}
\label{subsec:sample-complexity}

The exact frontier also shows why the fourth quadrant matters operationally:
for a fixed test-error target, the best span predictor can require far fewer
samples than interpolation.

Here and below, a sample threshold is an asymptotic crossover order, not a
finite-sample jump.  For fixed \(d\) and \(\gamma\), the exact loss curves in
Figure~\ref{fig:phase-sample} are smooth.

\begin{corollary}[Sample-complexity separation]
\label{cor:sample-complexity}
Assume \(\gamma>1\), and fix a target test error\footnote{\label{fn:target-window}%
The lower endpoint on \(\varepsilon\) is exactly the condition that the
interpolation threshold stay within the model's range, \(n_{\rm int}\le d\).
At the orthogonal-model boundary \(n=d\) --- equivalently \(\rho=\gamma\) and
\(R=\gamma\rho=\gamma^2\) --- \eqref{eq:frontier-endpoints-rho-R} gives
\(g_{\rm int}=(\gamma^2+\gamma^2)/\bigl(\gamma^2(1+\gamma)^2\bigr)=2/(1+\gamma)^2\),
so this is the interpolator's test error at the boundary and hence the smallest
target that interpolation reaches within \(n\le d\).  It is not a lower bound
for the span: the same boundary gives \(g_{\min}=1/(1+\gamma^2)\) for the best
span predictor, and \(2/(1+\gamma)^2>1/(1+\gamma^2)\) whenever \(\gamma>1\), so
the stated range also forces \(\varepsilon>1/(1+\gamma^2)\) and therefore
\(n_{\rm mis}\le d\) with room to spare.}
\(2/(1+\gamma)^2 \le \varepsilon<1\).

Let \(n_{\rm mis}(\varepsilon)\) and \(n_{\rm int}(\varepsilon)\) denote the
real-valued sample counts at which, respectively, the best span predictor and
the minimum-norm interpolator first attain test error \(\varepsilon\).

For every fixed \(\varepsilon\in(0,1)\), as \(\gamma\to\infty\),
\begin{equation}
\label{eq:sample-complexity-separation}
    \frac{n_{\rm int}(\varepsilon)}
         {n_{\rm mis}(\varepsilon)}
    =
    \frac{\gamma\sqrt{\varepsilon}}
         {1+\sqrt{\varepsilon}}
    \bigl(1+o(1)\bigr).
\end{equation}
Thus interpolation requires a factor of order \(\gamma\) more samples:
\(n_{\rm mis}(\varepsilon)\) is of order \(d/\gamma^2\), whereas
\(n_{\rm int}(\varepsilon)\) is of order \(d/\gamma\).

Exactly,
\begin{equation}
\label{eq:n-mis}
    n_{\rm mis}(\varepsilon)
    =
    \frac{d(1-\varepsilon)}
         {\varepsilon\gamma^2},
\end{equation}
and
\begin{equation}
\label{eq:n-int}
    n_{\rm int}(\varepsilon)
    =
    \frac{d}{\gamma}\,
    \rho_{\rm int}(\varepsilon,\gamma),
\end{equation}
where \(\rho_{\rm int}(\varepsilon,\gamma)\) is the positive root of
\begin{equation}
\label{eq:interp-quadratic}
    \varepsilon\rho^2
    +(2\varepsilon-1/\gamma)\rho
    -(1-\varepsilon)
    =
    0,
\end{equation}
namely
\begin{equation}
\label{eq:rho-int}
    \rho_{\rm int}(\varepsilon,\gamma)
    =
    \frac{
        1/\gamma-2\varepsilon+
        \sqrt{(2\varepsilon-1/\gamma)^2+
              4\varepsilon(1-\varepsilon)}
    }{2\varepsilon}.
\end{equation}
Their exact ratio is
\begin{equation}
\label{eq:n-int-n-mis-ratio}
    \frac{n_{\rm int}(\varepsilon)}
         {n_{\rm mis}(\varepsilon)}
    =
    \frac{\varepsilon\gamma}{1-\varepsilon}\,
    \rho_{\rm int}(\varepsilon,\gamma).
\end{equation}
Actual integer sample counts are obtained by rounding upward, provided the
rounded values remain within \(n\le d\).
\end{corollary}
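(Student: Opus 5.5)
The plan is to read both thresholds directly off the closed-form endpoints in \eqref{eq:frontier-endpoints-rho-R}, which express $g_{\min}$ and $g_{\rm int}$ through $\rho=\gamma n/d$ and $R=\gamma^2 n/d$ alone. For the best span predictor, $g_{\min}=1/(1+R)$ is strictly decreasing in $n$. Setting $1/(1+R)=\varepsilon$ gives $R=(1-\varepsilon)/\varepsilon$, and hence $n_{\rm mis}=d(1-\varepsilon)/(\varepsilon\gamma^2)$, which is \eqref{eq:n-mis}. Monotonicity makes this the first crossing.

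For the interpolator, I would substitute $R=\gamma\rho$ into $g_{\rm int}=(R+\rho^2)/(R(1+\rho)^2)$. This gives
\[
g_{\rm int}(\rho)=\frac{\gamma+\rho}{\gamma(1+\rho)^2}.
\]
Setting it equal to $\varepsilon$ and clearing denominators yields $\varepsilon\gamma(1+\rho)^2=\gamma+\rho$. Dividing by $\gamma$ and rearranging gives exactly \eqref{eq:interp-quadratic}.

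Since the constant term $-(1-\varepsilon)$ is negative for $\varepsilon<1$, the quadratic has one positive and one negative root. The positive root is \eqref{eq:rho-int}. For the first-attainment claim I need $g_{\rm int}$ to be decreasing in $\rho$ for $\rho>0$. Differentiating gives a derivative proportional to $(1+\rho)-2(\gamma+\rho)=1-2\gamma-\rho<0$, so this holds whenever $\gamma>1$. Then $n_{\rm int}=(d/\gamma)\rho_{\rm int}$ gives \eqref{eq:n-int}, and dividing by $n_{\rm mis}$ gives the exact ratio \eqref{eq:n-int-n-mis-ratio}.

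The lower endpoint $\varepsilon\ge 2/(1+\gamma)^2$ keeps $n_{\rm int}\le d$. At $\rho=\gamma$ one has $g_{\rm int}=2/(1+\gamma)^2$, so by monotonicity $\rho_{\rm int}\le\gamma$. Footnote~\ref{fn:target-window} already notes that $n_{\rm mis}\le d$ then follows.

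For the asymptotic ratio \eqref{eq:sample-complexity-separation}, fix $\varepsilon\in(0,1)$ and let $\gamma\to\infty$. Then $\rho_{\rm int}$ converges to the positive root of $\varepsilon\rho^2+2\varepsilon\rho-(1-\varepsilon)=0$, namely
\[
\rho=-1+\sqrt{1+(1-\varepsilon)/\varepsilon}=\frac{1-\sqrt\varepsilon}{\sqrt\varepsilon}.
\]
This needs a short check. The discriminant becomes $4\varepsilon^2+4\varepsilon(1-\varepsilon)=4\varepsilon$, and the root formula is continuous in $1/\gamma$ because the limiting root is simple.

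Plugging the limit into \eqref{eq:n-int-n-mis-ratio} gives
\[
\frac{\varepsilon\gamma}{1-\varepsilon}\cdot\frac{1-\sqrt\varepsilon}{\sqrt\varepsilon}\,(1+o(1))=\frac{\gamma\sqrt\varepsilon}{1+\sqrt\varepsilon}\,(1+o(1)),
\]
using $1-\varepsilon=(1-\sqrt\varepsilon)(1+\sqrt\varepsilon)$. The order claims follow because $n_{\rm mis}$ is exactly a constant times $d/\gamma^2$, and $\rho_{\rm int}$ tends to a positive constant.

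The calculations are all routine. The main point needing care is the \emph{first-attainment} interpretation, which requires monotonicity of $g_{\rm int}$ in $n$. This is where $\gamma>1$ is used, although the derivative is in fact negative more generally. Integer rounding is then immediate.
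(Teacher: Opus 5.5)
Your proposal is correct and follows the paper's proof essentially step for step. Both arguments read the thresholds off $g_{\min}=1/(1+\gamma\rho)$ and $g_{\rm int}=(1+\rho/\gamma)/(1+\rho)^2$, solve the resulting conditions (the second via the quadratic whose roots have product $-(1-\varepsilon)/\varepsilon<0$), and pass to the limit $\rho_{\rm int}\to(1-\sqrt\varepsilon)/\sqrt\varepsilon$. The one minor difference is in first attainment: you differentiate $g_{\rm int}$, whereas the paper solves $g_{\rm int}\le\varepsilon$ directly from the quadratic's sign pattern, which gives $\{\rho>0:g_{\rm int}(\rho)\le\varepsilon\}=[\rho_{\rm int},\infty)$ without a derivative and without needing $\gamma>1$ at that step.
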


\begin{proof}
By \eqref{eq:frontier-endpoints-rho-R}, the best span predictor has test
error \(g_{\min}=1/(1+R)\).  Using \(R=\gamma\rho\) from
\eqref{eq:scale-identities}, this is
\[
    g_{\min}=\frac{1}{1+\gamma\rho}.
\]
Thus the best span predictor reaches target test error \(\varepsilon\)
exactly when
\[
    \frac{1}{1+\gamma\rho}\le\varepsilon,
\]
which gives
\[
    \rho\ge \rho_{\rm mis}(\varepsilon)
    :=
    \frac{1-\varepsilon}{\varepsilon\gamma}.
\]
Substituting this lower bound for \(\rho\) into the ratio-to-samples relation
\(n=d\rho/\gamma\) from \eqref{eq:scale-identities} gives
\[
    n_{\rm mis}(\varepsilon)
    =
    \frac{d}{\gamma}\rho_{\rm mis}(\varepsilon)
    =
    \frac{d(1-\varepsilon)}{\varepsilon\gamma^2},
\]
which is \eqref{eq:n-mis}.

For interpolation, \eqref{eq:frontier-endpoints-rho-R} gives
\[
    g_{\rm int}
    =
    \frac{R+\rho^2}{R(1+\rho)^2}.
\]
Substituting \(R=\gamma\rho\) from \eqref{eq:scale-identities} yields
\[
    g_{\rm int}(\rho)
    =
    \frac{1+\rho/\gamma}{(1+\rho)^2}.
\]
Thus interpolation reaches the target test error \(\varepsilon\) exactly when
\[
    \frac{1+\rho/\gamma}{(1+\rho)^2}\le\varepsilon.
\]
Clearing denominators gives the quadratic \eqref{eq:interp-quadratic},
equivalently
\[
    \varepsilon\rho^2+(2\varepsilon-1/\gamma)\rho-(1-\varepsilon)\ge0.
\]
The equality case of the quadratic \eqref{eq:interp-quadratic} has one
positive and one negative root, because
the product of its roots (its constant term divided by its leading coefficient)
is
\[
    -\frac{1-\varepsilon}{\varepsilon}<0.
\]
Therefore interpolation reaches the target exactly when
\[
    \rho\ge\rho_{\rm int}(\varepsilon,\gamma),
\]
where \(\rho_{\rm int}(\varepsilon,\gamma)\) is the positive root in
\eqref{eq:rho-int}.  Converting back to samples with \(n=d\rho/\gamma\) gives
\eqref{eq:n-int}.

Finally, \eqref{eq:rho-int} gives
\begin{equation}\label{eq:rho-int-asymptotic}
    \rho_{\rm int}(\varepsilon,\gamma)
    \to
    \frac{1-\sqrt{\varepsilon}}{\sqrt{\varepsilon}}
\end{equation}
as \(\gamma\to\infty\).  Combining \eqref{eq:n-int-n-mis-ratio} with
\eqref{eq:rho-int-asymptotic} gives \eqref{eq:sample-complexity-separation}.
\end{proof}

Thus, throughout the target-error range in Corollary \ref{cor:sample-complexity}, overshooting
past interpolation is not merely a conceptual curiosity: in this model it is the
sample-efficient way to reach the target test error.  At first attainment the
sample-efficient predictor has training error exactly \((\gamma-1)^{2}\varepsilon^{2}\), so
once \(\varepsilon>1/(\gamma-1)\) it sits in the fourth quadrant proper (i.e.\ with
training error worse than that of the zero predictor). 

Notice that there is also a gap in test error at fixed sample size.  As visible in the
log-scale panel of Figure~\ref{fig:phase-sample}A, for any fixed~\(\rho\) the
interpolator's test error exceeds \(g_{\min}\) by a ratio, not just an
additive amount.  From \eqref{eq:frontier-endpoints},
\begin{equation}
\label{eq:fixed-sample-error-ratio}
    \frac{g_{\rm int}}{g_{\min}}
    =
    \frac{(1+\rho/\gamma)(1+\gamma\rho)}
         {(1+\rho)^2}
    =
    \frac{\gamma\rho}{(1+\rho)^2}
    \bigl(1+o(1)\bigr),
    \qquad
    \gamma\to\infty
    \quad\text{with \(\rho\) fixed}.
\end{equation}
At the interpolation crossover \(\rho=1\),
\eqref{eq:fixed-sample-error-ratio} is of order~\(\gamma\):
the interpolator's test error is a factor of order~\(\gamma\) larger than the
best span predictor's.  The interpolator achieves its low training error by
under-calibrating the spike (\(r_{\rm int}=\rho/(1+\rho)<1\)), which costs
spike bias \((r_{\rm int}-1)^2\) that the best span predictor
shrinks to \((1/(1+R))^{2}\).\footnote{The same under-calibration also makes the interpolator less
adversarially sensitive than the test-optimal predictor: the minimum RMS
adversarial sensitivity at fixed calibration (Theorem~\ref{thm:absolute-rms})
is linear in the calibration and is attained by symmetric coefficients, and
the interpolator is symmetric, so substituting \(r_{\rm int}\) gives its
sensitivity \(\delta\sqrt n/(1+\rho)\)
(in the perturbation normalization of Section~\ref{sec:robustness}).  As a
function of real-valued \(n\), that sensitivity is not monotone: it rises until the interpolation
threshold \(n=d/\gamma\), peaks there, and only then falls.}

\begin{figure}[t]
    \centering
    \begin{tikzpicture}
\definecolor{goodgreen}{RGB}{48,135,84}
\definecolor{alertred}{RGB}{178,62,55}
\definecolor{softblue}{RGB}{76,112,176}
\definecolor{windowfill}{RGB}{235,244,255}
\begin{groupplot}[
    group style={group size=2 by 1, horizontal sep=1.25cm},
    width=0.47\textwidth,
    height=0.31\textwidth,
    tick label style={font=\scriptsize},
    label style={font=\scriptsize},
    title style={font=\scriptsize},
    grid=major,
    grid style={black!8},
    clip mode=individual
]
\nextgroupplot[
    title={A. Phase curves},
    xmode=log,
    ymode=log,
    xmin=0.001,xmax=100,
    ymin=1e-5,ymax=1e4,
    xlabel={$\rho=\gamma n/d$},
    ylabel={error level},
    ytick={1e-5,1e-4,1e-3,1e-2,1e-1,1e0,1e1,1e2,1e3,1e4},
    yticklabel=\empty
]
    \addplot[windowfill,draw=none] coordinates {(0.01,1e-5) (1,1e-5) (1,1e4) (0.01,1e4)} -- cycle;
    \addplot[black!45,dashed,thick] coordinates {(0.01,1e-5) (0.01,1e4)};
    \addplot[black!45,dashed,thick] coordinates {(1,1e-5) (1,1e4)};
    \addplot[goodgreen!70!black,very thick,domain=0.001:100,samples=300] {1/(1+100*x)};
    \addplot[softblue,very thick,domain=0.001:100,samples=300] {(1+x/100)/(1+x)^2};
    \addplot[alertred,very thick,densely dotted,domain=0.001:100,samples=300] {(99/(1+100*x))^2};
    \node[font=\scriptsize,goodgreen!70!black,anchor=north west] at (axis cs:0.15,0.055) {$g_{\min}$};
    \node[font=\scriptsize,softblue,anchor=south] at (axis cs:0.1,0.5) {interpolator};
    \node[font=\scriptsize,alertred,anchor=south west] at (axis cs:0.15,50) {$t^\star$};
    \node[font=\scriptsize,align=center,softblue!60!black] at (axis cs:0.1,3e-5) {benign misfitting};
    \node[font=\scriptsize,anchor=south east] at (axis cs:0.01,1e-5) {$1/\gamma$};
    \node[font=\scriptsize,anchor=south west] at (axis cs:1,1e-5) {$1$};

\nextgroupplot[
    title={B. Samples for target error},
    xmode=log,
    ymode=log,
    xmin=0.005,xmax=0.8,
    ymin=0.0001,
    ymax=0.6,
    xlabel={target \(\varepsilon\)},
    ylabel={\raisebox{0pt}{\fontsize{6}{7}\selectfont required \(n/d\)}},
    ylabel style={at={(axis description cs:-0.02,0.5)}},
    ytick={1e-4,1e-3,1e-1},
    yticklabels={$10^{-4}$,$10^{-3}$,$10^{-1}$},
    extra y ticks={1e-2},
    extra y tick labels={},
    extra y tick style={grid style={black!8}}
]
    \addplot[softblue,very thick,domain=0.005:0.8,samples=220]
        {1/(100*((2*x-1/100+sqrt((1/100-2*x)^2+4*x*(1-x)))/(2*(1-x))))};
    \addplot[goodgreen!70!black,very thick,domain=0.005:0.8,samples=220]
        {(1-x)/(x*100^2)};
    \node[font=\scriptsize,softblue,anchor=south] at (axis cs:0.07,0.025) {$n_{\rm int}/d$};
    \node[font=\scriptsize,goodgreen!70!black,anchor=north] at (axis cs:0.07,0.00095) {$n_{\rm mis}/d$};
    \addplot[black!45,dashed] coordinates {(0.1,0.0001) (0.1,0.6)};
    \draw[{Latex[length=1.8mm]}-{Latex[length=1.8mm]},thick,black!70]
        (axis cs:0.13,0.0007) -- (axis cs:0.13,0.018)
        node[midway,anchor=west,font=\scriptsize,xshift=1pt] {ratio $\approx 24$};
\end{groupplot}
\end{tikzpicture}
    \caption[Exact sample-complexity curves for \(\gamma=100\)]{Exact sample-complexity curves for \(\gamma=100\).
    Panel~A plots the interpolator test error (blue), the best span test error
    \(g_{\min}\) (green), and the best-span training error \(t^\star\) (red
    dotted) as functions of \(\rho=\gamma n/d\).
    The benign-misfitting window lies between the best-span threshold (at \(\rho = \frac{1}{\gamma}\)) and the
    interpolation threshold (at \(\rho = 1\)).
    Panel~B maps each target error \(\varepsilon\) to the required normalized
    sample counts \(n_{\rm mis}/d\) and \(n_{\rm int}/d\).
    The arrow marks the multiplicative sample saving at the displayed target.

    The dashed vertical lines in Panel~A mark the two asymptotic crossover
    locations; the actual finite curves are visibly smooth across the crossovers.
    The panels use \(\rho=\gamma n/d\) and \(n/d\), so fixing \(\gamma\) fixes
    the curves without choosing a numerical value of \(d\).
    Taking \(d=50{,}000\) converts the horizontal coordinates to the sample
    counts in Footnote~\ref{fn:threshold-numbers}.}
    \label{fig:phase-sample}
\end{figure}
\FloatBarrier

\Needspace{9\baselineskip}
\section{One-pass SGD reaches the misfitting region}
\label{sec:sgd}

The exact frontier in Theorem~\ref{thm:pareto} describes what is possible in
the training span.  We now show that one pass of SGD on the clean training
labels can reach a span predictor with small test error and large
training error.

Simulations show what happens very clearly.  Figure~\ref{fig:stylized-sgd-trace}
shows a representative run of SGD.  During the first pass over the data, the test error
falls steadily while the full empirical training error rises.  The reuse epochs
--- everything past the first epoch --- are where large-learning-rate training
looks visibly unstable: we see gradient spikes, and the test error degrades.
Shuffling the training data just moves the spikes around.  Note
here that a classically stable learning rate for multi-epoch training is not
useful: its test performance barely improves over the zero-predictor baseline,
despite the fact that its training error steadily improves with no visible gradient
spikes or training instability.

\begin{figure}[t!]
    \centering
    \includegraphics[width=0.9\textwidth]{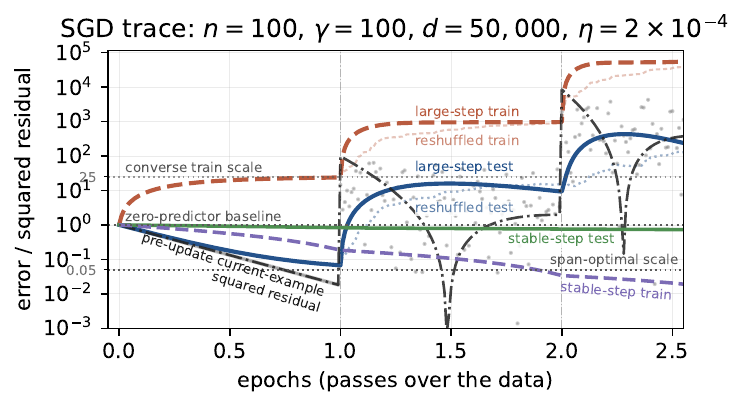}
    \caption[Analytic stylized SGD trace showing the diagnostic split]{Analytic SGD trace showing the diagnostic split.
    Parameters: \(n=100\) training points, \(\gamma=100\), \(d=50{,}000\); the
    large learning rate is \(\eta=2\times10^{-4}\), the stable comparator uses
    \(\eta=10^{-5}\), and three passes over the same data are shown.  All
    curves are computed exactly from the deterministic Gram geometry.

    During the first epoch, the large-step test error (solid blue) falls while
    the large-step full empirical training error (dashed red) rises above the
    zero-predictor baseline.  The dash-dotted black curve is the pre-update
    squared residual on the current training example, not a running average.  The
    stable-rate comparator's test and training errors are the (essentially
    flat) solid green and (steadily improving) dashed purple curves.  Dotted
    horizontal lines mark the zero-predictor baseline and two asymptotic
    reference scales: the span-optimal test scale \(d/(\gamma^{2}n)=0.05\) and
    the converse training scale \((d/(\gamma n))^{2}=25\).

    Solid curves in later epochs reuse the same cyclic order. Faint curves and
    gray points show a reshuffled run.  Shuffled reuse changes the detailed
    oscillations but still degrades both training and test performance.

    So what makes the test error improve in the first epoch, makes the
    training error rise throughout it, and makes multi-epoch training so bad?
    What helps in the first epoch is participation: by \eqref{eq:one-pass-neff}
    the traced pass has an effective participation \(n_{\rm eff}\approx76\) of
    the \(n=100\) examples (\(n_{\rm eff}\) is defined in \eqref{eq:neff-def},
    and Remark~\ref{rem:participation} explains why participation, not raw
    sample size, is what controls the nuisance variance), which is why the
    first-epoch test error lands quantitatively so close to the span-optimal
    scale.  The qualitative behaviors are explained by the learning rate:
    \(\eta\gamma=0.02\) is small on the shared spike coordinate, while
    \(\eta(\gamma+d)=10.02\) is far beyond the repeated-example stability
    threshold \(2\) of \eqref{eq:one-sample-rates}.  Each update multiplies
    the residual on the example it just used by \(1-\eta(\gamma+d)\approx-9\),
    so the training error rises throughout the pass as it accumulates those
    stored overshoots.  Fresh examples are stable, reused examples are
    violently not, and that instability is the cause of the gradient spiking
    in the reuse epochs (Appendix~\ref{app:stability-boundaries}).}
    \label{fig:stylized-sgd-trace}
\end{figure}

The point is not that a large learning rate finds a better empirical training error
minimizer.  Instead, the final iterate of one-pass SGD at a large learning rate \(\eta\) can
reach a useful span point that deliberately avoids empirical minimization.  The learning rate
alone selects a one-pass span point past the interpolating solution, with much better spike
calibration.

\subsection{Exact one-pass geometry}
\label{subsec:sgd-exact-geometry}

Run one pass of SGD from
\(\vec{w}_0=\vec{0}\), in order \(1,\ldots,n\), with constant learning rate \(\eta>0\).
We use the half-squared loss \(\frac12(\vec{w}^{\top}\vec{x}_i-y_i)^2\), so the one-sample
update is
\begin{equation}\label{eq:sgd-update}
    \vec{w}_i=\vec{w}_{i-1}+\eta(1-\vec{w}_{i-1}^\top \vec{x}_i)\vec{x}_i.
\end{equation}
After \(t\) updates, write the SGD iterate as
\begin{equation}\label{eq:sgd-span-rep}
    \vec w_t=\sum_{i=1}^t\alpha_i^{(t)}\vec x_i,
\end{equation}
where \(\alpha_i^{(t)}\) is the coefficient on example \(i\).  Define the
running coefficient sum $s_t$ and spike residual $b_t$ by
\begin{equation}\label{eq:sgd-state-defs}
    s_t:=\sum_{i=1}^t\alpha_i^{(t)},
    \qquad
    b_t:=1-\gamma s_t.
\end{equation}
Before example \(i\) is used, its nuisance direction has not appeared in the
iterate, so \(\vec{w}_{i-1}^\top \vec{x}_i=\gamma s_{i-1}\). 
Thus \(\gamma s_t\) is the prediction contributed by the shared spike, and
\(b_t\) is the spike residual seen by the next fresh example.
This shows that the spike residual obeys a scalar recursion: \(b_t = (1-\eta\gamma)b_{t-1}\).

The predictor's recursion thus has two pieces.  The shared spike residual evolves
geometrically, because every fresh example has the same spike coordinate, and it contracts in
magnitude exactly when \(0<\eta\gamma<2\).
Meanwhile, the nuisance variance accumulates one fresh orthogonal direction at a time,
because an example's nuisance direction has not appeared before it is used by SGD.
If an example is instead \emph{reused}, its own stored nuisance coefficient re-enters
the update, restoring the full multiplier \(1-\eta\|\vec x_i\|_2^2=1-\eta(\gamma+d)\).
Repeated-example stability therefore requires \(0<\eta<2/(\gamma+d)\), the one-sample
boundary of \eqref{eq:one-sample-rates}, while the shared spike contracts for every
\(0<\eta\gamma<2\).  Appendix~\ref{app:stability-boundaries} quantifies this contrast at
general \(t\).

\begin{lemma}[Single-spike SGD is a geometric series]
\label{lem:single-geometric}
Run one pass of \eqref{eq:sgd-update} from \(\vec w_0=\vec0\) with constant learning rate
\(\eta>0\), and define the one-step spike-residual multiplier
\begin{equation}\label{eq:mu-def}
    \mu:=1-\eta\gamma .
\end{equation}
Then for every \(0\le t\le n\) the coefficients \eqref{eq:sgd-span-rep} and the state variables
\eqref{eq:sgd-state-defs} are
\begin{equation}\label{eq:single-geometric-coeffs}
    \alpha_i^{(t)}
    =
    \begin{cases}
        \eta\,\mu^{\,i-1}, & 1\le i\le t,\\[2pt]
        0, & t<i\le n,
    \end{cases}
    \qquad
    b_t=\mu^{t},
    \qquad
    s_t=\frac{1-\mu^{t}}{\gamma},
\end{equation}
and the coefficient energy is
\begin{equation}\label{eq:single-geometric-energy}
    q_t:=\sum_{i=1}^{n}\bigl(\alpha_i^{(t)}\bigr)^{2}
    =
    \eta^{2}\sum_{k=0}^{t-1}\mu^{2k}
    =
    \eta^{2}\,\frac{1-\mu^{2t}}{1-\mu^{2}}
    \qquad(\mu^{2}\ne1).
\end{equation}
Consequently, by \eqref{eq:scalar-test-error},
\begin{equation}\label{eq:single-geometric-risk}
    \Ttest(\vec w_t)
    =
    \underbrace{\mu^{2t}}_{\text{spike bias}^2}
    +
    \underbrace{\frac{d\eta}{\gamma(2-\eta\gamma)}\bigl(1-\mu^{2t}\bigr)}_{\text{nuisance variance}},
    \qquad
    0<\eta\gamma<2 ,
\end{equation}
and the nuisance variance is bounded uniformly in the length of the pass:
\begin{equation}\label{eq:nuisance-floor}
    \Ttest(\vec w_t)
    \le
    \mu^{2t}+\frac{d\eta}{\gamma(2-\eta\gamma)}
    \qquad\text{for every }0\le t\le n,
    \qquad 0<\eta\gamma<2 .
\end{equation}
\end{lemma}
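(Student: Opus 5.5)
The plan is induction on \(t\), using the fresh-example identity \(\vec w_{t-1}^\top\vec x_t=\gamma s_{t-1}\). This identity holds because \(\vec w_{t-1}\) lies in the span of \(\vec x_1,\ldots,\vec x_{t-1}\), whose nuisance parts are orthogonal to \(\vec v_t\) by \eqref{eq:orthogonal-vectors}. Only the shared spike coordinate contributes, giving \(\sum_{i<t}\alpha_i^{(t-1)}\gamma\).

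The base case \(t=0\) is trivial: all coefficients are zero, \(s_0=0\) and \(b_0=1\). For the inductive step, the update \eqref{eq:sgd-update} becomes \(\vec w_t=\vec w_{t-1}+\eta(1-\gamma s_{t-1})\vec x_t=\vec w_{t-1}+\eta b_{t-1}\vec x_t\). Hence the old coefficients are unchanged, and the new one is \(\alpha_t^{(t)}=\eta b_{t-1}\). By linear independence of the \(\vec x_i\), the representation is unique. Then \(s_t=s_{t-1}+\eta b_{t-1}\), so
\[
b_t=1-\gamma s_t=b_{t-1}-\eta\gamma b_{t-1}=\mu b_{t-1}.
\]
With \(b_0=1\), this gives \(b_t=\mu^t\), \(\alpha_i^{(t)}=\eta\mu^{i-1}\) for \(i\le t\), and \(s_t=(1-\mu^t)/\gamma\). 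This establishes \eqref{eq:single-geometric-coeffs}.

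The energy \eqref{eq:single-geometric-energy} is then a geometric sum in \(\mu^2\). For the risk I will apply Lemma~\ref{lem:two-scalar}, which holds for any coefficient vector with \(\alpha_i=0\) for \(i>t\). The spike bias is \((\gamma s_t-1)^2=b_t^2=\mu^{2t}\), and the nuisance variance is \(dq_t\). Using \(1-\mu^2=\eta\gamma(2-\eta\gamma)\),
\[
dq_t=\frac{d\eta^2(1-\mu^{2t})}{\eta\gamma(2-\eta\gamma)}=\frac{d\eta}{\gamma(2-\eta\gamma)}\bigl(1-\mu^{2t}\bigr).
\]
The condition \(0<\eta\gamma<2\) ensures \(\mu^2<1\), so the denominator is positive and \(\mu^2\ne1\).

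The bound \eqref{eq:nuisance-floor} follows by dropping the factor \(1-\mu^{2t}\in[0,1]\). The only point needing care is the fresh-example identity, which rests entirely on orthogonality; the rest is algebra.
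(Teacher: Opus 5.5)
Your proposal is correct and follows essentially the same route as the paper's proof. Both use the fresh-example identity \(\vec w_{t-1}^{\top}\vec x_t=\gamma s_{t-1}\) from nuisance orthogonality, the scalar recursion \(b_t=\mu b_{t-1}\) with \(\alpha_t^{(t)}=\eta b_{t-1}\) never revisited, a geometric sum for \(q_t\), and substitution into Lemma~\ref{lem:two-scalar} via \(1-\mu^{2}=\eta\gamma(2-\eta\gamma)\); your remark that linear independence of the \(\vec x_i\) makes the coefficients well defined is a small detail the paper leaves implicit.
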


\begin{proof}
Everything follows from one observation: \emph{a fresh example cannot see the nuisance directions
already written into the iterate.}  By \eqref{eq:sgd-update} the iterate \(\vec w_{t-1}\) lies in
the span of \(\vec x_1,\ldots,\vec x_{t-1}\), whose nuisance directions
\(\vec v_1,\ldots,\vec v_{t-1}\) are orthogonal to \(\vec v_t\), and the nuisance subspace is orthogonal to
the spike direction.  Hence
\begin{equation}\label{eq:fresh-inner-product}
    \vec w_{t-1}^{\top}\vec x_t = \underbrace{(\sqrt{\gamma} s_{t-1})\cdot \sqrt{\gamma}}_{\text{spike term}} + \underbrace{0}_{\text{nuisance term}} = \gamma s_{t-1}
    \qquad(1\le t\le n).
\end{equation}
A fresh example therefore responds to the iterate exactly as it would if the \(d\) nuisance
dimensions were absent and the task were the one-dimensional one of fitting the spike alone.

\emph{The spike residual.}  The $\eta(1-\vec{w}_{t-1}^\top \vec{x}_t)$ term in \eqref{eq:sgd-update} is
therefore \(\eta(1-\gamma s_{t-1})=\eta\,b_{t-1}\) by \eqref{eq:fresh-inner-product} and the definition \eqref{eq:sgd-state-defs} of \(b_{t-1}\).
Consequently, the coefficient $\alpha_t^{(t)} = \eta\,b_{t-1}$ since it is $\vec{x}_t$ that is being added into the SGD iterate with that multiplier.

Applying the definition of $s_t$ as the sum of the $\alpha_i$ coefficients, we get \(s_t=s_{t-1}+\eta\,b_{t-1}\). This gives us:
\begin{equation}\label{eq:b-recursion}
    b_t
    =
    1-\gamma s_t
    =
    \bigl(1-\gamma s_{t-1}\bigr)-\eta\gamma\,b_{t-1}
    =
    (1-\eta\gamma)\,b_{t-1}
    =
    \mu\,b_{t-1} .
\end{equation}
This is a scalar recursion with no nuisance in it at all.  Since \(\vec w_0=\vec0\) gives
\(s_0=0\) and \(b_0=1\), induction on \(t\) using \eqref{eq:b-recursion} gives \(b_t=\mu^{t}\)
immediately, and then \(s_t=(1-b_t)/\gamma=(1-\mu^{t})/\gamma\).  These are the second and third
claims of \eqref{eq:single-geometric-coeffs}.

\emph{The coefficients now fall out.}  All the $\alpha_i$ start out as zero. The update at
step \(t\) changes only the coefficient of \(\vec x_t\), setting it to \(\eta\,b_{t-1}=\eta\,\mu^{\,t-1}\),
and no later step revisits \(\vec x_t\) --- so the stored value \(\eta\,\mu^{\,t-1}\) is final.  Example
\(i>t\) has not been seen at time \(t\), so \(\alpha_i^{(t)}=0\).  Together these give the first
claim of \eqref{eq:single-geometric-coeffs}.

\emph{Energy.}  Squaring and summing those coefficients gives the first equality of
\eqref{eq:single-geometric-energy}. The finite geometric sum
\(\sum_{k=0}^{t-1}\mu^{2k}=(1-\mu^{2t})/(1-\mu^{2})\) gives the second.

\emph{Test Error.}  Substituting \(s=s_t\) from \eqref{eq:single-geometric-coeffs} and \(q=q_t\) from
\eqref{eq:single-geometric-energy} into the test-error decomposition
\eqref{eq:scalar-test-error} gives
\(\Ttest(\vec w_t)=(\gamma s_t-1)^{2}+dq_t=\mu^{2t}+dq_t\).  With \(\mu=1-\eta\gamma\) from
\eqref{eq:mu-def},
\begin{equation}\label{eq:one-minus-mu-squared}
    1-\mu^{2}
    =
    (1-\mu)(1+\mu)
    =
    \eta\gamma\,(2-\eta\gamma),
\end{equation}
so \(dq_t=d\eta^{2}(1-\mu^{2t})/\bigl(\eta\gamma(2-\eta\gamma)\bigr)
=\frac{d\eta}{\gamma(2-\eta\gamma)}\bigl(1-\mu^{2t}\bigr)\), which completes the proof of
\eqref{eq:single-geometric-risk}.  Finally \(0<\eta\gamma<2\) gives \(|\mu|<1\) and hence
\(1-\mu^{2t}\le1\), which yields \eqref{eq:nuisance-floor}.
\end{proof}

Lemma~\ref{lem:single-geometric} identifies the geometric coefficient profile
produced by one pass.  We next isolate exactly what a departure from a
symmetric profile costs.

\begin{remark}[Participation: how to understand a non-symmetric profile]
\label{rem:participation}
Consider any \(\vec\alpha\) with
\(s=\sum_i\alpha_i\ne0\) that defines a span predictor $\vec{u} = \sum_i\alpha_i \vec{x}_i$.

We can define the effective \emph{participation count}
\begin{equation}\label{eq:neff-def}
    n_{\rm eff}(\vec\alpha)
    :=
    \frac{s^{2}}{q}
    =
    \frac{\bigl(\sum_{i=1}^{n}\alpha_i\bigr)^{2}}{\sum_{i=1}^{n}\alpha_i^{2}},
\end{equation}
using the notation \(s=\sum_i\alpha_i\) and \(q=\sum_i\alpha_i^{2}\) from
\eqref{eq:sq-defs}.  For nonnegative coefficients, this ratio is the
classical effective sample size of survey sampling \citep{kish1965survey};
here the coefficients may be signed.  The orthogonal nuisance directions make the representation
\(\vec u=\Xt\vec\alpha\) introduced before \eqref{eq:risk-defs} unique, so we also write
\(n_{\rm eff}(\vec u):=n_{\rm eff}(\vec\alpha)\) and apply the count to predictors directly.  The Cauchy--Schwarz step in Lemma~\ref{lem:two-scalar}'s
proof, \(q\ge s^{2}/n\), gives the upper bound in
\begin{equation}\label{eq:neff-range}
    0<n_{\rm eff}(\vec\alpha)\le n,
\end{equation}
with equality on the right precisely for equal coefficients.  For \emph{nonnegative profiles}
\(\alpha_i\ge0\) --- the case relevant below --- the cross terms in \(s^{2}\) are nonnegative, so
\(s^{2}\ge q\) and
\begin{equation}\label{eq:neff-range-nonneg}
    1\le n_{\rm eff}(\vec\alpha)\le n
    \qquad(\alpha_i\ge0\text{ for all }i),
\end{equation}
with equality on the left precisely when a single coefficient is nonzero. For such nonnegative profiles,
\(n_{\rm eff}\) counts how many training examples effectively share the load, and \(n/n_{\rm eff}\ge1\)
measures how far the profile sits from flat.  For signed profiles \eqref{eq:neff-def} is still the
quantity that enters the risks below, but it is then an algebraic ratio and not a count.%
\footnote{For nonnegative profiles, the \(n_{\rm eff}\) from \eqref{eq:neff-def} counts participating \emph{training examples}.
The effective ranks \(R_k\) of \eqref{eq:effective-rank-families} count participating
\emph{nuisance directions}.  These effective quantities are built the same way, but answer very different
questions. Nothing in this section relates them.}

Recall from \eqref{eq:spike-calibration} the notation that $r$ is the calibration, i.e. $r = \sqrt{\gamma} u[1] = s \gamma$.
Substituting \(q=s^{2}/n_{\rm eff}\) from \eqref{eq:neff-def} and \(s=r/\gamma\) from
\eqref{eq:spike-calibration} into \eqref{eq:scalar-test-error} and
\eqref{eq:scalar-train-error}, and using \(\rho\) and \(R\) from \eqref{eq:scale-identities},
gives the general form of \eqref{eq:calibrated-frontier-form}:
\begin{equation}\label{eq:neff-frontier-form}
    \Ttest
    =
    \underbrace{(r-1)^{2}}_{\text{spike bias}^2}
    +
    \underbrace{\frac{n}{n_{\rm eff}}\cdot\frac{r^{2}}{R}}_{\text{nuisance variance}},
    \qquad
    \Ttrain
    =
    \underbrace{(r-1)^{2}}_{\text{spike bias}^2}
    +
    \underbrace{\frac{2r(r-1)}{\rho}}_{\text{cross term}}
    +
    \underbrace{\frac{n}{n_{\rm eff}}\cdot\frac{r^{2}}{\rho^{2}}}_{\text{amplified nuisance variance}} .
\end{equation}
Term by term, \(dq=dr^{2}/(\gamma^{2}n_{\rm eff})=\frac{n}{n_{\rm eff}}\frac{r^{2}}{R}\),
\(\frac{d^{2}}{n}q=\frac{n}{n_{\rm eff}}\frac{r^{2}}{\rho^{2}}\), and
\(\frac{2ds}{n}(\gamma s-1)=\frac{2r(r-1)}{\rho}\), all by \eqref{eq:scale-identities}.

Two features of \eqref{eq:neff-frontier-form} are used throughout Section~\ref{sec:sgd}.  The performance
of a span predictor with \(s\ne0\) is described completely\footnote{At \(s=0\) participation is undefined and different coefficient energies
can give different risks.}, for \emph{both} training and test loss, by two numbers: its
calibration \(r\) from \eqref{eq:spike-calibration} and its participation \(n_{\rm eff}\) from
\eqref{eq:neff-def}.
Notice that the same factor \(n/n_{\rm eff}\) multiplies the nuisance variance and the amplified
nuisance variance --- though not the cross term \(2r(r-1)/\rho\) --- so a departure from flatness
inflates those two by an identical factor.  Equivalently, since
\(\frac{n}{n_{\rm eff}}\frac{r^{2}}{R}=\frac{r^{2}d}{\gamma^{2}n_{\rm eff}}\), a perfectly
calibrated predictor (\(r=1\)) carries nuisance variance \(d/(\gamma^{2}n_{\rm eff})\), which is the
flat-profile formula \eqref{eq:flat-average-risk} of Subsection~\ref{subsec:flat-averaging} with the sample count replaced by
\(n_{\rm eff}\). 

\end{remark}

\subsection{One complete pass}
\label{subsec:one-complete-pass}

\begin{theorem}[One pass reaches the fourth quadrant, within a logarithm of the frontier on both axes]
\label{thm:one-pass-sgd}
Consider a family of problems with sample ratio \(\rho=\gamma n/d\to0\) and
signal-to-nuisance ratio \(R=\gamma^{2}n/d\to\infty\), which is the benign-misfitting window
\eqref{eq:benign-misfitting-window}.  Write \(\vec w_n(\eta)\) for the final
iterate of one SGD pass of \eqref{eq:sgd-update} from \(\vec w_0=\vec0\) at
constant learning rate \(\eta\).  Choose
\begin{equation}\label{eq:eta-pass-def}
    \eta_{\rm pass}
    :=
    \frac{\log R}{2\gamma n} = \frac{\log\frac{\gamma^2 n}{d}}{2\gamma n}.
\end{equation}
If further
\[
    (\log R)^2=o(n),
\]
then
\begin{equation}\label{eq:one-pass-fourth-quadrant}
\begin{aligned}
    \Ttest\bigl(\vec w_n(\eta_{\rm pass})\bigr)
    &=
    \frac{1}{R}\bigl(1+o(1)\bigr)
    +
    \frac{\log R}{4R}\bigl(1+o(1)\bigr)
    \longrightarrow0,
    \\
    \Ttrain\bigl(\vec w_n(\eta_{\rm pass})\bigr)
    &=
    \frac{\log R}{4\rho^{2}}\bigl(1+o(1)\bigr)
    \longrightarrow\infty .
\end{aligned}
\end{equation}
The exact frontier endpoints from
\eqref{eq:frontier-endpoints-rho-R} are
\[
    g_{\min}=\frac{1}{1+R},
    \qquad
    t^\star=
    \left(\frac{R-\rho}{\rho(1+R)}\right)^2.
\]
Therefore both risks in \eqref{eq:one-pass-fourth-quadrant} are a factor
\(\tfrac14\log R\,(1+o(1))\) from their corresponding frontier endpoints.
\end{theorem}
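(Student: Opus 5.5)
The plan is to plug the explicit one-pass profile of Lemma~\ref{lem:single-geometric} into the scalar reduction of Lemma~\ref{lem:two-scalar} and then do asymptotics. With \(\eta=\eta_{\rm pass}\) we have \(\eta\gamma=\log R/(2n)\). The hypothesis \((\log R)^2=o(n)\) gives in particular \(\eta\gamma\to0\), so \(0<\eta\gamma<2\) holds eventually and \(\mu=1-\eta\gamma\in(0,1)\). The first step is the spike bias. We write \(\mu^{2n}=\exp\bigl(2n\log(1-\eta\gamma)\bigr)=\exp\bigl(-2n\eta\gamma-n\eta^2\gamma^2(1+o(1))\bigr)\). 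Here \(2n\eta\gamma=\log R\), and \(n\eta^2\gamma^2=(\log R)^2/(4n)=o(1)\) precisely by the stated hypothesis. Hence \(\mu^{2n}=R^{-1}(1+o(1))\), and likewise \(\mu^{n}=R^{-1/2}(1+o(1))\). This is the place where the condition \((\log R)^2=o(n)\) is used.

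For the test error, \eqref{eq:single-geometric-risk} gives
\[
\Ttest=\mu^{2n}+\frac{d\eta}{\gamma(2-\eta\gamma)}(1-\mu^{2n}).
\]
Now \(d\eta/\gamma=d\log R/(2\gamma^2 n)=\log R/(2R)\), and \(2-\eta\gamma=2(1+o(1))\). Also \(1-\mu^{2n}=1-O(1/R)\to1\). So the nuisance term is \(\frac{\log R}{4R}(1+o(1))\), and the first claim of \eqref{eq:one-pass-fourth-quadrant} follows. The total tends to zero since \(R\to\infty\).

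For the training error I would avoid the symmetric formula, because the profile is geometric rather than flat. Instead I would use \eqref{eq:scalar-train-error} directly, or equivalently the \(n_{\rm eff}\) form \eqref{eq:neff-frontier-form} with \(r=1-\mu^n\). The pieces are as follows:
\begin{itemize}
\item the squared bias \((r-1)^2=\mu^{2n}\to0\);
\item the cross term \(2r(r-1)/\rho=-2(1-\mu^n)\mu^n/\rho=O(R^{-1/2}\rho^{-1})\);
\item the amplified variance \(\frac{d^2}{n}q_n=\frac{d}{n}\cdot dq_n=\frac{\gamma}{\rho}\cdot\frac{\log R}{4R}(1+o(1))=\frac{\log R}{4\rho^2}(1+o(1))\), using \(R=\gamma\rho\).
\end{itemize}
The main obstacle is checking that the cross term is negligible relative to \(\log R/\rho^2\). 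Its ratio to the main term is \(O\bigl(\rho/(\sqrt R\log R)\bigr)\to0\), since \(\rho\to0\) and \(R\to\infty\). The bias term is also negligible, because \(\rho^2\mu^{2n}\to0\). This yields \(\Ttrain=\frac{\log R}{4\rho^2}(1+o(1))\to\infty\).

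Finally, compare with the frontier endpoints. First, \(g_{\min}=1/(1+R)\sim1/R\), so the test error divided by \(g_{\min}\) is \(1+\tfrac14\log R\,(1+o(1))=\tfrac14\log R\,(1+o(1))\). Second, \(t^\star=\bigl((R-\rho)/(\rho(1+R))\bigr)^2=\rho^{-2}(1+o(1))\), since \(\rho/R\to0\) and \(1/R\to0\). So the training error divided by \(t^\star\) is again \(\tfrac14\log R\,(1+o(1))\). This establishes the closing sentence of the theorem.
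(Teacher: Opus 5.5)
Your proposal is correct and follows essentially the same route as the paper: you substitute the exact geometric one-pass profile into the scalar risk formulas, use \((\log R)^2=o(n)\) to get \(\mu^{2n}=R^{-1}(1+o(1))\), and show that the bias and cross terms of the training error are dominated by the amplified nuisance variance \(\frac{\log R}{4\rho^2}\). The only difference is cosmetic: you expand \(n\log(1-\eta\gamma)\) to second order inline and work from Lemma~\ref{lem:single-geometric} directly, while the paper does the same computation through Proposition~\ref{prop:one-pass-risk-summary}, written in terms of the budget \(\Lambda=\tfrac12\log R\).
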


The theorem follows from an exact finite-\(n\) calculation.  We state that
calculation separately because the later learning-rate, ascent, mini-batch,
and noisy-label arguments reuse it.

\begin{proposition}[One-pass calibration, participation, and risks]
\label{prop:one-pass-risk-summary}
Run one pass of the SGD update \eqref{eq:sgd-update} from \(\vec w_0=\vec0\) with
\(0<\eta\gamma<2\).  Recall the multiplier
\(\mu=1-\eta\gamma\) from \eqref{eq:mu-def}, and write
\begin{equation}\label{eq:Lambda-def}
    \Lambda:=\eta\gamma n.
\end{equation}
We refer to \(\Lambda\) as the calibration
budget\footnote{\label{fn:calibration-budget}The name reflects a natural bound.  Exact spike calibration means \(s=1/\gamma\) by
\eqref{eq:spike-calibration}.  Lemma~\ref{lem:single-geometric} gives
\(\lvert\alpha_i^{(n)}\rvert=\eta\lvert\mu\rvert^{i-1}\le\eta\) when
\(0<\eta\gamma<2\).  Therefore
\(\lvert s_n\rvert\le n\eta\) and so
\(\gamma\lvert s_n\rvert\le\Lambda\).  Thus
\(\Lambda\) is the upper bound \(n\eta\), expressed in units of the
coefficient sum required for exact calibration.
Subsection~\ref{subsec:lr-robustness} uses this interpretation.}.

The final iterate has calibration
\begin{equation}\label{eq:one-pass-calibration}
    r_n
    :=
    \gamma s_n
    =
    \sqrt{\gamma}\,w_n[1]
    =
    1-\mu^n
\end{equation}
and participation
\begin{equation}\label{eq:one-pass-neff}
    n_{\rm eff}(\vec w_n)
    =
    \frac{2-\eta\gamma}{\eta\gamma}
    \cdot
    \frac{1-\mu^n}{1+\mu^n}.
\end{equation}
On the branch \(0<\eta\gamma\le1\), the coefficients in
\eqref{eq:single-geometric-coeffs} are nonnegative, so
\eqref{eq:one-pass-neff} is an example count in the sense of
\eqref{eq:neff-range-nonneg}.  On the rest of the stable range it remains the
algebraic ratio in \eqref{eq:neff-def}.

The exact finite-\(n\) risks are
\begin{equation}\label{eq:one-pass-exact-risks}
\begin{aligned}
    \Ttest(\vec w_n)
    &=
    \underbrace{\mu^{2n}}_{\text{spike bias}^2}
    +
    \underbrace{\frac{\Lambda}{R}
    \cdot\frac{1-\mu^{2n}}{2-\eta\gamma}}_{\text{nuisance variance}},
    \\[6pt]
    \Ttrain(\vec w_n)
    &=
    \underbrace{\mu^{2n}}_{\text{spike bias}^2}
    \underbrace{{}-\frac{2\mu^n(1-\mu^n)}{\rho}}_{\text{cross term}}
    +
    \underbrace{\frac{\Lambda}{\rho^2}
    \cdot\frac{1-\mu^{2n}}{2-\eta\gamma}}_{\text{amplified nuisance variance}}.
\end{aligned}
\end{equation}

On a family satisfying
\[
    \Lambda\to\infty,
    \qquad
    \frac{\Lambda}{n}\to0,
\]
the spike residual and participation satisfy
\begin{equation}\label{eq:one-pass-mu-decay}
    \mu^n\le e^{-\Lambda}=o(1)
\end{equation}
and
\begin{equation}\label{eq:one-pass-neff-asymp}
    n_{\rm eff}(\vec w_n)
    =
    \frac{2n}{\Lambda}\bigl(1+o(1)\bigr).
\end{equation}
If the family also satisfies \(\Lambda^2/n\to0\), then
\begin{equation}\label{eq:one-pass-contraction}
    \mu^n=e^{-\Lambda}\bigl(1+o(1)\bigr).
\end{equation}
\end{proposition}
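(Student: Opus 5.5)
The plan is to read everything off Lemma~\ref{lem:single-geometric} at \(t=n\) and then substitute into the participation form \eqref{eq:neff-frontier-form} of Remark~\ref{rem:participation}.

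\textbf{Calibration and participation.} The calibration \eqref{eq:one-pass-calibration} is immediate: \(r_n=\gamma s_n=1-\mu^n\) by \eqref{eq:single-geometric-coeffs}, and \(\sqrt\gamma\,w_n[1]=\gamma s_n\) holds for any span predictor. For participation, I will use \(s_n=(1-\mu^n)/\gamma\) and \(q_n=\eta^2(1-\mu^{2n})/(1-\mu^2)\) together with \eqref{eq:one-minus-mu-squared}. These give
\[
    \frac{s_n^2}{q_n}=\frac{(1-\mu^n)^2\,\eta\gamma(2-\eta\gamma)}{\gamma^2\eta^2(1-\mu^n)(1+\mu^n)},
\]
which simplifies to \eqref{eq:one-pass-neff}. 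Nonnegativity of the coefficients on \(0<\eta\gamma\le1\) follows because \(\mu\ge0\) there.

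\textbf{Exact risks.} I will plug \(r=1-\mu^n\) and \(n/n_{\rm eff}\) into \eqref{eq:neff-frontier-form}. The nuisance factor is
\[
    \frac{n}{n_{\rm eff}}\,r^2=\frac{n\eta\gamma(1+\mu^n)(1-\mu^n)}{2-\eta\gamma}=\frac{\Lambda(1-\mu^{2n})}{2-\eta\gamma}.
\]
Dividing by \(R\) gives the test nuisance term, and dividing by \(\rho^2\) gives the amplified training term. The bias is \((r-1)^2=\mu^{2n}\), and the cross term is \(2r(r-1)/\rho=-2\mu^n(1-\mu^n)/\rho\). This route should also agree with \eqref{eq:single-geometric-risk}, since \(d\eta/\gamma=\Lambda/R\); I will use that as a sanity check. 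One edge case needs attention: when \(\mu^n=1\), \(n_{\rm eff}\) is undefined. That cannot happen, since \(0<\eta\gamma<2\) gives \(|\mu|<1\), so \(s_n\ne0\).

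\textbf{Asymptotics.} This is the only delicate part. Since \(\Lambda/n\to0\), eventually \(\eta\gamma<1\), so \(0<\mu<1\). Then \(\log\mu\le-\eta\gamma\) gives \(\mu^n\le e^{-\Lambda}\to0\). Substituting into \eqref{eq:one-pass-neff}:
\[
    n_{\rm eff}=\frac{2-\eta\gamma}{\eta\gamma}\,(1+o(1))=\frac{2n}{\Lambda}\,(1+o(1)),
\]
because \(\eta\gamma=\Lambda/n\to0\). For the sharper statement \eqref{eq:one-pass-contraction}, I will use the expansion \(\log(1-x)=-x+O(x^2)\) with \(x=\eta\gamma\). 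This gives \(n\log\mu=-\Lambda+O(\Lambda^2/n)\). The error term vanishes under \(\Lambda^2/n\to0\), so \(\mu^n=e^{-\Lambda}e^{o(1)}\). The main thing to check carefully is the uniformity of the \(O(x^2)\) bound; it holds for, say, \(x\le1/2\), where \(|\log(1-x)+x|\le x^2\).
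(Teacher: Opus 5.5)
Your proposal is correct and follows the paper's proof essentially step for step: Lemma~\ref{lem:single-geometric} at \(t=n\), substitution of \(r_n\) and \(n_{\rm eff}\) into the participation form \eqref{eq:neff-frontier-form}, and then \(\log(1-x)\le -x\) and \(n\log(1-\Lambda/n)=-\Lambda+O(\Lambda^2/n)\) for the asymptotics. Your explicit checks add two details the paper leaves implicit: that \(s_n\neq0\), so the participation form applies, and that \(|\log(1-x)+x|\le x^2\) uniformly for \(x\le 1/2\).
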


\begin{proof}
To analyze the risks, it is useful to leverage the concept of participation.
Lemma~\ref{lem:single-geometric} at \(t=n\) gives
\begin{equation}\label{eq:one-pass-sq}
    s_n=\frac{1-\mu^n}{\gamma},
    \qquad
    q_n=
    \eta^2\frac{1-\mu^{2n}}{1-\mu^2}.
\end{equation}
The formula for \(s_n\) in \eqref{eq:one-pass-sq} gives
\eqref{eq:one-pass-calibration}.  Substituting \eqref{eq:one-pass-sq} into
\eqref{eq:neff-def}, and using
\(1-\mu^2=\eta\gamma(2-\eta\gamma)\) from
\eqref{eq:one-minus-mu-squared}, gives
\[
\begin{aligned}
    n_{\rm eff}(\vec w_n)
    &=
    \frac{(1-\mu^n)^2}{\gamma^2}
    \cdot
    \frac{\eta\gamma(2-\eta\gamma)}
         {\eta^2(1-\mu^n)(1+\mu^n)}
    \\
    &=
    \frac{2-\eta\gamma}{\eta\gamma}
    \cdot
    \frac{1-\mu^n}{1+\mu^n},
\end{aligned}
\]
which is \eqref{eq:one-pass-neff}.

Substitute \(r_n=1-\mu^n\) from
\eqref{eq:one-pass-calibration} and the participation
\eqref{eq:one-pass-neff} into \eqref{eq:neff-frontier-form}.  The nuisance
factor shared by the two risks becomes
\begin{equation}\label{eq:one-pass-nuisance-factor}
\begin{aligned}
    \frac{n}{n_{\rm eff}(\vec w_n)}r_n^2
    &=
    \frac{n\eta\gamma}{2-\eta\gamma}
    \cdot
    \frac{1+\mu^n}{1-\mu^n}
    \cdot
    (1-\mu^n)^2
    \\
    &=
    \frac{\Lambda}{2-\eta\gamma}(1-\mu^{2n}).
\end{aligned}
\end{equation}
The spike-calibration error is \(r_n-1=-\mu^n\).  Substituting
\(r_n-1=-\mu^n\) and \eqref{eq:one-pass-nuisance-factor} into
\eqref{eq:neff-frontier-form} gives \eqref{eq:one-pass-exact-risks}.

It remains to verify the asymptotic consequences.  On the stated small-rate
branch, \(\eta\gamma=\Lambda/n\to0\), so
\(\mu=1-\Lambda/n\in[0,1)\) for all sufficiently large \(n\).  The inequality
\(\log(1-x)\le-x\) for \(0\le x<1\) gives
\[
    n\log\mu
    =
    n\log\left(1-\frac{\Lambda}{n}\right)
    \le-\Lambda.
\]
Exponentiating gives \eqref{eq:one-pass-mu-decay}.  Substituting
\(\mu^n=o(1)\) and \(\eta\gamma=o(1)\) into
\eqref{eq:one-pass-neff} gives \eqref{eq:one-pass-neff-asymp}.

If \(\Lambda^2/n\to0\), then
\[
    n\log\left(1-\frac{\Lambda}{n}\right)
    =
    -\Lambda
    +
    O\left(\frac{\Lambda^2}{n}\right)
    =
    -\Lambda+o(1).
\]
Exponentiating gives \eqref{eq:one-pass-contraction}.
\end{proof}

\begin{proof}[Proof of Theorem~\ref{thm:one-pass-sgd}]
At the learning rate \eqref{eq:eta-pass-def}, the definition
\eqref{eq:Lambda-def} gives
\(\Lambda=\eta_{\rm pass}\gamma n=\tfrac12\log R\).
Therefore \(\Lambda\to\infty\), and the hypothesis \((\log R)^{2}=o(n)\) of Theorem~\ref{thm:one-pass-sgd} gives
\(\Lambda^2/n=(\log R)^2/(4n)\to0\).  Since \(\Lambda\to\infty\), we also
have
\(\Lambda/n=(\Lambda^2/n)/\Lambda\to0\).
The small-rate conclusions
\eqref{eq:one-pass-mu-decay}--\eqref{eq:one-pass-contraction} apply.  In
particular,
\[
    \mu^{2n}
    =
    e^{-2\Lambda}\bigl(1+o(1)\bigr)
    =
    \frac{1}{R}\bigl(1+o(1)\bigr).
\]
Also \(\eta_{\rm pass}\gamma=o(1)\) and \(\mu^n=o(1)\).  The nuisance term
of the exact test risk \eqref{eq:one-pass-exact-risks} is therefore
\[
    \frac{\Lambda}{R}
    \cdot
    \frac{1-\mu^{2n}}{2-\eta_{\rm pass}\gamma}
    =
    \frac{\log R}{4R}\bigl(1+o(1)\bigr).
\]
This proves the test-risk claim in
\eqref{eq:one-pass-fourth-quadrant}.

For the training risk, the spike-bias term, the cross term, and the amplified nuisance
variance in \eqref{eq:one-pass-exact-risks} have respective orders
\[
    \frac{1}{R},
    \qquad
    O\left(\frac{1}{\rho\sqrt R}\right),
    \qquad
    \frac{\log R}{4\rho^2}\bigl(1+o(1)\bigr).
\]
The ratios of the spike-bias term and the cross term to the amplified nuisance
variance satisfy
\[
    O\left(\frac{\rho^2}{R\log R}\right)\longrightarrow0,
    \qquad
    O\left(\frac{\rho}{\sqrt R\log R}\right)\longrightarrow0,
\]
because \(\rho\to0\) and \(R\to\infty\).  The amplified nuisance variance
therefore gives the training-risk claim in
\eqref{eq:one-pass-fourth-quadrant}.

Finally, \eqref{eq:frontier-endpoints-rho-R} gives
\(g_{\min}=1/(1+R)=(1+o(1))/R\) and
\(t^\star=((R-\rho)/(\rho(1+R)))^2=(1+o(1))/\rho^2\).  Comparing these exact
endpoints and their asymptotic forms with
\eqref{eq:one-pass-fourth-quadrant} gives the common factor
\(\tfrac14\log R\,(1+o(1))\).
\end{proof}

The pass ends with calibration \(r_n\to1\) by
\eqref{eq:one-pass-mu-decay}, and with
\(r_n=1-e^{-\Lambda}(1+o(1))\) under the stronger hypothesis of
\eqref{eq:one-pass-contraction}.  That is far past the interpolator's
\(r_{\rm int}=\rho/(1+\rho)\to0\) of \eqref{eq:rint-def}.  Moreover, every
training prediction overshoots its own training label: the end-of-pass signed
residual on example \(i\) is
\[
    h_i:=\vec w_n^{\top}\vec x_i-1=-\mu^{n}+d\eta\,\mu^{\,i-1}>0
    \qquad(1\le i\le n)
\]
whenever \(1/d<\eta<1/\gamma\), which the tuned rate \(\eta_{\rm pass}\) from \eqref{eq:eta-pass-def} satisfies under the
theorem's hypotheses, by the computation in Appendix~\ref{app:stability-boundaries}.  In this
precise sense, one-pass SGD at a large learning rate has an algorithmic
inductive bias toward the anti-shrinkage of
Subsection~\ref{subsec:scaled-interpolation}.  The dynamics, not an explicit
(anti-)penalty, select the overshooting side of interpolation.

\begin{remark}[The logarithmic gap in Theorem~\ref{thm:one-pass-sgd} is a participation deficit]
\label{rem:one-pass-log-participation}
At the tuned rate \(\eta_{\rm pass}\) of \eqref{eq:eta-pass-def}, the pass carries calibration
budget \(\Lambda_{\rm pass}=\tfrac12\log R\) by \eqref{eq:Lambda-def}, so its
per-update step on the spike is
\begin{equation}\label{eq:eta-pass-step}
    \eta_{\rm pass}\gamma=\frac{\Lambda_{\rm pass}}{n}=\frac{\log R}{2n} .
\end{equation}
At this learning rate \(\eta_{\rm pass}\), \eqref{eq:one-pass-neff-asymp} gives
\begin{equation}\label{eq:eta-pass-participation-deficit}
    n_{\rm eff}\bigl(\vec w_n(\eta_{\rm pass})\bigr)
    =
    \frac{4n}{\log R}\bigl(1+o(1)\bigr),
    \qquad
    \frac{n}{n_{\rm eff}(\vec w_n(\eta_{\rm pass}))}
    =
    \frac14\log R\bigl(1+o(1)\bigr).
\end{equation}
The final SGD iterate also has \(r_n=1-o(1)\) by
\eqref{eq:one-pass-mu-decay}.  Substituting
\eqref{eq:eta-pass-participation-deficit} into the participation form
\eqref{eq:neff-frontier-form} gives the nuisance variance
\(\frac{\log R}{4R}(1+o(1))\) and the amplified nuisance variance
\(\frac{\log R}{4\rho^2}(1+o(1))\) in
\eqref{eq:one-pass-fourth-quadrant}.

The source of the deficit is the geometric profile
\eqref{eq:single-geometric-coeffs}.  Once \(\mu^n=o(1)\) and
\(\eta\gamma=o(1)\), \eqref{eq:one-pass-neff-asymp} becomes
\[
    n_{\rm eff}(\vec w_n)
    =
    \frac{2}{\eta\gamma}\bigl(1+o(1)\bigr).
\]
The length of the pass no longer appears explicitly.  A fixed learning rate
sets a geometric decay length, so later examples receive negligible
coefficients after enough decay lengths have passed.  Increasing the pass
length then improves calibration without restoring a flat coefficient
profile.

This calculation identifies the mechanism behind the logarithm at the
displayed rate.  Can another constant learning rate avoid it?
\end{remark}

It cannot.  Optimizing over stable constant learning rates, \(0<\eta\gamma<2\),
does not eliminate the logarithmic gap as long as \(\log R=o(n)\).
From \eqref{eq:one-pass-exact-risks},
\[
    \Ttest(\vec w_n)\ \ge\ \mu^{2n}+\frac{\Lambda}{2R}\bigl(1-\mu^{2n}\bigr).
\]
For \(\eta\gamma\le\tfrac12\), \(\mu^{2n}\ge e^{-4\Lambda}\), so a sublogarithmic
budget leaves appreciable spike bias.  For \(\eta\gamma>\tfrac12\),
\(\Lambda>n/2\gg\log R\), so either the spike bias \(\mu^{2n}\) is itself
appreciable or the nuisance term is already of order \(\Lambda/R\gg\log R/R\).
Optimizing this tradeoff forces a logarithmic budget, \(\Lambda=\Theta(\log R)\),
and the minimum over stable constant rates is \(\Theta\bigl(\tfrac{\log R}{R}\bigr)\).
Conversely, at \(\eta_{\rm pass}\) --- where \(\eta\gamma\le1\) once
\(n\ge\log R\) --- the spike bias is \(\mu^{2n}\le1/R\) while the nuisance term of
\eqref{eq:one-pass-exact-risks} is at most \(\Lambda_{\rm pass}/R=\frac{\log R}{2R}\),
so the same order is attained.  The logarithm comes from the geometric participation
profile, not from the particular displayed learning rate.  The full calculation,
including the sharp leading constant, is omitted here.

Looking back at Figure~\ref{fig:stylized-sgd-trace}: the plotted run uses
budget \(\Lambda=2\), a constant-factor overshoot of the tuned value
\(\tfrac12\log R\approx1.5\) of \eqref{eq:eta-pass-def} at the figure's
\(R=20\), and its step satisfies \((\eta\gamma)^{2}n=0.04\), so \(\Lambda\) is
the spike contraction exponent \(-n\log|1-\eta\gamma|\) to within \(2\%\).  By
\eqref{eq:one-pass-neff} the tuned rate would participate
\(n_{\rm eff}\approx85\) of the \(n=100\) examples versus the plotted
\(\approx76\), while the asymptotic \(4n/\log R\approx134\) exceeds \(n\)
itself: at this small \(R\) the traced run shows the mechanism without any
observable logarithmic participation deficit that the tuned rate carries at
large \(R\).

Is the logarithmic penalty for one-pass SGD fundamental to all algorithmic
approaches?  It is not.  As Appendix~\ref{app:sgd-refinements}
(Subsection~\ref{subsec:flat-averaging}) shows, the logarithm is a suboptimality of
constant-learning-rate SGD. A natural sample-splitting scheme --- average all but one
training point and use the held-out point to set the calibration --- pays no
logarithmic penalty and attains the best-span test error up to a \(1+o(1)\) factor.

The tuned rate is also not a knife edge.  Every learning rate in the multiplicatively
wide window \(\frac{1}{\gamma n}\ll\eta\ll\frac{\gamma}{d}\) drives the test error to
zero\footnote{On the stable branch \(0<\eta\gamma\le1\), which Corollary~\ref{cor:learning-rate-robustness}
assumes.  The displayed window does not by itself cap \(\eta\gamma\) when
\(\gamma^{2}/d\) is large.} (Corollary~\ref{cor:learning-rate-robustness} in
Appendix~\ref{app:sgd-refinements}), and any log-spaced hyperparameter sweep whose
range brackets \(\eta_{\rm pass}\) contains a successful learning rate for all
sufficiently large problems\footnote{On the many-sample branch \(\log R=o(n)\),
on which Remark~\ref{rem:constant-factor-lr} operates.} (Remark~\ref{rem:constant-factor-lr}).
There is also a finite-\(n\), non-asymptotic version of this robustness.  For any
target \(0<\varepsilon<1\), every learning rate with \(0<\eta\gamma\le1\) and
\[
    \frac{\log(2/\varepsilon)}{2\gamma n}\ \le\ \eta\ \le\
    \min\Bigl\{\frac{\varepsilon\gamma}{2d},\ \frac{1}{\gamma}\Bigr\}
\]
achieves test error \(\Ttest(\vec w_n)\le\varepsilon\): allocate \(\varepsilon/2\)
to each term of the safe-window bound \(e^{-2\Lambda}+\frac{\Lambda}{R}\) of
\eqref{eq:lr-robust}.  This is a \emph{sufficient} certified set, not the exact set
of successful rates.  It is nonempty exactly when
\(\min\{\varepsilon R,\,2n\}\ge\log(2/\varepsilon)\), and its endpoint ratio is
then \(\min\{\varepsilon R,2n\}/\log(2/\varepsilon)\ge1\).

The calculation also tolerates noisy training labels.  Let the training labels be
\[
    y_i=1+\sigma_\xi\xi_i,
    \qquad
    \xi_i\stackrel{\rm iid}{\sim}\calN(0,1),
\]
write \(\E_\xi\) for expectation over this training-label noise, and write
\(\Ttest^{\rm clean}(\vec w_n)\) for the test risk of the clean-label iterate
(\(\sigma_\xi=0\)) at the same learning rate.  At \(\eta_{\rm pass}\), with
\(\Lambda_{\rm pass}=\tfrac12\log R\), assuming \(R\to\infty\) and
\(\Lambda_{\rm pass}^{2}/n=o(1)\), the clean-test risk satisfies
\begin{equation}\label{eq:noisy-full-pass-risk-main}
    \E_\xi\,\Ttest(\vec w_n)
    =
    \Ttest^{\rm clean}(\vec w_n)
    +
    \sigma_\xi^2\frac{\Lambda_{\rm pass}^2}{R}
    +
    \sigma_\xi^2\frac{\Lambda_{\rm pass}}{2n}\bigl(1+o(1)\bigr).
\end{equation}
The two noise routes are distinct.  The direct term records noisy nuisance energy
written into fresh orthogonal directions, which the isotropic test law averages.
The signal-path term records noise fed through the shared spike recursion,
averaged by the geometric weights over an effective window of about
\(n/\Lambda_{\rm pass}\) labels.  The feedback between the two routes is lower
order under these assumptions.  Growing noise is therefore tolerated exactly when
\(\sigma_\xi^2\Lambda_{\rm pass}^2/R\to0\) and
\(\sigma_\xi^2\Lambda_{\rm pass}/n\to0\): both \(R/\Lambda_{\rm pass}^{2}\) and
\(n/\Lambda_{\rm pass}\) must keep up with the noise variance.  In particular the
expected clean-test risk can vanish even when \(\sigma_\xi^{2}\) diverges with the
dimension.  The clean tuned rate is, however, no longer the right rate when the
noise is strong.  Heuristically, in the strong-signal case \(d\ll\gamma^{2}\),
balancing the spike-bias and signal-path terms
\(e^{-2\Lambda}+\sigma_\xi^{2}\Lambda/(2n)\) gives budget
\(\tfrac12\log(4n/\sigma_\xi^{2})\), which drops below \(\Lambda_{\rm pass}\) once
\(\sigma_\xi^{2}\gamma^{2}/d>4\): a smaller learning rate keeps a longer
moving-average window that better washes out the label noise.

\subsection{Ascending empirical training error}
\label{subsec:sgd-misfit}

Theorem~\ref{thm:one-pass-sgd} says the pass \emph{ends} with large empirical training error.
Something stronger holds at \(\eta_{\rm pass}\) itself, under exactly the hypotheses of Theorem~\ref{thm:one-pass-sgd}:
\(\rho\to0\), \(R\to\infty\), and \((\log R)^{2}=o(n)\).  At that rate the budget is
\(\Lambda=\Lambda_{\rm pass}=\tfrac12\log R\) by \eqref{eq:eta-pass-step}, so
\(\Lambda\to\infty\), \(\Lambda/n\to0\), and \(\Lambda/R\to0\).  It actually turns out that 
the full empirical training error  increases at every single step, while the fresh-test
error decreases at every single step.
Subsection~\ref{subsec:ascent-derivation} of Appendix~\ref{app:sgd-refinements}
carries out the step-by-step computation.

How is it that test error gets small while training error grows for SGD?
Isn't this supposed to be a descent algorithm?  The basic answer is that an
iterative algorithm motivated by descent has no reason to keep descending
once its learning rate is beyond the stability range that matters --- and
what stability actually means in this setting is a bit subtle.  The fresh-versus-reused stability
contrast in Appendix~\ref{app:stability-boundaries} answers this question.  At the tuned rate
of \eqref{eq:eta-pass-def}, a reused training point sees an amplified impact
because its nuisance direction is now stored within the iterate being
updated, driving the training and test errors sharply upward.

A reader who has met the catapult effect of \citet{lewkowycz2020large} may wonder whether that is
what is happening here.  There, the \emph{training} loss first spikes upward {\em and} then falls
back, because the large step eventually throws the iterate into a region of lower curvature.  That is not
what is happening here.  The quadratic geometry in this model is fixed, so there is no curvature
to recover.  The empirical training error grows monotonically because the tuned one-pass rate is
not merely past the repeated-example stability boundary but far past it.  By
\eqref{eq:eta-pass-edge-ratio} the ratio of the tuned rate to that boundary diverges.  That
margin is what lets the stored-nuisance growth term \(\Lambda/\rho^{2}\) dominate the descending
terms in the step difference \eqref{eq:climbing-uphill}.
Merely crossing that boundary would not force the ascent.  At \(\eta_{\rm edge}\) itself the
first update flips the sign of the residual on the example it uses while leaving that residual's
magnitude unchanged.  The same update shrinks the residuals of the \(n-1\) examples whose
nuisance directions the iterate does not yet store.  So the full training loss falls at first,
and by continuity it still falls slightly beyond the boundary.  Instability permits the ascent.
The divergent margin \eqref{eq:eta-pass-edge-ratio} produces it.

Looking back at Figure~\ref{fig:stylized-sgd-trace} with the section's tools:
across the first epoch the test error falls toward the span-optimal scale
\(d/(\gamma^{2}n)=1/R\), while the training error climbs past the
zero-predictor baseline toward the converse scale \(1/\rho^{2}\) whose order
Corollary~\ref{cor:fourth-quadrant-necessity} forces (the stable-rate
comparator reduces training error even under reuse, but it barely moves the
shared spike, so its test error stays near the baseline).
Both reference lines are asymptotic scales.  At the figure's finite
\(R=20\) and \(\rho=0.2\) the exact endpoints of
\eqref{eq:frontier-endpoints-rho-R} are \(g_{\min}=1/21\) and
\(t^\star\approx22.2\), about \(5\%\) and \(12\%\) from the plotted
\(1/R=0.05\) and \(1/\rho^{2}=25\).  
Section~\ref{sec:discussion} returns to this Figure's trace in order to discuss a diagnostic split:
fresh-example diagnostics can improve even as the final empirical training
loss silently rises.

\paragraph{Why the large learning rate helps here.}
There is a particularly simple sense in which a larger learning rate can
help generalization here.  It does not help by finding a better empirical minimizer.
In the benign-misfitting window, interpolation has zero empirical error and is
therefore a global empirical-risk minimizer.  But the minimum-norm
interpolating point in the training span is the wrong point for test error.
The useful one-pass iterate is different in kind: it has large empirical error
and improves test error by refusing empirical minimization as the target.
Concretely, the larger one-pass step refuses the per-example zero-residual fit:
it overshoots each training point's nuisance direction, but over one pass it
builds up the correct coefficient on the shared spike.

The size of this overshoot is not arbitrary:
\eqref{eq:fourth-quadrant-training-lower} in
Corollary~\ref{cor:fourth-quadrant-necessity} shows that a span predictor with
small test error must have training error of order at least \(1/\rho^2\), and
hence a root-mean-square training residual of order at least \(1/\rho\).
In many familiar
large-learning-rate explanations, the large step helps choose, sample, or
stabilize near a better minimum.  Here it helps because the useful span
predictor is not an empirical-risk minimizer at all.%
\footnote{This is also why one must be careful with per-example gradient
clipping.  The hazard is not a large pre-update residual.  On the fresh
pass, the residual driving the update on example \(i\) is the controlled
shared spike residual \(b_{i-1}\) of \eqref{eq:sgd-state-defs}.  The large
training residual on that example is \emph{created} by its own update,
through the stored coefficient, and is seen only afterward or upon reuse.
What a clipping rule does see is the gradient norm
\(b_{i-1}\|\vec x_i\|_2=b_{i-1}\sqrt{\gamma+d}\), large because the feature
vector is long.  Too aggressive a threshold shrinks the stored coefficient
below the \(\eta\,b_{i-1}\) of \eqref{eq:single-geometric-coeffs} that the
one-pass calculation needs --- in effect cutting the learning rate on the
fresh steps that build calibration --- while a threshold that only catches
the far larger reuse-epoch spikes would safely leave the first pass untouched.}
Subsection~\ref{subsec:related-large-steps} contrasts this elementary mechanism with
other large-learning-rate phenomena.

\section{Adversarial sensitivity}
\label{sec:robustness}

Let \(\vec{\theta}\in\R^{d+1}\) denote the unit spike direction.  In the aligned coordinates of
Sections~\ref{sec:one-sample}--\ref{sec:sgd}, \(\vec{\theta}=\vec{e}_1\).
The true label is \(y=\vec{\theta}^{\top}\vec{x}/\sqrt{\gamma}\).  We now use the nuisance
residue to ask how sensitive a learned predictor is to an adversary.  To make
this a fair question we fix the truth: the adversary may perturb the input, but
only in directions that leave the true label unchanged.  A large output change
is then attributable to the learned predictor's sensitivity, not to a change in
the correct answer.  We call such a perturbation \emph{label-preserving}.  Since
the label is the spike coordinate, label-preserving perturbations are exactly
those orthogonal to the spike direction.  The predictor's \emph{adversarial
sensitivity}\footnote{In classification, an adversarial example is often
described as a small perturbation that preserves the intended label while
changing the model's output.  Here that intended-label condition is explicit
rather than informal: $y=\vec{\theta}^{\top}\vec{x}/\sqrt{\gamma}$, so
$\vec{\Delta}\perp\vec{\theta}$ preserves the label exactly.} is the
largest output change a label-preserving perturbation of a given size can cause
\cite{szegedy2014intriguing,goodfellow2015explaining,madry2018towards,tsipras2019robustness,ilyas2019adversarial}.
From here on, unperturbed test error means the original test error \(\Ttest\).

We first isolate the nuisance residue, compute its worst label-preserving
response, and read off the resulting sample thresholds
(Sections~\ref{subsec:label-preserving}--\ref{subsec:phase-ladder}).
We then examine training-point witnesses
(Section~\ref{subsec:training-adversarial}).

\subsection{Label-preserving perturbations and the role of nuisance residue}
\label{subsec:label-preserving}

A perturbation is \emph{label-preserving} when it moves the input without
changing the correct answer --- here that means moving along nuisance
directions, leaving the spike coordinate alone.  Because the observed
coordinates need not align with the latent spike, we state the calculation in
coordinate-free form.
Appendix~\ref{app:rotation-aware} uses the same representation to analyze
coordinatewise $\ell_\infty$ perturbations after a generic rotation.
Write the training points as
\begin{equation}\label{eq:rotation-aware-data}
    \vec{x}_i=\sqrt{\gamma}\,\vec{\theta}+\vec{v}_i,
    \qquad
    \vec{v}_i\perp\vec{\theta},
    \qquad
    \vec{v}_i^{\top}\vec{v}_j=d\,\mathbf{1}\{i=j\}.
\end{equation}
For a span predictor \(\vec{u}=\Xt\vec{\alpha}\), recall that its
\emph{spike calibration} and \emph{nuisance residue} are defined by
\begin{equation}\label{eq:calibration-residue}
    r:=\gamma\one^\top\vec{\alpha},
    \qquad
    \widetilde{\vec{v}}:=\sum_i\alpha_i \vec{v}_i.
\end{equation}
Then
\begin{equation}\label{eq:rotation-decomposition}
    \vec{u}=\frac{r}{\sqrt{\gamma}}\vec{\theta}+\widetilde{\vec{v}},
    \qquad
    \widetilde{\vec{v}}\perp\vec{\theta},
    \qquad
    \Ttest(\vec{u})=(r-1)^2+\|\widetilde{\vec{v}}\|_2^2.
\end{equation}
The oracle is \(\vec{\theta}/\sqrt{\gamma}\), corresponding to \(r=1\) and
\(\widetilde{\vec{v}}=\vec{0}\).  Meanwhile, every span predictor with nonzero calibration is forced to carry a nuisance residue
\(\widetilde{\vec{v}}\).  At fixed \(r\), Cauchy--Schwarz gives
\begin{equation}\label{eq:fixed-r-nuisance-lower}
    \|\widetilde{\vec{v}}\|_2^2\ge \frac{dr^2}{\gamma^2n},
\end{equation}
with equality for symmetric coefficients.

\subsection{RMS adversarial sensitivity}
\label{subsec:rms-sensitivity}

\begin{theorem}[RMS adversarial sensitivity]
\label{thm:absolute-rms}
Let \(\vec{u}\) be a span predictor and let \(\delta\ge0\).
Define the \emph{root-mean-square (RMS) adversarial sensitivity} of
\(\vec{u}\) at per-coordinate perturbation size \(\delta\) by
\begin{equation}\label{eq:A2-def}
    A_2(\vec{u};\delta)
    :=
    \sup_{\vec{\Delta}\perp\vec{\theta},\ \|\vec{\Delta}\|_2\le\delta\sqrt d}
    |\langle \vec{u},\vec{\Delta}\rangle|.
\end{equation}
Here the radius \(\delta\sqrt d\) treats \(\delta\) as a per-coordinate RMS
perturbation size\footnote{Using a fixed Euclidean radius would shift the
sample thresholds in \eqref{eq:robustness-thresholds} by powers of \(d\).}.
Then
\begin{equation}\label{eq:A2-response}
    A_2(\vec{u};\delta)=\delta\sqrt d\,\|\widetilde{\vec{v}}\|_2.
\end{equation}
At fixed calibration \(r\), the minimum possible value for \(A_2(\vec{u};\delta)\) over span predictors \(\vec{u}\) is
\begin{equation}\label{eq:A2-fixed-r-response}
    \delta |r|\frac{d}{\gamma\sqrt n}.
\end{equation}
Consequently, among span predictors at fixed calibration \(r\), keeping the
adversarial sensitivity below output tolerance \(\tau>0\) requires
\begin{equation}\label{eq:A2-sample-lower}
    n\ge \frac{\delta^2 r^2 d^2}{\gamma^2\tau^2}.
\end{equation}
\end{theorem}

\begin{proof}
Because \(\vec{\Delta}\perp\vec{\theta}\),
\(\langle \vec{u},\vec{\Delta}\rangle=\langle \widetilde{\vec{v}},\vec{\Delta}\rangle\).  By Cauchy--Schwarz,
the supremum over \(\|\vec{\Delta}\|_2\le\delta\sqrt d\) is achieved when
\(\vec{\Delta}\) is aligned with \(\widetilde{\vec{v}}\), giving
\[
    A_2(\vec{u};\delta)=\delta\sqrt d\,\|\widetilde{\vec{v}}\|_2.
\]
By \eqref{eq:fixed-r-nuisance-lower}, every span predictor with calibration
\(r\) has \(\|\widetilde{\vec{v}}\|_2\ge |r|\sqrt d/(\gamma\sqrt n)\), with equality for
symmetric coefficients.  Combining this lower bound with
\eqref{eq:A2-response} gives the fixed-calibration sensitivity
\eqref{eq:A2-fixed-r-response}.  Requiring \eqref{eq:A2-fixed-r-response} to
be at most \(\tau\) gives the sample threshold.
\end{proof}

Appendix~\ref{app:rotation-aware} repeats this calculation for coordinatewise
\(\ell_\infty\) perturbations after a generic rotation, where the same nuisance
residue is what the perturbation aligns with.

The whole adversarial-sensitivity calculation is the following norm comparison:
\[
    \begin{aligned}
    \text{fresh random test error contribution: }&\|\widetilde{\vec{v}}\|_2^2,\\
    \text{worst label-preserving adversarial output change: }&
    \delta\sqrt d\,\|\widetilde{\vec{v}}\|_2.
    \end{aligned}
\]
This is the adversarial version of residue amplification:
random testing averages the residue, while the adversary aligns with it.

Thus the minimum achievable sensitivity at calibration \(r\), attained by
the symmetric profile, is linear in \(|r|\): better signal calibration hands
an aligned perturbation a larger nuisance residue to exploit.  The only way to have both small unperturbed test error and low
adversarial sensitivity is to have enough samples that the nuisance residue
itself is small --- which is precisely the gap between \(n_{\rm clean}\) and
\(n_{\rm adv}\) in \eqref{eq:robustness-thresholds}.

\subsection{Sample thresholds and the robustness phase ladder}
\label{subsec:phase-ladder}

A clean-test target gives a calibration floor with no optimization: if a span
predictor has \(\Ttest(\vec u)\le\varepsilon<1\), then
\(\|\widetilde{\vec v}\|_2^2\ge0\) and \eqref{eq:rotation-decomposition} give
\((r-1)^2\le\varepsilon\), hence
\begin{equation}\label{eq:clean-calibration-floor}
    r\ge 1-\sqrt\varepsilon>0.
\end{equation}

For a fixed clean-test target \(0<\varepsilon<1\), perturbation radius \(\delta>0\),
and output tolerance \(\tau>0\), three real-valued sample-count thresholds organize the calculation.
Let \(n_{\rm clean}\) denote the sample threshold at which the best span predictor
first reaches the clean target. This is the threshold
\(n_{\rm mis}(\varepsilon)\) of Corollary~\ref{cor:sample-complexity}, of order
\(d/\gamma^2\).
Let \(n_{\rm int}\) denote the corresponding interpolation threshold, likewise
the exact threshold of that corollary.  It is of order \(d/\gamma\).
Finally, let \(n_{\rm adv}(\varepsilon,\delta,\tau)\) denote the sample-count
threshold at which some span predictor first meets the clean target and the
output tolerance simultaneously.  Its lower bound comes from
\eqref{eq:A2-sample-lower} after using \(r\ge 1-\sqrt\varepsilon\) from
\eqref{eq:clean-calibration-floor}.
Suppressing constants depending only on the fixed target,
\begin{equation}\label{eq:robustness-thresholds}
    n_{\rm clean}\asymp \frac{d}{\gamma^2},
    \qquad
    n_{\rm int}\asymp \frac{d}{\gamma},
    \qquad
    n_{\rm adv}(\varepsilon,\delta,\tau)
    \gtrsim \frac{\delta^2 (1-\sqrt\varepsilon)^2 d^2}{\gamma^2\tau^2}
    \asymp \frac{\delta^2d^2}{\gamma^2\tau^2}.
\end{equation}
Actual sample counts are obtained by rounding up and respecting the boundary
\(n\le d\).
This lower bound is attained at the same order.  The symmetric profile
attains equality in \eqref{eq:fixed-r-nuisance-lower}.  At full calibration
\(r=1\), it has clean test error \(d/(\gamma^{2}n)\) and adversarial
sensitivity exactly \(\delta d/(\gamma\sqrt n)\) by \eqref{eq:A2-response}.
It therefore meets both the clean target and the tolerance once \(n\) exceeds
a constant multiple of \(n_{\rm clean}\) and of
\(\delta^{2}d^{2}/(\gamma^{2}\tau^{2})\).  For \(\tau\le\delta\sqrt d\) the
second requirement dominates, so
\(n_{\rm adv}\asymp\delta^{2}d^{2}/(\gamma^{2}\tau^{2})\) with constants
depending only on the targets.
The ratio
\[
    \frac{n_{\rm adv}}{n_{\rm clean}}
    \asymp
    \frac{\delta^2 d}{\tau^2}
\]
shows that the extra samples needed for adversarial robustness scale with the
ambient dimension \(d\), the factor behind the norm
comparison in Section~\ref{subsec:rms-sensitivity}.\footnote{Related clean-versus-robust
sample-complexity and universal-law comparisons are discussed in
Section~\ref{subsec:related-adversarial}.}
Read from left to right as \(n\) grows.
Before \(n_{\rm clean}\), the span is not yet useful for clean prediction.
Up to a constant fraction%
\footnote{\label{fn:rho-star}Quantitatively,
\eqref{eq:clean-calibration-floor} makes \(r\ge1-\sqrt\varepsilon\) necessary
for the clean target, while \eqref{eq:calibrated-frontier-form} together with
\(q\ge s^2/n\) makes \(r\le 2\rho/(1+\rho)\) necessary for \(\Ttrain\le1\).
The two are incompatible precisely when
\(\rho<\rho^{\star}:=(1-\sqrt\varepsilon)/(1+\sqrt\varepsilon)\), so no span
predictor meets the clean target inside the good-fit quadrant while
\(\gamma n/d<\rho^{\star}\).  For \(\gamma=100\), \(d=50{,}000\) and the target
\(\varepsilon=0.2525\) of footnote~\ref{fn:threshold-numbers}, this gives
\(\rho^{\star}\approx0.33\), against \(\rho=1\) at \(n_{\rm int}\).}
of \(n_{\rm int}\), useful prediction lies only in the fourth quadrant. The
good-fit quadrant is empty of useful predictors there because good spike
calibration forces large empirical training error.  Closer to \(n_{\rm int}\)
the good-fit quadrant fills in as well.
After \(n_{\rm int}\), interpolation can have small clean test error.
Only after \(n_{\rm adv}\) can a calibrated useful span predictor have that
residue averaged down enough for small aligned adversarial sensitivity.

This ladder orders thresholds within the stylized orthogonal model's boundary
\(n\le d\).  Depending on the scaling, adjacent thresholds may be close, may
trade places, or may lie beyond the boundary.  The pictured order holds when
the tolerance \(\tau\) sits below \(\delta\sqrt{d/\gamma}\), the sensitivity
that calibrated span predictors still have at the interpolation threshold
\eqref{eq:A2-fixed-r-response}, yet above \(\delta\sqrt d/\gamma\), the
sensitivity they retain at the boundary \(n=d\).  Its invariant message is that small aligned
sensitivity requires additional ambient-dimensional averaging beyond clean
random-test accuracy.%
\footnote{If the initial ``span not useful'' phase is
asymptotically long, then \(n_{\rm adv}\) lies beyond the boundary.  Conversely, if the
adversarial-sensitivity transition occurs before \(n=d\), then
\(n_{\rm clean}\) is already only constant-order.  Up to constants depending on
\(\varepsilon,\delta,\tau\), this is the condition \(d\lesssim\gamma^{2}\), the
upper half of the few-shot window \(\gamma\ll d\ll\gamma^{2}\) of
Footnote~\ref{fn:few-shot-window}.  With the lower half \(\gamma\ll d\) in
force as well, the spike is few-shot-strong, meaning strong enough that a
handful of examples already generalizes while
interpolation still does not.}

\begin{figure}[htbp]
    \centering
    \begin{tikzpicture}
    \definecolor{goodgreen}{RGB}{48,135,84}
    \definecolor{alertred}{RGB}{178,62,55}
    \definecolor{softblue}{RGB}{76,112,176}
    \begin{axis}[
        width=0.92\textwidth,
        height=0.23\textwidth,
        xmode=log,
        xmin=3e-5, xmax=3,
        ymin=0, ymax=1,
        ytick=\empty,
        xtick={1e-4,1e-2,0.25,1},
        xticklabels={$n_{\rm clean}$,$n_{\rm int}$,$n_{\rm adv}$,$d$},
        tick label style={font=\small},
        xlabel={sample size \(n\)},
        label style={font=\small},
        axis x line=bottom,
        axis y line=none,
        clip=false,
        major tick length=6pt,
    ]
        \fill[orange!18] (axis cs:3e-5,0) rectangle (axis cs:1e-4,1);
        \fill[softblue!14] (axis cs:1e-4,0) rectangle (axis cs:1e-2,1);
        \fill[alertred!12] (axis cs:1e-2,0) rectangle (axis cs:0.25,1);
        \fill[goodgreen!15] (axis cs:0.25,0) rectangle (axis cs:1,1);
        \fill[black!8] (axis cs:1,0) rectangle (axis cs:3,1);
        \draw[goodgreen!70!black,very thick,dashed] (axis cs:1e-4,0) -- (axis cs:1e-4,1);
        \draw[softblue,very thick,dashed] (axis cs:1e-2,0) -- (axis cs:1e-2,1);
        \draw[alertred,very thick,dashed] (axis cs:0.25,0) -- (axis cs:0.25,1);
        \draw[black!60,very thick] (axis cs:1,0) -- (axis cs:1,1);
        \node[font=\scriptsize,align=center,text width=1.7cm] at (axis cs:5.5e-5,0.55)
            {test error\\still large};
        \node[font=\scriptsize,align=center,text width=2cm] at (axis cs:1.8e-3,0.55)
            {benign\\misfitting};
        \node[font=\scriptsize,align=center,text width=2.8cm] at (axis cs:5e-2,0.55)
            {interpolator\\small test error,\\large adversarial\\sensitivity};
        \node[font=\scriptsize,align=center,text width=2.0cm] at (axis cs:0.5,0.55)
            {small\\adversarial\\sensitivity};
        \node[font=\scriptsize,align=center,text width=2.0cm] at (axis cs:2,0.55)
            {outside stylized\\orthogonal model\\$n>d$};
    \end{axis}
    \end{tikzpicture}
    \caption{Schematic phase ordering for sample size \(n\), not to scale,
    pictured in the demanding-tolerance regime
    \(\delta\sqrt d/\gamma\ll\tau\ll\delta\sqrt{d/\gamma}\),
    where \(n_{\rm adv}\) lies past \(n_{\rm int}\) but before the
    boundary \(d\).
    Clean span prediction begins at \(n_{\rm clean}\). For \(n\) up to a
    constant fraction of \(n_{\rm int}\), the useful span predictors are
    \emph{only} in the benign-misfitting quadrant, because within the span
    calibrating the spike well enough to be useful forces large empirical
    training error.  Interpolation reaches small clean test error at
    \(n_{\rm int}\) but can still have large adversarial sensitivity until the
    adversarial-sensitivity threshold \(n_{\rm adv}\).  The final gray region
    begins at \(n>d\), where the stylized orthogonal setup used in this
    paper no longer applies.}
    \label{fig:robustness-phases}
\end{figure}
\FloatBarrier

\subsection{Training points as adversarial examples}
\label{subsec:training-adversarial}

This adversarial sensitivity is already visible in the training data.  For
the symmetric predictor at calibration \(r\), each training point has residual
\begin{equation}\label{eq:robust-training-residual}
    \vec{u}^{\top}\vec{x}_i-1=(r-1)+\frac{r}{\rho},
    \qquad
    \rho=\frac{\gamma n}{d}.
\end{equation}
For a unit-RMS probe along \(\vec{v}_i\), the signed term \(r/\rho\) is exactly the
signed output change from that point's nuisance direction.  More generally,
since \(\|\delta \vec{v}_i\|_2=\delta\sqrt d\) and \(\vec{v}_i\perp\vec{\theta}\), the
perturbation \(\delta \vec{v}_i\) is allowed under \eqref{eq:A2-def} and gives
\[
    \left\langle \widetilde{\vec{v}},\delta \vec{v}_i\right\rangle
    =
    \delta\frac{r}{\rho},
    \qquad
    \left|\left\langle \widetilde{\vec{v}},\delta \vec{v}_i\right\rangle\right|
    =
    \delta\frac{|r|}{\rho}.
\]
In the benign-misfitting window \eqref{eq:misfitting-window}, on the useful
branch \(r\approx1\), the nuisance term \(r/\rho\) in
\eqref{eq:robust-training-residual} dominates the empirical residual.  The
worst RMS adversary combines the \(n\) orthogonal training-point nuisance
projections and gains a factor \(\sqrt n\), giving the adversarial sensitivity
\(\delta |r| d/(\gamma\sqrt n)\) in Theorem~\ref{thm:absolute-rms}.

In this precise sense, within the benign-misfitting window the training points
are their own adversarial examples: each one exposes a label-preserving nuisance
direction along which the calibrated span predictor is sensitive, and the
predictor gets it wrong.  The misfit is not incidental --- it is the predictor's
adversarial sensitivity, read off one training point at a time.
Note that the same residuals also carry a privacy-relevant signature. Within the window a
training set member's residual is unusually \emph{large}, so large errors can identify
training membership as effectively as unusually small ones.

The same object is being measured in four ways.  Training points see the
residue in their own directions.  Random test points average it.  Adversarial
perturbations align with it.  Privacy noise has to be big enough to hide it.  Thus the coordinate calculation
of Appendix~\ref{app:rotation-aware} is not tied to a special axis-aligned
oracle: after a generic rotation, the same span geometry is dense in the
observed basis.

\section{Discussion: attractive but hard to see}
\label{sec:discussion}

The fourth quadrant is not merely strange; it is attractive.  A method that
targets test error is pulled toward the benign-misfitting window by a diverging
sample-complexity advantage: small test error becomes possible at
\(d/\gamma^2\), while interpolation must wait until \(d/\gamma\).  The
difficulty is what happens once a method is there -- the regime can be hard to
see in the curves a practitioner normally plots during training.

One pass of SGD exposes three distinct empirical observables: the running
pre-update loss seen during the pass, the final iterate's loss on the
training set the pass has already consumed, and a held-out test loss.  In the
deterministic
orthogonal model, the pre-update loss at step \(i\) evaluates
\(\vec{w}_{i-1}\) on a training point whose nuisance direction has not yet
appeared in the iterate.  It sees the current spike residual \(b_{i-1}\), but
not the nuisance variance already written into earlier orthogonal directions.
Thus a running-loss curve can look good while the final iterate has large
residuals on the examples that produced it: online loss tests fresh nuisance
directions, while final training loss tests directions already written into
the predictor.  Figure~\ref{fig:stylized-sgd-trace} shows this split
directly: the fresh-test curve falls during the first epoch, the full
empirical training error (dashed red) rises, while the dash-dotted pre-update
current-example residual --- precisely the running-loss diagnostic --- stays
low throughout the first epoch.  But reuse in later epochs exposes the hidden residuals as
spikes in that running-loss itself.

Put plainly: a held-out test loss can look small, the running training loss
can look small, and only the trained model evaluated on its own training set
would report the large residuals.  Standard training traces usually show the
first two diagnostics, not the third.  If such an example is reused at the same
large learning rate, the hidden residual can surface as a gradient
spike%
\footnote{The same sign reversal has a restricted privacy analogue: members can
be recognizable through large residuals rather than small losses.}
--- a large residual on a training point \emph{is} a spike in that point's
gradient, and these are exactly the loss spikes visible in the reuse epochs of
Figure~\ref{fig:stylized-sgd-trace}.

A practitioner may simply say: ``gradient spikes happen; that is why we clip.''
In this geometry, clipping is not an innocuous modification: a threshold
aggressive enough to shrink the fresh-pass updates suppresses precisely the
overshoots that carry the calibration mechanism.

Because of this diagnostic split, the phenomenon is also easy to miss during hyperparameter search.
Learning rates are often swept over orders of magnitude.  A successful large
step may be found empirically \cite{gilmer2022loss} before one has a reason
to expect that it is stable for fresh examples but unstable for reused ones.
Corollary~\ref{cor:learning-rate-robustness} explains why this is
compatible with ordinary log-spaced tuning: in the simple \(\eta\gamma\le1\)
branch, a grid need only land in the broad interval
\[
    \frac{1}{\gamma n}\ll\eta\ll\frac{\gamma}{d}.
\]
An explicit finite-\(n\) certified interval of learning rates for a given
target is also available, with endpoint ratio
\(\min\{\varepsilon R,2n\}/\log(2/\varepsilon)\), and
Remark~\ref{rem:constant-factor-lr} explains the constant-factor
behavior\footnote{With noisy training labels the same lower edge holds, with extra
upper bounds by \eqref{eq:noisy-lr-bound-main}: the rate must also avoid writing
too much noisy nuisance energy and too much signal-path noise.}.

The split between fresh-example loss and reused-example loss becomes an exact
conditional-mean identity in the
same-distribution companion model.  The next i.i.d.\ labeled training example
is independent of the previous iterate.  Conditional on that iterate, its
expected pre-update loss equals that iterate's test error.  Yet the final
iterate's training
loss --- measured on all training points \emph{after} they have been incorporated ---
can still diverge.  The broader mechanism is that a low-dimensional informative
direction can be hidden inside a high-dimensional nuisance span, and early span
predictors that improve fresh-point prediction may need to behave differently
on training points than on fresh random points.

So are the training points misleading?  Misleadingness, like beauty, is in the
eye of the beholder.  Here, the beholder is a learner that treats empirical fit
inside the training span as the target.  The training examples are not
misleading because their labels are corrupted or because they are intrinsically
harder than test examples.  After all, the same spike oracle fits them
exactly.  They
mislead only a learner that conflates fitting the training points with doing
well on new ones.  Such a learner under-calibrates the spike, because the
sample-specific nuisance directions help fit the individual training points.
Those same directions cannot help on fresh points.  The way to use the
shared signal well inside the span is to accept residuals on the very
examples that revealed it.

This preprint is the first of four.  Its purpose is to tell the core story in
the simplest exact example.  The second preprint \citep{ranade2026samedistribution} draws training
and test data from the same Gaussian distribution and shows that the threshold
separation, forced training misfit, and one-pass SGD mechanism persist without
exact orthogonality.  Even with random training points, the
minimum-norm interpolator points in the right direction but stops at the wrong
radius.

The third preprint \citep{ranade2026multispike} allows several informative
directions of unequal strength.  The basic span obstruction and fresh-pass
mechanism survive, but a new issue appears: one common rescaling cannot
generally calibrate every direction by the amount it needs, and one learning
rate must serve them all. But large-learning-rate SGD can still work and have
the right kind of anti-shrinkage implicit bias in many regimes.

The fourth preprint \citep{ranade2026oja} studies
the analogous unsupervised question through Oja's classical online method for
principal component analysis \cite{oja1982simplified}.  The present paper
stays with the deterministic model because it makes the shared geometry
visible without concentration arguments.

\section{Related perspectives}
\label{sec:related}

We group related work by how each perspective relates to the train--test plane,
the span geometry, the large-step dynamics, and the adversarial/privacy
readings developed above.
The subsections are largely modular: readers may enter through the perspective
closest to their background.

\subsection{Train--test taxonomy and generalization gaps}

\paragraph{Benign overfitting, double descent, and the train--test plane.}
The benign-overfitting literature asks how interpolation can coexist with small
test error.  Useful entry points include survey and overview treatments
\cite{belkin2021fit,dar2021farewell}.  The double-descent viewpoint has been
developed for linear least squares and related overparameterized models
\cite{belkin2019reconciling,belkin2020twomodels}, broadened to kernels and
nonparametric rules
\cite{belkin2018kernel,belkin2018overfitting,belkin2019does}, and made
precise for linear regression
\cite{bartlett2020benign,hastie2022surprises}.  Among the closest antecedents to the present setting
are harmless-interpolation analyses
\cite{muthukumar2019harmless,muthukumar2020harmless}.  Empirical studies,
jamming-transition models, teacher--student setups, and historical reviews
provide complementary context
\cite{nakkiran2020deep,spigler2019jamming,advani2020highdim,
loog2020prehistory}.

The question here is complementary.  Classical bias--variance and double-descent
curves plot test error against model complexity, a different pair of axes from
the training-error/test-error plane used here.  In this paper the
training-span class is fixed, and the question is where predictors in that one
class sit in the plane of training error and test error.
The benign-overfitting literature showed that interpolation can still
generalize.  The point here is that good test error can require moving
\emph{above} the training-error
baseline.  Tradeoff curves between training and test error appear there too:
near-interpolators with training error below the noise floor pay rapid norm
growth \citep{wang2024nearinterp}, and optimistic-rates bounds apply at any
prescribed training-error level \citep{zhou2024optimistic}.  Norm growth has
a deterministic analogue here: by Appendix~\ref{app:sco-comparison}, useful
span predictors must leave the spike oracle's Euclidean norm ball by a factor
of order \(1/\rho\) in squared norm.  The frontier itself is different in
kind: it is an exact converse forcing useful predictors above the
zero-predictor baseline.  In the benign, tempered, and catastrophic taxonomy of
\citet{mallinar2022taxonomy}, which classifies positive-gap interpolation by
how much it costs in test error, the fourth quadrant is unusual along an
orthogonal axis: its generalization gap is \emph{negative}, with empirical
training risk exceeding test risk.

\paragraph{Uniform convergence and the wrong empirical risk diagnostic.}
Uniform-convergence and stability analyses seek to explain generalization by bounding
the gap between population risk and empirical risk, sometimes even for
interpolators in benign-overfitting settings \cite{koehler2021uniform}.
Nagarajan and Kolter give a different warning: for some overparameterized models,
uniform-convergence bounds can be unable to explain the predictors selected by
gradient descent \cite{nagarajan2019uniform}.  The present calculation is a
separate, more literal warning about the empirical risk diagnostic itself.  In the
benign-misfitting window, the useful span predictors are not empirically good
predictors with a hard-to-bound gap.  They have deliberately bad empirical
error.  Thus the issue is not the tightness of a generalization-gap bound;
it is that small empirical risk can point to the wrong
region of the train--test plane.

\paragraph{When the implicit regularization is too strong.}
The benign-overfitting question is not why an overparameterized model can reach
zero training error; that is the premise.  The question is why doing so need
not harm test error.  In the linear benign-overfitting picture, the answer is
that the high-dimensional tail can act as an implicit regularizer: when the
covariance geometry is favorable, interpolation uses many nuisance directions
in a way that absorbs label noise or sample fluctuations at small test cost.
Benign overfitting occurs when this implicit regularization is at the right
level.

The benign-misfitting window is the opposite side of the same geometry.
By \(r=c\,r_{\rm int}\) from \eqref{eq:c-r-relation}, setting $c=1$ shows that the minimum-norm
interpolator has spike calibration
\[
    r_{\rm int}
    =
    \frac{\gamma n}{d+\gamma n}
    =
    \frac{\rho}{1+\rho}.
\]
Thus when $\rho\ll1$, interpolation severely under-calibrates the spike.
The nuisance directions are not failing to regularize. Instead, they are
over-regularizing the signal.

\paragraph{Two effective ranks.}
Write $\|\bm{A}\|_{\rm op}:=\sup_{\|\vec{x}\|_2=1}\|\bm{A}\vec{x}\|_2$ for the
operator norm.  For symmetric $\bm{A}$ this equals the largest absolute
eigenvalue, and for a covariance the largest eigenvalue.
\citet{bartlett2020benign} describe a spectral tail with two effective-rank
families: the first measures how broad the remaining tail is relative to its
largest direction, and the second how many directions effectively share its
mass.  Throughout this discussion, the \emph{subscripted} symbols \(r_k,R_k\)
denote those effective ranks.  They are unrelated to this paper's un-subscripted
spike calibration \(r\) and signal-to-nuisance ratio \(R=\gamma^2n/d\) from
\eqref{eq:scale-identities}.

For the finite-dimensional covariance matrices considered here, let
\(\lambda_1\ge\lambda_2\ge\cdots\ge0\) be the eigenvalues.%
\footnote{\citet[Definition~3]{bartlett2020benign} write the families as
\(r_k\) and \(R_k\) with \(\lambda_1\ge\lambda_2\ge\cdots\) for the
covariance eigenvalues; we match that notation so quantities can be compared
directly across papers.  Within the present paper, this eigenvalue notation
\(\lambda_j\) is unrelated to the ridge parameter \(\lambda\) and the
negative-ridge value \(\lambda^{\star}\) of \eqref{eq:lambda-star}.}
For every \(k\ge0\) with \(\lambda_{k+1}>0\), define
\begin{equation}\label{eq:effective-rank-families}
    r_k(\bm{\Sigma}):=\frac{\sum_{j>k}\lambda_j}{\lambda_{k+1}},
    \qquad
    R_k(\bm{\Sigma}):=\frac{\bigl(\sum_{j>k}\lambda_j\bigr)^2}
                          {\sum_{j>k}\lambda_j^2}.
\end{equation}
Here \(r_k\) is a mass-to-peak ratio---the spectral mass remaining after the
first \(k\) directions, relative to the largest remaining eigenvalue---while
\(R_k\) is a participation ratio---how many directions effectively share that
remaining mass.

\paragraph{Scholarly placement.}
In \citet{bartlett2020benign}, the \(r_k\) family helps select a critical
head--tail split and \(R_k\) controls participation in the selected tail. One of their sufficient benignity conditions is
\(r_0(\bm{\Sigma})/n\to0\), and the present window sits on the opposite side
of it.\footnote{Applied under the conditions of benign misfitting, where
\(r_0(\bm{\Sigma})/n\to\infty\), their critical-rank selection rule would in
fact return \(k^\star=0\), placing the signal coordinate itself in the tail.}
The later analysis of \citet{nr_tsigler2023} retains this head--tail
decomposition but makes the conditioning of a regularized tail training Gram
matrix the central structural object.  This removes the independent-coordinate
assumption, supplies sharp signal-dependent bias bounds, and extends the
analysis to ridge regression.  Both are same-distribution random-design
analyses, in contrast with the fixed-design train--test geometry studied
here.

\paragraph{The flat single-spike covariance.}
For
\(\bm{\Sigma}=\diag(\gamma,1,\ldots,1)\), for which
\(\operatorname{tr}(\bm{\Sigma})=\gamma+d\), \(\|\bm{\Sigma}\|_{\rm op}=\gamma\),
and \(\operatorname{tr}(\bm{\Sigma}^2)=\gamma^2+d\), the \(k=0\) quantities are
\begin{equation}\label{eq:flat-effective-ranks}
\begin{aligned}
    r_0(\bm{\Sigma})
    &=
    \frac{\operatorname{tr}(\bm{\Sigma})}{\|\bm{\Sigma}\|_{\rm op}}
    =1+\frac d\gamma,\\
    R_0(\bm{\Sigma})
    &=
    \frac{\operatorname{tr}(\bm{\Sigma})^2}
         {\operatorname{tr}(\bm{\Sigma}^2)}
    =\frac{(\gamma+d)^2}{\gamma^2+d}.
\end{aligned}
\end{equation}
Substituting the two members of \eqref{eq:flat-effective-ranks} gives
\(r_0(\bm{\Sigma})^2/R_0(\bm{\Sigma})=(\gamma^2+d)/\gamma^2\), and therefore
\begin{equation}\label{eq:flat-effective-rank-thresholds}
    \frac{r_0(\bm{\Sigma})^2}{R_0(\bm{\Sigma})}-1
    =\frac d{\gamma^2},
    \qquad
    r_0(\bm{\Sigma})-1=\frac d\gamma,
\end{equation}
and the benign-misfitting window can be written exactly as
\begin{equation}\label{eq:flat-effective-rank-window}
    \frac{r_0(\bm{\Sigma})^2}{R_0(\bm{\Sigma})}-1
    \ll n\ll
    r_0(\bm{\Sigma})-1.
\end{equation}

\paragraph{Scope.}
In this stylized model these two quantities are exactly the two asymptotic
sample-count thresholds:
\(r_0(\bm{\Sigma})^2/R_0(\bm{\Sigma})-1\) is the first-useful-span threshold and
\(r_0(\bm{\Sigma})-1\) is the interpolation threshold.  These identities
translate the two deterministic thresholds into the effective-rank notation of
\citet{bartlett2020benign}.  They do not apply that paper's general theorem.

Both families nevertheless matter algebraically: the full-spectrum mass-to-peak
ratio sets the interpolation threshold, while removing the leading spike's
squared spectral contribution sets the first-useful-span threshold.  Indeed the
present window has \(r_0(\bm{\Sigma})/n\to\infty\), the opposite side of one of
the sufficient benignity conditions of \citet{bartlett2020benign}
--- these are sufficient, not necessary, conditions.
The same algebraic distinction extends to a non-flat nuisance tail.

\paragraph{Non-flat nuisance tails.}\label{par:nonflat-tails}
Let \(\bm\Omega\) denote the nuisance test covariance, normalized by
\[
    \operatorname{tr}(\bm\Omega)=d,
    \qquad
    \bm0\preceq\bm\Omega\preceq\gamma\bm I_d,
\]
and let
\[
    \bm V=[\vec v_1,\ldots,\vec v_n],
    \qquad
    \bm G_\Omega:=\bm V^{\top}\bm\Omega\bm V
\]
be the training nuisance matrix and the covariance-weighted nuisance Gram.
Two conclusions are weighted-isotropy-free: the interpolation threshold
\(n\gg d/\gamma\) depends only on the Euclidean training geometry, not on the
orientation of the training directions relative to \(\bm\Omega\), and, on the
low-sample branch \(\rho=\gamma n/d\to0\), the forced-misfit converse needs only
positive semidefiniteness of \(\bm\Omega\) together with the same Euclidean
training geometry.  The nuisance
spectrum alone, however, does not determine the deterministic test frontier.
The orientation of the training directions relative to the covariance
matters: Euclidean orthogonality is not weighted orthogonality.  Under
covariance-weighted approximate isotropy of the nuisance Gram,
\[
    \bigl\|M_2^{-1}\bm G_\Omega-\bm I_n\bigr\|_{\rm op}\le\delta_{\rm tail},
    \qquad
    M_2=\operatorname{tr}(\bm\Omega^2),
    \qquad
    \limsup\delta_{\rm tail}<1,
\]
the first-useful-span threshold becomes
\(n\gg\operatorname{tr}(\bm\Omega^2)/\gamma^2\).  Under the trace normalization,
\(d\le\operatorname{tr}(\bm\Omega^{2})\le\gamma d\), with equality on the left
exactly for the flat tail: any non-flatness raises the first-useful-span
threshold and narrows the benign-misfitting window.  The operator-norm condition
sandwiches \(\bm G_\Omega\) between \((1\mp\delta_{\rm tail})M_2\bm I_n\), so
scalar comparison risks bracket the attainable risk.  The threshold then
acquires its effective-rank reading --- burden squared over nuisance-tail
participation --- which is how favorable tail participation can coexist with
an unfavorable global mass-to-peak ratio \(r_0\).  We conjecture that this
effective-rank translation persists for suitably regular covariance-matched
random orientations of the training nuisance directions --- for instance,
training nuisance directions drawn independently from a suitably regular
distribution with covariance \(\bm\Omega\).  The regularity that matters is
concentration of both Grams.  The Euclidean Gram \(\bm V^{\top}\bm V\) must
concentrate near \(d\,\bm I_n\), and the weighted Gram
\(\bm V^{\top}\bm\Omega\bm V\) must concentrate near \(M_2\bm I_n\).
This is a conjecture, not a theorem.  Because the spectrum alone
cannot determine the frontier, some orientation control is necessary.

\paragraph{Benign underfitting in stochastic convex optimization.}
A closer terminological neighbor is benign underfitting in stochastic convex
optimization (SCO).  \citet{koren2022benign} construct examples in which a
particular algorithm can achieve good population performance while its
empirical risk remains large.  The mechanism here is different: in the
deterministic span model, the train--test tradeoff curve itself forces every
span predictor with small test error into the fourth quadrant.
Appendix~\ref{app:sco-comparison} explains why the present mechanism differs.

\paragraph{Proper and improper learning.}
The split-and-median construction in Appendix~\ref{app:improper} is a
deliberately simple functional improper learner: it aggregates linear span
predictors and outputs a nonlinear predictor.  Such proper/improper separations
recur in learning theory
\cite{hanneke2016optimal,bousquet2020proper,montasser2019vc,larsen2023bagging,
foster2018logistic}.
Appendix~\ref{app:sco-comparison} highlights a
separate bounded-comparator-ball perspective: the sample-efficient span
predictor remains linear, but it must use a larger Euclidean norm budget than
the spike oracle.

\paragraph{Negative generalization gaps.}
Negative generalization gaps are not new, but their usual sources are
different from the one here.  They can arise when training is deliberately
hardened by aggressive augmentation
\cite{touvron2019fixing,zhang2018mixup,yun2019cutmix,cubuk2019autoaugment},
data pruning \cite{paul2021dataDiet,sorscher2022beyond,toneva2019empirical},
or distributionally robust objectives
\cite{sagawa2020groupDRO,sinha2018certifying}.  Here the gap is forced by
span geometry, not by making the training problem harder.

\subsection{Span geometry, classification, and rescaling}
\label{subsec:related-span-rescaling}

\paragraph{Classification and feature-learning relatives.}
The same single-spike geometry connects directly to the
survival--contamination viewpoint of
\citet{muthukumar2020harmless,muthukumar2021classification}.
For a one-sparse noiseless Gaussian model,
\citet[Proposition~17]{muthukumar2021classification} give
\[
    \mathcal{R}
    =
    (1-\mathrm{SU})^2+\mathrm{CN}^2,
    \qquad
    \mathcal{C}
    =
    \frac12
    -
    \frac1\pi
    \arctan\!\left(
        \frac{\mathrm{SU}}{\mathrm{CN}}
    \right),
\]
where \(\mathcal{R}\) is squared regression risk and \(\mathcal{C}\) is the
error obtained by comparing the sign of the regression prediction with the
sign of the target.  Thus regression consistency requires
\(\mathrm{SU}\to1\) and \(\mathrm{CN}\to0\), whereas consistency of the
induced sign classifier requires only
\(\mathrm{SU}/\mathrm{CN}\to+\infty\). When survival is eventually
positive---as it is for the interpolator, since
\(\mathrm{SU}_{\rm int}=\rho/(1+\rho)>0\)---this is equivalently
\(\mathrm{CN}/\mathrm{SU}\to0\).
The exact arctangent formula is Gaussian-specific, but
\citet{mcrae2022structured} show that the same survival-versus-residual
principle and the classification--regression consistency separation persist
for bounded orthonormal-system features, including Fourier systems.

In the present orthogonal model,
\[
    \mathrm{SU}_{\rm int}
    =
    r_{\rm int}
    =
    \frac{\rho}{1+\rho},
    \qquad
    \frac{\mathrm{CN}_{\rm int}}
         {\mathrm{SU}_{\rm int}}
    =
    \frac{1}{\sqrt{R}}.
\]
Inside the benign-misfitting window,
\[
    \mathrm{SU}_{\rm int}\longrightarrow0,
    \qquad
    \frac{\mathrm{CN}_{\rm int}}
         {\mathrm{SU}_{\rm int}}
    \longrightarrow0.
\]
The interpolator therefore loses too much signal survival for squared-error
regression, even though its contamination is still smaller than its surviving
signal by enough for the induced sign classifier to be consistent.

The same-distribution companion \citep{ranade2026samedistribution} makes the
flat-tail geometric link exact: its population-risk-optimal span predictor
is a positive multiple of the minimum-norm interpolator.  Positive rescaling
leaves the sign of every prediction unchanged while restoring the signal
survival required for squared-error regression.
Related high-dimensional interpolation geometries have also been analyzed in
multiclass settings
\cite{wang2021multiclass,subramanian2022multiclass,wu2023multiclass}.
For a related binary classification model connecting feature lifting,
interpolation, Gibbs phenomenon, and adversarial examples, see
\citet{narang2021classification}.

\paragraph{Training points experience amplified contamination.}
The contamination is not experienced in the same way by training and test
points.  For any span predictor,
\(\mathrm{CN}(\vec{u})^2=dq\)
is the fresh-test contamination variance, whereas the average squared
nuisance response on the training points is
\[
    \frac{1}{n}\sum_{i=1}^{n}(d\alpha_i)^2
    =
    \frac{d^2q}{n}
    =
    \frac{d}{n}\,\mathrm{CN}(\vec{u})^2.
\]
Thus training amplifies the same nuisance component by the factor \(d/n\) in
squared magnitude.
Along the symmetric Pareto ray, \(\alpha_i=s/n\), so every training point
sees nuisance response \(d\alpha_i=r/\rho\), while a fresh test point sees
contamination standard deviation \(r/\sqrt{R}\).
Benign misfitting is therefore the price of restoring signal survival when
the examples that supplied the signal also probe their own contamination with
this amplified strength.

\paragraph{Index learning.} The premise of a low-dimensional informative direction under a
high-dimensional nuisance is also that of the single-index and multi-index
feature-learning literature, where gradient methods must recover such a
direction from high-dimensional data
\cite{dudeja2018singleindex,bietti2022singleindex,abbe2022merged,
abbe2023leap,bruna2025multiindex}.

\paragraph{Negative ridge and anti-shrinkage.}
A previous subsection described the interpolator as over-regularizing the
signal.  The ridge coordinate gives the algebraic version of that statement.
Negative ridge and related anti-shrinkage phenomena arise when the implicit
regularization of the overparameterized fit is already too strong
\cite{nr_kobak2020,nr_wu2020,nr_tsigler2023}.  \citet{freeman2026shrinkage}
isolates a closely related \emph{inflation} phenomenon: in anisotropic
random-design linear regression with $d/n\to\infty$, multiplying the
$\ell_2$ minimum-norm interpolator by a scalar $c>1$ can improve
generalization error.  That work treats scalar inflation as distinct from
negative ridge. Here the deterministic single-spike geometry puts the useful
branch on an exact scalar path, and the same sign-reversed move appears in
ridge coordinates as a negative penalty.  Thus, in this symmetric
deterministic one-spike geometry, negative ridge is not a separate static
path: it is the ridge coordinate for undoing the interpolator's
under-calibration of the spike.

In survival--contamination terms, the problem is insufficient survival rather
than excessive fresh-test contamination.  Positive scaling by \(c\) sends
\(\mathrm{SU}\mapsto c\,\mathrm{SU}\) and
\(\mathrm{CN}\mapsto c\,\mathrm{CN}\).
The minimum-norm interpolator has
\(\mathrm{SU}_{\rm int}\asymp\rho\) in the benign-misfitting window.  Scaling
by \(c\asymp1/\rho\) restores order-one survival. The resulting contamination
is of order \(1/\sqrt{R}\), which vanishes because \(n\) is already beyond the
first-useful-span threshold \(d/\gamma^2\), i.e.\ \(R\to\infty\).
In the present one-spike geometry, negative ridge realizes this same
anti-shrinkage exactly on the interpolation ray.  One-pass large-learning-rate
SGD is not a pure scalar rescaling---its coefficients carry geometric
weights---but it has the same statistical purpose: it builds signal survival
toward one while keeping fresh-test contamination small, with the logarithmic
cost quantified in Corollary~\ref{cor:learning-rate-robustness}.

\paragraph{James--Stein and sign-reversed rescaling.}
The scalar path also mirrors the classical James--Stein idea of choosing a
multiple of a raw estimator to reduce squared-error risk
\cite{stein1956inadmissibility,james1961estimation,efron1977stein}
(see \cite{efron2016jamesstein,draper1979ridge} for the classical connection
between James--Stein and ridge).  The sign
is reversed here: ordinary shrinkage moves a predictor toward zero, while the
useful strong-signal span predictor is scaled upward, past interpolation.
The classical shape is recognizable, but the direction of movement is reversed.

\paragraph{From scalar attenuation to directional distortion.}
In the present one-spike model, survival is a single number: insufficient
survival is a global attenuation, and one scalar inflation can repair it.
With several informative directions of unequal strength, the directions can
survive by different amounts, so the minimum-norm interpolator can distort
the recovered signal by changing the relative sizes of its informative
components---and one common multiplier can no longer restore them all.  The
multispike companion \citep{ranade2026multispike} studies the
direction-dependent recalibration this calls for---\emph{equalization}---and
how ridge and one-pass large-learning-rate SGD provide it, in regimes that
overlap but do not coincide.

\subsection{Large learning rates and mini-batches}
\label{subsec:related-large-steps}

Two distinctions organize this subsection.  First, a stochastic gradient can
be used as fresh information about a population objective or as one step
toward optimizing a fixed empirical objective.  Second, a large learning rate
can matter through several different mechanisms---freshness, stochastic noise,
curvature adaptation, or solution selection.  The stylized calculation of this
paper isolates the first of these.
Table~\ref{tab:large-step-mechanisms} summarizes the distinctions.

\begin{table}[ht]
\centering\small
\caption{Mechanisms by which large or stochastic steps can help generalization.
The present paper isolates the last row.}
\label{tab:large-step-mechanisms}
\begin{tabular}{@{}>{\raggedright\arraybackslash}p{0.26\linewidth}ccc >{\raggedright\arraybackslash}p{0.30\linewidth}@{}}
\toprule
\textbf{Account}
& \textbf{Noise?}
& \textbf{Curvature?}
& \textbf{Reuse?}
& \textbf{Main mechanism}
\\
\midrule
Small-step implicit ridge
& No & No & Usually
& Shrinkage along an empirical optimization path
\\[2pt]
SGD-noise / flat-minimum
& Yes & Sometimes & Usually
& Noise-biased exploration or selection
\\[2pt]
Edge of stability / catapult
& Not nec. & Yes & Yes
& Instability coupled to evolving curvature
\\[2pt]
\textbf{Present fresh pass}
& \textbf{No} & \textbf{No} & \textbf{No}
& \textbf{Fresh nuisance directions accumulate shared signal}
\\
\bottomrule
\end{tabular}
\end{table}

\paragraph{Stochastic approximation versus empirical-risk minimization.}
A useful older distinction is between stochastic approximation and
sample-average optimization---called sample-average approximation (SAA) in the
stochastic-programming literature
\cite{robbins1951stochastic,nemirovski2009robust}.  Stochastic approximation
uses noisy first-order information to move toward the minimizer of an expected
objective. Sample-average optimization freezes a finite sample and optimizes the
resulting empirical average, the learning analogue of empirical risk
minimization (ERM).  In learning language, this
is the distinction between using fresh stochastic gradients as information
about population risk and treating SGD as a solver for a fixed empirical-risk
problem.

The stylized one-pass calculation sits at this boundary.  During the first
pass, the nuisance directions are fresh and the shared spike direction is
stable.  In that precise sense the update is SA-like: it uses each fresh
example to accumulate the shared signal, rather than to solve the frozen
finite-sample problem.  If the same examples are reused, the procedure becomes
finite-sample fitting, and the same learning rate is far outside
repeated-example stability
\eqref{eq:stability-fresh}--\eqref{eq:stability-reuse}.  The empirical residuals left behind are not a
failure of the fresh pass; they are the sample-specific nuisance directions
that random testing averages away.

This distinction is not only philosophical.  In stochastic convex
optimization, ERM can pay a dimension-dependent sample-complexity penalty
even in settings where stochastic first-order methods or other non-ERM
procedures attain the useful population-level performance
\cite{shalevshwartz2010sco,feldman2016sco,carmon2024erm}.  Closely related
SGD-versus-GD separations show that stochastic gradient methods can generalize
better than full-batch gradient descent on the empirical loss, and that adding
regularization to the empirical objective need not close the gap
\cite{amir2021sgd}.  \citet{sekhari2021sgd} further study the roles of implicit
regularization, batch size, and multiple epochs, showing both separations from
regularized ERM and regimes where multiple passes help.  Recent work also shows
that full-batch GD itself can inherit dimension-dependent ERM-like sample
complexity in nonsmooth SCO \cite{livni2024gd}.
The related \emph{Never Go Full Batch} result gives a complementary full-batch
first-order lower bound: full-batch methods can require
\(\Omega(1/\varepsilon^4)\) iterations or dimension-dependent samples to
generalize in SCO, while SGD achieves \(O(1/\varepsilon^2)\)
\cite{amir2021neverfullbatch}.

Large-scale-learning analyses
make a related computational-statistical point: exact empirical minimization can
be the wrong use of data and computation when approximate stochastic
optimization already reaches the statistically useful error level
\cite{bottou2008tradeoffs,bottou2018optimization}.  The present model is not an
SCO lower-bound construction, but it reinforces the same warning in an exactly
solvable squared-loss geometry: the training span contains useful predictors
before empirical minimization within that span is useful.

The Deep Bootstrap framework gives a modern deep-learning version of this
online/offline comparison by coupling empirical-risk training to an idealized
online process \cite{nakkiran2021deepbootstrap}.  The benign-misfitting window
is a useful counterpoint: here the fresh first-pass dynamics and the repeated
empirical dynamics should not track each other.

\paragraph{One-pass SGD on least squares.}
There is also a direct theory of one-pass SGD on random-design least squares.
Exact high-dimensional dynamics are known for proportional dimensions
\citep{collinswoodfin2024onepass}.  Constant-stepsize SGD with tail averaging
can overfit benignly \citep{zou2023benign}.  The unaveraged last iterate is
characterized for decaying stepsizes \citep{wu2022last}, and power-law
spectra yield scaling laws \citep{lin2024scaling}.  Those works track test
risk under distributional assumptions.  The combination here is different: a
deterministic geometry, a single constant large-rate fresh pass, the
unaveraged final iterate, and training error deliberately above the
zero-predictor baseline.

The next group of works asks how empirical optimization itself can behave as a
regularized or modified optimization problem.

\paragraph{Small-step paths and modified losses.}
Classical early stopping is already a form of implicit regularization for
small-step GD or gradient flow, closely related to ridge-like shrinkage in
least squares~\cite{ali2019continuous}.  For full-batch gradient descent,
\citet{barrett2021implicit} use backward error analysis to derive an implicit
gradient regularization term penalizing large loss gradients.  For
random-shuffling SGD, \citet{smith2021origin} derive a modified loss containing
a minibatch-gradient regularizer whose strength depends on the learning rate
and batch size.  Stochastic-gradient-flow analyses of least squares identify
ridge-like regularization effects of constant-learning-rate mini-batch
SGD~\cite{ali2020implicit}. In particular, early-stopped small-step SGD acts as
an implicit \emph{positive} ridge regularizer.  In the benign-misfitting
window, the sign reverses: the large-learning-rate one-pass iterate moves past
interpolation toward the test-optimal point reached by the negative-ridge
penalty \eqref{eq:lambda-star} of
Section~\ref{subsec:scaled-interpolation}, but it reaches that point
through geometric weights \eqref{eq:single-geometric-coeffs}, paying the
logarithmic geometric-weighting cost isolated in Subsection~\ref{subsec:flat-averaging}, rather than following the
exact symmetric
negative-ridge path.  A related label-noise SGD model of
\citet{blanc2020implicit} studies dynamics near zero training error and
identifies an implicit regularizer built from squared parameter-gradient norms
on the training examples.  Algorithmic-stability analyses provide a different
route: stochastic gradient methods can generalize when the algorithm is stable
over few passes~\cite{hardt2016train}.

\paragraph{SGD noise, flat minima, and saddle escape.}
In neural-network optimization, larger steps have been studied through accounts
of SGD noise and approximate Bayesian inference
\cite{jastrzebski2017three,mandt2017sgd,smith2018bayesian}, cyclical and
super-convergent schedules \cite{smith2017cyclical,smith2019super}, flatness
and sharpness of minima,\footnote{This flat-minima literature is a point of
contrast, not the mechanism used here; see, for example,
\cite{hochreiter1997flat,keskar2017largebatch,dinh2017sharp}.}
regularization from an initial large learning rate \cite{li2019towards}, and
late-decay explanations for sustained large rates \cite{ren2024understanding}.
A related refinement is that SGD noise is not merely a scalar temperature: its
covariance can be structured, and that structure can bias the learning dynamics
toward flatter or lower-noise directions
\cite{zhu2019anisotropic,haochen2021shape}.
A separate optimization line studies how stochastic or perturbed gradient
methods escape saddle points
efficiently~\cite{ge2015escaping,jin2017escape}. That is also not the mechanism
here, because the stylized objective is not using noise to find a better local
minimum.

\paragraph{Large-learning-rate stability and solution selection.}
Step size and batch size can affect which solution is selected.  Catapult and
edge-of-stability
dynamics
\cite{lewkowycz2020large,cohen2021gradient,ahn2022understanding} and large-step
implicit bias \cite{nacson2022implicit,wang2025goodregularity} have been
studied primarily for full-batch gradient descent.
\citet{cohen2021gradient} identify edge-of-stability dynamics where the
sharpness hovers near \(2/\eta\), while \citet{lewkowycz2020large} describe
catapult dynamics in which the training loss grows initially before curvature
drops.  Theory work has begun to explain EoS through mechanisms such as
self-stabilization and implicit flows along sharpness-controlled manifolds
\cite{damian2023selfstabilization,arora2022understandingeos}.
\citet{ahn2022understanding} study related unstable-convergence
behavior beyond the classical \(2/L\) descent condition.

Mini-batch SGD needs a different notion of what must be stable:
\citet{leejang2023new} introduce an
interaction-aware sharpness depending on the batch-gradient distribution, while
\citet{andreyev2024edge} replace the Hessian eigenvalue by Batch Sharpness.
In diagonal linear networks, stochasticity and large learning rates can change
the solution selected by the dynamics~\cite{pesme2021implicit,even2023sgd}.
Dynamical-stability analyses also show that SGD and GD can select different
stable minima, with learning rate and batch size playing different
roles~\cite{wu2018sgd}, and that SGD stability can impose trace- or
Frobenius-type constraints rather than the spectral constraint associated with
full-batch GD~\cite{wu2023implicit}.

\paragraph{Freshness before reuse.}
Here the large step works because each update encounters a new nuisance
direction while acting on the same shared spike.  Once the data order is fixed,
the trajectory is deterministic, and the useful effect appears before any
training direction is revisited.%
\footnote{This is distinct from Polyak--Ruppert iterate averaging, which
averages SGD iterates to reduce stochastic variance around a target
\cite{polyak1992acceleration,ruppert1988efficient}.  Here averaging would give
more weight to the early, under-calibrated part of the pass.  On the small-rate
branch \(\Lambda\to\infty\) with \(\Lambda^{2}/n\to0\), the averaged spike
residual is of order \(1/\Lambda\), while the final iterate has residual
\(e^{-\Lambda}(1+o(1))\).}
Thus the large step is not SGD noise wandering among minima, not a modified-loss
minimizer, not curvature recovery, and not selection among empirical solutions:
the benefit is already present in a deterministic, fixed-order first pass.

Read as a hypothesis for larger models, this suggests that the fresh-example
update, rather than the already-written empirical Hessian, may be the relevant
stability condition for the first epoch.  A large-step first pass can then
operate above the
frozen full-batch sharpness line without requiring progressive sharpening or
selection among empirical minima.

The formal claim remains specific to this span geometry.  Other least-squares and
kernel regimes can benefit from multiple passes
\cite{pillaud2018statistical,lin2025datareuse}.
Here, reuse at the same rate probes nuisance directions already written into
the iterate.  Random reshuffling changes the multi-pass convergence dynamics
on a finite sum \cite{mishchenko2020random}, but not that reuse problem.  Related
rapid-overfitting results give a complementary warning \cite{vansover2025rapid},
while Figure~\ref{fig:stylized-sgd-trace} shows the exact train--test split
in the present model. 

\paragraph{One-step full-batch GD and incremental projection.}
There is also a revealing full-batch one-step calculation.  In this exact
geometry, one full-batch gradient step on the averaged half-squared empirical
loss from zero with learning rate \(1/\gamma\) gives
\((1/(\gamma n))\sum_i \vec{x}_i\), the calibrated flat-averaging benchmark
of Subsection~\ref{subsec:flat-averaging}.  But this is a fragile one-shot
update: the same learning rate is far beyond the finite-matrix stability
threshold \(2n/(d+\gamma n)\) for that half-squared
loss, which is \(2n/d\,(1+o(1))\) in the benign-misfitting window, since
\(\gamma^{-1}\cdot(d+\gamma n)/(2n)=1/(2\rho)+1/2\to\infty\).  One-pass SGD
is useful not because it injects regularizing noise, but because it stretches
this explosive full-batch move into sequential updates on fresh nuisance
directions.

Optimization readers may also view the fixed-order pass as incremental gradient
descent, closely related to Kaczmarz row-action
methods~\cite{strohmer2009randomized,needell2014kaczmarz}.  The sign of the interpretation is
different: a Kaczmarz step uses roughly \(1/\|\vec{x}_i\|^2\) to project toward
fitting the current equation, while the useful rate here is larger by a factor
of order \(\Lambda/\rho\) and deliberately over-projects the current example.

A separate full-batch curvature comparison points in the same direction.
Matched against the population curvature accumulated over the same \(n\)
examples,
\[
    \frac{\lambda_{\max}(\Xt\Xt^\top)}{\lambda_{\max}(n\bm\Sigma)}
    =
    \frac{d+\gamma n}{\gamma n}
    =
    1+\frac1\rho
    \longrightarrow\infty ,
\]
a ratio in which every normalization cancels.  This compares the
finite-sample and population quadratics as full-batch objects.

\paragraph{Clipping and effective step reduction.}
This also places the calculation next to work on training instability and
curvature control.  \citet{gilmer2022loss} study learning-rate warmup,
initialization, normalization, and gradient clipping as ways to avoid or escape
high-curvature regions that would otherwise prevent large learning rates.  In
their experiments, gradient clipping can act like a reduction of the effective
learning rate in high-curvature regions.  In our stylized model, that
observation has a complementary interpretation: reducing the effective
per-example step would suppress precisely the overshoot that lets one-pass SGD
calibrate the spike at low sample sizes.  The same point connects to
clipping-bias analyses in private gradient methods, where clipping can change
the effective update direction or optimization profile rather than merely
reducing sensitivity
\cite{chen2020understandingclipping,song2020evading}.

\paragraph{Mini-batch scaling.}
The mini-batch version of the calculation in Subsection~\ref{subsec:minibatch-scaling} also touches the literature on
scaling learning rates with batch size.  Large-minibatch ImageNet training
popularized the linear scaling rule for SGD, together with learning-rate warmup
\cite{goyal2017accurate}.  Other work relates learning-rate schedules and
batch-size schedules through an SGD noise-scale picture \cite{smith2018dont},
and argues that the learning-rate-to-batch-size ratio influences the width of
the minima reached by SGD \cite{jastrzebski2017three}.  The
critical-batch-size literature instead asks when increasing the batch size
stops giving proportional gains; gradient-noise-scale estimates predict the
largest useful batch size across many workloads
\cite{mccandlish2018empirical}, while large empirical studies show that
batch-size effects depend strongly on tuning, optimizer, model, and compute
budget \cite{shallue2018measuring}.  Stochastic differential equation (SDE)
analyses give accounts of these
scaling rules for SGD and lead to different rules for adaptive methods such as
Adam and RMSprop \cite{malladi2022sdes}.  For least squares itself,
mini-batching and tail averaging admit a sharp finite-sample theory
\citep{jain2018parallelizing}, and batch scaling laws under power-law spectra
now extend from one-pass SGD to data reuse \citep{chen2026sketched}.

The linear rule suggested by our
stylized model has a different source.  With averaged mini-batch gradients, a
batch of size \(B_{\rm mb}\) divides each example's contribution by
\(B_{\rm mb}\).  To preserve the overshoot needed to calibrate the spike, the
learning rate must grow by the same factor.  Thus the same linear scaling
appears here, in this one-pass small-learning-rate regime, without invoking gradient
noise, flat minima, or sampling temperature: it is a signal-calibration
requirement.

\subsection{Caricatures and signal-versus-nuisance analogies}

\paragraph{Exact ML caricatures.}
Among the closest ML antecedents are the Fourier and ``ultra-toy'' models used in the
harmless-interpolation line
\cite{muthukumar2019harmless,muthukumar2020harmless,muthukumar2021classification}.
Those papers use deliberately simplified feature families to isolate
the extent to which the signal survives overparameterized interpolation and
how much high-dimensional nuisance directions contaminate prediction.
The present model follows the same spirit, but asks a different question: once
the training span contains useful signal, should a squared-loss predictor in
that span be seeking an empirical fit at all?
A broader methodological cousin is the exact analysis of deep linear networks
by \citet{saxe2014exact}, which also invokes exact orthogonality to expose the
dynamics of learning in a tractable setting.
Toy models separating robust and non-robust features are discussed with
the adversarial-sensitivity literature in
Section~\ref{subsec:related-adversarial}.

\paragraph{Spiked covariance models and unlabeled detection.}
A classical line of high-dimensional statistics studies data with one
unusually strong direction and asks when that direction can be detected from
unlabeled random samples
\cite{johnstone2001distribution,baik2005phase}.  The present paper does not
need that random-matrix theory.  It fixes the special direction and makes the
nuisance geometry exactly orthogonal, so the span obstruction follows from
elementary inner-product calculations.  The same-distribution
companion \citep{ranade2026samedistribution} returns to random Gaussian samples and uses the unlabeled detection
problem as a point of comparison for the supervised thresholds studied here.
The interpolation ray of Remark~\ref{rem:frontier-ray} gives the geometric
bridge: \(\one\) is the top eigenvector of
\(\bm K=d\bm I_n+\gamma\one\one^\top\), so that ray points along the top
empirical principal direction, and the whole train--test tradeoff of
Section~\ref{sec:setup-frontier} is a one-dimensional choice of how far to
travel in the direction unsupervised PCA would select.  The Oja companion
\citep{ranade2026oja} studies iterative one-pass recovery of the analogous
spike direction from unlabeled data, and the multispike companion
\citep{ranade2026multispike} determines when several such rays collapse to
one.

\paragraph{Why use a deterministic caricature?}
The use of an exact stylized geometry follows the methodological tradition of
simplified Euclidean and deterministic models in information theory, from
classical channel geometry and orthogonal signaling
\cite{shannon1949communication,kotelnikov1959optimum,
wozencraft1965principles,gallager1968information} to local Euclidean
approximations of information geometry
\cite{borade2008euclidean,huang2015euclideanNetworks,huang2024universal} and
deterministic caricatures of Gaussian channels
\cite{etkin2008gaussian,avestimehr2011wireless}.  The analogy is
methodological rather than literal: simple geometries are useful when they
turn the governing tradeoff into a calculation.

\paragraph{Signal versus nuisance degrees of freedom.}
Wideband and noncoherent communication theory also gives language for
the difference between learning useful signal and tracking nuisance degrees of
freedom
\cite{telatar2000capacityWideband,medard2002bandwidth,
subramanian2002broadband,verdu2002spectral,zheng2002grassmann,
hassibi2003training}.  In those problems, as here, spending effort on every
observed degree of freedom can be the wrong goal.  The analogy is that useful
combined signal can be easier to recover than every nuisance degree of
freedom that appeared in the observations.

\subsection{Information, recognizability, and privacy}

Information-theoretic generalization bounds control train--test gaps through
mutual information, conditional mutual information, or related stability
quantities
\cite{russo2016controlling,xu2017information,bu2020tightening,
steinke2020conditional}.  Our calculation is complementary: it gives an exact
deterministic frontier rather than a bound for a particular algorithm, and it
shows that the useful gap can have the opposite sign from classical
overfitting.

\paragraph{Memorization, recognizability, and privacy.}
Another line of work asks when individual training samples must remain visible
in the learned model.  \citet{feldman2020memorization} shows that memorizing
rare or noisy examples can be necessary for near-optimal accuracy in long-tailed
distributions, and emphasizes the privacy risk of such memorization.
\citet{brown2021memorization} prove a stronger necessity: in simple prediction
problems, every sufficiently accurate learner must encode substantial
information about many training examples, even information that is irrelevant to
the task.  This connects to information-based memorization views
\cite{attias2024information}.
Empirical work on unintended memorization and training-data extraction shows the
operational side of the same concern: learned models can reveal rare or unique
training examples when queried appropriately
\cite{carlini2019secret,carlini2021extracting}.

Membership-inference work gives a complementary operational lens.
\citet{shokri2017membership} study attacks that distinguish training points
from nonmembers, and \citet{yeom2018privacy} analyze how such risk can follow
from overfitting.  The formal framework behind such questions is differential
privacy \cite{dwork2006calibrating,dwork2014algorithmic}, which enters learning
through private empirical-risk minimization and private deep learning
\cite{chaudhuri2011differentially,abadi2016deep}.\footnote{There is also a large
literature on utility--privacy tradeoffs and private release of high-dimensional
statistics, including private PCA and linear-query lower bounds
\cite{sankar2013utility,dwork2014analyze,chaudhuri2013near,hardt2010geometry,
kattis2017lower,kamath2022new}, and, for clipped-update optimizers,
edge-of-stability-like sharpness dynamics \cite{hussain2025optimizer}.}
Recent empirical work also studies membership leakage beyond
classical overfitting: \citet{khalil2025membership} find that, even in
well-generalized models, the vulnerable samples are often geometrically
atypical within their classes.  A membership attack normally keys on members
looking easy, whether through per-example calibrated losses
\citep{carlini2022firstprinciples} or through the robustness of the model's
predictions to perturbation \citep{choquette2021labelonly}.  In the fourth
quadrant the sign reverses: members can be
recognizable through unusually large residuals rather than unusually small
losses.

The spike oracle would not need to remember any individual nuisance direction.
That recognizability appears because a calibrated
span predictor uses the training examples themselves as its representation: the
shared signal helps prediction, while the nuisance residue leaves a
sample-level trace.  
(Indeed,
the basic prerequisite for having training residuals of the same order as the
desired test residuals is having a factor \(\gamma\) more samples than what is
needed for good generalization --- as the comparison with interpolation has
shown in this paper.)

\subsection{Adversarial sensitivity and non-robustness}
\label{subsec:related-adversarial}

\paragraph{Geometric robust/non-robust feature analogy.}
The adversarial-sensitivity calculation above places the nuisance
residue next to the adversarial-examples literature
\cite{szegedy2014intriguing,goodfellow2015explaining,madry2018towards,
tsipras2019robustness,ilyas2019adversarial,schmidt2018adversarial,
raghunathan2020understanding,hao2024surprising,ribeiro2023overparameterized}.
Misfit, adversarial sensitivity, and test accuracy share one geometric root ---
the nuisance residue --- and differ only in how they probe it: the
training points in their own directions, an aligned adversary along the
residue, a random test input on average.  The parallel with the
robust/non-robust feature distinction of \citet{ilyas2019adversarial} is
geometric.  A label-preserving perturbation is orthogonal to the spike, so
it cannot move the spike oracle at all: the oracle's response is exactly zero,
and the spike is a perfectly robust feature.  A calibrated predictor
constrained to the training span must instead carry the nuisance residue,
which is not predictive of the fresh label.  In the proof, this is just a
norm-duality calculation: the worst allowed perturbation points in the
direction of the nuisance residue.

The coordinate system matters.  In the spike-aligned coordinates used for the
RMS calculation, the nuisance residue is exactly label-preserving.
After a generic rotation, each coordinate
carries a small amount of spike signal together with nuisance variance, so the
same geometry can look like predictive-but-non-robust evidence across the
coordinates.
For the minimum-norm interpolator specifically,
\citet{chinot2022robustness} study related robustness questions directly:
random test inputs average the residue, while adversarial perturbations choose
its direction.

\paragraph{Clean accuracy versus robustness thresholds.}
The sample-threshold ladder in Section~\ref{subsec:phase-ladder} also echoes
clean-versus-robust sample-complexity separations
\cite{schmidt2018adversarial} and robust linear-regression tradeoffs
\cite{javanmard2020precise,xing2021adversarially,
ribeiro2023overparameterized,dohmatob2023robustGeneral,scetbon2023robust}.
A close high-level parallel is the universal-law line of
\citet{bubeck2021universalLaw}, where smooth interpolation can require a
dimension-dependent overparameterization penalty.  The analogy here is only at
the level of the ambient-dimension factor: low random test error begins around
\(d/\gamma^2\) samples, while low RMS adversarial sensitivity can require order
\(d^2/\gamma^2\) samples, up to the perturbation and tolerance constants.  Our
resource is samples rather than parameters, and our mechanism is nuisance
residue rather than a smoothness constraint.
The closest ambient-dimension parallels are the linear-accumulation view of
small coordinate effects \cite{goodfellow2015explaining} and geometric
no-free-lunch limits on robustness
\cite{fawzi2018analysis,gilmer2018adversarial,
dohmatob2019generalizedNoFreeLunch}.

\paragraph{Non-robustness from algorithmic selection.}
A related line studies non-robustness as a consequence of implicit bias.
\citet{frei2023doubleedged} show that gradient flow in two-layer ReLU networks
can select accurate but non-robust classifiers even when robust classifiers
exist.  \citet{li2025featureAveraging} identify feature averaging as a
mechanism by which gradient descent can combine predictive directions into a
non-robust solution.  Other work asks how simplicity bias, architecture, or
robust-ERM implicit bias changes this conclusion
\cite{shah2020pitfalls,min2024implicit,tsilivis2024price}.  The mechanism here
is different: those works study which classifier the training dynamics or
architecture selects, while the low-sample obstruction here is already present
in the train--test/adversarial geometry of the span.  Algorithmic bias may
decide whether a method selects such an accurate but non-robust predictor, but
it is not the source of the residue.

\appendix

\section{Fresh and reused examples have different stability boundaries}
\label{app:stability-boundaries}

The coefficient and state identities \eqref{eq:single-geometric-coeffs} of
Lemma~\ref{lem:single-geometric} are exact for every \(\eta>0\), with the closed geometric-sum
form \eqref{eq:single-geometric-energy} read under its stated condition \(\mu^{2}\ne1\).  It is
the uniform nuisance bound \eqref{eq:nuisance-floor} that needs \(0<\eta\gamma<2\), while
the risk formula \eqref{eq:single-geometric-risk} is stated on that range but in fact needs only
\(\mu^{2}\ne1\).  It is worth seeing what that condition is and is not.  Three stability boundaries must be distinguished.  Two of them are
numerically close in this model and the third is far from both, which is exactly the asymmetry
that one pass exploits.

\emph{A fresh example} contributes only through the shared spike, so by the induction in
Lemma~\ref{lem:single-geometric} the shared residual obeys \(b_t=(1-\eta\gamma)b_{t-1}\)
exactly --- a scalar recursion, not an approximation.  It contracts if and only if
\begin{equation}\label{eq:stability-fresh}
    0<\eta<\frac{2}{\gamma}=\frac{2}{\lambda_{\max}(\bm\Sigma)} ,
    \qquad\text{using the standing assumption }\gamma>1\text{ of Subsection~\ref{subsec:setup}.}
\end{equation}

\emph{A reused example} is a different situation, because by the time it comes back around it
already carries a coefficient of its own because it has influenced the SGD iterate.
Present \(\vec x_i\) again to an iterate \(\vec w=\sum_{j}\alpha_j\vec x_j\) that has already
seen it, so that \(\alpha_i\ne0\) in \eqref{eq:train-prediction} and the prediction on that
example is \(\gamma s+d\alpha_i\).  A fresh example would have contributed only the first of
those two terms.  Now look at the signed residual \eqref{eq:frontier-h-def}, as
\(h_i:=\vec w^{\top}\vec x_i-1\). Using \(b=1-\gamma s\) from \eqref{eq:sgd-state-defs},
\begin{equation}\label{eq:reuse-residual}
    h_i
    =
    \underbrace{-\,b\vphantom{d\alpha_i}}_{\text{shared spike residual}}
    \;+\;
    \underbrace{d\,\alpha_i}_{\text{its own stored nuisance}} .
\end{equation}
It is \eqref{eq:reuse-residual}, not just the shared \(b\), that drives the update, and of
its two terms the stored one can be much larger.
Specializing \eqref{eq:reuse-residual} to the end of a full pass, using
\eqref{eq:single-geometric-coeffs} at \(t=n\),
\begin{equation}\label{eq:reuse-residual-onepass}
    h_i=-\mu^{n}+d\eta\,\mu^{\,i-1}
    \qquad(1\le i\le n).
\end{equation}
Whenever \(1/d<\eta<1/\gamma\), which includes the rate
\eqref{eq:eta-pass-def} under the hypotheses of
Theorem~\ref{thm:one-pass-sgd}, \(0<\mu<1\) and \(d\eta>1\), so the second term of \eqref{eq:reuse-residual-onepass}
satisfies \(d\eta\,\mu^{\,i-1}>\mu^{\,i-1}\ge\mu^{\,n}\) and hence \(h_i>0\) for every example
the pass has used. The positive signed residual means that the used example is \emph{overshot},
due to its own stored nuisance.

Now apply \eqref{eq:sgd-update} to that reused \(\vec x_i\).  Since
\(1-\vec w^{\top}\vec x_i=-h_i\), the update is \(\vec w\mapsto\vec w-\eta h_i\vec x_i\), which
subtracts \(\eta h_i\) from \(\alpha_i\). Because that is the only coefficient that moves, this
subtracts \(\eta h_i\) from \(s\) as well.  Both terms of \eqref{eq:reuse-residual} therefore
move, and they move the same way:
\begin{equation}\label{eq:reuse-multiplier}
    h_i'
    =
    \bigl(\gamma(s-\eta h_i)-1\bigr)+d\bigl(\alpha_i-\eta h_i\bigr)
    =
    h_i-\eta(\gamma+d)h_i
    =
    \bigl(1-\eta\|\vec x_i\|_2^{2}\bigr)h_i .
\end{equation}
While the actual residual \eqref{eq:reuse-residual} depends on the whole iterate through \(s\) and
\(\alpha_i\), the multiplier $\bigl(1-\eta\|\vec x_i\|_2^{2}\bigr)$ in \eqref{eq:reuse-multiplier}
does not.  This is the same as the one-example calculation of Section~\ref{sec:one-sample}, and it
is contractive if and only if
\begin{equation}\label{eq:stability-reuse}
    0<\eta<\frac{2}{\gamma+d}=\eta_{\rm edge},
\end{equation}
the boundary already pointed out in \eqref{eq:one-sample-rates}.

\emph{Full-batch gradient descent} on the summed least-squares objective is a third situation,
requiring \(\eta<2/\lambda_{\max}(\Xt\Xt^{\top})=2/(d+\gamma n)\). In the benign-misfitting window \(\gamma n\ll d\),
so this and the reuse boundary \eqref{eq:stability-reuse} are both \((2/d)(1+o(1))\): conceptually distinct,
but numerically close together.

Notice that it is the \emph{fresh} boundary that stands apart, by a factor
\begin{equation}\label{eq:stability-gap}
    \frac{2/\gamma}{2/(\gamma+d)}
    =
    1+\frac{d}{\gamma}
    \;\gg\;
    n
    \qquad\text{in the benign-misfitting window \eqref{eq:benign-misfitting-window},}
\end{equation}
since that window is exactly \(\gamma n\ll d\ll\gamma^{2}n\).  This means this model
has a vast interval of learning rates --- more than \(n\)-fold wide --- that contract
the shared spike residual on fresh examples while amplifying any example that is revisited.  The rate \eqref{eq:eta-pass-def} lies inside this interval under the
hypotheses of Theorem~\ref{thm:one-pass-sgd}.  This is what makes the
first epoch different in kind from any potential later ones.%
\footnote{For readers meeting this material for the first time: the full-batch
condition is the textbook gradient-descent threshold for a quadratic
\cite{polyak1987introduction}.  The fresh-example condition \eqref{eq:stability-fresh} is
spiritually the same as the mean-convergence condition \(0<\eta<2/\lambda_{\max}(\bm\Sigma)\) of the LMS
recursion in adaptive filtering \cite{widrow1985adaptive}; the correspondence is an analogy,
since no convergence of an empirical covariance to \(\bm\Sigma\) is used anywhere here.  The
reuse condition \eqref{eq:stability-reuse} is the relaxed-projection condition: each update is a
relaxed Kaczmarz step with relaxation parameter \(\lambda=\eta\|\vec x_i\|_2^{2}\), and
repeatedly applied relaxed projections converge exactly when \(0<\lambda<2\);
this is the classical projection-method line from the original Kaczmarz
iteration to the relaxed algebraic reconstruction technique (ART)
\citep{kaczmarz1937angenaeherte,herman2009fundamentals}, with the randomized
variant in \citet{strohmer2009randomized}.  Constant-learning-rate SGD
revisiting a frozen training set is an incremental gradient method in the sense of
\citet{bertsekas2011incremental}.}

\section{More about one-pass SGD: strong spikes, learning-rate robustness, training error ascends, how to avoid the log penalty, and mini-batch scaling}
\label{app:sgd-refinements}

This appendix continues the one-pass analysis of Section~\ref{sec:sgd}.  It
explains the inequivalent senses in which the spike can be strong, shows that
the useful learning rates form a multiplicatively wide window (also under
label noise when \(\sigma_\xi^{2}=o(\min(n,R))\)), computes the step-by-step ascent of the empirical training
error along the tuned many-sample pass, exhibits a sample-splitting scheme
showing that the logarithmic penalty is algorithmic rather than geometric,
and derives the mini-batch scaling analogue.

\subsection{What ``the spike is strong'' can mean}
\label{subsec:spike-strength-senses}

The word \emph{strong} does several different jobs in this paper, and this subsection is where the
last of them arrives.  Separating them is worth a comment, because they are not variations in
degree; they are comparisons of \(\gamma\) against different things.

\emph{(a) Strong signal:} \(\gamma>1\), the standing assumption of
Subsection~\ref{subsec:setup}.  The spike direction then carries more variance
in the test law \eqref{eq:test-law} than any one nuisance direction.  By Remark~\ref{rem:strong-spike} this is
what makes the useful Pareto branch the overshooting one.  It involves neither \(n\) nor \(d\).

The remaining senses all compare \(\gamma\) against \(n\) and \(d\) together, and the first
three of them are \emph{statistical}: they say which predictors can generalize, and each is a
power law.

\emph{(b) Strong enough that some span predictor generalizes:} \(n\gg d/\gamma^{2}\),
equivalently \(\gamma \gg \sqrt{d/n}\), the lower edge of \eqref{eq:benign-misfitting-window}.  The best
span predictor then reaches \(g_{\min}=(1+o(1))/R\to0\) by
\eqref{eq:frontier-endpoints-rho-R}.

\emph{(c) Strong enough that interpolation itself generalizes:} \(n\gg d/\gamma\), equivalently
\(\gamma \gg d/n\), the upper edge of \eqref{eq:benign-misfitting-window}.  This is when the
interpolator's calibration \(r_{\rm int}=\rho/(1+\rho)\) of \eqref{eq:rint-def} finally reaches
one.  Senses (b) and (c) are separated by exactly a factor \(\gamma\) in the
real-valued crossover scale for \(n\), and that separation \emph{is} the
benign-misfitting window.

\emph{(d) Strong enough to support few-shot learning:} \(\gamma\ll d\ll\gamma^{2}\), the
\emph{few-shot window} of Footnote~\ref{fn:few-shot-window}.  Here (b)'s threshold
\(d/\gamma^{2}\) has dropped below a handful of examples while (c)'s threshold \(d/\gamma\) is
still above it, so this is \eqref{eq:benign-misfitting-window} read at \(n\) of constant order.
It is the practically pointed case: few-shot learning can unlock long before interpolation is any
good, which is what the one-example calculation of Section~\ref{sec:one-sample} is a caricature
of.

The last sense is different in kind, and it is the one this subsection wants to call out specifically.

\emph{(e) Strong enough that one pass can no longer simultaneously take small
steps and be close to optimal:} \(n\lesssim\log R\).  Equivalently,
\(\gamma\ge\sqrt{d/n}\;e^{cn}\) for some constant \(c>0\), so this happens
only when the spike is exponentially strong in the number of samples \(n\).
This one is not about which predictors exist.  It is about the step size the tuned rate asks for.  That rate carries budget \(\Lambda_{\rm pass}=\tfrac12\log R\) in
\(n\) updates, and its step \eqref{eq:eta-pass-step} equals \(1\) exactly at
\(n=\tfrac12\log R\) and \(2\) exactly at \(n=\tfrac14\log R\).  So the condition
\begin{equation}\label{eq:very-strong-spike-condition}
    \underbrace{n}_{\text{updates available}}
    \ \lesssim\
    \underbrace{\log R}_{\text{budget the tuned rate wants}}
\end{equation}
is where \eqref{eq:eta-pass-step} stops being small, and it therefore denies the many-sample condition \(\log R=o(n)\) under which the
tuned rate attains the optimal \(\Theta\bigl(\tfrac{\log R}{R}\bigr)\) order.  The two endpoints are exact.  At \(n=\tfrac12\log R\) the tuned rate reaches
\(\eta\gamma=1\), the last point of the branch on which participation counts examples, and it
leaves that branch for smaller \(n\).  At \(n=\tfrac14\log R\) it reaches \(\eta\gamma=2\),
which the stable range \(0<\eta\gamma<2\) excludes.  Past there an exact one-line
calculation takes over --- one example is nearly enough: at \(\eta\gamma=1\) a
single update zeros the spike residual
outright --- no asymptotics --- after which every later example arrives fully
calibrated and contributes nothing, so the pass has participation
\(n_{\rm eff}=1\) exactly.  The resulting test error is \(n/R=d/\gamma^{2}\),
within a logarithmic factor of the best any span predictor can do.

Condition (e) is \emph{algorithmic}: by \eqref{eq:b-recursion} each
update multiplies the spike residual by a fixed factor.  Whenever that factor has fixed magnitude
strictly between zero and one, driving the residual down to a given negative power of \(R\) costs
a number of updates proportional to \(\log R\). The degenerate case \(\eta\gamma=1\), where one
update zeros the residual, is exactly the one-shot calculation above.  The comparison is
therefore against a logarithm rather than a power, and what it constrains is what one
constant-rate pass can build.  Two asymptotics of the one-pass analysis become unavailable past (e), and quoting either there is a
mistake: the participation deficit of \eqref{eq:one-pass-neff-asymp} needs \(\Lambda/n\to0\),
and the optimal-order claim for the tuned budget needs \(\log R=o(n)\).

\subsection{Learning-rate robustness}
\label{subsec:lr-robustness}

The rate \(\eta_{\rm pass}\) of Theorem~\ref{thm:one-pass-sgd} is written
explicitly because the model is exact.  It attains the optimal \(\Theta\bigl(\tfrac{\log R}{R}\bigr)\) order 
under the mild condition \(\log R=o(n)\), and it is certainly not a knife edge.  Using the calibration budget \(\Lambda\) of \eqref{eq:Lambda-def} the
essence of the whole story is one line.  By the exact full-pass risks
\eqref{eq:one-pass-exact-risks} with the contraction asymptotics
\eqref{eq:one-pass-contraction}, the test error is \(e^{-2\Lambda}+\Lambda/(2R)\) to leading
order on the narrower branch \(\Lambda\to\infty\) with \(\Lambda^{2}/n\to0\) that
the contraction asymptotics \eqref{eq:one-pass-contraction} need,
and at most \(e^{-2\Lambda}+\Lambda/R\) throughout \(0<\eta\gamma\le1\), which is the form Corollary~\ref{cor:learning-rate-robustness} below proves and uses.  Either way the
first term needs \(\Lambda\) large and the second needs \(\Lambda\) small compared with
\(R\). That is a lot of flexibility.

\begin{corollary}[The safe window is \(1\ll\Lambda\ll R\)]
\label{cor:learning-rate-robustness}
Run one pass of the SGD update \eqref{eq:sgd-update} with \(0<\eta\gamma\le1\), and
recall the calibration budget \(\Lambda=\eta\gamma n\) from \eqref{eq:Lambda-def} and the
signal-to-nuisance ratio \(R=\gamma^{2}n/d\) from \eqref{eq:scale-identities}.  Then
\begin{equation}\label{eq:lr-robust}
    \Ttest(\vec w_n)\ \le\ e^{-2\Lambda}+\frac{\Lambda}{R}.
\end{equation}
Consequently, for a family of problems with \(R\to\infty\) and
\begin{equation}\label{eq:lr-safe-window}
    1\ \ll\ \Lambda\ \ll\ R,
    \qquad\text{equivalently}\qquad
    \frac{1}{\gamma n}\ \ll\ \eta\ \ll\ \frac{\gamma}{d},
\end{equation}
we have \(\Ttest(\vec w_n)\to0\).  In the benign-misfitting window
\eqref{eq:benign-misfitting-window} we also have \(\Ttrain(\vec w_n)\to\infty\).
\end{corollary}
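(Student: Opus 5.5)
The plan is to bound the two terms of the exact full-pass test risk \eqref{eq:one-pass-exact-risks} of Proposition~\ref{prop:one-pass-risk-summary} separately, using only the branch \(0<\eta\gamma\le1\).

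\emph{Spike bias.} On this branch \(\mu=1-\eta\gamma\in[0,1)\). The inequality \(\log(1-x)\le -x\), applied exactly as in the proof of \eqref{eq:one-pass-mu-decay} but now without any asymptotic hypothesis, gives \(\mu^n\le e^{-\eta\gamma n}=e^{-\Lambda}\). Hence the spike bias satisfies \(\mu^{2n}\le e^{-2\Lambda}\). At the endpoint \(\eta\gamma=1\) we have \(\mu=0\), and the bound holds trivially.

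\emph{Nuisance variance.} Since \(1-\mu^{2n}\le1\) and \(2-\eta\gamma\ge1\) on this branch, the nuisance term \(\frac{\Lambda}{R}\cdot\frac{1-\mu^{2n}}{2-\eta\gamma}\) is at most \(\Lambda/R\). Adding the two bounds gives \eqref{eq:lr-robust}.

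\emph{Safe window.} For the consequence, if \(\Lambda\to\infty\) and \(\Lambda/R\to0\), both terms vanish. The equivalence in \eqref{eq:lr-safe-window} is a direct translation: \(\Lambda=\eta\gamma n\) gives \(\Lambda\gg1\iff\eta\gg1/(\gamma n)\), and \(\Lambda/R=\eta\gamma n\cdot d/(\gamma^2 n)=\eta d/\gamma\) gives \(\Lambda\ll R\iff\eta\ll\gamma/d\).

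\emph{Training-error claim.} I will not compute anything here. Instead I invoke Corollary~\ref{cor:fourth-quadrant-necessity}: in the window \(\rho\to0\), \(R\to\infty\), any family of span predictors with \(\Ttest\to0\) has \(\Ttrain\to\infty\). The iterate \(\vec w_n\) is a span predictor by \eqref{eq:sgd-span-rep}, so the claim follows at once.

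\emph{Main obstacle.} There is no real obstacle; the only care needed is bookkeeping. I must make sure that \eqref{eq:one-pass-exact-risks} is valid on the whole closed branch \(0<\eta\gamma\le1\), which it is, since it needs only \(0<\eta\gamma<2\). I must also avoid using the asymptotic statement \eqref{eq:one-pass-contraction}, because \(\Lambda^2/n\to0\) is not assumed here.
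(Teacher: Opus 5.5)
Your proposal is correct and matches the paper's proof step for step: the same two-term bound on the exact risk \eqref{eq:one-pass-exact-risks}, the same translation \(\Lambda/R=d\eta/\gamma\), and the same appeal to Corollary~\ref{cor:fourth-quadrant-necessity} for the training error. The only cosmetic difference is the spike-bias step: you use \(\log(1-x)\le -x\) and treat \(\eta\gamma=1\) separately, while the paper uses the equivalent \(0\le 1-\eta\gamma\le e^{-\eta\gamma}\), which already covers that endpoint.
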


\begin{proof}
Start from the exact test error
\(\Ttest(\vec w_n)=\mu^{2n}+\frac{\Lambda}{R}\cdot\frac{1-\mu^{2n}}{2-\eta\gamma}\)
of \eqref{eq:one-pass-exact-risks}.  For the spike term,
\(0\le1-\eta\gamma\le e^{-\eta\gamma}\) on the closed interval \(0<\eta\gamma\le1\) gives
\(\mu^{2n}\le e^{-2\eta\gamma n}=e^{-2\Lambda}\) by \eqref{eq:mu-def} and
\eqref{eq:Lambda-def}.  Note that this is also well defined at the endpoint \(\eta\gamma=1\),
where in fact \(\mu=0\).

For the nuisance term, \(0<\eta\gamma\le1\) gives \(2-\eta\gamma \ge 1\), and \(|\mu|<1\)
gives \(1-\mu^{2n}\le1\), so that term is at most \(\Lambda/R\).  Adding the two gives
\eqref{eq:lr-robust}. Under \eqref{eq:lr-safe-window} both terms vanish.

The two forms of the window \eqref{eq:lr-safe-window} agree because \(\Lambda=\eta\gamma n\) by \eqref{eq:Lambda-def} and
\(\Lambda/R=d\eta/\gamma\) by \eqref{eq:scale-identities}.  Finally \(\vec w_n\) is then a family
of span predictors with vanishing test error, so
Corollary~\ref{cor:fourth-quadrant-necessity} immediately gives \(\Ttrain(\vec w_n)\to\infty\) within the benign-misfitting window.
\end{proof}

The two edges of \eqref{eq:lr-safe-window} control different terms.  The lower
edge \(\Lambda\gg1\) is the calibration requirement.  Exact calibration
requires coefficient sum \(s=1/\gamma\) by
\eqref{eq:spike-calibration}.  A budget below one cannot reach \(s=1/\gamma\) by
the bound in Footnote~\ref{fn:calibration-budget}.

Along a family with \(n\to\infty\) --- the case Corollary~\ref{cor:learning-rate-robustness} is about --- a budget that stays bounded cannot
calibrate either.  If \(\Lambda\le M\) then
\(\mu^{n}=(1-\Lambda/n)^{n}\ge(1-M/n)^{n}\to e^{-M}>0\), so the calibration formula
\(r_n=1-\mu^{n}\) of \eqref{eq:one-pass-calibration} stays bounded away from one.  At bounded \(n\) this second fact fails, because
\(\eta\gamma\to1\) drives \(\mu^{n}\to0\) at budget \(\Lambda\to n\).  That is precisely
the very-strong-spike regime, where a single example at \(\eta\gamma=1\) is nearly
enough.

The upper edge \(\Lambda\ll R\) caps the nuisance variance written into the SGD iterates during the pass.
Notice that in the \(\eta\) form, the upper edge $\frac{\gamma}{d}$ involves no \(n\).
This is not a typo but a restatement of \eqref{eq:nuisance-floor}.
At fixed \(\eta\), accumulated nuisance is bounded independently of
pass length in the noiseless label case.

The two edges of \eqref{eq:lr-safe-window} separate exactly when \(R\to\infty\), which is exactly
the asymptotic condition for the span to become useful.  The intersection of the window \eqref{eq:lr-safe-window}
with the branch \(0<\eta\gamma\le1\) needs more.  That branch caps \(\Lambda\le n\) by
\eqref{eq:Lambda-def}, so the lower edge \(\Lambda\gg1\) forces \(n\to\infty\) as well.  A
family with bounded \(n\) and \(R\to\infty\) is exactly the conceptually distinct
very-strong-spike case, which the exact one-shot calculation at \(\eta\gamma=1\)
handles instead.

With the noisy training labels \(y_i=1+\sigma_\xi\xi_i\) of
Subsection~\ref{subsec:one-complete-pass}, the lower edge is unchanged, while
label noise adds upper-bound constraints:
\begin{equation}\label{eq:noisy-lr-bound-main}
    \E_\xi\,\Ttest(\vec w_n)
    \ \le\
    e^{-2\Lambda}+\frac{\Lambda}{R}
    +\sigma_\xi^2\Bigl(\frac{2\Lambda^2}{R}+\frac{\Lambda}{n}\Bigr),
    \qquad 0<\eta\gamma\le1 .
\end{equation}
The reason is that label noise is
steadily written into fresh orthogonal directions by all \(n\) examples, rather than being
geometrically discounted. Notice that the final $\frac{\sigma_\xi^2 \Lambda}{n}$ term does not 
block asymptotically zero test error as long as $\sigma_\xi^2 \ll n$.  Combined
with $\sigma_\xi^2 \ll R$, which keeps the $\frac{2\sigma_\xi^2\Lambda^2}{R}$
term small at some $\Lambda\to\infty$, a suitable learning rate can be found. 

\begin{remark}[Constant-factor misspecification and hyperparameter sweeps]
\label{rem:constant-factor-lr}
Work on the many-sample branch \(\log R=o(n)\), where the tuned rate
\(\eta_{\rm pass}\) of \eqref{eq:eta-pass-def} attains the optimal order.  A sweep will not land on it exactly, so take
\(\eta=c\,\eta_{\rm pass}\) for a fixed constant \(c>0\).  This is a calibration budget
\(\Lambda=\tfrac{c}{2}\log R\).  The branch hypothesis makes \(\eta\gamma=c\log R/(2n)\le1\)
automatic for all large \(n\), so Corollary~\ref{cor:learning-rate-robustness} applies.  Then
\eqref{eq:lr-robust} gives
\begin{equation}\label{eq:lr-c-bound}
    \Ttest(\vec w_n)
    \ \le\
    R^{-c}+\frac{c\log R}{2R}.
\end{equation}
Every fixed multiplicative misspecification therefore still gives vanishing test error.  An
underestimate (\(c<1\)) leaves the slower spike-bias decay \(R^{-c}\).  An overestimate
(\(c>1\)) writes a constant multiple more nuisance variance.  Equivalently, by
\eqref{eq:one-pass-neff-asymp}, it has a constant factor fewer examples participating.  That second
reading needs \(\Lambda\to\infty\) and \(\Lambda/n\to0\), which the branch condition already
supplies.  So any log-spaced sweep whose range brackets \(\eta_{\rm pass}\) and whose adjacent
grid points differ by a fixed factor contains a successful learning rate for all sufficiently
large problems.

The contrast with the very-strong-spike branch is sharp.  There, on the branch where
eventually \(n<R\), the useful rates form a \emph{certified} tolerance band
\(|1-\eta\gamma|\le\tau_n=(n/R)^{1/(2n)}\) about \(\eta\gamma=1\) --- a sufficient
band, not the exact set of useful rates --- of fixed relative width when
\(n\asymp\log R\), sharpening only when \(\log R/n\to\infty\).  Here the successful
range is multiplicatively unbounded.
\end{remark}

\subsection{Ascending empirical training error}
\label{subsec:ascent-derivation}

Work at the tuned rate \(\eta_{\rm pass}\) of \eqref{eq:eta-pass-def}, under the
hypotheses of Theorem~\ref{thm:one-pass-sgd}: \(\rho\to0\), \(R\to\infty\), and
\((\log R)^{2}=o(n)\), so that \(\Lambda=\Lambda_{\rm pass}=\tfrac12\log R\) satisfies
\(\Lambda\to\infty\), \(\Lambda/n\to0\), and \(\Lambda/R\to0\). All three are used
below.

The general-\(t\) training error follows from Lemma~\ref{lem:two-scalar} exactly as the test
error did.  Example by example it is the residual \(h_i=-b+d\alpha_i\) of
\eqref{eq:reuse-residual} squared and averaged,
with \(\alpha_i=0\) on the \(n-t\) examples the pass has not yet reached.  Substituting
\(s=s_t\) from \eqref{eq:single-geometric-coeffs} and \(q=q_t\) from
\eqref{eq:single-geometric-energy} into the exact training error
\(\Ttrain=(\gamma s-1)^{2}+\frac{2ds}{n}(\gamma s-1)+\frac{d^{2}}{n}q\)
of \eqref{eq:scalar-train-error}, and using
\(\gamma s_t-1=-\mu^{t}\) together with \(d/(n\gamma)=1/\rho\) from
\eqref{eq:scale-identities},
\begin{equation}\label{eq:sgd-train-error-exact}
    \Ttrain(\vec w_t)
    =
    \mu^{2t}
    -
    \frac{2}{\rho}\bigl(1-\mu^{t}\bigr)\mu^{t}
    +
    \frac{d^{2}}{n}\,q_t .
\end{equation}

Appendix~\ref{app:stability-boundaries} showed this learning rate is small on the shared spike,
\(\eta_{\rm pass}\gamma=\Lambda/n=\log R/(2n)\) by \eqref{eq:eta-pass-step}, which is \(o(1)\)
on the many-sample branch \(\log R=o(n)\), while its effect on the
example currently being used, \(\eta_{\rm pass}\|\vec x_i\|_2^2=\Lambda/n+\Lambda/\rho\),
diverges.  This is the precise sense in which the update overshoots individual training points.
Compared with the repeated-example stability boundary \(\eta_{\rm edge}=2/(\gamma+d)\) of
\eqref{eq:stability-reuse},
\begin{equation}\label{eq:eta-pass-edge-ratio}
    \frac{\eta_{\rm pass}}{\eta_{\rm edge}}
    =
    \frac{\eta_{\rm pass}}{2/(\gamma+d)}
    =
    \frac12\eta_{\rm pass}(\gamma+d)
    =
    \left(\frac{\Lambda}{2}+o(\Lambda)\right)\frac{d}{\gamma n}
    \to\infty.
\end{equation}
In edge-of-stability language \cite{cohen2021gradient}, the full-batch
empirical sharpness is \(\lambda_{\max}(\Xt\Xt^\top)=d+\gamma n\), dominated
in this window by the nuisance contribution \(d\) since \(\rho\to0\) (that
literature states sharpness for the averaged loss, which divides by \(n\)).  The first pass remains controlled because each
nuisance direction is encountered only once, while the spike direction is shared
across all updates.

\paragraph{The ascent, step by step.}
At the tuned rate \(\eta_{\rm pass}\) of \eqref{eq:eta-pass-def}, SGD indeed ascends rather than
descends the full-batch empirical training error.  Write \(b_t=\mu^{t}\) for the spike
residual after \(t\) updates, as in \eqref{eq:single-geometric-coeffs}.  Substituting
\(q_t=\eta^{2}\sum_{k<t}\mu^{2k}\) from \eqref{eq:single-geometric-energy} into
\eqref{eq:sgd-train-error-exact}, and using \(d^{2}\eta^{2}/n=\eta\gamma\Lambda/\rho^{2}\) from
\eqref{eq:Lambda-def} and \eqref{eq:scale-identities},
\begin{equation}\label{eq:sgd-train-error-path}
    \Ttrain(\vec w_t)
    =
    b_t^{2}
    -
    \frac{2}{\rho}\,b_t\bigl(1-b_t\bigr)
    +
    \frac{\eta\gamma\,\Lambda}{\rho^{2}}\sum_{k=0}^{t-1}b_k^{2} .
\end{equation}
Since \(b_{t+1}=(1-\eta\gamma)b_t\) we have
\(b_{t+1}^{2}-b_t^{2}=-\eta\gamma(2-\eta\gamma)b_t^{2}\).  Substituting these step differences
into \eqref{eq:sgd-train-error-path} and factoring out \(\eta\gamma\) yields
\begin{equation}\label{eq:climbing-uphill}
    \Ttrain(\vec{w}_{t+1})-\Ttrain(\vec{w}_t)
    =
    \eta\gamma\left[
    \frac{2b_t}{\rho}
    +
    b_t^{2}
    \left\{
    \frac{\Lambda}{\rho^{2}}-\Bigl(1+\frac{2}{\rho}\Bigr)\bigl(2-\eta\gamma\bigr)
    \right\}
    \right].
\end{equation}
The term \(2b_t/\rho\) inside the outer bracket of \eqref{eq:climbing-uphill} is positive,
since \(b_t>0\) by \eqref{eq:single-geometric-coeffs} on the branch \(0<\eta\gamma<1\).  Inside
the braces of \eqref{eq:climbing-uphill}, \(\Lambda/\rho^{2}\) dominates
\((1+2/\rho)(2-\eta\gamma)\le2+4/\rho\), because their ratio is
\(\frac{\Lambda}{4\rho}(1+o(1))\to\infty\) when \(\Lambda\to\infty\) and \(\rho\to0\).  So
the whole bracket in \eqref{eq:climbing-uphill} is positive for all sufficiently large dimensions
and all \(0\le t<n\).

The fresh-test error moves in the opposite direction.  By the exact per-step test error
\eqref{eq:single-geometric-risk} of Lemma~\ref{lem:single-geometric},
\[
    \Ttest(\vec{w}_{t+1})-\Ttest(\vec{w}_t)
    =
    \mu^{2t}\left[-2\eta\gamma+(\eta\gamma)^{2}+\frac{\Lambda^{2}}{nR}\right]
    =
    -\mu^{2t}\,\eta\gamma\,\bigl(2+o(1)\bigr)<0 ,
\]
using \(\eta\gamma=\Lambda/n=o(1)\) and \(\Lambda/R=o(1)\).  Thus the full empirical training
error monotonically increases step by step while the fresh-test error decreases monotonically step by step.

\subsection{Is the logarithm penalty fundamental? The case of flat averaging.}
\label{subsec:flat-averaging}

Under the hypotheses of Theorem~\ref{thm:one-pass-sgd}, one pass of constant-rate SGD sits a factor
\(\tfrac14\log R\,(1+o(1))\) from the frontier on both axes.
No constant stable learning rate closes that test-error gap on the many-sample
branch \(\log R=o(n)\): the minimum of the test error over \(0<\eta\gamma<2\) is of
order \(\tfrac{\log R}{R}\).
That leaves the question of what is to blame.  The asymptotic logarithm factor could be the price of
the streaming constraint, or of not knowing \(\gamma\), or of not knowing where the spike lives.
In any of those cases every practically realizable span predictor would pay it too, and
the best-span error \(g_{\min}=1/(1+R)\) of \eqref{eq:frontier-endpoints-rho-R} would be a
loose oracle benchmark rather than an achievable one.

It is none of these.  This subsection exhibits a span predictor that is fully data-driven and
attains the best-span test error up to a \(1+o(1)\) factor.  It uses no knowledge of
\(\gamma\) and no knowledge of the spike coordinate.  Everything here is on the many-sample
branch \(\log R=o(n)\), which is what the comparison needs
and in particular forces \(n\to\infty\).

Average the first \(m=n-1\) training points and use the last remaining point \(\vec x_{n}\) only
to set the calibration by multiplying the average appropriately.  That held-out point is what
makes the construction data-driven.  It is therefore what makes this an answer to the question
above rather than an oracle comparison.  If \(\gamma\) were known the held-out point would be
unnecessary.
Define
\begin{equation}\label{eq:flat-average-def}
    \bar{\vec{x}}_m=\frac1m\sum_{i=1}^m \vec{x}_i.
\end{equation}
Since \(\vec{v}_{m+1}\) is orthogonal to \(\vec{v}_1,\ldots,\vec{v}_m\),
\[
    \bar{\vec{x}}_m^{\top}\vec{x}_{m+1}=\gamma.
\]
Thus the held-out point reveals the unknown \(\gamma\).  Dividing \(\bar{\vec{x}}_m\) by
this revealed \(\gamma\) calibrates its spike coordinate, so the predictor
\begin{equation}\label{eq:flat-predictor-def}
    \vec{w}_{\rm avg}
    =
    \frac{1}{\gamma m}\sum_{i=1}^m \vec{x}_i
\end{equation}
has exact spike calibration:
\[
    w_{\rm avg}[1]=\frac1{\sqrt\gamma}.
\]
In signal-processing language, this average acts as a data-calibrated
matched filter \cite{wozencraft1965principles,vanTrees1968detection}: its
prediction correlates a test input with the combined training signal and
normalizes by the spike calibration.
Only averaged nuisance remains, so
\begin{equation}\label{eq:flat-average-risk}
    \Ttest(\vec w_{\rm avg})=\frac{d}{\gamma^{2}m}
    =
    \frac{d}{\gamma^{2}(n-1)} .
\end{equation}
Applying \(g_{\min}=1/(1+\gamma^{2}m/d)\) from
\eqref{eq:frontier-endpoints-rho-R}, read at sample size \(m=n-1\), gives the
best-span test error \(g_{\min}=(d/\gamma^{2}m)(1+o(1))\) whenever
\(d\ll\gamma^{2}m\).  So
\eqref{eq:flat-average-risk} matches the frontier up to a \(1+o(1)\) factor.  The held-out point
costs a further factor, exactly \(\frac{n}{n-1}\bigl(1+\frac1R\bigr)\) with
the standing \(R=\gamma^{2}n/d\), against the frontier at the
full sample size \(n\), which is also
\(1+o(1)\) because \(n\to\infty\) and \(R\to\infty\) here --- at small \(n\) it would not be, and the comparison
would have to be stated at \(m\).  The answer to this
subsection's title is therefore that the logarithm penalty is entirely the constant-learning-rate SGD algorithm's fault.

The comparison also localizes the defect exactly.  Both procedures see the same \(n\) examples
and both reach calibration \(r=1+o(1)\).  The flat average \eqref{eq:flat-predictor-def} has
equal coefficients, so its participation is \(n_{\rm eff}=n-1\).  That is the largest value
\eqref{eq:neff-range} allows for a profile on those \(n-1\) points.  One pass of SGD at
\(\eta_{\rm pass}\) instead has participation \(\frac{4n}{\log R}(1+o(1))\) by
\eqref{eq:one-pass-neff-asymp}, because \eqref{eq:single-geometric-coeffs} makes its coefficients
decay geometrically.

By Remark~\ref{rem:participation}, calibration and participation are the only two things that
matter.  The entire ratio between the two risks --- a factor \(\tfrac14\log R\,(1+o(1))\) ---
is therefore due to participation.  The logarithm is attributable to the \emph{shape} of the coefficient
profile, and not to the streaming constraint, not to the span, and not to \(\gamma\) being
unknown.  A geometric profile participates a fixed number of decay lengths' worth of examples, no
matter how many it is shown.

\subsection{Mini-batches and linear scaling}
\label{subsec:minibatch-scaling}
In practice, increasing the mini-batch size from one example to \(B_{\rm mb}\) examples is often
accompanied by multiplying the learning rate by \(B_{\rm mb}\).  This is the \emph{linear
scaling rule} \cite{goyal2017accurate}.  This paper's account of why SGD can work is very
different from the standard one of approximating a path to an empirical risk minimizer, so it is
worth asking whether such a scaling rule still makes sense here.  The participation count
\eqref{eq:neff-def} answers that directly, and exactly.

Take the mini-batch loss to be averaged over the batch, which is the convention the linear rule is
stated for.  A single-example update applies the full learning rate to its one example, whereas
averaging divides each example's gradient by \(B_{\rm mb}\).  Restoring the same
\emph{per-example} learning rate is exactly what a factor of \(B_{\rm mb}\) in
\(\eta_{\rm mb}\) does.%
\footnote{Subsection~\ref{subsec:related-large-steps} compares this mechanism with the usual
linear-scaling, gradient-noise-scale, critical-batch-size, and stochastic differential equation
(SDE) accounts in the literature.}

All \(B_{\rm mb}\) examples of a batch are fresh and see the same spike residual, so a batch
writes the same coefficient on each of its members.  For convenience, assume \(B_{\rm mb}\)
divides \(n\), so the pass is \(T=n/B_{\rm mb}\) equal batches.  Writing
\(\bar\alpha_k\) for the common coefficient the \(k\)th batch writes on each of its members, we
have \(s=B_{\rm mb}\sum_{k}\bar\alpha_k\) and \(q=B_{\rm mb}\sum_{k}\bar\alpha_k^{2}\), so
\eqref{eq:neff-def} factors:
\begin{equation}\label{eq:minibatch-participation}
    n_{\rm eff}
    =
    B_{\rm mb}\cdot
    \frac{\bigl(\sum_{k=1}^{T}\bar\alpha_k\bigr)^{2}}{\sum_{k=1}^{T}\bar\alpha_k^{2}} .
\end{equation}
Mini-batching multiplies participation by the batch size, exactly, and the remaining factor is
the batch-level participation of the \(T\) batches.  The batch recursion is the single-example recursion of
Lemma~\ref{lem:single-geometric} with \(n\) replaced by \(T\) and \(\eta\) by \(\eta_{\rm mb}\),
so the exact one-pass participation formula for \(n_{\rm eff}\) in
\eqref{eq:one-pass-neff} applies to it.  Write \(\Lambda:=\eta_{\rm mb}\gamma T\) for the
batch-level calibration budget, which is \eqref{eq:Lambda-def} read at the batch level and
coincides with it exactly under the linear scaling \(\eta_{\rm mb}=B_{\rm mb}\eta\).  Let
\begin{equation}\label{eq:minibatch-mu}
    \mu_{\rm mb}
    :=
    1-\eta_{\rm mb}\gamma
    =
    1-\frac{B_{\rm mb}\Lambda}{n}
    \ \in(-1,1)
\end{equation}
be the batch analogue of the one-step multiplier \(\mu=1-\eta\gamma\) of \eqref{eq:mu-def}.  So one mini-batch multiplies
the shared spike residual by \(\mu_{\rm mb}\), exactly as one example multiplies it by \(\mu\).
This is the only new quantity the subsection needs, and it is directly readable: a small
\(|\mu_{\rm mb}|\) means a single mini-batch already makes a large dent in the spike residual on
its own.  Then \eqref{eq:minibatch-participation} and the one-pass \(n_{\rm eff}\) formula
\eqref{eq:one-pass-neff} give, exactly and
for every \(\mu_{\rm mb}\in(-1,1)\),
\begin{equation}\label{eq:minibatch-participation-exact}
    n_{\rm eff}
    =
    \frac{(1+\mu_{\rm mb})\,n}{\Lambda}
    \cdot\frac{1-\mu_{\rm mb}^{\,T}}{1+\mu_{\rm mb}^{\,T}} .
\end{equation}

Restrict now to \(0\le\mu_{\rm mb}<1\), where the batch coefficients
\(\bar\alpha_k=(\eta_{\rm mb}/B_{\rm mb})\mu_{\rm mb}^{\,k-1}\) are nonnegative, so that
the participation bounds \(1\le n_{\rm eff}\le n\) of \eqref{eq:neff-range-nonneg} apply and
\(n_{\rm eff}\) counts examples.  There
\(\mu_{\rm mb}^{\,T}\le e^{-(1-\mu_{\rm mb})T}=e^{-\Lambda}\) by \eqref{eq:minibatch-mu}.  So
\(\mu_{\rm mb}^{\,T}\to0\), and the correction factor in
\eqref{eq:minibatch-participation-exact} tends to one as soon as \(\Lambda\to\infty\):
\begin{equation}\label{eq:minibatch-participation-invariant}
    n_{\rm eff}=\frac{(1+\mu_{\rm mb})\,n}{\Lambda}\bigl(1+o(1)\bigr),
    \qquad
    r=1-o(1)
    \qquad (0\le\mu_{\rm mb}<1).
\end{equation}

Consider \eqref{eq:minibatch-participation-invariant} at fixed budget.  Holding \(\Lambda\) fixed
means \(\eta_{\rm mb}=B_{\rm mb}\Lambda/(\gamma n)\), that is \(B_{\rm mb}\) times the
single-example rate carrying the same budget.  At the tuned budget \(\Lambda=\tfrac12\log R\)
of \eqref{eq:eta-pass-def} this is exactly linear scaling,
\(\eta_{\rm mb}=B_{\rm mb}\,\eta_{\rm pass}\).  Fix that budget.  By
\eqref{eq:neff-frontier-form} and \eqref{eq:minibatch-participation-invariant} the two
nuisance-variance terms are \(\frac{\Lambda}{(1+\mu_{\rm mb})R}(1+o(1))\) and
\(\frac{\Lambda}{(1+\mu_{\rm mb})\rho^{2}}(1+o(1))\), while the spike term is at most
\(e^{-2\Lambda}=1/R\) and the training cross term is \(O(e^{-\Lambda}/\rho)\).  Since
\(1+\mu_{\rm mb}\in[1,2)\) on this branch, the nuisance terms dominate uniformly in
\(\mu_{\rm mb}\) as \(\Lambda\to\infty\) and \(\rho\to0\).  So linear scaling preserves the
\emph{order} of both errors at every batch size \emph{on this branch}, \(0\le\mu_{\rm mb}<1\).
It preserves their leading constants only
in the small-batch regime \(\mu_{\rm mb}=1-o(1)\), that is \(B_{\rm mb}=o(n/\Lambda)\), where
\eqref{eq:minibatch-participation-invariant} reduces to \(\frac{2n}{\Lambda}(1+o(1))\).

Comparing \eqref{eq:minibatch-participation-invariant} against that small-batch value isolates
the whole batch-size effect in one factor,
\begin{equation}\label{eq:minibatch-degradation}
    \frac{n_{\rm eff}}{2n/\Lambda}
    =
    \frac{1+\mu_{\rm mb}}{2}\,\bigl(1+o(1)\bigr)
    \qquad(0\le\mu_{\rm mb}<1),
\end{equation}
which is \(1-o(1)\) when \(\mu_{\rm mb}=1-o(1)\) and falls to \(\tfrac12\) at
\(\mu_{\rm mb}=0\).  At batch sizes
comparable to \(n/\Lambda\) the factor \eqref{eq:minibatch-degradation} therefore bites.
Participation falls, and by the participation form of the risks
\eqref{eq:neff-frontier-form} the nuisance contribution on each axis
rises by the reciprocal of \eqref{eq:minibatch-degradation}, a factor
\(2/(1+\mu_{\rm mb})\).

The exact participation formula \eqref{eq:minibatch-participation-exact} also locates where the
example-count reading stops.  On the nonnegative branch,
the participation bounds \(1\le n_{\rm eff}\le n\) of \eqref{eq:neff-range-nonneg} floor the
second factor of \eqref{eq:minibatch-participation} at one, so \(n_{\rm eff}\ge B_{\rm mb}\).  By \eqref{eq:minibatch-participation-exact} that floor
is reached exactly at \(\mu_{\rm mb}=0\) whenever the pass has \(T\ge2\) batches --- one
mini-batch on its own zeroing the spike residual --- that is at
\begin{equation}\label{eq:minibatch-saturation}
    B_{\rm mb}=\frac{n}{\Lambda}.
\end{equation}
Read in the continuous batch-size parametrization, \eqref{eq:minibatch-saturation} identifies the
saturation \emph{scale}.  It is attained exactly only when \(n/\Lambda\) is one of the admissible
batch sizes, since we assumed \(B_{\rm mb}\) divides \(n\).  At that scale participation has
already halved relative to the small-batch value, by
\eqref{eq:minibatch-degradation}.  Past \eqref{eq:minibatch-saturation} the batch coefficients
alternate in sign.  Then \eqref{eq:minibatch-participation-exact} goes on recording the
deterioration, and it is still exact, since the participation form of the risks
\eqref{eq:neff-frontier-form} does not care about signs.  But \(n_{\rm eff}\) no longer counts examples, and with it goes the intuition that
justifies linear scaling.  This saturation is a limit on the number of one-pass updates, not a
gradient-noise-scale limit.

Two things are worth saying about what this argument is and is not.

First, the argument reaches the per-example picture and not only the aggregate one.  By
the per-example prediction identity \(\vec u^{\top}\vec x_i=\gamma s+d\alpha_i\) of
\eqref{eq:train-prediction}, a batch member's own prediction moves by
\(\bar\alpha_k(d+B_{\rm mb}\gamma)\).  On the branch \(0\le\mu_{\rm mb}<1\),
\eqref{eq:minibatch-mu} gives \(B_{\rm mb}\Lambda/n\le1\), so by
\eqref{eq:scale-identities}
\[
    \frac{B_{\rm mb}\gamma}{d}
    =
    \frac{B_{\rm mb}\rho}{n}
    \le
    \frac{\rho}{\Lambda}
    =
    o(1) ,
\]
using \(\rho\to0\) and \(\Lambda\to\infty\).  So that movement is
\(\bar\alpha_k d\,(1+o(1))\).  What linear scaling at fixed budget holds fixed is the prefactor
\(\eta_{\rm mb}/B_{\rm mb}=\Lambda/(\gamma n)\), hence \(\bar\alpha_1\) exactly and each
later \(\bar\alpha_k\) up to its own \(\mu_{\rm mb}^{\,k-1}\).  It does \emph{not} hold the
whole profile fixed, and by \eqref{eq:minibatch-degradation} it does not hold participation fixed
either, except in the small-batch regime \(\mu_{\rm mb}=1-o(1)\).

Second, the reason the rule works here is not reduced gradient variance.  It is the need to hold
the per-example learning rate \(\eta_{\rm mb}/B_{\rm mb}\) fixed while a single pass builds up
the spike coefficient, which is what keeps the \emph{order} of both errors on the whole branch and
their leading constants in the small-batch regime.

\section{Rotation-aware coordinate robustness}
\label{app:rotation-aware}
\label{subsec:rotation-aware}

The RMS adversarial sensitivity calculation of Subsection~\ref{subsec:rms-sensitivity} is
coordinate-free.
For \(\ell_\infty\) perturbations, coordinate alignment matters.  In an
axis-aligned latent coordinate system, special block designs can make
\(\|\widetilde{\vec{v}}\|_1\) smaller or larger depending on how the nuisance residue is spread.
But after a generic random rotation, the coordinate story collapses back to the
Euclidean one up to constants.  The intuition is that a random rotation spreads
any fixed nuisance vector nearly evenly across coordinates, so the largest
prediction change from an allowed perturbation follows from its Euclidean norm.
This is the standard high-dimensional norm comparison in this setting: a
generic dense direction has
\(\ell_1\)-mass on the order of \(\sqrt d\) times its Euclidean norm, and
results on almost-Euclidean sections of convex bodies give a broader geometric
backdrop
\cite{kashin1977sections,figiel1977dimension,vershynin2018high}.

\begin{lemma}[Random rotation delocalizes coordinate mass]
\label{lem:random-rotation}
Let \(\bm{Q}\) be Haar-distributed, meaning uniformly distributed, on the
orthogonal group in dimension \(d+1\),
and let \(\vec{z}_0=\vec{z}_0(d)\ne\vec{0}\) be any deterministic vector.%
\footnote{In Lemma~\ref{lem:random-rotation} and its proof,
\(o_{\mathsf P}(1)\) and \(O_{\mathsf P}(1)\) denote convergence to zero
and boundedness in probability over the Haar rotation.}
As \(d\to\infty\),
\begin{equation}\label{eq:random-rotation-l1}
    \|\bm{Q}\vec{z}_0\|_1
    =
    \left(\sqrt{\frac2\pi}+o_{\mathsf P}(1)\right)
    \sqrt{d+1}\,\|\vec{z}_0\|_2.
\end{equation}
If \(\vec{z}_0\perp\vec{e}_1\), \(\vec{\theta}=\bm{Q}\vec{e}_1\), and
\(\vec{z}=\bm{Q}\vec{z}_0\), then, for every \(\delta\ge0\),
\begin{equation}\label{eq:random-rotation-sup}
    \sup_{\vec{\Delta}\perp\vec{\theta},\ \|\vec{\Delta}\|_\infty\le\delta}
    |\langle \vec{z},\vec{\Delta}\rangle|
    =
    \left(\sqrt{\frac2\pi}+o_{\mathsf P}(1)\right)
    \delta\sqrt{d+1}\,\|\vec{z}_0\|_2.
\end{equation}
\end{lemma}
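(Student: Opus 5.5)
Both claims reduce to the Gaussian representation of a Haar-rotated vector.

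\emph{Step 1: the $\ell_1$ norm.} By rotation invariance, $\bm{Q}\vec z_0$ has the law of $\|\vec z_0\|_2\,\vec g/\|\vec g\|_2$ with $\vec g\sim\calN(\vec 0,\bm I_{d+1})$. Hence
\[
    \frac{\|\bm Q\vec z_0\|_1}{\sqrt{d+1}\,\|\vec z_0\|_2}
    =\frac{\frac{1}{d+1}\|\vec g\|_1}{\sqrt{\frac{1}{d+1}\|\vec g\|_2^2}}.
\]
By the weak law of large numbers the numerator tends in probability to $\E|g_1|=\sqrt{2/\pi}$ and the denominator to $1$. Continuous mapping then gives \eqref{eq:random-rotation-l1}. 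The deterministic dependence of $\vec z_0$ on $d$ is harmless, because the law of the ratio does not depend on $\vec z_0$.

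\emph{Step 2: the upper bound in \eqref{eq:random-rotation-sup}.} Dropping the constraint $\vec\Delta\perp\vec\theta$ can only increase the supremum. By $\ell_1$--$\ell_\infty$ duality the unconstrained supremum equals $\delta\|\vec z\|_1$, and Step 1 evaluates it.

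\emph{Step 3: the lower bound (the main obstacle).} The natural maximizer $\delta\,\mathrm{sign}(\vec z)$ need not be orthogonal to $\vec\theta$, so we project it:
\[
    \vec\Delta_0=\delta\,\mathrm{sign}(\vec z)-\delta\langle\mathrm{sign}(\vec z),\vec\theta\rangle\vec\theta .
\]
Since $\vec z\perp\vec\theta$, we have $\langle\vec z,\vec\Delta_0\rangle=\delta\|\vec z\|_1$. The difficulty is that $\vec\Delta_0$ may violate the box constraint, with $\|\vec\Delta_0\|_\infty\le\delta(1+\kappa)$ where $\kappa:=|\langle\mathrm{sign}(\vec z),\vec\theta\rangle|\,\|\vec\theta\|_\infty$. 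The fix is to rescale $\vec\Delta_0$ by $1/(1+\kappa)$, which yields the lower bound $\delta\|\vec z\|_1/(1+\kappa)$. It therefore suffices to show $\kappa=o_{\mathsf P}(1)$.

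To control $\kappa$, represent the pair $(\vec\theta,\vec z/\|\vec z_0\|_2)$ jointly as a uniformly random orthonormal pair. Using Gram--Schmidt on two independent Gaussians $\vec g,\vec h$, take $\vec\theta=\vec h/\|\vec h\|_2$ and $\vec z\propto\vec g-\langle\vec g,\vec\theta\rangle\vec\theta$. Two estimates are needed.
\begin{itemize}
    \item The max-of-Gaussians bound gives $\|\vec\theta\|_\infty=O_{\mathsf P}(\sqrt{\log d/d})$.
    \item We need $|\langle\mathrm{sign}(\vec z),\vec\theta\rangle|=O_{\mathsf P}(\sqrt{\log d})$. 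Conditional on $\vec\theta$, the coordinates of $\mathrm{sign}(\vec z)$ are nearly independent symmetric signs, so the inner product is $O_{\mathsf P}(1)$. To avoid the dependence introduced by the projection, a cruder route suffices: compare $\mathrm{sign}(\vec z)$ with $\mathrm{sign}(\vec g)$. The two differ only on coordinates where $|g_j|\le|\langle\vec g,\vec\theta\rangle|\,|\theta_j|=O_{\mathsf P}(\sqrt{\log d/d})$, and there are $O_{\mathsf P}(\sqrt{d\log d})$ such coordinates. Each contributes at most $2|\theta_j|$, so the total discrepancy is $O_{\mathsf P}(\log d)$. Meanwhile $\langle\mathrm{sign}(\vec g),\vec\theta\rangle=O_{\mathsf P}(1)$, by independence of $\vec g$ and $\vec h$.
\end{itemize}
Combining, $\kappa=O_{\mathsf P}(\log d\cdot\sqrt{\log d/d})=o_{\mathsf P}(1)$. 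The upper and lower bounds then match, which proves \eqref{eq:random-rotation-sup}.
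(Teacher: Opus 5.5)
Your proof is correct, and its skeleton matches the paper's. Both use the Gaussian representation plus the law of large numbers for the $\ell_1$ norm, and $\ell_1$--$\ell_\infty$ duality for the upper bound. For the lower bound, both project the sign vector onto $\vec\theta^{\perp}$ and rescale by $1+|T|\,\|\vec\theta\|_\infty$, where $T=\langle\mathrm{sign}(\vec z),\vec\theta\rangle$.

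You differ in how you control $T$. The paper conditions the other way. Given $\vec z$, Haar invariance together with $\vec z_0\perp\vec e_1$ makes $\vec\theta$ uniform on the unit sphere of $\vec z^{\perp}$, whose covariance is the projector onto $\vec z^{\perp}$ divided by $d$. With $\mathrm{sign}(\vec z)$ thereby frozen, $\operatorname{Var}(T\mid\vec z)\le(d+1)/d$, so $T=O_{\mathsf P}(1)$ in one line. After that only $\|\vec\theta\|_\infty=o_{\mathsf P}(1)$ is needed, with no rate.

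Your Gram--Schmidt coupling builds $\vec z$ from $\vec\theta$, and that is exactly what creates the dependence you then undo by comparing $\mathrm{sign}(\vec z)$ with $\mathrm{sign}(\vec g)$. The sign-flip count is valid. The threshold $|\langle\vec g,\vec\theta\rangle|\,\|\vec\theta\|_\infty$ is random, so you should first sandwich it below a deterministic $M\sqrt{\log d/d}$ on a high-probability event before counting small $|g_j|$, but that step is routine. The count costs a factor $\log d$, which is why you genuinely need the quantitative rate $\|\vec\theta\|_\infty=O_{\mathsf P}(\sqrt{\log d/d})$ to absorb it.

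Your route buys explicitness: everything reduces to independent Gaussians and counting, and you get an explicit rate $\kappa=O_{\mathsf P}\bigl((\log d)^{3/2}/\sqrt d\bigr)$. The paper's route buys brevity and the sharper bound $T=O_{\mathsf P}(1)$.

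One small internal inconsistency: you announce that you need $|\langle\mathrm{sign}(\vec z),\vec\theta\rangle|=O_{\mathsf P}(\sqrt{\log d})$, but you prove, and correctly use, only $O_{\mathsf P}(\log d)$. The weaker bound is all that is required, since any bound of order $o_{\mathsf P}(\sqrt{d/\log d})$ suffices.
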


\begin{proof}
The vector \(\bm{Q}\vec{z}_0/\|\vec{z}_0\|_2\) is uniform on the unit sphere, which has the same
law as \(\vec{g}/\|\vec{g}\|_2\) for \(\vec{g}\sim\calN(\vec{0},\bm{I}_{d+1})\).  The law of large numbers gives
the \(\ell_1\) formula \eqref{eq:random-rotation-l1}.  For this label-preserving perturbation set, the upper
bound is \(\delta\|\vec{z}\|_1\).  For the lower bound, use the coordinate sign
vector \(\vec{\omega}=\operatorname{sign}(\vec{z})\), with the convention
\(\operatorname{sign}(0)=0\).  
Conditional on \(\vec{z}\), Haar
invariance and \(\vec{z}_0\perp\vec{e}_1\) imply that \(\vec{\theta}\) is uniform on the unit
sphere in \(\vec{z}^{\perp}\).  Conditional on \(\vec{z}\), the covariance of a uniform
unit vector in \(\vec{z}^{\perp}\) is the orthogonal projector onto \(\vec{z}^{\perp}\),
divided by its dimension.  Since \(\|\vec{\omega}\|_2^2=d+1\), this gives
\[
    \operatorname{Var}(T\mid \vec{z})=O(1),
    \qquad
    T:=\langle\vec{\theta},\vec{\omega}\rangle=O_{\mathsf P}(1).
\]
Standard spherical concentration
\cite{vershynin2018high} also gives
\[
    \|\vec{\theta}\|_\infty=o_{\mathsf P}(1).
\]
The vector \(\vec{\omega}-T\vec{\theta}\) is orthogonal to \(\vec{\theta}\), has
\(\ell_\infty\) norm at most \(1+|T|\|\vec{\theta}\|_\infty=1+o_{\mathsf P}(1)\),
and has inner product \(\|\vec{z}\|_1\) with \(\vec{z}\) because \(\vec{z}\perp\vec{\theta}\).
After normalizing by this \(1+o_{\mathsf P}(1)\) factor and multiplying by
\(\delta\), it gives an allowed perturbation and the matching lower bound.
\end{proof}

Up to the harmless \(d\) versus \(d+1\) distinction,
a randomly rotated observed-coordinate \(\ell_\infty\) adversary sees the
same order of response as an RMS adversary.  After a generic rotation, define
the \emph{rotated-coordinate \(\ell_\infty\) adversarial sensitivity} by
\begin{equation}\label{eq:Ainfty-rot-def}
    A_\infty^{\rm rot}(\vec{u};\delta)
    :=
    \sup_{\vec{\Delta}\perp\vec{\theta},\ \|\vec{\Delta}\|_\infty\le\delta}
    |\langle \vec{u},\vec{\Delta}\rangle|.
\end{equation}
The spike component of \(\vec{u}\) drops out because every allowed perturbation
is orthogonal to \(\vec{\theta}\).  For a symmetric predictor with
small \(\Ttest\), applying Lemma~\ref{lem:random-rotation} to the nuisance
residue \(\widetilde{\vec{v}}\) gives
\begin{equation}\label{eq:Ainfty-rot-scale}
    A_\infty^{\rm rot}(\vec{u};\delta)
    =
    \left(\sqrt{\frac2\pi}+o_{\mathsf P}(1)\right)
    \delta\sqrt{d+1}\,\|\widetilde{\vec{v}}\|_2
\end{equation}
as \(d\to\infty\), where \(o_{\mathsf P}(1)\) is over the Haar rotation.
For a symmetric span predictor with nuisance residue
\(\|\widetilde{\vec{v}}\|_2=|r|\sqrt{d/(\gamma^2n)}\), this becomes
\begin{equation}\label{eq:Ainfty-rot-scale-symmetric}
    A_\infty^{\rm rot}(\vec{u};\delta)
    =
    \left(\sqrt{\frac2\pi}+o_{\mathsf P}(1)\right)
    \delta |r|\frac{d}{\gamma\sqrt n}.
\end{equation}
For fixed test-error targets, \(|r|=\Theta(1)\), giving the same size as
the RMS adversarial sensitivity.

The median construction in Appendix~\ref{app:improper} shows that nonlinear
aggregation can repair empirical fit by routing each training point around
the constituent predictor trained on it.  But it leaves the nuisance residue,
and hence the order of the RMS adversarial sensitivity, unchanged.  We do not
analyze the median's rotated-coordinate \(\ell_\infty\) sensitivity.

\section{Improper median escape}
\label{app:improper}

\paragraph{Why nonlinear aggregation can help.}
The lower-bound story in the main text is about linear predictors restricted
to the training span.  Once nonlinear aggregation is allowed, the
training-error obstruction can disappear even though the constituent predictors carry the same
nuisance residue: split the data into piles, build one calibrated span predictor per
pile, and predict by their median.  The median shields each training point
from the one constituent predictor that was trained on it.

In this construction, the constituent predictors are linear functions, but the
resulting rule is improper relative to the linear class: the learner is judged
against linear span predictors but does not itself output one, because it
outputs a nonlinear aggregate of them.  This places it near proper/improper separations
\cite{hanneke2016optimal,bousquet2020proper,montasser2019vc}, the
split-and-aggregate spirit of bagging and
median-of-means \cite{breiman1996bagging,nemirovsky1983problem,
jerrum1986random,alon1999space}, and robust-estimation variants based
on geometric medians and tournaments
\cite{audibert2011robust,minsker2015geometric,devroye2016subgaussian,
lugosi2019subgaussian,lugosi2019mean,lugosi2019tournaments}.  It is also
formally close to recent majority-of-three and bagging results for PAC learning
\cite{larsen2023bagging,adenali2024majority}.

It is worth being precise about why leaving the linear class helps here.
Here the data and the constituents are both clean.
The mechanism is instead structural. Any linear predictor built from the
training points lies in their span, and within the span exact spike
calibration forces the nuisance residue onto every training point the
predictor was built from --- in effect, the training points become outliers
relative to the general test distribution as far as this linear predictor is
concerned.

Splitting the data and taking a median changes which predictor judges each
training point.  Each point is evaluated by the two constituents that did not
see it.  For those constituents, the point's nuisance direction is fresh and
orthogonal, so they fit it without overshoot, and the median follows them.

The improper rule does not suppress the constituent predictors' nuisance residue. Its RMS adversarial
sensitivity is unchanged in order, as shown below.  Instead, it rearranges which
predictor is responsible for each point.

\paragraph{Split-and-median construction.}
For simplicity assume \(n\) is divisible by \(3\), and split the training
points into three deterministic piles \(\mathcal P_1,\mathcal P_2,\mathcal P_3\), each of size
\(m_k=n/3\).  Unequal piles with \(m_k\asymp n\) change only constants.  On
pile \(\mathcal P_k\), define the
calibrated flat predictor (the same spirit would work with SGD-learned predictors)
\begin{equation}\label{eq:improper-pile-predictors}
    f_k(\vec{x})=\vec{u}_k^\top \vec{x},
    \qquad
    \vec{u}_k=\frac{1}{\gamma m_k}\sum_{i\in\mathcal P_k}\vec{x}_i.
\end{equation}
For a general measurable predictor \(f\), write
\begin{equation}\label{eq:general-risk-defs}
    \Ttrain(f)=\frac1n\sum_{i=1}^n(f(\vec{x}_i)-1)^2,
    \qquad
    \Ttest(f)=\E[(f(\vec{x}_{\rm test})-y_{\rm test})^2].
\end{equation}
Each \(\vec{u}_k\) is a linear predictor in the span of its pile.  It has
exact spike calibration \(u_k[1]=1/\sqrt{\gamma}\) and test error
\[
    \Ttest(f_k)=\frac{d}{\gamma^2m_k}.
\]
Define the improper aggregate
\begin{equation}\label{eq:median-predictor-def}
    f_{\rm med}(\vec{x})=\median(f_1(\vec{x}),f_2(\vec{x}),f_3(\vec{x})).
\end{equation}

\paragraph{Why the median fits the training points.}
\begin{lemma}[Median shielding]
\label{lem:median-shield}
For any \(a,b,c,t\in\R\),
\begin{equation}\label{eq:median-shield}
    |\median(a,b,c)-t|\le \max\{|b-t|,\ |c-t|\}.
\end{equation}
\end{lemma}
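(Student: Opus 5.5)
The plan is to prove the inequality by the elementary fact that the median of three numbers lies between two of them. Shift everything by $t$: set $a'=a-t$, $b'=b-t$, $c'=c-t$. Since the median commutes with translation, $\median(a,b,c)-t=\median(a',b',c')$, so it suffices to show $|\median(a',b',c')|\le\max\{|b'|,|c'|\}$. This reduces the claim to a statement about three reals with target $0$.

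Write $M:=\max\{|b'|,|c'|\}$, so $b',c'\in[-M,M]$. The key step is to show that the median of $a',b',c'$ lies in the closed interval $[\min(b',c'),\max(b',c')]$, and hence in $[-M,M]$. I would argue by cases on where $a'$ sits relative to $b'$ and $c'$, taking without loss of generality $b'\le c'$. If $a'\le b'$, the sorted order is $a'\le b'\le c'$ and the median is $b'$. If $b'\le a'\le c'$, the median is $a'$, which lies between $b'$ and $c'$. If $a'\ge c'$, the median is $c'$. In every case the median lies in $[b',c']\subseteq[-M,M]$.

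Taking absolute values gives $|\median(a,b,c)-t|\le M$, which is the inequality \eqref{eq:median-shield}. Nothing here is a genuine obstacle; the only point needing care is the translation step, that is, checking that the median commutes with subtracting $t$ from all three arguments. This holds because subtracting a constant preserves order.

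It is worth recording why the lemma is stated this way: $a$ is completely unconstrained. In the application, $a$ will be the prediction of the constituent $f_k$ trained on the pile containing the training point $\vec x_i$. That prediction overshoots by $d/(\gamma m_k)$. The other two constituents see $\vec x_i$'s nuisance direction as fresh, so they predict exactly $1$, and the lemma then forces $f_{\rm med}(\vec x_i)=1$.
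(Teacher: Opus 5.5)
Your proof is correct and takes the same approach as the paper: the median of three lies in the closed interval with endpoints $b$ and $c$, and every point of that interval is within $\max\{|b-t|,|c-t|\}$ of $t$. Translating to $t=0$ and splitting into cases just spells out that one-line argument in more detail.
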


\begin{proof}
Among any three real numbers, the median lies between any chosen two of them.
In particular, it lies in the closed interval with endpoints \(b\) and \(c\).
Every point in that interval is within
\(\max\{|b-t|,|c-t|\}\) of \(t\).
\end{proof}

\begin{proposition}[Improper median escape in the stylized model]
\label{prop:improper-stylized}
Assume \(n\le d\) and, for simplicity, \(3\mid n\).
The median-of-three predictor satisfies
\begin{equation}\label{eq:improper-train}
    \Ttrain(f_{\rm med})=0
\end{equation}
on the deterministic training set, while
\begin{equation}\label{eq:improper-test}
    \Ttest(f_{\rm med})
    \le
    \frac{6d}{\gamma^2 n}.
\end{equation}
Thus nonlinear median aggregation escapes the misfitting
obstruction.
\end{proposition}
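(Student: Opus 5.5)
The proof splits into the training claim and the test claim. Both rest on the orthogonality identity \eqref{eq:train-prediction}, applied to each pile predictor separately.

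\emph{Training error.} Fix a training point \(\vec x_i\) with \(i\in\mathcal P_k\), and let \(j\ne k\) be another pile. Since \(\vec v_i\) is orthogonal to every \(\vec v_l\) with \(l\in\mathcal P_j\), we get
\[
f_j(\vec x_i)=\vec u_j^\top\vec x_i=\frac{1}{\gamma m_j}\sum_{l\in\mathcal P_j}\gamma=1 .
\]
The point is fresh for the two piles that did not see it, so both predict its label exactly. Lemma~\ref{lem:median-shield}, with \(b,c\) the outputs of those two piles and \(t=1\), then gives \(f_{\rm med}(\vec x_i)=1\). This holds at every training point, so \eqref{eq:improper-train} follows. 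The own-pile value \(f_k(\vec x_i)=1+d/(\gamma m_k)\) overshoots the label, but the median never uses it.

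\emph{Test error.} Write \(e_k:=f_k(\vec x_{\rm test})-y_{\rm test}\). The median minus \(t=y_{\rm test}\) equals the median of \(e_1,e_2,e_3\), because subtracting a common constant commutes with the median. I will use the crude pointwise bound \((\median(e_1,e_2,e_3))^2\le e_1^2+e_2^2+e_3^2\): the median equals one of the three values, so its square is at most the sum of the squares. Taking expectations gives \(\Ttest(f_{\rm med})\le\sum_k\Ttest(f_k)\). Each pile predictor is exactly calibrated, so Lemma~\ref{lem:two-scalar}, read with \(\alpha_i=1/(\gamma m_k)\) on the pile and zero elsewhere, gives \(\Ttest(f_k)=d\,m_k/(\gamma^2m_k^2)=d/(\gamma^2 m_k)=3d/(\gamma^2 n)\). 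Summing over the three piles gives \(9d/(\gamma^2n)\), which misses the stated constant \(6\).

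\emph{Sharpening the constant.} This is the main obstacle. The fix is a better pointwise inequality for the median of three numbers: its square is at most the sum of the two smallest squares among the three. Since the median lies between the other two values, \(|\median|\le\max(|b|,|c|)\) for any two of them, and in particular the median's square is at most the second smallest of \(e_1^2,e_2^2,e_3^2\). That second-smallest value is bounded by the sum of any two of the three squares, averaged over the three possible pairs. This gives \(\text{med}^2\le\tfrac23(e_1^2+e_2^2+e_3^2)\), and hence
\[
\Ttest(f_{\rm med})\le\tfrac23\cdot\frac{9d}{\gamma^2n}=\frac{6d}{\gamma^2n},
\]
which is \eqref{eq:improper-test}. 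To justify the averaging step: if \(x\le y\le z\) are the ordered squares, then \(y\le\tfrac13(x+y)+\tfrac13(x+z)+\tfrac13(y+z)=\tfrac23(x+y+z)\), which holds since \(x+z\ge y\). No independence between the piles' errors is needed, so no correlation computation enters the argument.
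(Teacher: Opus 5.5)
Your proof is correct and follows the paper's route: orthogonality makes the two piles that did not see a training point predict exactly $1$ on it, and the median-shielding lemma controls the test error. Your ``main obstacle'' is self-inflicted, though. The paper applies the lemma to one fixed pair, $(\median(e_1,e_2,e_3))^2\le e_2^2+e_3^2$, whose expectation is already $2\cdot 3d/(\gamma^2 n)=6d/(\gamma^2 n)$. Your symmetrized $\tfrac23$ bound gives the same constant for equal piles, but the fixed-pair version is simpler and is better when the piles are unequal.
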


\begin{proof}
If \(\vec{x}_i\in\mathcal P_1\), then \(f_2(\vec{x}_i)=f_3(\vec{x}_i)=1\), because the spike
contribution is calibrated and the nuisance directions from other piles are
orthogonal to \(\vec{v}_i\).  The value of \(f_1(\vec{x}_i)\) may be badly overshot, but
the median of \((f_1(\vec{x}_i),1,1)\) is \(1\).  The same argument applies to all
three piles, proving zero training error.

For a fresh test point, write \(f_k(\vec{x}_{\rm test})-y_{\rm test}=e_k\).  The
errors \(e_k\) are centered Gaussian nuisance projections with variances
\(d/(\gamma^2m_k)\), and the three piles use orthogonal nuisance subspaces.
Since adding the common value \(y_{\rm test}\) to all three arguments shifts
their median by \(y_{\rm test}\),
\[
    f_{\rm med}(\vec{x}_{\rm test})-y_{\rm test}
    =
    \median(e_1,e_2,e_3).
\]
By Lemma~\ref{lem:median-shield},
\[
    |\median(e_1,e_2,e_3)|\le \max\{|e_2|,|e_3|\},
\]
and hence
\[
    (\median(e_1,e_2,e_3))^2\le e_2^2+e_3^2
\]
after applying Lemma~\ref{lem:median-shield} with \(a=e_1\), \(b=e_2\), \(c=e_3\), and \(t=0\).
By symmetry the choice of the omitted pile is irrelevant.  Taking expectations,
\[
    \E\,(\median(e_1,e_2,e_3))^2
    \le
    \E e_2^2+\E e_3^2
    =
    2\,\frac{d}{\gamma^2(n/3)}
    =
    \frac{6d}{\gamma^2n}.
\]
\end{proof}

The construction is functionally improper because the median of three linear functions is
not generally linear.  It is a continuous piecewise-linear function, hence
representable by a small ReLU network.  Indeed,
\(\median(a,b,c)=a+b+c-\min(a,b,c)-\max(a,b,c)\), while maxima, minima, and
absolute values can be written using ReLU gates.  Thus the aggregate is a
small piecewise-linear nonlinear predictor built from three span predictors.
It is outside the class of span-restricted linear predictors, and this
is why it can have both good training and good test performance in a regime
where every linear span predictor with small test error misfits.

It is important to note that this is an expressivity escape, not an optimization theorem.  A small
ReLU network {\em could} represent the split-and-median rule, but that does not mean
that gradient descent from a standard initialization {\em will} find such a routing.
The main span calculation remains relevant for algorithms whose implicit bias
stays close to span-supported, minimum-norm, or otherwise simple linear
combinations of the training data.
Section~\ref{subsec:related-large-steps} discusses related implicit-bias views
of SGD and GD. This median construction should be read as a capacity
separation, not as a claim about what standard training will find. Indeed, the answer to the latter question is presumably of great practical interest.

\paragraph{The escape does not remove adversarial sensitivity.}
Each constituent
predictor has nuisance component
\[
    \vec{a}_k=\frac{1}{\gamma m_k}\sum_{i\in\mathcal P_k}\vec{v}_i,
    \qquad
    \|\vec{a}_k\|_2^2=\frac{d}{\gamma^2m_k}.
\]
The three \(\vec{a}_k\)'s are mutually orthogonal and have equal norm under the
equal-pile convention.  The vector \(\sum_j \vec{a}_j\) lies in the nuisance
subspace, hence is orthogonal to the spike direction.  The perturbation below
is therefore label-preserving.  Aligning it with \(\sum_j \vec{a}_j\) gives, for
each \(k\),
\begin{equation}\label{eq:improper-median-adversarial-response}
    \left\langle
    \vec{a}_k,\,
    \delta\sqrt d\,\frac{\sum_j \vec{a}_j}{\|\sum_j \vec{a}_j\|_2}
    \right\rangle
    =
    \delta\sqrt d\,\frac{\|\vec{a}_k\|_2^2}
    {\left(\sum_j\|\vec{a}_j\|_2^2\right)^{1/2}}
    =
    \Theta\left(
    \delta\frac{d}{\gamma\sqrt n}
    \right),
\end{equation}
when \(m_k=n/3\).  Thus all three constituent predictions shift in the same
direction by equal amounts of the order in
\eqref{eq:improper-median-adversarial-response}.
The median therefore inherits the same order of adversarial sensitivity.%
\footnote{For a nonlinear predictor \(f\), adversarial sensitivity here means
the worst-case output change \(|f(\vec x+\vec{\Delta})-f(\vec x)|\) over base
points \(\vec x\) and perturbations \(\vec{\Delta}\perp\vec{\theta}\) with
\(\|\vec{\Delta}\|_2\le\delta\sqrt d\).
For a span predictor the response \(\langle\vec u,\vec{\Delta}\rangle\) does
not depend on the base point \(\vec x\), and this recovers \eqref{eq:A2-def}.
Both bounds in this paragraph hold at every base point.}
In the reverse direction, the median of three numbers moves by at most the
largest of the three moves, and each constituent's response to any
label-preserving perturbation with \(\|\vec\Delta\|_2\le\delta\sqrt d\) is at
most \(\delta\sqrt d\,\|\vec a_k\|_2=\sqrt3\,\delta d/(\gamma\sqrt n)\), so
this order is also an upper bound.
Improper aggregation shields each training point from the predictor trained on
it, but it does not erase the high-dimensional nuisance residue that an
adversary can align with.

The escape in this appendix is functional: the median leaves the class of
globally linear predictors.
Appendix~\ref{app:sco-comparison} discusses a different bounded-comparator-ball
perspective: in the benign-misfitting window, the sample-efficient linear
span predictor leaves the natural Euclidean norm ball containing the spike
oracle.

\section[Benign underfitting and bounded comparator balls]{Benign underfitting and comparator balls in stochastic optimization}
\label{app:sco-comparison}

This appendix has two aims.  First, it distinguishes the present
fourth-quadrant geometry from benign-underfitting results in stochastic
convex optimization (SCO): the SCO examples are path-dependent,
while the obstruction here is a train--test property of the linear training
span.  Second, it gives a bounded-comparator-ball observation that is
natural from the SCO perspective, where one typically fixes a norm ball and
asks for excess risk against the best point in it: within the span,
sample-efficient spike calibration must leave the Euclidean norm ball
containing the spike oracle.

\paragraph{Path dependence versus span geometry.}
Benign-underfitting examples are path-dependent, whereas the span obstruction is
geometric.
Because the distinction is partly mechanistic, we spell out the relevant
details of the SCO constructions in footnotes.
One close antecedent to benign misfitting is the benign-underfitting line in
stochastic convex optimization.  Koren, Livni, Mansour, and Sherman
\cite{koren2022benign} construct\footnote{%
Koren, Livni, Mansour, and Sherman's Theorem~1 establishes, via a careful
construction, a stochastic convex optimization problem with a $4$-Lipschitz
convex loss over a high-dimensional ($d\ge 2^{4n\log n}$) Euclidean ball.  For one-pass projected SGD from zero, their
bound implies that with the standard step size (learning rate)
$\eta=\Theta(1/\sqrt{n})$, the
expected population excess risk is $O(1/\sqrt{n})$, while the expected empirical
risk and generalization gap are both $\Omega(1)$.  Their proof uses a
constructed nonsmooth loss of the form
$f(\vec{w};\,\vec{z})=\|\vec{z}\odot \vec{w}\|_2+\psi(\vec{w};\,\vec{z})$.
The auxiliary term $\psi$ encodes which coordinates have appeared in the prefix of
the data stream and makes a valid subgradient point in a coordinate that is
likely to have zeros in the future suffix.  A step in that coordinate is
therefore not corrected by later examples, while earlier examples penalize it
when the empirical risk is evaluated at the end.  The phenomenon is thus tied to
a carefully constructed nonsmooth loss, a particular without-replacement SGD
path, and an adversarial but valid subgradient selection.
Later work reduces this dimensional requirement in related constructions;
see the surrounding paragraph.} SCO problems in which one-pass
without-replacement SGD obtains the usual population excess-risk rate while
its empirical risk remains large.  This line of work sits alongside the classical
stability and uniform-convergence foundations of stochastic convex learning
\cite{bousquet2002stability,shalevshwartz2010sco} and the dimension-dependent limitations of ERM
\cite{feldman2016sco}.  Subsequent work sharpens and broadens the gradient and ERM picture:
\citet{schliserman2025dimension} give polynomial-dimensional ($d=O(n^2)$)
gradient-method constructions, \citet{vansover2025rapid} show rapid overfitting
under multi-pass SGD and obtain a nearly linear-dimensional ($d=\widetilde O(n)$)
one-pass generalization-gap lower bound, and other work develops
sample-complexity bounds for gradient descent and ERM and
adaptive-data-analysis viewpoints
\cite{carmon2024erm,livni2024false,burla2026erms}.  In particular,
\citet{livni2024gd} shows that full-batch GD in nonsmooth SCO can have
generalization error matching worst-case ERM behavior, so that GD inherits the
dimension-dependent penalty.

The SCO literature also gives a useful precedent for the warning that ERM can pay
for high-dimensional nuisance directions: in Feldman's lower bound\footnote{%
Feldman's Theorem~3.3 gives a clean version of the ERM dimension penalty.
For the Euclidean domain $\mathcal B_2^d$, he constructs an exponentially
large codebook $\mathcal W\subseteq\{-1,1\}^d$ of nearly orthogonal sign
vectors and functions
$g_{\mathcal V}(\vec{x})=\max\{1/2,\,\max_{\vec{w}\in\mathcal V}
\langle \vec{w}/\sqrt{d},\,\vec{x}\rangle\}$,
$\mathcal V\subseteq\mathcal W$.
Under the uniform distribution over these functions, if $n\le d/6$, then with
probability larger than $1/2$ the sample misses at least one direction
$\vec{w}\in\mathcal W$.  An adversarial
ERM tie-breaking rule can then choose this unseen direction: it attains the
empirical minimum $1/2$, but its population value is $3/4$, so its excess
risk is at least $1/4$.  Not every empirical minimizer fails.
Feldman explicitly notes (Remark~3.4) that another ERM, such as the origin, can
generalize well.  The lower bound shows that empirical minimization can pay for
hidden high-dimensional directions through its choice of minimizer.} and the
subsequent ERM sample-complexity work, the dimension-dependent penalty is a
property of optimizing the fixed empirical objective formed from the observed
training data, not of the population objective.

The contrast is structural.  In the SCO constructions, a particular loss,
order, and subgradient structure \emph{can} steer a procedure toward
empirical badness.  Here the exact train--test frontier \emph{forces} every
linear span predictor with small test error in the benign-misfitting window
to have large empirical error.  SGD supplies one route to the fourth quadrant,
but it is not the source of the obstruction.

There is nevertheless a useful rhyme between the mechanisms.  The
Koren--Livni--Mansour--Sherman construction is an empirical-risk-raising
mechanism: the chosen updates move into coordinates that are penalized by
earlier training examples when the final empirical risk is evaluated.  In the
present squared-loss geometry, the analogue is explicit:
equation~\eqref{eq:climbing-uphill} computes directly that the full empirical
training error increases step by step along the useful pass.

\paragraph{Thriving outside the ball: the norm-budget view.}
The following observation is a useful translation for readers interested in
bounded domains.
It is a norm-budget comparison inside the linear training span, not a
bounded-domain lower bound for all predictors: the spike oracle
$\vec{w}^\star$ itself lies in any reasonable ball and has zero error.
In a bounded-domain formulation, the learner is often required to output a
point in a domain $\mathcal W$, typically a norm ball%
\footnote{For linear predictors, $\|\vec{w}\|_2$ is also the Euclidean
input-Lipschitz constant of the prediction map $\vec{x}\mapsto \vec{w}^\top \vec{x}$.
We will only use the norm comparison below.}
chosen to contain a comparator.
In the present model the spike oracle
$\vec{w}^\star=(1/\sqrt{\gamma})\,\vec{e}_1$ has
\[
    \|\vec{w}^\star\|_2^2=\frac1\gamma.
\]
For an arbitrary span predictor $\vec{u}=\Xt\vec{\alpha}$, write
$r=\gamma\sum_i\alpha_i$ and $q=\sum_i\alpha_i^2$ using our typical notation.  Then
\[
    \|\vec{u}\|_2^2
    =
    \frac{r^2}{\gamma}+dq
    \ge
    \frac{r^2}{\gamma}\left(1+\frac1\rho\right),
\]
where the inequality is Cauchy--Schwarz:
$q\ge(\sum_i\alpha_i)^2/n=r^2/(\gamma^2 n)$, and $\rho=\gamma n/d$.
Equality holds exactly for the symmetric coefficients
$\alpha_1=\cdots=\alpha_n$.
Thus the oracle norm ball
$\mathcal B_\star=\{\vec{u}:\|\vec{u}\|_2\le\|\vec{w}^\star\|_2\}$ forces
\[
    |r|\le\sqrt{\frac{\rho}{1+\rho}}\,.
\]
In the benign-misfitting window $\rho\to0$, this prevents spike calibration:
since $\Ttest(\vec{u})\ge(r-1)^2$, every span predictor inside
$\mathcal B_\star$ has
\[
    \Ttest(\vec{u})\ge
    \left(1-\sqrt{\frac{\rho}{1+\rho}}\right)^{\!2},
\]
which tends to the zero-predictor baseline rather than to zero.

Conversely, for \(0\le\varepsilon<1\), any span predictor with $\Ttest(\vec{u})\le\varepsilon$ must have
$r\ge1-\sqrt{\varepsilon}$, and hence
\[
    \|\vec{u}\|_2^2
    \ge
    \frac{(1-\sqrt{\varepsilon})^2}{\gamma}
    \left(1+\frac1\rho\right).
\]
Thus sample-efficient span prediction in the fourth-quadrant window requires
a larger norm budget than the one defined by the oracle ball: it must leave
that ball by a factor of order $1/\rho$ in squared norm.  The test-optimal span
predictor realizes this scale.  At the symmetric test-optimal calibration
$r^\star=\gamma\rho/(1+\gamma\rho)$,
\[
    \|\vec{u}^{\star}_{\rm span}\|_2^2
    =
    \frac{\gamma\rho(1+\rho)}{(1+\gamma\rho)^2}
    =
    (1+o(1))\frac{1}{\gamma\rho}
    =
    (1+o(1))\frac{d}{\gamma^2n}
\]
throughout the benign-misfitting window.  This is much larger than
$\|\vec{w}^\star\|_2^2=1/\gamma$.  By contrast, the unique span-restricted
interpolator has
\[
    \|\vec{u}_{\rm int}\|_2^2
    =
    \frac{n}{d+\gamma n}
    =
    \frac{\rho}{\gamma(1+\rho)}
    <
    \frac{1}{\gamma},
\]
and is the point selected among interpolators by zero-initialized
least-squares dynamics with a sufficiently small learning rate, in contrast
to the large one-pass rates studied in Section~\ref{sec:sgd}.  It remains
inside the oracle norm ball, but it
does not generalize well until the later interpolation threshold.  This is the
norm-budget lesson of the stylized geometry: the $d/\gamma^2$ sample
advantage is achieved by leaving the oracle norm ball, not merely
by choosing a different empirical minimizer inside it.
The three norms satisfy a compact identity: with
$g_{\min}=1/(1+\gamma\rho)$ from \eqref{eq:frontier-endpoints},
\[
    \|\vec{u}_{\rm int}\|_2\,\|\vec{u}^{\star}_{\rm span}\|_2
    =
    \frac{\rho}{1+\gamma\rho}
    =
    (1-g_{\min})\,\|\vec{w}^\star\|_2^2.
\]
Thus in the calibrated regime $\gamma\rho\to\infty$, the oracle squared
norm is, up to $1+o(1)$, the geometric mean of the interpolator and
test-optimal span-predictor squared norms.

The same norm-budget picture also shows why the usual empirical-risk
diagnostic is uninformative for the useful point: a statement of the form population risk $\le$ empirical risk
\emph{plus} a controlled gap cannot usefully certify the calibrated span
predictor, because its empirical risk is intentionally large.  This is the
norm-budget version of the same warning about empirical-risk diagnostics.
A close SCO precedent is the ``thinking outside the ball'' analysis of
\citet{amir2022thinking}: for generalized-linear SCO, unprojected
early-stopped gradient descent can learn relative to a unit-norm comparator
even though a naive origin-centered norm-ball analysis gives the wrong sample
complexity.  Their mechanism is different---the right analysis set is
distribution-dependent---but the shared lesson is that the comparator norm
ball need not be the right explanatory object.

\paragraph{Scope and terminology.}
The gap sizes and scopes also differ.  The SCO examples typically emphasize a
constant-order empirical/population discrepancy for a constructed problem.  In
the stylized span geometry here, test error can vanish while empirical error
exceeds the zero-predictor baseline and can diverge with the parameters.  The
price is scope: the orthogonal model is deliberately narrow.  The payoff is
auditability: the fourth-quadrant obstruction can be checked by elementary
linear algebra.  The dimension regime is also different: the
benign-misfitting window has $d\gg\gamma n$, so $n$ is deliberately far
below the interpolation threshold.

Our word ``misfitting'' is meant to emphasize that the useful span predictors
are not underpowered.  They lie past interpolation and must deliberately
overshoot the training labels on average.  ``Benign underfitting'' describes what can
happen for a particular SCO algorithm; ``benign misfitting'' describes the
fundamental consequences of the geometry of the training set when one is
sample limited.

\clearpage
\bibliographystyle{plainnat}
\bibliography{itw_benign_misfitting/itw_references,BenignMisfittingStylized_selected}

@inproceedings{saxe2014exact,
  author    = {Saxe, Andrew M. and McClelland, James L.
               and Ganguli, Surya},
  title     = {Exact Solutions to the Nonlinear Dynamics of Learning
               in Deep Linear Neural Networks},
  booktitle = {International Conference on Learning Representations},
  year      = {2014}
}

@article{johnstone2001distribution,
  author  = {Johnstone, Iain M.},
  title   = {On the Distribution of the Largest Eigenvalue in
             Principal Components Analysis},
  journal = {The Annals of Statistics},
  volume  = {29},
  number  = {2},
  pages   = {295--327},
  year    = {2001}
}

@inproceedings{amir2021neverfullbatch,
  author    = {Amir, Idan and Carmon, Yair and Koren, Tomer
               and Livni, Roi},
  title     = {Never Go Full Batch (in Stochastic Convex
               Optimization)},
  booktitle = {Advances in Neural Information Processing Systems},
  volume    = {34},
  year      = {2021}
}

@article{polyak1992acceleration,
  author    = {Polyak, Boris T. and Juditsky, Anatoli B.},
  title     = {Acceleration of Stochastic Approximation by Averaging},
  journal   = {SIAM Journal on Control and Optimization},
  volume    = {30},
  number    = {4},
  pages     = {838--855},
  year      = {1992}
}

@techreport{ruppert1988efficient,
  author      = {Ruppert, David},
  title       = {Efficient Estimations from a Slowly Convergent
                 {R}obbins--{M}onro Process},
  institution = {Cornell University, School of Operations Research
                 and Industrial Engineering},
  number      = {781},
  year        = {1988}
}

@inproceedings{carlini2019secret,
  author    = {Carlini, Nicholas and Liu, Chang and Erlingsson,
               {\'U}lfar and Kos, Jernej and Song, Dawn},
  title     = {The Secret Sharer: Evaluating and Testing Unintended
               Memorization in Neural Networks},
  booktitle = {28th USENIX Security Symposium},
  pages     = {267--284},
  year      = {2019}
}

@inproceedings{carlini2021extracting,
  author    = {Carlini, Nicholas and Tram{\`e}r, Florian and Wallace, Eric
               and Jagielski, Matthew and Herbert-Voss, Ariel and Lee,
               Katherine and Roberts, Adam and Brown, Tom and Song, Dawn
               and Erlingsson, {\'U}lfar and Oprea, Alina and Raffel, Colin},
  title     = {Extracting Training Data from Large Language Models},
  booktitle = {30th USENIX Security Symposium},
  year      = {2021}
}

@inproceedings{feldman2020memorization,
  author    = {Feldman, Vitaly},
  title     = {Does Learning Require Memorization?
               A Short Tale about a Long Tail},
  booktitle = {Proceedings of the 52nd Annual ACM SIGACT
               Symposium on Theory of Computing},
  series    = {STOC 2020},
  pages     = {954--959},
  year      = {2020},
  doi       = {10.1145/3357713.3384290}
}

@inproceedings{brown2021memorization,
  author    = {Brown, Gavin and Bun, Mark and Feldman, Vitaly
               and Smith, Adam and Talwar, Kunal},
  title     = {When Is Memorization of Irrelevant Training Data
               Necessary for High-Accuracy Learning?},
  booktitle = {Proceedings of the 53rd Annual ACM SIGACT
               Symposium on Theory of Computing},
  series    = {STOC 2021},
  pages     = {123--132},
  year      = {2021},
  doi       = {10.1145/3406325.3451131}
}

@inproceedings{amir2021sgd,
  author    = {Amir, Idan and Koren, Tomer and Livni, Roi},
  title     = {{SGD} Generalizes Better Than {GD}
               (And Regularization Doesn't Help)},
  booktitle = {Proceedings of the Thirty Fourth Conference on
               Learning Theory},
  series    = {Proceedings of Machine Learning Research},
  volume    = {134},
  pages     = {63--92},
  publisher = {PMLR},
  year      = {2021}
}

@inproceedings{sekhari2021sgd,
  author    = {Kale, Satyen and Sekhari, Ayush and Sridharan, Karthik},
  title     = {{SGD}: The Role of Implicit Regularization,
               Batch-size and Multiple Epochs},
  booktitle = {Advances in Neural Information Processing Systems},
  volume    = {34},
  pages     = {27422--27433},
  year      = {2021}
}

@inproceedings{mishchenko2020random,
  author    = {Mishchenko, Konstantin and Khaled, Ahmed and
               Richt{\'a}rik, Peter},
  title     = {Random Reshuffling: Simple Analysis with Vast Improvements},
  booktitle = {Advances in Neural Information Processing Systems},
  volume    = {33},
  pages     = {17309--17320},
  year      = {2020}
}

@inproceedings{zhu2019anisotropic,
  author    = {Zhu, Zhanxing and Wu, Jingfeng and Yu, Bing and
               Wu, Lei and Ma, Jinwen},
  title     = {The Anisotropic Noise in Stochastic Gradient Descent:
               Its Behavior of Escaping from Sharp Minima and
               Regularization Effects},
  booktitle = {Proceedings of the 36th International Conference on
               Machine Learning},
  series    = {Proceedings of Machine Learning Research},
  volume    = {97},
  pages     = {7654--7663},
  publisher = {PMLR},
  year      = {2019}
}

@inproceedings{haochen2021shape,
  author    = {HaoChen, Jeff Z. and Wei, Colin and Lee, Jason D. and
               Ma, Tengyu},
  title     = {Shape Matters: Understanding the Implicit Bias of the
               Noise Covariance},
  booktitle = {Proceedings of the Thirty Fourth Conference on
               Learning Theory},
  series    = {Proceedings of Machine Learning Research},
  volume    = {134},
  pages     = {2315--2357},
  publisher = {PMLR},
  year      = {2021}
}

@article{robbins1951stochastic,
  author  = {Robbins, Herbert and Monro, Sutton},
  title   = {A Stochastic Approximation Method},
  journal = {The Annals of Mathematical Statistics},
  volume  = {22},
  number  = {3},
  pages   = {400--407},
  year    = {1951},
  doi     = {10.1214/aoms/1177729586}
}

@article{nemirovski2009robust,
  author  = {Nemirovski, Arkadi and Juditsky, Anatoli and
             Lan, Guanghui and Shapiro, Alexander},
  title   = {Robust Stochastic Approximation Approach to Stochastic Programming},
  journal = {SIAM Journal on Optimization},
  volume  = {19},
  number  = {4},
  pages   = {1574--1609},
  year    = {2009},
  doi     = {10.1137/070704277}
}

@inproceedings{bottou2008tradeoffs,
  author    = {Bottou, L{\'e}on and Bousquet, Olivier},
  title     = {The Tradeoffs of Large Scale Learning},
  booktitle = {Advances in Neural Information Processing Systems},
  volume    = {20},
  year      = {2007}
}

@article{bottou2018optimization,
  author  = {Bottou, L{\'e}on and Curtis, Frank E. and Nocedal, Jorge},
  title   = {Optimization Methods for Large-Scale Machine Learning},
  journal = {SIAM Review},
  volume  = {60},
  number  = {2},
  pages   = {223--311},
  year    = {2018},
  doi     = {10.1137/16M1080173}
}

@inproceedings{nakkiran2021deepbootstrap,
  author    = {Nakkiran, Preetum and Neyshabur, Behnam and Sedghi, Hanie},
  title     = {The Deep Bootstrap Framework: Good Online Learners
               are Good Offline Generalizers},
  booktitle = {International Conference on Learning Representations},
  year      = {2021}
}

@inproceedings{pillaud2018statistical,
  author    = {Pillaud-Vivien, Loucas and Rudi, Alessandro and Bach, Francis},
  title     = {Statistical Optimality of Stochastic Gradient Descent on
               Hard Learning Problems through Multiple Passes},
  booktitle = {Advances in Neural Information Processing Systems},
  volume    = {31},
  year      = {2018}
}

@article{hanneke2016optimal,
  author  = {Hanneke, Steve},
  title   = {The Optimal Sample Complexity of {PAC} Learning},
  journal = {Journal of Machine Learning Research},
  volume  = {17},
  number  = {38},
  pages   = {1--15},
  year    = {2016}
}

@inproceedings{adenali2024majority,
  author    = {Aden-Ali, Ishaq and H{\o}gsgaard, Mikael M{\o}ller and Larsen, Kasper Green and Zhivotovskiy, Nikita},
  title     = {Majority-of-Three: The Simplest Optimal Learner?},
  booktitle = {Proceedings of the Thirty Seventh Conference on Learning Theory},
  series    = {Proceedings of Machine Learning Research},
  volume    = {247},
  pages     = {22--45},
  year      = {2024},
  publisher = {PMLR}
}

@article{breiman1996bagging,
  author  = {Breiman, Leo},
  title   = {Bagging Predictors},
  journal = {Machine Learning},
  volume  = {24},
  number  = {2},
  pages   = {123--140},
  year    = {1996}
}

@book{nemirovsky1983problem,
  author    = {Nemirovsky, Arkadi S. and Yudin, David B.},
  title     = {Problem Complexity and Method Efficiency in Optimization},
  series    = {Wiley-Interscience Series in Discrete Mathematics},
  publisher = {John Wiley \& Sons},
  address   = {New York},
  year      = {1983}
}

@inproceedings{bousquet2020proper,
  author    = {Bousquet, Olivier and Hanneke, Steve and Moran, Shay and Zhivotovskiy, Nikita},
  title     = {Proper Learning, {Helly} Number, and an Optimal {SVM} Bound},
  booktitle = {Proceedings of the Thirty Third Conference on Learning Theory},
  series    = {Proceedings of Machine Learning Research},
  volume    = {125},
  pages     = {582--609},
  year      = {2020},
  publisher = {PMLR}
}

@inproceedings{larsen2023bagging,
  author    = {Larsen, Kasper Green},
  title     = {Bagging is an Optimal {PAC} Learner},
  booktitle = {Proceedings of the Thirty Sixth Conference on Learning Theory},
  series    = {Proceedings of Machine Learning Research},
  volume    = {195},
  pages     = {450--468},
  year      = {2023},
  publisher = {PMLR}
}

@inproceedings{montasser2019vc,
  author    = {Montasser, Omar and Hanneke, Steve and Srebro, Nathan},
  title     = {{VC} Classes are Adversarially Robustly Learnable, but Only Improperly},
  booktitle = {Proceedings of the Thirty-Second Conference on Learning Theory},
  series    = {Proceedings of Machine Learning Research},
  volume    = {99},
  pages     = {2512--2530},
  year      = {2019},
  publisher = {PMLR}
}

@article{jerrum1986random,
  author  = {Jerrum, Mark R. and Valiant, Leslie G. and Vazirani, Vijay V.},
  title   = {Random Generation of Combinatorial Structures from a Uniform Distribution},
  journal = {Theoretical Computer Science},
  volume  = {43},
  pages   = {169--188},
  year    = {1986}
}

@article{alon1999space,
  author  = {Alon, Noga and Matias, Yossi and Szegedy, Mario},
  title   = {The Space Complexity of Approximating the Frequency Moments},
  journal = {Journal of Computer and System Sciences},
  volume  = {58},
  number  = {1},
  pages   = {137--147},
  year    = {1999}
}

@article{minsker2015geometric,
  author  = {Minsker, Stanislav},
  title   = {Geometric Median and Robust Estimation in {B}anach Spaces},
  journal = {Bernoulli},
  volume  = {21},
  number  = {4},
  pages   = {2308--2335},
  year    = {2015}
}

@article{devroye2016subgaussian,
  author  = {Devroye, Luc and Lerasle, Matthieu and Lugosi, G{\'a}bor and Oliveira, Roberto I.},
  title   = {Sub-{G}aussian Mean Estimators},
  journal = {The Annals of Statistics},
  volume  = {44},
  number  = {6},
  pages   = {2695--2725},
  year    = {2016}
}

@article{lugosi2019subgaussian,
  author  = {Lugosi, G{\'a}bor and Mendelson, Shahar},
  title   = {Sub-{G}aussian Estimators of the Mean of a Random Vector},
  journal = {The Annals of Statistics},
  volume  = {47},
  number  = {2},
  pages   = {783--794},
  year    = {2019}
}

@article{ribeiro2023overparameterized,
  author  = {Ribeiro, Ant{\^o}nio H. and Sch{\"o}n, Thomas B.},
  title   = {Overparameterized Linear Regression under Adversarial Attacks},
  journal = {IEEE Transactions on Signal Processing},
  volume  = {71},
  pages   = {601--614},
  year    = {2023},
  doi     = {10.1109/TSP.2023.3246228}
}

@article{lugosi2019mean,
  author  = {Lugosi, G{\'a}bor and Mendelson, Shahar},
  title   = {Mean Estimation and Regression under Heavy-Tailed Distributions: A Survey},
  journal = {Foundations of Computational Mathematics},
  volume  = {19},
  number  = {5},
  pages   = {1145--1190},
  year    = {2019}
}

@article{lugosi2019tournaments,
  author  = {Lugosi, G{\'a}bor and Mendelson, Shahar},
  title   = {Regularization, Sparse Recovery, and Median-of-Means Tournaments},
  journal = {Bernoulli},
  volume  = {25},
  number  = {3},
  pages   = {2075--2106},
  year    = {2019}
}

@article{audibert2011robust,
  author  = {Audibert, Jean-Yves and Catoni, Olivier},
  title   = {Robust Linear Least Squares Regression},
  journal = {The Annals of Statistics},
  volume  = {39},
  number  = {5},
  pages   = {2766--2794},
  year    = {2011}
}

@inproceedings{koren2022benign,
  author    = {Koren, Tomer and Livni, Roi and Mansour, Yishay and Sherman, Uri},
  title     = {Benign Underfitting of Stochastic Gradient Descent},
  booktitle = {Advances in Neural Information Processing Systems},
  volume    = {35},
  pages     = {19605--19617},
  year      = {2022},
  note      = {arXiv:2202.13361}
}

@article{shalevshwartz2010sco,
  author  = {Shalev-Shwartz, Shai and Shamir, Ohad and Srebro, Nathan and Sridharan, Karthik},
  title   = {Learnability, Stability and Uniform Convergence},
  journal = {Journal of Machine Learning Research},
  volume  = {11},
  pages   = {2635--2670},
  year    = {2010}
}

@inproceedings{feldman2016sco,
  author    = {Feldman, Vitaly},
  title     = {Generalization of {ERM} in Stochastic Convex Optimization: The Dimension Strikes Back},
  booktitle = {Advances in Neural Information Processing Systems},
  volume    = {29},
  year      = {2016}
}

@inproceedings{schliserman2025dimension,
  author    = {Schliserman, Matan and Sherman, Uri and Koren, Tomer},
  title     = {The Dimension Strikes Back with Gradients: Generalization of Gradient Methods in Stochastic Convex Optimization},
  booktitle = {Proceedings of the 36th International Conference on Algorithmic Learning Theory},
  series    = {Proceedings of Machine Learning Research},
  volume    = {272},
  pages     = {1041--1107},
  year      = {2025},
  publisher = {PMLR},
  note      = {arXiv:2401.12058}
}

@inproceedings{vansover2025rapid,
  author    = {Vansover-Hager, Shira and Koren, Tomer and Livni, Roi},
  title     = {Rapid Overfitting of Multi-Pass {SGD} in Stochastic Convex Optimization},
  booktitle = {Proceedings of the 42nd International Conference on Machine Learning},
  series    = {Proceedings of Machine Learning Research},
  volume    = {267},
  pages     = {60905--60923},
  year      = {2025},
  publisher = {PMLR},
  note      = {arXiv:2505.08306}
}

@inproceedings{livni2024gd,
  author    = {Livni, Roi},
  title     = {The Sample Complexity of Gradient Descent in Stochastic Convex Optimization},
  booktitle = {Advances in Neural Information Processing Systems},
  volume    = {37},
  year      = {2024},
  note      = {arXiv:2404.04931}
}

@inproceedings{carmon2024erm,
  author    = {Carmon, Daniel and Livni, Roi and Yehudayoff, Amir},
  title     = {The Sample Complexity of {ERM}s in Stochastic Convex Optimization},
  booktitle = {Proceedings of the 27th International Conference on Artificial Intelligence and Statistics},
  series    = {Proceedings of Machine Learning Research},
  volume    = {238},
  pages     = {3799--3807},
  year      = {2024},
  publisher = {PMLR}
}

@inproceedings{livni2024false,
  author    = {Livni, Roi},
  title     = {Making Progress Based on False Discoveries},
  booktitle = {15th Innovations in Theoretical Computer Science Conference (ITCS 2024)},
  series    = {Leibniz International Proceedings in Informatics (LIPIcs)},
  volume    = {287},
  pages     = {76:1--76:18},
  year      = {2024},
  doi       = {10.4230/LIPIcs.ITCS.2024.76}
}

@misc{burla2026erms,
  author        = {Burla, Tal and Livni, Roi},
  title         = {All {ERM}s Can Fail in Stochastic Convex Optimization Lower Bounds in Linear Dimension},
  year          = {2026},
  eprint        = {2602.08350},
  archivePrefix = {arXiv}
}

@article{muthukumar2021classification,
  author  = {Muthukumar, Vidya and Narang, Adhyyan and Subramanian, Vignesh and Belkin, Mikhail and Hsu, Daniel and Sahai, Anant},
  title   = {Classification vs Regression in Overparameterized Regimes: Does the Loss Function Matter?},
  journal = {Journal of Machine Learning Research},
  volume  = {22},
  number  = {222},
  pages   = {1--69},
  year    = {2021},
  url     = {https://jmlr.org/papers/v22/20-603.html}
}

@misc{narang2021classification,
  author        = {Narang, Adhyyan and Muthukumar, Vidya and Sahai, Anant},
  title         = {Classification and Adversarial Examples in an Overparameterized Linear Model: A Signal Processing Perspective},
  year          = {2021},
  eprint        = {2109.13215},
  archivePrefix = {arXiv},
  primaryClass  = {cs.LG},
  url           = {https://arxiv.org/abs/2109.13215}
}

@inproceedings{belkin2018kernel,
  author    = {Belkin, Mikhail and Ma, Siyuan and Mandal, Soumik},
  title     = {To Understand Deep Learning We Need to Understand Kernel Learning},
  booktitle = {Proceedings of the 35th International Conference on Machine Learning},
  series    = {Proceedings of Machine Learning Research},
  volume    = {80},
  pages     = {541--549},
  year      = {2018},
  publisher = {PMLR},
  url       = {https://proceedings.mlr.press/v80/belkin18a.html}
}

@inproceedings{belkin2018overfitting,
  author    = {Belkin, Mikhail and Hsu, Daniel J. and Mitra, Partha P.},
  title     = {Overfitting or Perfect Fitting? Risk Bounds for Classification and Regression Rules that Interpolate},
  booktitle = {Advances in Neural Information Processing Systems 31},
  pages     = {2300--2311},
  year      = {2018},
  url       = {https://papers.nips.cc/paper/7498}
}

@inproceedings{belkin2019does,
  author    = {Belkin, Mikhail and Rakhlin, Alexander and Tsybakov, Alexandre B.},
  title     = {Does Data Interpolation Contradict Statistical Optimality?},
  booktitle = {Proceedings of the Twenty-Second International Conference on Artificial Intelligence and Statistics},
  series    = {Proceedings of Machine Learning Research},
  volume    = {89},
  pages     = {1611--1619},
  year      = {2019},
  publisher = {PMLR},
  url       = {https://proceedings.mlr.press/v89/belkin19a.html}
}

@article{loog2020prehistory,
  author  = {Loog, Marco and Viering, Tom and Mey, Alexander and Krijthe, Jesse H. and Tax, David M. J.},
  title   = {A Brief Prehistory of Double Descent},
  journal = {Proceedings of the National Academy of Sciences},
  volume  = {117},
  number  = {20},
  pages   = {10625--10626},
  year    = {2020},
  doi     = {10.1073/pnas.2001875117}
}

@inproceedings{nakkiran2020deep,
  author    = {Nakkiran, Preetum and Kaplun, Gal and Bansal, Yamini and Yang, Tristan and Barak, Boaz and Sutskever, Ilya},
  title     = {Deep Double Descent: Where Bigger Models and More Data Hurt},
  booktitle = {International Conference on Learning Representations},
  year      = {2020},
  url       = {https://openreview.net/forum?id=B1g5sA4twr},
  eprint    = {1912.02292},
  archivePrefix = {arXiv},
  primaryClass  = {cs.LG}
}

@article{chinot2022robustness,
  author  = {Chinot, Geoffrey and L{\"o}ffler, Matthias and van de Geer, Sara},
  title   = {On the Robustness of Minimum Norm Interpolators and Regularized Empirical Risk Minimizers},
  journal = {The Annals of Statistics},
  volume  = {50},
  number  = {4},
  pages   = {2306--2333},
  year    = {2022},
  doi     = {10.1214/22-AOS2190}
}

@inproceedings{dudeja2018singleindex,
  author    = {Dudeja, Rishabh and Hsu, Daniel},
  title     = {Learning Single-Index Models in Gaussian Space},
  booktitle = {Proceedings of the 31st Conference on Learning Theory},
  series    = {Proceedings of Machine Learning Research},
  volume    = {75},
  pages     = {1887--1930},
  year      = {2018},
  publisher = {PMLR},
  url       = {https://proceedings.mlr.press/v75/dudeja18a.html}
}

@article{bruna2025multiindex,
  author  = {Bruna, Joan and Hsu, Daniel},
  title   = {Survey on Algorithms for Multi-Index Models},
  journal = {Statistical Science},
  volume  = {40},
  number  = {3},
  pages   = {378--391},
  year    = {2025}
}

@inproceedings{bietti2022singleindex,
  author    = {Bietti, Alberto and Bruna, Joan and Sanford, Clayton and Song, Min Jae},
  title     = {Learning Single-Index Models with Shallow Neural Networks},
  booktitle = {Advances in Neural Information Processing Systems 35},
  pages     = {9768--9783},
  year      = {2022},
  url       = {https://papers.nips.cc/paper_files/paper/2022/hash/3fb6c52aeb11e09053c16eabee74dd7b-Abstract-Conference.html}
}

@inproceedings{abbe2022merged,
  author    = {Abbe, Emmanuel and Adser{\`a}, Enric Boix and Misiakiewicz, Theodor},
  title     = {The Merged-Staircase Property: A Necessary and Nearly Sufficient Condition for {SGD} Learning of Sparse Functions on Two-Layer Neural Networks},
  booktitle = {Proceedings of Thirty Fifth Conference on Learning Theory},
  series    = {Proceedings of Machine Learning Research},
  volume    = {178},
  pages     = {4782--4887},
  year      = {2022},
  publisher = {PMLR},
  url       = {https://proceedings.mlr.press/v178/abbe22a.html}
}

@inproceedings{abbe2023leap,
  author    = {Abbe, Emmanuel and Adser{\`a}, Enric Boix and Misiakiewicz, Theodor},
  title     = {{SGD} Learning on Neural Networks: Leap Complexity and Saddle-to-Saddle Dynamics},
  booktitle = {Proceedings of Thirty Sixth Conference on Learning Theory},
  series    = {Proceedings of Machine Learning Research},
  volume    = {195},
  pages     = {2552--2623},
  year      = {2023},
  publisher = {PMLR},
  url       = {https://proceedings.mlr.press/v195/abbe23a.html}
}

@inproceedings{muthukumar2019harmless,
  author    = {Muthukumar, Vidya and Vodrahalli, Kailas and Sahai, Anant},
  title     = {Harmless Interpolation of Noisy Data in Regression},
  booktitle = {IEEE International Symposium on Information Theory},
  pages     = {2299--2303},
  year      = {2019},
  doi       = {10.1109/ISIT.2019.8849614}
}

@misc{dar2021farewell,
  author        = {Dar, Yehuda and Muthukumar, Vidya and Baraniuk, Richard G.},
  title         = {A Farewell to the Bias-Variance Tradeoff? An Overview of the Theory of Overparameterized Machine Learning},
  year          = {2021},
  eprint        = {2109.02355},
  archivePrefix = {arXiv},
  primaryClass  = {cs.LG},
  url           = {https://arxiv.org/abs/2109.02355}
}

@inproceedings{wang2021multiclass,
  author    = {Wang, Ke and Muthukumar, Vidya and Thrampoulidis, Christos},
  title     = {Benign Overfitting in Multiclass Classification: All Roads Lead to Interpolation},
  booktitle = {Advances in Neural Information Processing Systems 34},
  pages     = {24164--24179},
  year      = {2021},
  url       = {https://proceedings.neurips.cc/paper/2021/hash/caaa29eab72b231b0af62fbdff89bfce-Abstract.html}
}

@inproceedings{subramanian2022multiclass,
  author    = {Subramanian, Vignesh and Arya, Rahul and Sahai, Anant},
  title     = {Generalization for Multiclass Classification with Overparameterized Linear Models},
  booktitle = {Advances in Neural Information Processing Systems 35},
  year      = {2022},
  url       = {https://openreview.net/forum?id=ikWvMRVQBWW}
}

@inproceedings{wu2023multiclass,
  author    = {Wu, David Xing and Sahai, Anant},
  title     = {Precise Asymptotic Generalization for Multiclass Classification with Overparameterized Linear Models},
  booktitle = {Advances in Neural Information Processing Systems 36},
  year      = {2023},
  url       = {https://openreview.net/forum?id=cRGINXQWem}
}

@article{advani2020highdim,
  author  = {Advani, Madhu S. and Saxe, Andrew M.},
  title   = {High-Dimensional Dynamics of Generalization Error in Neural Networks},
  journal = {Neural Networks},
  volume  = {132},
  pages   = {428--446},
  year    = {2020},
  doi     = {10.1016/j.neunet.2020.08.022}
}

@article{spigler2019jamming,
  author  = {Spigler, Stefano and Geiger, Mario and d'Ascoli, St{\'e}phane and Sagun, Levent and Biroli, Giulio and Wyart, Matthieu},
  title   = {A Jamming Transition from Under- to Over-Parametrization Affects Generalization in Deep Learning},
  journal = {Journal of Physics A: Mathematical and Theoretical},
  volume  = {52},
  number  = {47},
  pages   = {474001},
  year    = {2019},
  doi     = {10.1088/1751-8121/ab45e3}
}

@article{baik2005phase,
  author  = {Baik, Jinho and Ben Arous, G{\'e}rard and P{\'e}ch{\'e}, Sandrine},
  title   = {Phase transition of the largest eigenvalue for nonnull complex sample covariance matrices},
  journal = {The Annals of Probability},
  volume  = {33},
  number  = {5},
  pages   = {1643--1697},
  year    = {2005},
  doi     = {10.1214/009117905000000233}
}

@article{oja1982simplified,
  author  = {Oja, Erkki},
  title   = {Simplified neuron model as a principal component analyzer},
  journal = {Journal of Mathematical Biology},
  volume  = {15},
  number  = {3},
  pages   = {267--273},
  year    = {1982},
  doi     = {10.1007/BF00275687}
}

@article{hochreiter1997flat,
  author  = {Hochreiter, Sepp and Schmidhuber, J{\"u}rgen},
  title   = {Flat Minima},
  journal = {Neural Computation},
  volume  = {9},
  number  = {1},
  pages   = {1--42},
  year    = {1997},
  doi     = {10.1162/neco.1997.9.1.1}
}

@inproceedings{keskar2017largebatch,
  author    = {Keskar, Nitish Shirish and Mudigere, Dheevatsa and Nocedal, Jorge and Smelyanskiy, Mikhail and Tang, Ping Tak Peter},
  title     = {On Large-Batch Training for Deep Learning: Generalization Gap and Sharp Minima},
  booktitle = {International Conference on Learning Representations},
  year      = {2017},
  url       = {https://openreview.net/forum?id=H1oyRlYgg}
}

@inproceedings{dinh2017sharp,
  author    = {Dinh, Laurent and Pascanu, Razvan and Bengio, Samy and Bengio, Yoshua},
  title     = {Sharp Minima Can Generalize For Deep Nets},
  booktitle = {Proceedings of the 34th International Conference on Machine Learning},
  series    = {Proceedings of Machine Learning Research},
  volume    = {70},
  pages     = {1019--1028},
  year      = {2017},
  publisher = {PMLR},
  url       = {https://proceedings.mlr.press/v70/dinh17b.html}
}

@misc{jastrzebski2017three,
  author        = {Jastrzebski, Stanislaw and Kenton, Zachary and Arpit, Devansh and Ballas, Nicolas and Fischer, Asja and Bengio, Yoshua and Storkey, Amos},
  title         = {Three Factors Influencing Minima in {SGD}},
  year          = {2017},
  eprint        = {1711.04623},
  archivePrefix = {arXiv},
  primaryClass  = {cs.LG},
  url           = {https://arxiv.org/abs/1711.04623}
}

@misc{goyal2017accurate,
  author        = {Goyal, Priya and Doll{\'a}r, Piotr and Girshick, Ross and Noordhuis, Pieter and Wesolowski, Lukasz and Kyrola, Aapo and Tulloch, Andrew and Jia, Yangqing and He, Kaiming},
  title         = {Accurate, Large Minibatch {SGD}: Training {ImageNet} in 1 Hour},
  year          = {2017},
  eprint        = {1706.02677},
  archivePrefix = {arXiv},
  primaryClass  = {cs.CV},
  url           = {https://arxiv.org/abs/1706.02677}
}

@inproceedings{smith2018dont,
  author    = {Smith, Samuel L. and Kindermans, Pieter-Jan and Ying, Chris and Le, Quoc V.},
  title     = {Don't Decay the Learning Rate, Increase the Batch Size},
  booktitle = {International Conference on Learning Representations},
  year      = {2018},
  url       = {https://openreview.net/forum?id=B1Yy1BxCZ}
}

@misc{mccandlish2018empirical,
  author        = {McCandlish, Sam and Kaplan, Jared and Amodei, Dario and {OpenAI Dota Team}},
  title         = {An Empirical Model of Large-Batch Training},
  year          = {2018},
  eprint        = {1812.06162},
  archivePrefix = {arXiv},
  primaryClass  = {cs.LG},
  url           = {https://arxiv.org/abs/1812.06162}
}

@misc{shallue2018measuring,
  author        = {Shallue, Christopher J. and Lee, Jaehoon and Antognini, Joseph and Sohl-Dickstein, Jascha and Frostig, Roy and Dahl, George E.},
  title         = {Measuring the Effects of Data Parallelism on Neural Network Training},
  year          = {2018},
  eprint        = {1811.03600},
  archivePrefix = {arXiv},
  primaryClass  = {cs.LG},
  url           = {https://arxiv.org/abs/1811.03600}
}

@inproceedings{malladi2022sdes,
  author    = {Malladi, Sadhika and Lyu, Kaifeng and Panigrahi, Abhishek and Arora, Sanjeev},
  title     = {On the {SDE}s and Scaling Rules for Adaptive Gradient Algorithms},
  booktitle = {Advances in Neural Information Processing Systems},
  volume    = {35},
  year      = {2022},
  url       = {https://papers.neurips.cc/paper_files/paper/2022/hash/32ac710102f0620d0f28d5d05a44fe08-Abstract-Conference.html}
}

@article{mandt2017sgd,
  author  = {Mandt, Stephan and Hoffman, Matthew D. and Blei, David M.},
  title   = {Stochastic Gradient Descent as Approximate {Bayesian} Inference},
  journal = {Journal of Machine Learning Research},
  volume  = {18},
  number  = {134},
  pages   = {1--35},
  year    = {2017},
  url     = {https://jmlr.org/papers/v18/17-214.html}
}

@inproceedings{smith2018bayesian,
  author    = {Smith, Samuel L. and Le, Quoc V.},
  title     = {A {Bayesian} Perspective on Generalization and Stochastic Gradient Descent},
  booktitle = {International Conference on Learning Representations},
  year      = {2018},
  url       = {https://openreview.net/forum?id=BJij4yg0Z}
}

@inproceedings{smith2017cyclical,
  author    = {Smith, Leslie N.},
  title     = {Cyclical Learning Rates for Training Neural Networks},
  booktitle = {2017 IEEE Winter Conference on Applications of Computer Vision (WACV)},
  pages     = {464--472},
  year      = {2017},
  doi       = {10.1109/WACV.2017.58}
}

@inproceedings{smith2019super,
  author    = {Smith, Leslie N. and Topin, Nicholay},
  title     = {Super-Convergence: Very Fast Training of Neural Networks Using Large Learning Rates},
  booktitle = {Artificial Intelligence and Machine Learning for Multi-Domain Operations Applications},
  volume    = {11006},
  pages     = {1100612},
  year      = {2019},
  publisher = {SPIE},
  doi       = {10.1117/12.2520589}
}

@inproceedings{li2019towards,
  author    = {Li, Yuanzhi and Wei, Colin and Ma, Tengyu},
  title     = {Towards Explaining the Regularization Effect of Initial Large Learning Rate in Training Neural Networks},
  booktitle = {Advances in Neural Information Processing Systems 32},
  year      = {2019}
}

@inproceedings{ren2024understanding,
  author    = {Ren, Yinuo and Ma, Chao and Ying, Lexing},
  title     = {Understanding the Generalization Benefits of Late Learning Rate Decay},
  booktitle = {Proceedings of The 27th International Conference on Artificial Intelligence and Statistics},
  series    = {Proceedings of Machine Learning Research},
  volume    = {238},
  pages     = {4465--4473},
  year      = {2024},
  publisher = {PMLR},
  url       = {https://proceedings.mlr.press/v238/ren24c.html}
}

@misc{lewkowycz2020large,
  author        = {Lewkowycz, Aitor and Bahri, Yasaman and Dyer, Ethan and Sohl-Dickstein, Jascha and Gur-Ari, Guy},
  title         = {The Large Learning Rate Phase of Deep Learning: The Catapult Mechanism},
  year          = {2020},
  eprint        = {2003.02218},
  archivePrefix = {arXiv},
  primaryClass  = {cs.LG},
  url           = {https://arxiv.org/abs/2003.02218}
}

@inproceedings{cohen2021gradient,
  author    = {Cohen, Jeremy M. and Kaur, Simran and Li, Yuanzhi and Kolter, J. Zico and Talwalkar, Ameet},
  title     = {Gradient Descent on Neural Networks Typically Occurs at the Edge of Stability},
  booktitle = {International Conference on Learning Representations},
  year      = {2021},
  url       = {https://openreview.net/forum?id=jh-rTtvkGeM}
}

@inproceedings{ahn2022understanding,
  author    = {Ahn, Kwangjun and Zhang, Jingzhao and Sra, Suvrit},
  title     = {Understanding the Unstable Convergence of Gradient Descent},
  booktitle = {Proceedings of the 39th International Conference on Machine Learning},
  series    = {Proceedings of Machine Learning Research},
  volume    = {162},
  pages     = {247--257},
  year      = {2022},
  publisher = {PMLR},
  url       = {https://proceedings.mlr.press/v162/ahn22a.html}
}

@inproceedings{leejang2023new,
  author    = {Lee, Sungyoon and Jang, Cheongjae},
  title     = {A New Characterization of the Edge of Stability Based on a Sharpness Measure Aware of Batch Gradient Distribution},
  booktitle = {International Conference on Learning Representations},
  year      = {2023},
  url       = {https://openreview.net/forum?id=bH-kCY6LdKg}
}

@inproceedings{gilmer2022loss,
  author    = {Gilmer, Justin and Ghorbani, Behrooz and Garg, Ankush and Kudugunta, Sneha and Neyshabur, Behnam and Cardoze, David and Dahl, George Edward and Nado, Zachary and Firat, Orhan},
  title     = {A Loss Curvature Perspective on Training Instabilities of Deep Learning Models},
  booktitle = {International Conference on Learning Representations},
  year      = {2022},
  url       = {https://openreview.net/forum?id=OcKMT-36vUs}
}

@misc{andreyev2024edge,
  author        = {Andreyev, Arseniy and Beneventano, Pierfrancesco},
  title         = {Edge of Stochastic Stability: Revisiting the Edge of Stability for {SGD}},
  year          = {2024},
  eprint        = {2412.20553},
  archivePrefix = {arXiv},
  primaryClass  = {cs.LG},
  url           = {https://arxiv.org/abs/2412.20553}
}

@misc{hussain2025optimizer,
  author        = {Hussain, Ayana and Fang, Ricky},
  title         = {Optimizer Dynamics at the Edge of Stability with Differential Privacy},
  year          = {2025},
  eprint        = {2512.19019},
  archivePrefix = {arXiv},
  primaryClass  = {cs.LG},
  url           = {https://arxiv.org/abs/2512.19019}
}

@inproceedings{stein1956inadmissibility,
  author    = {Stein, Charles},
  title     = {Inadmissibility of the Usual Estimator for the Mean of a Multivariate Normal Distribution},
  booktitle = {Proceedings of the Third Berkeley Symposium on Mathematical Statistics and Probability},
  volume    = {1},
  pages     = {197--206},
  year      = {1956},
  publisher = {University of California Press}
}

@inproceedings{james1961estimation,
  author    = {James, W. and Stein, Charles},
  title     = {Estimation with Quadratic Loss},
  booktitle = {Proceedings of the Fourth Berkeley Symposium on Mathematical Statistics and Probability},
  volume    = {1},
  pages     = {361--379},
  year      = {1961},
  publisher = {University of California Press}
}

@article{efron1977stein,
  author  = {Efron, Bradley and Morris, Carl},
  title   = {Stein's Paradox in Statistics},
  journal = {Scientific American},
  volume  = {236},
  number  = {5},
  pages   = {119--127},
  year    = {1977},
  doi     = {10.1038/scientificamerican0577-119}
}

@incollection{efron2016jamesstein,
  author    = {Efron, Bradley and Hastie, Trevor},
  title     = {James--Stein Estimation and Ridge Regression},
  booktitle = {Computer Age Statistical Inference},
  pages     = {91--107},
  publisher = {Cambridge University Press},
  year      = {2016},
  doi       = {10.1017/CBO9781316576533.008}
}

@article{draper1979ridge,
  author  = {Draper, Norman R. and Van Nostrand, R. Craig},
  title   = {Ridge Regression and James--Stein Estimation: Review and Comments},
  journal = {Technometrics},
  volume  = {21},
  number  = {4},
  pages   = {451--466},
  year    = {1979},
  doi     = {10.1080/00401706.1979.10489815}
}

@inproceedings{gunasekar2018characterizing,
  author    = {Gunasekar, Suriya and Lee, Jason and Soudry, Daniel and Srebro, Nathan},
  title     = {Characterizing Implicit Bias in Terms of Optimization Geometry},
  booktitle = {Proceedings of the 35th International Conference on Machine Learning},
  series    = {Proceedings of Machine Learning Research},
  volume    = {80},
  pages     = {1832--1841},
  year      = {2018},
  publisher = {PMLR},
  url       = {https://proceedings.mlr.press/v80/gunasekar18a.html}
}

@article{soudry2018implicit,
  author  = {Soudry, Daniel and Hoffer, Elad and Nacson, Mor Shpigel and Gunasekar, Suriya and Srebro, Nathan},
  title   = {The Implicit Bias of Gradient Descent on Separable Data},
  journal = {Journal of Machine Learning Research},
  volume  = {19},
  number  = {70},
  pages   = {1--57},
  year    = {2018},
  url     = {https://www.jmlr.org/papers/v19/18-188.html}
}

@inproceedings{frei2023doubleedged,
  author    = {Frei, Spencer and Vardi, Gal and Bartlett, Peter L. and Srebro, Nati},
  title     = {The Double-Edged Sword of Implicit Bias: Generalization vs. Robustness in {ReLU} Networks},
  booktitle = {Advances in Neural Information Processing Systems},
  volume    = {36},
  year      = {2023},
  url       = {https://proceedings.neurips.cc/paper_files/paper/2023/hash/1c26c389d60ec419fd24b5fee5b35796-Abstract-Conference.html}
}

@inproceedings{li2025featureAveraging,
  author    = {Li, Binghui and Pan, Zhixuan and Lyu, Kaifeng and Li, Jian},
  title     = {Feature Averaging: An Implicit Bias of Gradient Descent Leading to Non-Robustness in Neural Networks},
  booktitle = {International Conference on Learning Representations},
  year      = {2025},
  url       = {https://openreview.net/forum?id=zPHra4V5Mc}
}

@inproceedings{min2024implicit,
  author    = {Min, Hancheng and Vidal, Ren{\'e}},
  title     = {Can Implicit Bias Imply Adversarial Robustness?},
  booktitle = {Proceedings of the 41st International Conference on Machine Learning},
  series    = {Proceedings of Machine Learning Research},
  volume    = {235},
  pages     = {35687--35718},
  year      = {2024},
  publisher = {PMLR},
  url       = {https://proceedings.mlr.press/v235/min24a.html}
}

@inproceedings{shah2020pitfalls,
  author    = {Shah, Harshay and Tamuly, Kaustav and Raghunathan, Aditi and Jain, Prateek and Netrapalli, Praneeth},
  title     = {The Pitfalls of Simplicity Bias in Neural Networks},
  booktitle = {Advances in Neural Information Processing Systems},
  volume    = {33},
  pages     = {9573--9585},
  year      = {2020},
  url       = {https://proceedings.neurips.cc/paper_files/paper/2020/hash/6cfe0e6127fa25df2a0ef2ae1067d915-Abstract.html}
}

@misc{tsilivis2024price,
  author        = {Tsilivis, Nikolaos and Frank, Natalie and Srebro, Nathan and Kempe, Julia},
  title         = {The Price of Implicit Bias in Adversarially Robust Generalization},
  year          = {2024},
  eprint        = {2406.04981},
  archivePrefix = {arXiv},
  primaryClass  = {cs.LG},
  url           = {https://arxiv.org/abs/2406.04981}
}

@inproceedings{nacson2022implicit,
  author    = {Nacson, Mor Shpigel and Ravichandran, Kavya and Srebro, Nathan and Soudry, Daniel},
  title     = {Implicit Bias of the Step Size in Linear Diagonal Neural Networks},
  booktitle = {Proceedings of the 39th International Conference on Machine Learning},
  series    = {Proceedings of Machine Learning Research},
  volume    = {162},
  pages     = {16270--16295},
  year      = {2022},
  publisher = {PMLR},
  url       = {https://proceedings.mlr.press/v162/nacson22a.html}
}

@inproceedings{even2023sgd,
  author    = {Even, Mathieu and Pesme, Scott and Gunasekar, Suriya and Flammarion, Nicolas},
  title     = {{(S)GD} over Diagonal Linear Networks: Implicit Bias, Large Stepsizes and Edge of Stability},
  booktitle = {Advances in Neural Information Processing Systems 36},
  year      = {2023},
  url       = {https://openreview.net/forum?id=uAyElhYKxg}
}

@article{wang2025goodregularity,
  author  = {Wang, Yuqing and Xu, Zhenghao and Zhao, Tuo and Tao, Molei},
  title   = {Good Regularity Creates Large Learning Rate Implicit Biases: Edge of Stability, Balancing, and Catapult},
  journal = {Journal of Machine Learning Research},
  volume  = {26},
  number  = {273},
  pages   = {1--68},
  year    = {2025},
  url     = {https://jmlr.org/papers/v26/23-1691.html}
}

@article{kimeldorf1971some,
  author  = {Kimeldorf, George S. and Wahba, Grace},
  title   = {Some Results on {Tchebycheffian} Spline Functions},
  journal = {Journal of Mathematical Analysis and Applications},
  volume  = {33},
  number  = {1},
  pages   = {82--95},
  year    = {1971},
  doi     = {10.1016/0022-247X(71)90184-3}
}

@inproceedings{schoelkopf2001generalized,
  author    = {Sch{\"o}lkopf, Bernhard and Herbrich, Ralf and Smola, Alex J.},
  title     = {A Generalized Representer Theorem},
  booktitle = {Computational Learning Theory},
  series    = {Lecture Notes in Computer Science},
  volume    = {2111},
  pages     = {416--426},
  year      = {2001},
  publisher = {Springer},
  address   = {Berlin, Heidelberg},
  doi       = {10.1007/3-540-44581-1_27}
}

@book{vanTrees1968detection,
  author    = {Van Trees, Harry L.},
  title     = {Detection, Estimation, and Modulation Theory, Part {I}},
  publisher = {John Wiley \& Sons},
  address   = {New York},
  year      = {1968}
}

@inproceedings{hardt2016train,
  author    = {Hardt, Moritz and Recht, Benjamin and Singer, Yoram},
  title     = {Train Faster, Generalize Better: Stability of Stochastic Gradient Descent},
  booktitle = {Proceedings of the 33rd International Conference on Machine Learning},
  year      = {2016},
  pages     = {1225--1234}
}

@inproceedings{ali2020implicit,
  author    = {Ali, Alnur and Dobriban, Edgar and Tibshirani, Ryan},
  title     = {The Implicit Regularization of Stochastic Gradient Flow
               for Least Squares},
  booktitle = {Proceedings of the 37th International Conference
               on Machine Learning},
  series    = {Proceedings of Machine Learning Research},
  volume    = {119},
  pages     = {233--244},
  publisher = {PMLR},
  year      = {2020},
  url       = {https://proceedings.mlr.press/v119/ali20a.html}
}

@inproceedings{pesme2021implicit,
  author    = {Pesme, Scott and Pillaud-Vivien, Loucas and Flammarion, Nicolas},
  title     = {Implicit Bias of {SGD} for Diagonal Linear Networks: a Provable Benefit of Stochasticity},
  booktitle = {Advances in Neural Information Processing Systems 34},
  year      = {2021}
}

@inproceedings{wu2018sgd,
  author    = {Wu, Lei and Ma, Chao and {E}, Weinan},
  title     = {How {SGD} Selects the Global Minima in
               Over-parameterized Learning:
               A Dynamical Stability Perspective},
  booktitle = {Advances in Neural Information Processing Systems 31},
  pages     = {8279--8288},
  year      = {2018}
}

@inproceedings{wu2023implicit,
  author    = {Wu, Lei and Su, Weijie J.},
  title     = {The Implicit Regularization of Dynamical Stability
               in Stochastic Gradient Descent},
  booktitle = {Proceedings of the 40th International Conference
               on Machine Learning},
  series    = {Proceedings of Machine Learning Research},
  volume    = {202},
  pages     = {37656--37684},
  publisher = {PMLR},
  year      = {2023},
  url       = {https://proceedings.mlr.press/v202/wu23r.html}
}

@inproceedings{ge2015escaping,
  author    = {Ge, Rong and Huang, Furong and Jin, Chi and Yuan, Yang},
  title     = {Escaping from Saddle Points---Online Stochastic Gradient for Tensor Decomposition},
  booktitle = {Proceedings of the 28th Conference on Learning Theory},
  year      = {2015},
  pages     = {797--842}
}

@inproceedings{jin2017escape,
  author    = {Jin, Chi and Ge, Rong and Netrapalli, Praneeth and Kakade, Sham M. and Jordan, Michael I.},
  title     = {How to Escape Saddle Points Efficiently},
  booktitle = {Proceedings of the 34th International Conference on Machine Learning},
  year      = {2017},
  pages     = {1724--1732}
}

@inproceedings{ali2019continuous,
  author    = {Ali, Alnur and Kolter, J. Zico and Tibshirani, Ryan J.},
  title     = {A Continuous-Time View of Early Stopping
               for Least Squares Regression},
  booktitle = {Proceedings of the Twenty-Second International Conference
               on Artificial Intelligence and Statistics},
  series    = {Proceedings of Machine Learning Research},
  volume    = {89},
  pages     = {1370--1378},
  publisher = {PMLR},
  year      = {2019},
  url       = {https://proceedings.mlr.press/v89/ali19a.html}
}

@inproceedings{smith2021origin,
  author    = {Smith, Samuel L. and Dherin, Benoit and
               Barrett, David G. T. and De, Soham},
  title     = {On the Origin of Implicit Regularization
               in Stochastic Gradient Descent},
  booktitle = {International Conference on Learning Representations},
  year      = {2021},
  url       = {https://openreview.net/forum?id=rq_Qr0c1Hyo}
}

@inproceedings{barrett2021implicit,
  author    = {Barrett, David G. T. and Dherin, Benoit},
  title     = {Implicit Gradient Regularization},
  booktitle = {International Conference on Learning Representations},
  year      = {2021},
  url       = {https://openreview.net/forum?id=3q5IqUrkcF}
}

@inproceedings{blanc2020implicit,
  author    = {Blanc, Guy and Gupta, Neha and Valiant, Gregory and Valiant, Paul},
  title     = {Implicit Regularization for Deep Neural Networks
               Driven by an {Ornstein--Uhlenbeck} Like Process},
  booktitle = {Proceedings of the Thirty Third Conference on Learning Theory},
  series    = {Proceedings of Machine Learning Research},
  volume    = {125},
  pages     = {483--513},
  publisher = {PMLR},
  year      = {2020},
  url       = {https://proceedings.mlr.press/v125/blanc20a.html}
}

@article{strohmer2009randomized,
  author  = {Strohmer, Thomas and Vershynin, Roman},
  title   = {A Randomized {Kaczmarz} Algorithm with Exponential Convergence},
  journal = {Journal of Fourier Analysis and Applications},
  volume  = {15},
  number  = {2},
  pages   = {262--278},
  year    = {2009},
  doi     = {10.1007/s00041-008-9030-4}
}

@inproceedings{nagarajan2019uniform,
  author    = {Nagarajan, Vaishnavh and Kolter, J. Zico},
  title     = {Uniform Convergence May Be Unable to Explain Generalization in Deep Learning},
  booktitle = {Advances in Neural Information Processing Systems},
  volume    = {32},
  year      = {2019}
}

@inproceedings{koehler2021uniform,
  author    = {Koehler, Frederic and Zhou, Lijia and Sutherland, Danica J. and Srebro, Nathan},
  title     = {Uniform Convergence of Interpolators: Gaussian Width, Norm Bounds and Benign Overfitting},
  booktitle = {Advances in Neural Information Processing Systems 34},
  year      = {2021}
}

@article{bousquet2002stability,
  author  = {Bousquet, Olivier and Elisseeff, Andr{\'e}},
  title   = {Stability and Generalization},
  journal = {Journal of Machine Learning Research},
  volume  = {2},
  pages   = {499--526},
  year    = {2002}
}

@inproceedings{foster2018logistic,
  title     = {Logistic Regression: The Importance of Being Improper},
  author    = {Foster, Dylan J. and Kale, Satyen and Luo, Haipeng and
               Mohri, Mehryar and Sridharan, Karthik},
  booktitle = {Proceedings of the 31st Conference on Learning Theory},
  pages     = {167--208},
  year      = {2018},
  volume    = {75},
  series    = {Proceedings of Machine Learning Research},
  publisher = {PMLR}
}

@inproceedings{amir2022thinking,
  author    = {Amir, Idan and Livni, Roi and Srebro, Nathan},
  title     = {Thinking Outside the Ball: Optimal Learning with Gradient Descent
               for Generalized Linear Stochastic Convex Optimization},
  booktitle = {Advances in Neural Information Processing Systems},
  volume    = {35},
  year      = {2022}
}

@inproceedings{yeom2018privacy,
  title={Privacy Risk in Machine Learning: Analyzing the Connection to Overfitting},
  author={Yeom, Samuel and Giacomelli, Irene and Fredrikson, Matt and Jha, Somesh},
  booktitle={IEEE 31st Computer Security Foundations Symposium (CSF)},
  pages={268--282},
  year={2018},
  organization={IEEE}
}

@inproceedings{chen2020understandingclipping,
  title={Understanding Gradient Clipping in Private {SGD}: A Geometric Perspective},
  author={Chen, Xiangyi and Wu, Steven Z and Hong, Mingyi},
  booktitle={Advances in Neural Information Processing Systems},
  volume={33},
  year={2020}
}

@inproceedings{song2020evading,
  title={Evading the Curse of Dimensionality in Unconstrained Private {GLMs}},
  author={Song, Shuang and Steinke, Thomas and Thakkar, Om and Thakurta, Abhradeep},
  booktitle={International Conference on Artificial Intelligence and Statistics},
  pages={2223--2234},
  year={2021}
}

@inproceedings{damian2023selfstabilization,
  title={Self-Stabilization: The Implicit Bias of Gradient Descent at the Edge of Stability},
  author={Damian, Alex and Nichani, Eshaan and Lee, Jason D},
  booktitle={International Conference on Learning Representations},
  year={2023}
}

@inproceedings{arora2022understandingeos,
  title={Understanding Gradient Descent on the Edge of Stability in Deep Learning},
  author={Arora, Sanjeev and Li, Zhiyuan and Panigrahi, Abhishek},
  booktitle={International Conference on Machine Learning},
  pages={948--1024},
  year={2022}
}

@misc{freeman2026shrinkage,
  author        = {Freeman, Jake},
  title         = {Shrinkage to Infinity: Reducing Test Error by Inflating
                   the Minimum Norm Interpolator in Linear Models},
  year          = {2025},
  eprint        = {2510.19206},
  archivePrefix = {arXiv},
  primaryClass  = {math.ST},
  doi           = {10.48550/arXiv.2510.19206},
  note          = {arXiv:2510.19206; v2 of April 30, 2026}
}

@incollection{khalil2025membership,
  author    = {Khalil, Mona and Blanco-Justicia, Alberto and Jebreel, Najeeb
               and Domingo-Ferrer, Josep},
  title     = {Membership Inference Attacks Beyond Overfitting},
  booktitle = {Computer Security. {ESORICS} 2025 International Workshops},
  series    = {Lecture Notes in Computer Science},
  volume    = {16231},
  pages     = {32--48},
  publisher = {Springer},
  year      = {2026},
  doi       = {10.1007/978-3-032-16089-8_3},
  note      = {arXiv:2511.16792}
}

@misc{ranade2026samedistribution,
  author = {Ranade, Gireeja and Sahai, Anant},
  title  = {The Fourth Quadrant: Same-Distribution Tests of Benign Misfitting},
  year   = {2026},
  note   = {In preparation}
}

@misc{ranade2026multispike,
  author = {Ranade, Gireeja and Sahai, Anant},
  title  = {The Fourth Quadrant: Benign Misfitting Across Multiple Spikes},
  year   = {2026},
  note   = {In preparation}
}

@misc{ranade2026oja,
  author = {Ranade, Gireeja and Sahai, Anant},
  title  = {The Fourth Quadrant: One-Pass Direction Recovery at the {BBP} Scale},
  year   = {2026},
  note   = {In preparation}
}

@inproceedings{mcrae2022structured,
  author    = {{McRae}, Andrew D. and Karnik, Santhosh and Davenport, Mark and Muthukumar, Vidya K.},
  title     = {Harmless Interpolation in Regression and Classification with Structured Features},
  booktitle = {Proceedings of The 25th International Conference on Artificial Intelligence and Statistics},
  series    = {Proceedings of Machine Learning Research},
  volume    = {151},
  pages     = {5853--5875},
  year      = {2022},
  publisher = {PMLR},
  url       = {https://proceedings.mlr.press/v151/mcrae22a.html}
}

@book{polyak1987introduction,
  author    = {Polyak, Boris T.},
  title     = {Introduction to Optimization},
  publisher = {Optimization Software, Inc.},
  address   = {New York},
  year      = {1987}
}

@book{widrow1985adaptive,
  author    = {Widrow, Bernard and Stearns, Samuel D.},
  title     = {Adaptive Signal Processing},
  publisher = {Prentice-Hall},
  address   = {Englewood Cliffs, NJ},
  year      = {1985}
}

@book{kish1965survey,
  author    = {Kish, Leslie},
  title     = {Survey Sampling},
  publisher = {John Wiley \& Sons},
  address   = {New York},
  year      = {1965}
}

@article{collinswoodfin2024onepass,
  author  = {Collins-Woodfin, Elizabeth and Paquette, Elliot},
  title   = {High-dimensional limit of one-pass {SGD} on least squares},
  journal = {Electronic Communications in Probability},
  volume  = {29},
  year    = {2024},
  doi     = {10.1214/23-ECP571},
  note    = {arXiv:2304.06847}
}

@article{zou2023benign,
  author  = {Zou, Difan and Wu, Jingfeng and Braverman, Vladimir and Gu, Quanquan and Kakade, Sham M.},
  title   = {Benign overfitting of constant-stepsize {SGD} for linear regression},
  journal = {Journal of Machine Learning Research},
  volume  = {24},
  number  = {326},
  pages   = {1--58},
  year    = {2023},
  note    = {Short version in Proc.\ 34th Conference on Learning Theory (COLT), PMLR 134:4633--4635, 2021},
  url     = {http://jmlr.org/papers/v24/21-1297.html}
}

@inproceedings{wu2022last,
  author    = {Wu, Jingfeng and Zou, Difan and Braverman, Vladimir and Gu, Quanquan and Kakade, Sham M.},
  title     = {Last Iterate Risk Bounds of {SGD} with Decaying Stepsize for Overparameterized Linear Regression},
  booktitle = {International Conference on Machine Learning (ICML)},
  year      = {2022},
  note      = {arXiv:2110.06198}
}

@inproceedings{lin2024scaling,
  author    = {Lin, Licong and Wu, Jingfeng and Kakade, Sham M. and Bartlett, Peter L. and Lee, Jason D.},
  title     = {Scaling Laws in Linear Regression: Compute, Parameters, and Data},
  booktitle = {Advances in Neural Information Processing Systems},
  volume    = {37},
  year      = {2024},
  note      = {arXiv:2406.08466}
}

@article{jain2018parallelizing,
  author  = {Jain, Prateek and Kakade, Sham M. and Kidambi, Rahul and Netrapalli, Praneeth and Sidford, Aaron},
  title   = {Parallelizing Stochastic Gradient Descent for Least Squares Regression: Mini-batching, Averaging, and Model Misspecification},
  journal = {Journal of Machine Learning Research},
  volume  = {18},
  number  = {223},
  pages   = {1--42},
  year    = {2018},
  url     = {http://jmlr.org/papers/v18/16-595.html}
}

@inproceedings{lin2025datareuse,
  author    = {Lin, Licong and Wu, Jingfeng and Bartlett, Peter L.},
  title     = {Improved Scaling Laws in Linear Regression via Data Reuse},
  booktitle = {Advances in Neural Information Processing Systems},
  year      = {2025},
  note      = {arXiv:2506.08415}
}

@misc{chen2026sketched,
  author        = {Chen, Ziyan and Zhou, Zhongzhu and Zhou, Ding-Xuan},
  title         = {From One-Pass {SGD} to Data Reuse: Mini-Batch Scaling Laws
                   in Sketched Linear Regression},
  year          = {2026},
  eprint        = {2605.24316},
  archivePrefix = {arXiv},
  note          = {arXiv:2605.24316}
}

@inproceedings{needell2014kaczmarz,
  author    = {Needell, Deanna and Srebro, Nathan and Ward, Rachel},
  title     = {Stochastic Gradient Descent, Weighted Sampling, and the
               Randomized {K}aczmarz Algorithm},
  booktitle = {Advances in Neural Information Processing Systems},
  volume    = {27},
  year      = {2014}
}

@inproceedings{carlini2022firstprinciples,
  author    = {Carlini, Nicholas and Chien, Steve and Nasr, Milad and
               Song, Shuang and Terzis, Andreas and Tram{\`e}r, Florian},
  title     = {Membership Inference Attacks From First Principles},
  booktitle = {IEEE Symposium on Security and Privacy (SP)},
  pages     = {1897--1914},
  year      = {2022},
  doi       = {10.1109/SP46214.2022.9833649}
}

@inproceedings{wang2024nearinterp,
  author    = {Wang, Yutong and Sonthalia, Rishi and Hu, Wei},
  title     = {Near-Interpolators: Rapid Norm Growth and the Trade-Off
               between Interpolation and Generalization},
  booktitle = {International Conference on Artificial Intelligence and
               Statistics},
  series    = {Proceedings of Machine Learning Research},
  volume    = {238},
  year      = {2024},
  note      = {arXiv:2403.07264}
}

@article{zhou2024optimistic,
  author  = {Zhou, Lijia and Koehler, Frederic and Sutherland, Danica J. and
             Srebro, Nathan},
  title   = {Optimistic Rates: A Unifying Theory for Interpolation Learning
             and Regularization in Linear Regression},
  journal = {ACM/IMS Journal of Data Science},
  volume  = {1},
  number  = {2},
  pages   = {1--51},
  year    = {2024},
  doi     = {10.1145/3594234}
}

@inproceedings{choquette2021labelonly,
  author    = {Choquette-Choo, Christopher A. and Tram{\`e}r, Florian and
               Carlini, Nicholas and Papernot, Nicolas},
  title     = {Label-Only Membership Inference Attacks},
  booktitle = {International Conference on Machine Learning},
  series    = {Proceedings of Machine Learning Research},
  volume    = {139},
  pages     = {1964--1974},
  year      = {2021}
}

@article{kaczmarz1937angenaeherte,
  author  = {Kaczmarz, Stefan},
  title   = {Angen{\"a}herte {A}ufl{\"o}sung von {S}ystemen linearer {G}leichungen},
  journal = {Bulletin International de l'Acad{\'e}mie Polonaise des Sciences
             et des Lettres, Classe A},
  volume  = {35},
  pages   = {355--357},
  year    = {1937}
}

@book{herman2009fundamentals,
  author    = {Herman, Gabor T.},
  title     = {Fundamentals of Computerized Tomography: Image Reconstruction
               from Projections},
  edition   = {2nd},
  series    = {Advances in Pattern Recognition},
  publisher = {Springer},
  address   = {London},
  year      = {2009},
  doi       = {10.1007/978-1-84628-723-7},
  note      = {Algebraic reconstruction techniques and relaxation:
               Chapter~11, pp.~193--216}
}

@incollection{bertsekas2011incremental,
  author    = {Bertsekas, Dimitri P.},
  title     = {Incremental Gradient, Subgradient, and Proximal Methods for
               Convex Optimization: A Survey},
  booktitle = {Optimization for Machine Learning},
  editor    = {Sra, Suvrit and Nowozin, Sebastian and Wright, Stephen J.},
  publisher = {MIT Press},
  pages     = {85--120},
  year      = {2011},
  doi       = {10.7551/mitpress/8996.003.0006},
  note      = {Also arXiv:1507.01030}
}

@article{belkin2019reconciling,
  author  = {Belkin, Mikhail and Hsu, Daniel and Ma, Siyuan and Mandal, Soumik},
  title   = {Reconciling Modern Machine-Learning Practice and the Classical Bias--Variance Trade-Off},
  journal = {Proceedings of the National Academy of Sciences},
  volume  = {116},
  number  = {32},
  pages   = {15849--15854},
  year    = {2019},
  doi     = {10.1073/pnas.1903070116}
}

@article{belkin2020twomodels,
  author  = {Belkin, Mikhail and Hsu, Daniel and Xu, Ji},
  title   = {Two Models of Double Descent for Weak Features},
  journal = {SIAM Journal on Mathematics of Data Science},
  volume  = {2},
  number  = {4},
  pages   = {1167--1180},
  year    = {2020},
  doi     = {10.1137/20M1336072}
}

@article{belkin2021fit,
  author  = {Belkin, Mikhail},
  title   = {Fit without Fear: Remarkable Mathematical Phenomena of Deep Learning through the Prism of Interpolation},
  journal = {Acta Numerica},
  volume  = {30},
  pages   = {203--248},
  year    = {2021},
  doi     = {10.1017/S0962492921000039}
}

@article{bartlett2020benign,
  author  = {Bartlett, Peter L. and Long, Philip M. and Lugosi, G{\'a}bor and Tsigler, Alexander},
  title   = {Benign Overfitting in Linear Regression},
  journal = {Proceedings of the National Academy of Sciences},
  volume  = {117},
  number  = {48},
  pages   = {30063--30070},
  year    = {2020},
  doi     = {10.1073/pnas.1907378117}
}

@article{hastie2022surprises,
  author  = {Hastie, Trevor and Montanari, Andrea and Rosset, Saharon and Tibshirani, Ryan J.},
  title   = {Surprises in High-Dimensional Ridgeless Least Squares Interpolation},
  journal = {The Annals of Statistics},
  volume  = {50},
  number  = {2},
  pages   = {949--986},
  year    = {2022},
  doi     = {10.1214/21-AOS2133}
}

@article{muthukumar2020harmless,
  author  = {Muthukumar, Vidya and Vodrahalli, Kailas and Subramanian, Vignesh and Sahai, Anant},
  title   = {Harmless Interpolation of Noisy Data in Regression},
  journal = {IEEE Journal on Selected Areas in Information Theory},
  volume  = {1},
  number  = {1},
  pages   = {67--83},
  year    = {2020},
  doi     = {10.1109/JSAIT.2020.2984716}
}

@article{shannon1949communication,
  author  = {Shannon, Claude E.},
  title   = {Communication in the Presence of Noise},
  journal = {Proceedings of the IRE},
  volume  = {37},
  number  = {1},
  pages   = {10--21},
  year    = {1949},
  doi     = {10.1109/JRPROC.1949.232969}
}

@book{kotelnikov1959optimum,
  author     = {Kotel'nikov, V. A.},
  title      = {The Theory of Optimum Noise Immunity},
  translator = {Silverman, R. A.},
  publisher  = {McGraw--Hill},
  address    = {New York},
  year       = {1959},
  note       = {English translation of the Russian edition; based on the author's 1947 doctoral dissertation.}
}

@book{wozencraft1965principles,
  author    = {Wozencraft, John M. and Jacobs, Irwin Mark},
  title     = {Principles of Communication Engineering},
  publisher = {John Wiley \& Sons},
  address   = {New York},
  year      = {1965}
}

@book{gallager1968information,
  author    = {Gallager, Robert G.},
  title     = {Information Theory and Reliable Communication},
  publisher = {John Wiley \& Sons},
  address   = {New York},
  year      = {1968}
}

@inproceedings{borade2008euclidean,
  author    = {Borade, Shashi and Zheng, Lizhong},
  title     = {Euclidean Information Theory},
  booktitle = {International Zurich Seminar on Communications},
  pages     = {14--17},
  year      = {2008}
}

@article{huang2015euclideanNetworks,
  author  = {Huang, Shao-Lun and Suh, Changho and Zheng, Lizhong},
  title   = {Euclidean Information Theory of Networks},
  journal = {IEEE Transactions on Information Theory},
  volume  = {61},
  number  = {12},
  pages   = {6795--6814},
  year    = {2015},
  doi     = {10.1109/TIT.2015.2484066}
}

@article{huang2024universal,
  author  = {Huang, Shao-Lun and Makur, Anuran and Wornell, Gregory W. and Zheng, Lizhong},
  title   = {Universal Features for High-Dimensional Learning and Inference: Information Theoretic and Geometric Perspectives},
  journal = {Foundations and Trends in Communications and Information Theory},
  volume  = {21},
  number  = {1--2},
  pages   = {1--299},
  year    = {2024},
  doi     = {10.1561/0100000107}
}

@article{etkin2008gaussian,
  author  = {Etkin, Raul H. and Tse, David N. C. and Wang, Hua},
  title   = {Gaussian Interference Channel Capacity to Within One Bit},
  journal = {IEEE Transactions on Information Theory},
  volume  = {54},
  number  = {12},
  pages   = {5534--5562},
  year    = {2008},
  doi     = {10.1109/TIT.2008.2006447}
}

@article{avestimehr2011wireless,
  author  = {Avestimehr, Amir Salman and Diggavi, Suhas N. and Tse, David N. C.},
  title   = {Wireless Network Information Flow: A Deterministic Approach},
  journal = {IEEE Transactions on Information Theory},
  volume  = {57},
  number  = {4},
  pages   = {1872--1905},
  year    = {2011},
  doi     = {10.1109/TIT.2011.2110110}
}

@article{verdu2002spectral,
  author  = {Verd{\'u}, Sergio},
  title   = {Spectral Efficiency in the Wideband Regime},
  journal = {IEEE Transactions on Information Theory},
  volume  = {48},
  number  = {6},
  pages   = {1319--1343},
  year    = {2002},
  doi     = {10.1109/TIT.2002.1003824}
}

@article{telatar2000capacityWideband,
  author  = {Telatar, Emre and Tse, David N. C.},
  title   = {Capacity and Mutual Information of Wideband Multipath Fading Channels},
  journal = {IEEE Transactions on Information Theory},
  volume  = {46},
  number  = {4},
  pages   = {1384--1400},
  year    = {2000},
  doi     = {10.1109/18.850674}
}

@article{medard2002bandwidth,
  author  = {M{\'e}dard, Muriel and Gallager, Robert G.},
  title   = {Bandwidth Scaling for Fading Multipath Channels},
  journal = {IEEE Transactions on Information Theory},
  volume  = {48},
  number  = {4},
  pages   = {840--852},
  year    = {2002},
  doi     = {10.1109/18.992769}
}

@article{subramanian2002broadband,
  author  = {Subramanian, Vijay G. and Hajek, Bruce},
  title   = {Broad-Band Fading Channels: Signal Burstiness and Capacity},
  journal = {IEEE Transactions on Information Theory},
  volume  = {48},
  number  = {4},
  pages   = {809--827},
  year    = {2002},
  doi     = {10.1109/18.992768}
}

@article{zheng2002grassmann,
  author  = {Zheng, Lizhong and Tse, David N. C.},
  title   = {Communication on the Grassmann Manifold: A Geometric Approach to the Noncoherent Multiple-Antenna Channel},
  journal = {IEEE Transactions on Information Theory},
  volume  = {48},
  number  = {2},
  pages   = {359--383},
  year    = {2002},
  doi     = {10.1109/18.978730}
}

@article{hassibi2003training,
  author  = {Hassibi, Babak and Hochwald, Bertrand M.},
  title   = {How Much Training is Needed in Multiple-Antenna Wireless Links?},
  journal = {IEEE Transactions on Information Theory},
  volume  = {49},
  number  = {4},
  pages   = {951--963},
  year    = {2003},
  doi     = {10.1109/TIT.2003.809594}
}

@inproceedings{russo2016controlling,
  author    = {Russo, Daniel and Zou, James},
  title     = {Controlling Bias in Adaptive Data Analysis Using Information Theory},
  booktitle = {Proceedings of the 19th International Conference on Artificial Intelligence and Statistics},
  series    = {Proceedings of Machine Learning Research},
  volume    = {51},
  pages     = {1232--1240},
  publisher = {PMLR},
  year      = {2016}
}

@inproceedings{xu2017information,
  author    = {Xu, Aolin and Raginsky, Maxim},
  title     = {Information-Theoretic Analysis of Generalization Capability of Learning Algorithms},
  booktitle = {Advances in Neural Information Processing Systems},
  volume    = {30},
  pages     = {2524--2533},
  year      = {2017}
}

@article{bu2020tightening,
  author  = {Bu, Yuheng and Zou, Shaofeng and Veeravalli, Venugopal V.},
  title   = {Tightening Mutual Information-Based Bounds on Generalization Error},
  journal = {IEEE Journal on Selected Areas in Information Theory},
  volume  = {1},
  number  = {1},
  pages   = {121--130},
  year    = {2020},
  doi     = {10.1109/JSAIT.2020.2991139}
}

@inproceedings{steinke2020conditional,
  author    = {Steinke, Thomas and Zakynthinou, Lydia},
  title     = {Reasoning About Generalization via Conditional Mutual Information},
  booktitle = {Proceedings of Thirty Third Conference on Learning Theory},
  series    = {Proceedings of Machine Learning Research},
  volume    = {125},
  pages     = {3437--3452},
  publisher = {PMLR},
  year      = {2020}
}

@inproceedings{madry2018towards,
  author    = {Madry, Aleksander and Makelov, Aleksandar and Schmidt, Ludwig and Tsipras, Dimitris and Vladu, Adrian},
  title     = {Towards Deep Learning Models Resistant to Adversarial Attacks},
  booktitle = {International Conference on Learning Representations},
  year      = {2018}
}

@inproceedings{tsipras2019robustness,
  author    = {Tsipras, Dimitris and Santurkar, Shibani and Engstrom, Logan and Turner, Alexander and Madry, Aleksander},
  title     = {Robustness May Be at Odds with Accuracy},
  booktitle = {International Conference on Learning Representations},
  year      = {2019}
}

@inproceedings{ilyas2019adversarial,
  author    = {Ilyas, Andrew and Santurkar, Shibani and Tsipras, Dimitris and Engstrom, Logan and Tran, Brandon and Madry, Aleksander},
  title     = {Adversarial Examples Are Not Bugs, They Are Features},
  booktitle = {Advances in Neural Information Processing Systems},
  volume    = {32},
  year      = {2019}
}

@inproceedings{schmidt2018adversarial,
  author    = {Schmidt, Ludwig and Santurkar, Shibani and Tsipras, Dimitris and Talwar, Kunal and Madry, Aleksander},
  title     = {Adversarially Robust Generalization Requires More Data},
  booktitle = {Advances in Neural Information Processing Systems},
  volume    = {31},
  year      = {2018}
}

@inproceedings{mallinar2022taxonomy,
  author    = {Mallinar, Neil and Simon, James B. and Abedsoltan, Amirhesam and Pandit, Parthe and Belkin, Misha and Nakkiran, Preetum},
  title     = {Benign, Tempered, or Catastrophic: Toward a Refined Taxonomy of Overfitting},
  booktitle = {Advances in Neural Information Processing Systems 35},
  year      = {2022},
  url       = {https://proceedings.neurips.cc/paper_files/paper/2022/hash/08342dc6ab69f23167b4123086ad4d38-Abstract-Conference.html}
}

@inproceedings{attias2024information,
  author    = {Attias, Idan and Dziugaite, Gintare Karolina and Haghifam, Mahdi and Livni, Roi and Roy, Daniel M.},
  title     = {Information Complexity of Stochastic Convex Optimization: Applications to Generalization, Memorization, and Tracing},
  booktitle = {Proceedings of the 41st International Conference on Machine Learning},
  year      = {2024},
  note      = {arXiv:2402.09327}
}

@article{dwork2014algorithmic,
  title     = {The Algorithmic Foundations of Differential Privacy},
  author    = {Dwork, Cynthia and Roth, Aaron},
  journal   = {Foundations and Trends in Theoretical Computer Science},
  volume    = {9},
  number    = {3--4},
  pages     = {211--407},
  year      = {2014},
  doi       = {10.1561/0400000042}
}

@inproceedings{dwork2006calibrating,
  title     = {Calibrating Noise to Sensitivity in Private Data Analysis},
  author    = {Dwork, Cynthia and McSherry, Frank and Nissim, Kobbi and Smith, Adam},
  booktitle = {Theory of Cryptography Conference},
  pages     = {265--284},
  year      = {2006},
  publisher = {Springer}
}

@article{chaudhuri2011differentially,
  title   = {Differentially Private Empirical Risk Minimization},
  author  = {Chaudhuri, Kamalika and Monteleoni, Claire and Sarwate, Anand D.},
  journal = {Journal of Machine Learning Research},
  volume  = {12},
  pages   = {1069--1109},
  year    = {2011}
}

@inproceedings{abadi2016deep,
  title     = {Deep Learning with Differential Privacy},
  author    = {Abadi, Martin and Chu, Andy and Goodfellow, Ian and McMahan, H. Brendan
               and Mironov, Ilya and Talwar, Kunal and Zhang, Li},
  booktitle = {Proceedings of the 2016 ACM SIGSAC Conference on Computer and
               Communications Security},
  pages     = {308--318},
  year      = {2016}
}

@inproceedings{shokri2017membership,
  title     = {Membership Inference Attacks against Machine Learning Models},
  author    = {Shokri, Reza and Stronati, Marco and Song, Congzheng and Shmatikov, Vitaly},
  booktitle = {2017 IEEE Symposium on Security and Privacy (SP)},
  pages     = {3--18},
  year      = {2017},
  organization = {IEEE}
}

@article{sankar2013utility,
  title     = {Utility-Privacy Tradeoffs in Databases: An
               Information-Theoretic Approach},
  author    = {Sankar, Lalitha and Rajagopalan, S. Raj and Poor, H. Vincent},
  journal   = {IEEE Transactions on Information Forensics and Security},
  volume    = {8},
  number    = {6},
  pages     = {838--852},
  year      = {2013}
}

@inproceedings{dwork2014analyze,
  title     = {Analyze Gauss: Optimal Bounds for Privacy-Preserving
               Principal Component Analysis},
  author    = {Dwork, Cynthia and Talwar, Kunal and Thakurta, Abhradeep
               and Zhang, Li},
  booktitle = {Proceedings of the 46th Annual ACM Symposium on Theory
               of Computing},
  pages     = {11--20},
  year      = {2014}
}

@inproceedings{chaudhuri2013near,
  title     = {Near-Optimal Algorithms for Differentially-Private
               Principal Components},
  author    = {Chaudhuri, Kamalika and Sarwate, Anand D. and Sinha, Kaushik},
  booktitle = {Advances in Neural Information Processing Systems},
  volume    = {26},
  year      = {2013}
}

@inproceedings{hardt2010geometry,
  title     = {On the Geometry of Differential Privacy},
  author    = {Hardt, Moritz and Talwar, Kunal},
  booktitle = {Proceedings of the Forty-Second ACM Symposium on Theory of
               Computing},
  pages     = {705--714},
  year      = {2010},
  doi       = {10.1145/1806689.1806786}
}

@inproceedings{kattis2017lower,
  title     = {Lower Bounds for Differential Privacy from Gaussian Width},
  author    = {Kattis, Assimakis and Nikolov, Aleksandar},
  booktitle = {33rd International Symposium on Computational Geometry},
  series    = {Leibniz International Proceedings in Informatics},
  volume    = {77},
  pages     = {45:1--45:16},
  year      = {2017},
  doi       = {10.4230/LIPIcs.SoCG.2017.45}
}

@inproceedings{kamath2022new,
  title     = {New Lower Bounds for Private Estimation and a Generalized
               Fingerprinting Lemma},
  author    = {Kamath, Gautam and Mouzakis, Argyris and Singhal, Vikrant},
  booktitle = {Advances in Neural Information Processing Systems},
  volume    = {35},
  year      = {2022},
  url       = {https://proceedings.neurips.cc/paper_files/paper/2022/hash/9a6b278218966499194491f55ccf8b75-Abstract-Conference.html}
}

@inproceedings{touvron2019fixing,
  author    = {Touvron, Hugo and Vedaldi, Andrea and Douze, Matthijs and J{\'e}gou, Herv{\'e}},
  title     = {Fixing the Train-Test Resolution Discrepancy},
  booktitle = {Advances in Neural Information Processing Systems 32},
  year      = {2019}
}

@inproceedings{zhang2018mixup,
  author    = {Zhang, Hongyi and Cisse, Moustapha and Dauphin, Yann N. and Lopez-Paz, David},
  title     = {mixup: Beyond Empirical Risk Minimization},
  booktitle = {International Conference on Learning Representations},
  year      = {2018}
}

@inproceedings{yun2019cutmix,
  author    = {Yun, Sangdoo and Han, Dongyoon and Oh, Seong Joon and Chun, Sanghyuk and Choe, Junsuk and Yoo, Youngjoon},
  title     = {{CutMix}: Regularization Strategy to Train Strong Classifiers with Localizable Features},
  booktitle = {Proceedings of the IEEE/CVF International Conference on Computer Vision},
  year      = {2019}
}

@inproceedings{cubuk2019autoaugment,
  author    = {Cubuk, Ekin D. and Zoph, Barret and Mane, Dandelion and Vasudevan, Vijay and Le, Quoc V.},
  title     = {{AutoAugment}: Learning Augmentation Strategies from Data},
  booktitle = {Proceedings of the IEEE/CVF Conference on Computer Vision and Pattern Recognition},
  year      = {2019}
}

@inproceedings{sorscher2022beyond,
  author    = {Sorscher, Ben and Geirhos, Robert and Shekhar, Shashank and Ganguli, Surya and Morcos, Ari S.},
  title     = {Beyond Neural Scaling Laws: Beating Power Law Scaling via Data Pruning},
  booktitle = {Advances in Neural Information Processing Systems 35},
  year      = {2022}
}

@inproceedings{paul2021dataDiet,
  author    = {Paul, Mansheej and Ganguli, Surya and Dziugaite, Gintare Karolina},
  title     = {Deep Learning on a Data Diet: Finding Important Examples Early in Training},
  booktitle = {Advances in Neural Information Processing Systems 34},
  year      = {2021}
}

@inproceedings{toneva2019empirical,
  author    = {Toneva, Mariya and Sordoni, Alessandro and Tachet des Combes, Remi and Trischler, Adam and Bengio, Yoshua and Gordon, Geoffrey J.},
  title     = {An Empirical Study of Example Forgetting during Deep Neural Network Learning},
  booktitle = {International Conference on Learning Representations},
  year      = {2019}
}

@inproceedings{sagawa2020groupDRO,
  author    = {Sagawa, Shiori and Koh, Pang Wei and Hashimoto, Tatsunori B. and Liang, Percy},
  title     = {Distributionally Robust Neural Networks for Group Shifts: On the Importance of Regularization for Worst-Case Generalization},
  booktitle = {International Conference on Learning Representations},
  year      = {2020}
}

@inproceedings{sinha2018certifying,
  author    = {Sinha, Aman and Namkoong, Hongseok and Duchi, John},
  title     = {Certifying Some Distributional Robustness with Principled Adversarial Training},
  booktitle = {International Conference on Learning Representations},
  year      = {2018}
}

@article{nr_kobak2020,
  author  = {Dmitry Kobak and Jonathan Lomond and Benoit Sanchez},
  title   = {The Optimal Ridge Penalty for Real-World High-Dimensional Data Can Be Zero or Negative due to the Implicit Ridge Regularization},
  journal = {Journal of Machine Learning Research},
  volume  = {21},
  number  = {169},
  pages   = {1--16},
  year    = {2020}
}

@inproceedings{nr_wu2020,
  author    = {Denny Wu and Ji Xu},
  title     = {On the Optimal Weighted {$\ell_2$} Regularization in Overparameterized Linear Regression},
  booktitle = {Advances in Neural Information Processing Systems},
  volume    = {33},
  pages     = {10112--10123},
  year      = {2020}
}

@article{nr_tsigler2023,
  author  = {Alexander Tsigler and Peter L. Bartlett},
  title   = {Benign Overfitting in Ridge Regression},
  journal = {Journal of Machine Learning Research},
  volume  = {24},
  number  = {123},
  pages   = {1--76},
  year    = {2023}
}

@inproceedings{szegedy2014intriguing,
  title     = {Intriguing Properties of Neural Networks},
  author    = {Szegedy, Christian and Zaremba, Wojciech and Sutskever, Ilya
               and Bruna, Joan and Erhan, Dumitru and Goodfellow, Ian
               and Fergus, Rob},
  booktitle = {International Conference on Learning Representations},
  year      = {2014}
}

@inproceedings{goodfellow2015explaining,
  title     = {Explaining and Harnessing Adversarial Examples},
  author    = {Goodfellow, Ian J. and Shlens, Jonathon and Szegedy, Christian},
  booktitle = {International Conference on Learning Representations},
  year      = {2015}
}

@article{fawzi2018analysis,
  title   = {Analysis of Classifiers' Robustness to Adversarial Perturbations},
  author  = {Fawzi, Alhussein and Fawzi, Omar and Frossard, Pascal},
  journal = {Machine Learning},
  volume  = {107},
  pages   = {481--508},
  year    = {2018}
}

@inproceedings{gilmer2018adversarial,
  title     = {Adversarial Spheres},
  author    = {Gilmer, Justin and Metz, Luke and Faghri, Fartash and
               Schoenholz, Samuel S. and Raghu, Maithra and Wattenberg, Martin
               and Goodfellow, Ian},
  booktitle = {International Conference on Learning Representations Workshop},
  year      = {2018}
}

@inproceedings{raghunathan2020understanding,
  title     = {Understanding and Mitigating the Tradeoff between Robustness and Accuracy},
  author    = {Raghunathan, Aditi and Xie, Sang Michael and Yang, Fanny
               and Duchi, John and Liang, Percy},
  booktitle = {Proceedings of the 37th International Conference on Machine Learning},
  year      = {2020}
}

@inproceedings{xing2021adversarially,
  title     = {Adversarially Robust Estimate and Risk Analysis in Linear Regression},
  author    = {Xing, Yue and Zhang, Ruizhi and Cheng, Guang},
  booktitle = {Proceedings of the 24th International Conference on Artificial Intelligence
               and Statistics},
  pages     = {514--522},
  year      = {2021}
}

@article{hao2024surprising,
  title   = {The Surprising Harmfulness of Benign Overfitting for Adversarial Robustness},
  author  = {Hao, Yifan and Zhang, Tong},
  journal = {arXiv preprint arXiv:2401.12236},
  year    = {2024}
}

@inproceedings{javanmard2020precise,
  title={Precise Tradeoffs in Adversarial Training for Linear Regression},
  author={Javanmard, Adel and Soltanolkotabi, Mahdi and Hassani, Hamed},
  booktitle={Proceedings of Thirty Third Conference on Learning Theory},
  series={Proceedings of Machine Learning Research},
  volume={125},
  pages={2034--2078},
  year={2020}
}

@article{dohmatob2023robustGeneral,
  title   = {Robust Linear Regression: Phase-Transitions and Precise Tradeoffs
             for General Norms},
  author  = {Dohmatob, Elvis and Scetbon, Meyer},
  journal = {arXiv preprint arXiv:2308.00556},
  year    = {2023}
}

@inproceedings{scetbon2023robust,
  title     = {Robust Linear Regression: Gradient-Descent, Early-Stopping,
               and Beyond},
  author    = {Scetbon, Meyer and Dohmatob, Elvis},
  booktitle = {Proceedings of the 26th International Conference on Artificial
               Intelligence and Statistics},
  series    = {Proceedings of Machine Learning Research},
  volume    = {206},
  publisher = {PMLR},
  year      = {2023}
}

@inproceedings{dohmatob2019generalizedNoFreeLunch,
  title     = {Generalized No Free Lunch Theorem for Adversarial Robustness},
  author    = {Dohmatob, Elvis},
  booktitle = {Proceedings of the 36th International Conference on Machine
               Learning},
  series    = {Proceedings of Machine Learning Research},
  volume    = {97},
  pages     = {1646--1654},
  year      = {2019}
}

@inproceedings{bubeck2021universalLaw,
  title     = {A Universal Law of Robustness via Isoperimetry},
  author    = {Bubeck, S{\'e}bastien and Sellke, Mark},
  booktitle = {Advances in Neural Information Processing Systems},
  volume    = {34},
  year      = {2021}
}

@article{kashin1977sections,
  title   = {The Widths of Certain Finite-Dimensional Sets and Classes of
             Smooth Functions},
  author  = {Kashin, B. S.},
  journal = {Izvestiya Akademii Nauk SSSR. Seriya Matematicheskaya},
  volume  = {41},
  pages   = {334--351},
  year    = {1977}
}

@article{figiel1977dimension,
  title   = {The Dimension of Almost Spherical Sections of Convex Bodies},
  author  = {Figiel, T. and Lindenstrauss, J. and Milman, V. D.},
  journal = {Acta Mathematica},
  volume  = {139},
  number  = {1--2},
  pages   = {53--94},
  year    = {1977}
}

@book{vershynin2018high,
  title     = {High-Dimensional Probability: An Introduction with Applications
               in Data Science},
  author    = {Vershynin, Roman},
  publisher = {Cambridge University Press},
  year      = {2018}
}

\clearpage
\phantomsection
\addcontentsline{toc}{section}{Notation Index}
\section*{Notation Index}

Table~\ref{tab:notation} collects the principal notation used in the paper.
Proof-local symbols are defined where they occur. All logarithms in this paper are natural logarithms.  

Five cross-section reuses
are worth flagging:
\begin{itemize}
\item \(t\): training-error coordinate on the frontier. It is also used as the SGD update index.
\item \(g\): test-error coordinate on the frontier; \(\vec{g}\) is a Gaussian vector in the
  rotation proof.
\item \(a,b\): polynomial exponents in the benign-misfitting-window footnote of
  the introduction, and local scalar variables in a few proofs.
\item \(c\): scaled-interpolator multiplier in
  Section~\ref{subsec:scaled-interpolation}, where it is also the common training prediction by
  \eqref{eq:c-r-relation}.  Elsewhere \(c\) is a local scalar, introduced
  where used: the learning-rate misspecification factor
  \(\eta=c\,\eta_{\rm pass}\) in Remark~\ref{rem:constant-factor-lr}, and a
  local real number in the median-shielding lemma.  These carry no relation to
  the multiplier and are local to their statements.
\item \(T\): the number of mini-batch steps in one pass in
  Subsection~\ref{subsec:minibatch-scaling}; a scalar projection in the
  random-rotation proof.
\end{itemize}


{\small
\begin{longtable}{@{}p{0.18\textwidth}p{0.58\textwidth}p{0.18\textwidth}@{}}
\caption{Principal notation used in this paper.  Entries are grouped by topic;
the Reference column points to the first definition or principal use.}
\label{tab:notation} \\
\hline
\textbf{Symbol} & \textbf{Description} & \textbf{Reference} \\
\hline
\endfirsthead
\multicolumn{3}{@{}l}{\textit{Table~\ref{tab:notation} continued.}}\\
\hline
\textbf{Symbol} & \textbf{Description} & \textbf{Reference} \\
\hline
\endhead
\hline
\endfoot

\notationgroupplain{General operators and conventions}
\(\R\)
  & Real numbers.
  & Preamble \\
\(\E\), \(\E_\xi\), \(\Pr_\xi\)
  & Expectation and probability; the subscript \(\xi\) denotes averaging or
    probability over label noise.
  & \eqref{eq:risk-defs}, \eqref{eq:noisy-full-pass-risk-main} \\
\(\calN(\cdot,\cdot)\)
  & Gaussian distribution.
  & \eqref{eq:test-law} \\
\(\diag(\cdot)\)
  & Diagonal matrix with the listed entries.
  & \eqref{eq:test-law} \\
\(\median(\cdot)\)
  & Median of three scalar predictions.
  & \eqref{eq:median-predictor-def} \\
\(\bm{I}_d\), \(\bm{I}_n\), \(\bm{I}_{d+1}\)
  & Identity matrices of the indicated sizes.
  & \eqref{eq:one-sample-test-error}, \eqref{eq:gram-inverse} \\
\(\vec{0}\)
  & Zero vector of the dimension determined by context.
  & \eqref{eq:test-law}, \eqref{eq:sgd-update} \\
\(\one\)
  & All-ones vector in \(\R^n\).
  & \eqref{eq:gram-inverse} \\
\(\mathbf{1}\{\cdot\}\)
  & Indicator of the event in braces.
  & \eqref{eq:rotation-aware-data}, Subsec.~\ref{subsec:minibatch-scaling} of App.~\ref{app:sgd-refinements} \\
\((\cdot)^\top\)
  & Transpose.
  & \eqref{eq:orthogonal-vectors} \\
\(\lambda_{\max}(\cdot)\)
  & Largest eigenvalue, used in the stability comparisons.  Three regimes are
    distinguished: a long stream of \emph{fresh} examples is governed by the
    population covariance, as in \eqref{eq:stability-fresh}; a
    \emph{reused} example is governed by its own per-example boundary
    \(\eta\|\vec{x}_i\|_2^2<2\), as in \eqref{eq:stability-reuse}; and
    full-batch gradient descent on the summed objective is governed by
    \(\lambda_{\max}(\Xt\Xt^{\top})=d+\gamma n\)
    (Subsection~\ref{subsec:related-large-steps}).
  & \eqref{eq:stability-fresh}, \eqref{eq:stability-reuse},
    Sec.~\ref{sec:sgd} \\
\(z[1]\)
  & Scalar spike coordinate extracted from \(\vec{z}\in\R^{d+1}\).
  & \eqref{eq:coordinate-convention} \\
\(\vec{z}_{-1}\)
  & Nuisance-coordinate subvector of \(\vec{z}\in\R^{d+1}\).
  & \eqref{eq:coordinate-convention} \\
\(\|\cdot\|_2,\|\cdot\|_1,\|\cdot\|_\infty\)
  & Euclidean, \(\ell_1\), and \(\ell_\infty\) norms.
  & \eqref{eq:one-sample-data}, Lemma~\ref{lem:random-rotation} \\
\(\|\cdot\|_{\rm op}\)
  & Operator norm, \(\|\bm{A}\|_{\rm op}:=\sup_{\|\vec{x}\|_2=1}\|\bm{A}\vec{x}\|_2\);
    the largest absolute eigenvalue for symmetric \(\bm{A}\), the largest
    eigenvalue for a covariance.
  & \eqref{eq:flat-effective-ranks}, Sec.~\ref{par:nonflat-tails} \\
\begin{tabular}{@{}l@{}}
\(\ll,\gg,o(1),O(\cdot)\)\\
\(\asymp,\gtrsim,\lesssim\)\\
\(o_{\mathsf P}(1),O_{\mathsf P}(\cdot)\)
\end{tabular}
  & Standard deterministic and probabilistic asymptotic notation.
  & Throughout \\
\(\lfloor\cdot\rfloor,\lceil\cdot\rceil\)
  & Integer floor and ceiling.  Integer sample counts are obtained from
    real-valued thresholds by rounding up with \(\lceil\cdot\rceil\).
  & Cor.~\ref{cor:sample-complexity} \\[4pt]

\notationgroup{sec:setup-frontier}
  {Core deterministic model (Section~\ref*{sec:setup-frontier})}
\(d\)
  & Nuisance dimension.  The full ambient dimension is \(d+1\).
  & \eqref{eq:orthogonal-vectors} \\
\(n\)
  & Number of training samples.
  & \eqref{eq:training-data} \\
\(\gamma\)
  & Spike-to-nuisance variance ratio after nuisance-variance normalization;
    the main fourth-quadrant theory assumes \(\gamma>1\).
  & \eqref{eq:training-data}, Thm.~\ref{thm:pareto} \\
\(i,j,k\)
  & Sample or summation indices.
  & \eqref{eq:orthogonal-vectors} \\
\(\vec{v}_i\)
  & Orthogonal nuisance component of training sample \(i\).
  & \eqref{eq:orthogonal-vectors} \\
\(\vec{x}_i\)
  & Deterministic training vector.
  & \eqref{eq:training-data} \\
\(\Xt\)
  & Data matrix \([\vec{x}_1,\ldots,\vec{x}_n]\).
  & \eqref{eq:training-data}, Sec.~\ref{sec:sgd} \\
\(y_i\)
  & Training label; equal to \(1\) in the clean deterministic model and
    perturbed in the noisy-label model of Subsection~\ref{subsec:one-complete-pass}.
  & \eqref{eq:training-labels}, \eqref{eq:noisy-full-pass-risk-main} \\
\(\vec{x}_{\rm test}\)
  & Gaussian test point.
  & \eqref{eq:test-law} \\
\(y_{\rm test}\)
  & Scalar normalized-spike test label.
  & \eqref{eq:test-law} \\
\(\bm{\Sigma}\)
  & Population test covariance \(\diag(\gamma,1,\ldots,1)\).
  & \eqref{eq:test-law}, \eqref{eq:stability-fresh} \\
\(\vec{u}\)
  & Linear predictor, usually in the training span.
  & \eqref{eq:risk-defs} \\
\(\vec{\alpha}\)
  & Coefficient vector of a span predictor \(\vec{u}=\Xt\vec{\alpha}\).
  & \eqref{eq:risk-defs} \\
\(\alpha_i\)
  & Scalar entry \(i\) of \(\vec{\alpha}\).
  & \eqref{eq:sq-defs} \\
\(\Ttrain,\Ttest\)
  & Empirical training and clean test squared errors; the prose also calls
    these quantities risks, as declared after \eqref{eq:risk-defs}.  In the
    one-sample warm-up they are also written as functions of \(\eta\).
  & \eqref{eq:risk-defs}, \eqref{eq:general-risk-defs} \\
\(s,q\)
  & Coefficient sum and coefficient energy.
  & \eqref{eq:sq-defs} \\
\(\vec{w}^{\star}\)
  & Spike oracle \((1/\sqrt\gamma,0,\ldots,0)^\top\).
  & \eqref{eq:oracle-def} \\
\(\rho\)
  & Sample ratio \(\rho=\gamma n/d=n/(d/\gamma)\), measuring \(n\)
    against the interpolation threshold.
  & \eqref{eq:scale-identities} \\
\(R\)
  & Signal-to-nuisance ratio
    \(R=\gamma^2n/d=\gamma\rho=n/(d/\gamma^2)\), measuring \(n\)
    against the first-useful-span threshold; the minimum span test error is
    \(g_{\min}=1/(1+R)\).  Label noise
    changes the SGD risk but not \(R\).
  & \eqref{eq:scale-identities} \\
\(t,g\)
  & Training-error and test-error coordinates on the frontier.
  & \eqref{eq:pareto-set} \\
\(g_+(t)\)
  & Test error along the train--test tradeoff curve, as a function of training error.
  & Thm.~\ref{thm:pareto}, \eqref{eq:frontier-endpoint-form} \\
\(t^\star,g_{\min},g_{\rm int}\)
  & Frontier endpoint training error, best span test error, and interpolator test error.
  & \eqref{eq:frontier-endpoints} \\
\(h,\widetilde g(h)\)
  & Signed training residual \(h=r(1+1/\rho)-1=(\gamma+d/n)s-1\) and the
    signed-residual test-risk curve; \(g_\pm(t)=\widetilde g(\pm\sqrt t)\)
    are the overshooting and undershooting branches.
  & \eqref{eq:frontier-h-def}, \eqref{eq:frontier-signed-risk},
    \eqref{eq:frontier-branch-defs} \\[4pt]

\pagebreak
\notationgroup{sec:one-sample}
  {One-sample warm-up (Section~\ref*{sec:one-sample})}
\(\vec{x},\vec{v}\)
  & Single training point and its nuisance component.
  & \eqref{eq:one-sample-data} \\
\(y\)
  & Scalar label of the single training point.
  & \eqref{eq:one-sample-data} \\
\(\eta\)
  & Learning rate of the SGD update \eqref{eq:sgd-update}; also the scalar
    multiplying a training point in the one-sample warm-up.
  & \eqref{eq:one-sample-train-loss}, \eqref{eq:sgd-update} \\
\(\vec{x}',\vec{z}'\)
  & Fresh one-sample test point and its nuisance component.
  & \eqref{eq:one-sample-test-error} \\
\(y'\)
  & Scalar spike coordinate in the fresh one-sample test point.
  & \eqref{eq:one-sample-test-error} \\
\begin{tabular}{@{}l@{}}
\(\eta_{\rm opt},\eta_{\rm interp}\)\\
\(\eta_{\rm cal},\eta_{\rm edge}\)
\end{tabular}
  & Test-optimal, interpolating, calibrated, and repeated-example
    stability-boundary learning rates.
  & \eqref{eq:one-sample-eta-opt}, \eqref{eq:one-sample-rates} \\[4pt]

\notationgroup{subsec:frontier}
  {Frontier, interpolation, and ridge (Sections~\ref*{subsec:frontier}--\ref*{subsec:scaled-interpolation})}
\(\vec{u}_{\rm int},\vec{\alpha}_{\rm int}\)
  & Minimum-norm interpolator and its coefficient vector.
  & \eqref{eq:interpolator-def} \\
\(\vec{u}_c\)
  & Scaled interpolator.
  & \eqref{eq:scaled-train-error} \\
\(c\)
  & Scaled-interpolator ray multiplier and common training prediction;
    \(h=c-1\) and \(r=c\,r_{\rm int}\).
  & \eqref{eq:scaled-train-error}, \eqref{eq:c-r-relation} \\
\(c^\star\)
  & Test-optimal scaled-interpolator multiplier.
  & \eqref{eq:cstar-def} \\
\(\lambda,\lambda^\star\)
  & Kernel ridge penalty and the negative penalty reaching the test-optimal point.
  & \eqref{eq:ridge-scale}, \eqref{eq:lambda-star} \\
\(\bm{K},\bm{W},\Xt_{-1}\)
  & Training Gram matrix, nuisance Gram matrix, and nuisance-coordinate data
    matrix used in the ridge and residue-amplification comparisons.
  & \eqref{eq:kernel-ridge-objective}, Sec.~\ref{par:residue-amplification} \\
\(r,r_{\rm int}\)
  & Spike calibration \(r=\gamma\one^\top\vec{\alpha}\); \(r_{\rm int}\) is the
    minimum-norm interpolator's calibration.
  & \eqref{eq:calibrated-frontier-form}, \eqref{eq:c-r-relation} \\
\(\mathrm{SU}(\vec u),\mathrm{CN}(\vec u)\)
  & Signal-survival factor and fresh-test contamination standard deviation.
  & \eqref{eq:survival-contamination}, \eqref{eq:survival-risk-decomposition}, \eqref{eq:best-span-survival-contamination} \\
\(r_k(\bm{\Sigma}),R_k(\bm{\Sigma})\)
  & Bartlett mass-to-peak and participation effective-rank families; the
    subscripts distinguish them from the un-subscripted calibration \(r\)
    and ratio \(R=\gamma^2n/d\).
  & \eqref{eq:effective-rank-families} \\
\begin{tabular}{@{}l@{}}
\(r_0(\bm{\Sigma}),R_0(\bm{\Sigma})\)\\
\(\operatorname{tr}(\bm{\Sigma}),\|\bm{\Sigma}\|_{\rm op}\)
\end{tabular}
  & The \(k=0\) full-spectrum members used in the benign-overfitting
    comparison, and their spectral ingredients.  The two sample-count
    thresholds they encode are displayed separately from the window
    inequality those thresholds bracket; the non-flat version is summarized
    in the related-work discussion.
  & Flat: \eqref{eq:flat-effective-ranks},
    \eqref{eq:flat-effective-rank-thresholds},
    \eqref{eq:flat-effective-rank-window}.
    Non-flat: Sec.~\ref{par:nonflat-tails} \\
\(\varepsilon\)
  & Target test error in sample-complexity, certified learning-rate, and
    adversarial-sensitivity calculations.
  & Cor.~\ref{cor:sample-complexity}, Subsec.~\ref{subsec:one-complete-pass} \\
\(n_{\rm mis},\rho_{\rm mis},n_{\rm int},\rho_{\rm int}\)
  & Misfitting sample threshold, the corresponding value of \(\rho\),
    the interpolation sample threshold, and the corresponding positive root.
  & Cor.~\ref{cor:sample-complexity} \\[4pt]

\notationgroup{sec:sgd}
  {One-pass SGD (Section~\ref*{sec:sgd})}
\(\vec{w}_i,\vec{w}_t,\vec{w}_n\)
  & SGD iterate after \(i\), \(t\), or a full pass of \(n\) examples; the
    exact final-iterate risk is \eqref{eq:one-pass-exact-risks}.
  & \eqref{eq:sgd-update}, \eqref{eq:sgd-span-rep},
    \eqref{eq:one-pass-exact-risks} \\
\(\alpha_i^{(t)},s_t,b_t\)
  & Coefficient on example \(i\) after \(t\) updates, coefficient sum, and spike residual.
  & \eqref{eq:sgd-span-rep}, \eqref{eq:sgd-state-defs},
    \eqref{eq:single-geometric-coeffs} \\
\(q_t\)
  & Coefficient energy after \(t\) updates, \(q_t=\sum_i(\alpha_i^{(t)})^{2}\), the pass's value
    of \(q\) in \eqref{eq:sq-defs}.
  & \eqref{eq:sq-defs}, \eqref{eq:single-geometric-energy} \\
\(r_n\)
  & Calibration of the final iterate, \(r_n=\gamma s_n=1-\mu^{n}\); it is the pass's value of the
    calibration ratio \(r\) in \eqref{eq:neff-frontier-form}, and reaches \(1\) exactly when the
    spike residual is driven to zero.
  & Prop.~\ref{prop:one-pass-risk-summary},
    \eqref{eq:one-pass-calibration}, \eqref{eq:neff-frontier-form} \\
\(h_i\)
  & End-of-pass signed residual on training example \(i\),
    \(h_i=\vec{w}_n^{\top}\vec{x}_i-1=-\mu^{n}+d\eta\,\mu^{\,i-1}\); positive
    --- the example is overshot --- whenever \(1/d<\eta<1/\gamma\).
  & \eqref{eq:reuse-residual-onepass}, App.~\ref{app:stability-boundaries} \\
\(\mu=1-\eta\gamma\)
  & Spike-residual multiplier: the signed factor by which one fresh update multiplies the
    remaining spike residual.  Its magnitude is below one exactly on \(0<\eta\gamma<2\), it is
    zero at \(\eta\gamma=1\), and it is expansive for \(\eta\gamma>2\).
  & Lemma~\ref{lem:single-geometric},
    Prop.~\ref{prop:one-pass-risk-summary} \\
\(\Lambda=\eta\gamma n\)
  & The pass's \emph{calibration budget}, and --- on the small-rate branch only --- the spike
    contraction exponent accumulated over one pass.  The exact exponent is
    \(-n\log|1-\eta\gamma|\), of which \(\Lambda\) is the first-order approximation; see
    \eqref{eq:eta-pass-step}.  It controls the
    exact one-pass spike-residual factor
    \((1-\eta\gamma)^n\), which equals \(e^{-\Lambda}(1+o(1))\) when
    \(n(\eta\gamma)^2=\Lambda\eta\gamma=o(1)\); the spike bias after the pass
    is then \(e^{-2\Lambda}(1+o(1))\).  Label noise changes the risk
    decomposition, not this definition.
  & Prop.~\ref{prop:one-pass-risk-summary},
    Cor.~\ref{cor:learning-rate-robustness},
    \eqref{eq:noisy-lr-bound-main} \\
\(n_{\rm eff}\)
  & Participation count \eqref{eq:neff-def}, \(s^2/q\).  For a nonnegative coefficient profile
    it counts how many training examples effectively share the load, and \(n/n_{\rm eff}\) is the
    participation deficit.  Unrelated to the effective ranks \(r_k,R_k\) of
    \eqref{eq:effective-rank-families}.
  & \eqref{eq:neff-def}, Rem.~\ref{rem:participation},
    Prop.~\ref{prop:one-pass-risk-summary},
    Rem.~\ref{rem:one-pass-log-participation}, \eqref{eq:one-pass-neff} \\
\(\bar{\vec{x}}_m,\vec{w}_{\rm avg}\)
  & Average of the first \(m\) samples and the calibrated averaged predictor.
  & \eqref{eq:flat-average-def}, \eqref{eq:flat-predictor-def} \\
\(m\)
  & Number of samples in the flat average.
  & \eqref{eq:flat-average-def}, \eqref{eq:flat-predictor-def} \\
\(\Lambda_{\rm pass}\)
  & The clean-label tuned budget \(\Lambda_{\rm pass}=\tfrac12\log R\), the budget carried by
    \(\eta_{\rm pass}\) of \eqref{eq:eta-pass-def}.
  & \eqref{eq:Lambda-def}, \eqref{eq:eta-pass-def}, \eqref{eq:eta-pass-step} \\
\(\eta_{\rm mb},B_{\rm mb},T\)
  & Mini-batch learning rate, scalar batch size, and number of batches in one pass,
    \(T=n/B_{\rm mb}\).
  & Subsec.~\ref{subsec:minibatch-scaling} of App.~\ref{app:sgd-refinements} \\
\(\mu_{\rm mb},\bar\alpha_k\)
  & Per-batch spike-residual multiplier \eqref{eq:minibatch-mu}, the batch analogue of
    \(\mu\), and the common coefficient the \(k\)th batch writes on each of its members.
  & \eqref{eq:minibatch-mu}, \eqref{eq:minibatch-participation} \\

\notationgroup{sec:robustness}
  {Adversarial sensitivity (Section~\ref*{sec:robustness}) and random rotation
   (Appendix~\ref*{app:rotation-aware})}
\(\vec{\theta}\)
  & Unit spike direction; equals \(\vec{e}_1\) in the aligned coordinates.
  & \eqref{eq:rotation-aware-data} \\
\(\vec{e}_1\)
  & First standard basis vector.
  & Lemma~\ref{lem:random-rotation} \\
\(\widetilde{\vec{v}}\)
  & Nuisance residue of a span predictor at spike calibration \(r\).
  & \eqref{eq:calibration-residue}, \eqref{eq:rotation-decomposition} \\
\(\vec{\Delta}\)
  & Adversarial perturbation vector.
  & \eqref{eq:A2-def}, Thm.~\ref{thm:absolute-rms} \\
\(\delta,\tau\)
  & Scalar perturbation-radius parameter and output tolerance.
  & \eqref{eq:A2-def}, Thm.~\ref{thm:absolute-rms} \\
\(A_2(\vec{u};\delta),A_\infty^{\rm rot}(\vec{u};\delta)\)
  & Adversarial sensitivity under RMS and randomly rotated coordinate
    perturbations.
  & \eqref{eq:A2-def}, \eqref{eq:A2-response}, \eqref{eq:Ainfty-rot-def}, \eqref{eq:Ainfty-rot-scale} \\
\(\sqrt{2/\pi}\)
  & Explicit asymptotic constant in the rotated-coordinate robustness formula.
  & \eqref{eq:Ainfty-rot-scale}, \eqref{eq:Ainfty-rot-scale-symmetric} \\
\begin{tabular}{@{}l@{}}
\(n_{\rm clean},n_{\rm int}\)\\
\(n_{\rm adv}(\varepsilon,\delta,\tau)\)
\end{tabular}
  & Clean-prediction, interpolation, and adversarial-sensitivity sample
    thresholds.  In the robustness ladder \(n_{\rm clean}\) and
    \(n_{\rm int}\) are the exact real-valued thresholds
    \(n_{\rm mis}(\varepsilon)\) and \(n_{\rm int}(\varepsilon)\) of
    Corollary~\ref{cor:sample-complexity}, before integer rounding.
  & Fig.~\ref{fig:robustness-phases}, \eqref{eq:robustness-thresholds} \\
\(\rho^{\star}\)
  & Sample-ratio threshold \((1-\sqrt\varepsilon)/(1+\sqrt\varepsilon)\) for a
    clean-test target \(\varepsilon\).  While \(\rho<\rho^{\star}\), no span
    predictor meeting the target lies in the good-fit quadrant.
  & Footnote~\ref{fn:rho-star}, \eqref{eq:clean-calibration-floor},
    \eqref{eq:calibrated-frontier-form} \\
\(\bm{Q}\)
  & Haar random rotation matrix.
  & Lemma~\ref{lem:random-rotation} \\
\(\vec{z}_0,\vec{z}\)
  & Fixed pre-rotation vector and rotated vector.
  & Lemma~\ref{lem:random-rotation} \\
\(\vec{g},\vec{\omega}\)
  & Gaussian vector and coordinate-sign vector.
  & Proof of Lemma~\ref{lem:random-rotation} \\
\(T\)
  & Scalar projection in the random-rotation proof.
  & Proof of Lemma~\ref{lem:random-rotation} \\[4pt]

\notationgroup{subsec:one-complete-pass}
  {Noisy training labels (Subsection~\ref*{subsec:one-complete-pass})}
\(\sigma_\xi,\xi_i\)
  & Label-noise level and independent standard Gaussian label noise.
  & \eqref{eq:noisy-full-pass-risk-main} \\
\(\Ttest^{\rm clean}(\vec{w}_n)\)
  & Full-pass clean SGD test risk without label noise.
  & \eqref{eq:noisy-full-pass-risk-main} \\[4pt]

\notationgroup{par:nonflat-tails}
  {Non-flat nuisance tails (related work)}
\(\bm{\Omega}\)
  & Non-flat nuisance test covariance, normalized to
    \(\operatorname{tr}(\bm{\Omega})=d\).
  & Sec.~\ref{par:nonflat-tails} \\
\(\bm{V},\bm{G}_{\Omega},M_2,\delta_{\rm tail}\)
  & Training nuisance matrix, covariance-weighted nuisance Gram
    \(\bm{G}_{\Omega}=\bm{V}^{\top}\bm{\Omega}\bm{V}\), its squared spectral
    mass \(M_2=\operatorname{tr}(\bm{\Omega}^2)\), and the relative-isotropy
    tolerance.
  & Sec.~\ref{par:nonflat-tails} \\[4pt]

\notationgroup{app:improper}
  {Improper median escape (Appendix~\ref*{app:improper})}
\(\mathcal P_k\)
  & Deterministic pile \(k\).
  & \eqref{eq:improper-pile-predictors} \\
\(m_k\)
  & Scalar size of pile \(\mathcal P_k\).
  & \eqref{eq:improper-pile-predictors} \\
\(f_k\)
  & Pile-wise calibrated scalar-valued linear predictor.
  & \eqref{eq:improper-pile-predictors} \\
\(\vec{u}_k\)
  & Weight vector of the pile-wise predictor.
  & \eqref{eq:improper-pile-predictors} \\
\(\vec{a}_k\)
  & Nuisance component of constituent predictor \(f_k\).
  & \eqref{eq:improper-median-adversarial-response} \\
\(f_{\rm med}\)
  & Median-of-three improper aggregate.
  & \eqref{eq:median-predictor-def} \\
\(f\)
  & General measurable predictor used to extend train/test risk notation.
  & \eqref{eq:general-risk-defs} \\
\(e_k\)
  & Fresh-test residual of constituent predictor \(f_k\) in the proof.
  & Proof of Prop.~\ref{prop:improper-stylized} \\
\(a,b,c,t\)
  & Local real numbers in the median-shielding lemma.
  & Lemma~\ref{lem:median-shield} \\[4pt]
\notationgroup{app:sco-comparison}
  {Norm budget and SCO comparison (Appendix~\ref*{app:sco-comparison})}
\(\mathcal W,\mathcal B_\star\)
  & Bounded domain notation and oracle norm ball in the norm-budget
    comparison.
  & App.~\ref{app:sco-comparison} \\
\(r^\star\)
  & Test-optimal scalar span calibration in the norm-budget comparison.
  & App.~\ref{app:sco-comparison} \\
\(\vec{u}^{\star}_{\rm span}\)
  & Corresponding test-optimal span predictor.
  & App.~\ref{app:sco-comparison} \\
\end{longtable}
}

\section*{Tooling}
The primary editor used in the preparation of this work is Antigravity, 
and the Gemini, ChatGPT, and Claude family of tools and associated harnesses were 
used extensively for drafting, editing, and checking math carefully 
as well as helping maintain notational consistency, the editing of figures,
and running/checking simulations.

\section*{Acknowledgements}

The authors would like to acknowledge funding support from:
NSF CAREER grant ECCS-2240031, NSF grant AST-2132700,
Renaissance Philanthropy and the XTX Markets AI for Math Fund 2025, 
an OpenAI Superalignment grant,
the 2023 CITRIS Institute Seed Grant,
the UC Berkeley Innovation MicroGrant Program,
Google, and Qualcomm.

We would like to thank Misha Belkin, Daniel Hsu, Vidya Muthukumar, 
Vignesh Subramanian, and Adhyyan Narang --- the initial seed for this line of work was planted while collaborating with them. Thanks also to conversations with David Wu. Finally, the comments from reviewers on previous papers also influenced this result.

\end{document}